\documentclass[english,11pt, letter]{article}
\usepackage[T1]{fontenc}
\usepackage[latin9]{inputenc}
\usepackage{geometry}

\geometry{letterpaper,tmargin=1in,bmargin=1in,lmargin=1in,rmargin=1in}
\usepackage{amsmath}
\usepackage{picinpar}
\usepackage{multicol}
\usepackage{amsthm}
\usepackage{amssymb}
\usepackage{algorithm}
\usepackage{algorithmic}
\newlength\myindent
\setlength\myindent{2em}

\usepackage{setspace}
\usepackage{amsmath,amsfonts,bm,amsthm}
\usepackage{xfrac}

\makeatletter
\singlespacing

\numberwithin{equation}{section}
\numberwithin{figure}{section}
\theoremstyle{plain}
\newtheorem{thm}{\protect\theoremname}
\theoremstyle{definition}
\newtheorem{defn}[thm]{\protect\definitionname}
\theoremstyle{plain}
\newtheorem{lem}[thm]{\protect\lemmaname}
\theoremstyle{plain}

\pdfoutput=1

\pdfoutput=1

\usepackage{amsmath, amsthm, amssymb, amsfonts,bbold}
\usepackage[basic]{complexity}
\usepackage[pdftex,bookmarks,colorlinks]{hyperref}
\usepackage{enumitem}

\DeclareMathOperator*{\argmaxTex}{arg\,max}
\DeclareMathOperator*{\argminTex}{arg\,min}
\DeclareMathOperator*{\signTex}{sign}
\DeclareMathOperator*{\rankTex}{rank}
\DeclareMathOperator*{\diagTex}{diag}
\DeclareMathOperator*{\imTex}{im}

\hypersetup{colorlinks=true,citecolor=red}

\renewcommand{\varepsilon}{\epsilon}

\usepackage{thmtools}
\usepackage{thm-restate}
\usepackage[usenames]{color}

\usepackage[lined,boxed,ruled,norelsize,algo2e]{algorithm2e}
\usepackage{tikz}
\usetikzlibrary{plotmarks}
\usepackage[small,compact]{titlesec}
\usepackage{setspace}
\usepackage{subfigure}

\usepackage{tcolorbox}

\makeatother
\title{Accelerated Gradient Algorithms with Adaptive Subspace Search for Instance-Faster Optimization}
\usepackage{babel}
\providecommand{\corollaryname}{Corollary}
\providecommand{\definitionname}{Definition}
\providecommand{\lemmaname}{Lemma}
\providecommand{\theoremname}{Theorem}


\author{%
  Yuanshi Liu\\
    Peking University\\
   \texttt{liu$\_$yuanshi@pku.edu.cn}
   \and 
   Hanzhen Zhao\\
   Peking University\\
   \texttt{hzzhao@pku.edu.cn}
   \and Yang Xu\\
   Peking University\\
   \texttt{xuyang1014@pku.edu.cn}
   \and Pengyun Yue\\
   Peking University\\
   \texttt{yuepy@pku.edu.cn}
   \and Cong Fang\\
   Peking University\\
   \texttt{fangcong@pku.edu.cn}
}

\date{}
\begin{document}

\theoremstyle{plain}
\newtheorem{theorem}{Theorem}[section]
\newtheorem{proposition}[theorem]{Proposition}
\newtheorem{lemma}[theorem]{Lemma}
\newtheorem{corollary}[theorem]{Corollary}
\newtheorem{definition}[theorem]{Definition}
\newtheorem{assumption}[theorem]{Assumption}
\newtheorem{remark}[theorem]{Remark}

\newcommand{\ab}{\mathbf{a}}
\newcommand{\bbb}{\mathbf{b}}
\newcommand{\cbb}{\mathbf{c}}
\newcommand{\db}{\mathbf{d}}
\newcommand{\eb}{\mathbf{e}}
\newcommand{\fb}{\mathbf{f}}
\newcommand{\gb}{\mathbf{g}}
\newcommand{\hb}{\mathbf{h}}
\newcommand{\ib}{\mathbf{i}}
\newcommand{\jb}{\mathbf{j}}
\newcommand{\kb}{\mathbf{k}}
\newcommand{\lb}{\mathbf{l}}
\newcommand{\mb}{\mathbf{m}}
\newcommand{\nbb}{\mathbf{n}}
\newcommand{\ob}{\mathbf{o}}
\newcommand{\pb}{\mathbf{p}}
\newcommand{\qb}{\mathbf{q}}
\newcommand{\rb}{\mathbf{r}}
\newcommand{\sbb}{\mathbf{s}}
\newcommand{\tb}{\mathbf{t}}
\newcommand{\ub}{\mathbf{u}}
\newcommand{\vb}{\mathbf{v}}
\newcommand{\wb}{\mathbf{w}}
\newcommand{\xb}{\mathbf{x}}
\newcommand{\yb}{\mathbf{y}}
\newcommand{\zb}{\mathbf{z}}

\newcommand{\lambdab}{\mathbf{\lambda}}
\newcommand{\Lambdab}{\mathbf{\Lambda}}
\newcommand{\thetab}{\bm{\theta}}
\newcommand{\gammab}{\bm{\gamma}}

\newcommand{\bb}{\bm{b}}
\newcommand{\bc}{\bm{c}}
\newcommand{\bd}{\bm{d}}
\newcommand{\be}{\bm{e}}
\newcommand{\bbf}{\bm{f}}
\newcommand{\bg}{\bm{g}}
\newcommand{\bh}{\bm{h}}
\newcommand{\bi}{\bmf{i}}
\newcommand{\bj}{\bm{j}}
\newcommand{\bk}{\bm{k}}
\newcommand{\bbm}{\bm{m}}
\newcommand{\bn}{\bm{n}}
\newcommand{\bo}{\bm{o}}
\newcommand{\bp}{\bm{p}}
\newcommand{\bq}{\bm{q}}
\newcommand{\bs}{\bm{s}}
\newcommand{\bt}{\bm{t}}
\newcommand{\bu}{\bm{u}}
\newcommand{\bv}{\bm{v}}
\newcommand{\bw}{\bm{w}}
\newcommand{\bx}{\bm{x}}
\newcommand{\by}{\bm{y}}
\newcommand{\bz}{\bm{z}}

\newcommand{\Ab}{\mathbf{A}}
\newcommand{\Bb}{\mathbf{B}}
\newcommand{\Cb}{\mathbf{C}}
\newcommand{\Db}{\mathbf{D}}
\newcommand{\Eb}{\mathbf{E}}
\newcommand{\Fb}{\mathbf{F}}
\newcommand{\Gb}{\mathbf{G}}
\newcommand{\Hb}{\mathbf{H}}
\newcommand{\Ib}{\mathbf{I}}
\newcommand{\Jb}{\mathbf{J}}
\newcommand{\Kb}{\mathbf{K}}
\newcommand{\Lb}{\mathbf{L}}
\newcommand{\Mb}{\mathbf{M}}
\newcommand{\Nb}{\mathbf{N}}
\newcommand{\Ob}{\mathbf{O}}
\newcommand{\Pb}{\mathbf{P}}
\newcommand{\Qb}{\mathbf{Q}}
\newcommand{\Rb}{\mathbf{R}}
\newcommand{\Sbb}{\mathbf{S}}
\newcommand{\Tb}{\mathbf{T}}
\newcommand{\Ub}{\mathbf{U}}
\newcommand{\Vb}{\mathbf{V}}
\newcommand{\Wb}{\mathbf{W}}
\newcommand{\Xb}{\mathbf{X}}
\newcommand{\Yb}{\mathbf{Y}}
\newcommand{\Zb}{\mathbf{Z}}

\newcommand{\bA}{\bm{A}}
\newcommand{\bB}{\bm{B}}
\newcommand{\bC}{\bm{C}}
\newcommand{\bD}{\bm{D}}
\newcommand{\bE}{\bm{E}}
\newcommand{\bF}{\bm{F}}
\newcommand{\bG}{\bm{G}}
\newcommand{\bH}{\bm{H}}
\newcommand{\bI}{\bm{I}}
\newcommand{\bJ}{\bm{J}}
\newcommand{\bK}{\bm{K}}
\newcommand{\bL}{\bm{L}}
\newcommand{\bM}{\bm{M}}
\newcommand{\bN}{\bm{N}}
\newcommand{\bO}{\bm{O}}
\newcommand{\bP}{\bm{P}}
\newcommand{\bQ}{\bm{Q}}
\newcommand{\bR}{\bm{R}}
\newcommand{\bS}{\bm{S}}
\newcommand{\bT}{\bm{T}}
\newcommand{\bU}{\bm{U}}
\newcommand{\bV}{\bm{V}}
\newcommand{\bW}{\bm{W}}
\newcommand{\bX}{\bm{X}}
\newcommand{\bY}{\bm{Y}}
\newcommand{\bZ}{\bm{Z}}

\newcommand{\x}{{\mathbf{x}}}
\newcommand{\y}{{\mathbf{y}}}
\newcommand{\z}{{\mathbf{z}}}
\newcommand{\w}{{\mathbf{w}}}
\newcommand{\ba}{{\mathbf{a}}}
\newcommand{\beps}{{\boldsymbol{\epsilon}}}
\newcommand{\bepss}{{\boldsymbol{\epsilon}}^2}
\newcommand{\bphi}{{\boldsymbol{\phi}}}
\newcommand{\tmu}{{\tilde\mu}}
\newcommand{\mulambda}[1]{\mu_{\lambda, #1}}
\newcommand{\alambda}{a_\lambda}
\newcommand{\tepsilon}{\tilde\epsilon}

\newcommand{\gauss}{{\boldsymbol{\xi}}}
\newcommand{\gz}{{\boldsymbol{\zeta}}}
\newcommand{\argmin}{\mathop{\mathrm{argmin}}}

\newcommand{\I}{{\mathbf{I}}}
\newcommand{\A}{{\mathbf{A}}}
\newcommand{\B}{{\mathbf{B}}}
\newcommand{\U}{{\mathbf{U}}}
\newcommand{\D}{{\mathbf{D}}}
\newcommand{\bXi}{\boldsymbol\Xi}

\newcommand{\bbeta}{\bm{\beta}}

\newcommand{\N}{{\mathbb{N}}}
\newcommand{\tlam}{\tilde{\lambda}}

\newcommand{\tB}{{\tilde{\mathbf{B}}}}

\newcommand{\tr}{\mathrm{tr}}

\newcommand{\AGD}{\mathrm{AGD}}
\newcommand{\cO}{\mathcal{O}}
\newcommand{\hntf}{\hat\nabla_\rho\tilde f_\delta}

\newcommand{\effdim}{r}
\newcommand{\efftrace}{\mathrm{ET}}

\newcommand{\lys}[1]{{\color{blue} #1}}
\newcommand{\zhz}{\textcolor{red}}
\newcommand{\xyy}{\textcolor{cyan}}
\newcommand{\ypy}{\textcolor{purple}}
\newcommand{\congfang}{\textcolor{red}}


\global\long\def\R{\mathbb{R}}%

\global\long\def\Rn{\mathbb{R}^{n}}%

\global\long\def\Rm{\mathbb{R}^{m}}%

\global\long\def\Rd{\mathbb{R}^{d}}%

\global\long\def\Rr{\mathbb{R}^{r}}%

\global\long\def\Rmn{\mathbb{R}^{m \times n}}%

\global\long\def\Rnm{\mathbb{R}^{n \times m}}%

\global\long\def\Rmm{\mathbb{R}^{m \times m}}%

\global\long\def\Rnn{\mathbb{R}^{n \times n}}%

\global\long\def\Z{\mathbb{Z}}%

\global\long\def\Rp{\R_{> 0}}%

\global\long\def\dom{\mathrm{dom}}%

\global\long\def\dInterior{K}%

\global\long\def\Rpm{\R_{> 0}^{m}}%


\global\long\def\ellOne{\ell_{1}}%
 
\global\long\def\ellTwo{\ell_{2}}%
 
\global\long\def\ellInf{\ell_{\infty}}%
 
\global\long\def\ellP{\ell_{p}}%

\global\long\def\otilde{\widetilde{O}}%

\global\long\def\argmax{\argmaxTex}%

\global\long\def\argmin{\argminTex}%

\global\long\def\sign{\signTex}%

\global\long\def\rank{\rankTex}%

\global\long\def\diag{\diagTex}%

\global\long\def\im{\imTex}%

\global\long\def\enspace{\quad}%

\global\long\def\mvar#1{\mathbf{#1}}%

\global\long\def\vvar#1{#1}%



\global\long\def\defeq{\stackrel{\mathrm{{\scriptscriptstyle def}}}{=}}%

\global\long\def\diag{\mathrm{diag}}%

\global\long\def\mDiag{\mvar{Diag}}%
 
\global\long\def\ceil#1{\left\lceil #1 \right\rceil }%

\global\long\def\E{\mathbb{E}}%

\global\long\def\jacobian{\mvar J}%

\global\long\def\onesVec{1}%
 
\global\long\def\indicVec#1{1_{#1}}%
\global\long\def\cordVec#1{e_{#1}}%
\global\long\def\op{\mathrm{spe}}%

\global\long\def\va{\vvar a}%
 
\global\long\def\vb{\vvar b}%
 
\global\long\def\vc{\vvar c}%
 
\global\long\def\vd{\vvar d}%
 
\global\long\def\ve{\vvar e}%
 
\global\long\def\vf{\vvar f}%
 
\global\long\def\vg{\vvar g}%
 
\global\long\def\vh{\vvar h}%
 
\global\long\def\vl{\vvar l}%
 
\global\long\def\vm{\vvar m}%
 
\global\long\def\vn{\vvar n}%
 
\global\long\def\vo{\vvar o}%
 
\global\long\def\vp{\vvar p}%
 
\global\long\def\vs{\vvar s}%
 
\global\long\def\vu{\vvar u}%
 
\global\long\def\vv{\vvar v}%
 
\global\long\def\vx{\vvar x}%
 
\global\long\def\vy{\vvar y}%
 
\global\long\def\vz{\vvar z}%
 
\global\long\def\vxi{\vvar{\xi}}%
 
\global\long\def\valpha{\vvar{\alpha}}%
 
\global\long\def\veta{\vvar{\eta}}%
 
\global\long\def\vphi{\vvar{\phi}}%
\global\long\def\vpsi{\vvar{\psi}}%
 
\global\long\def\vsigma{\vvar{\sigma}}%
 
\global\long\def\vgamma{\vvar{\gamma}}%
 
\global\long\def\vphi{\vvar{\phi}}%
\global\long\def\vDelta{\vvar{\Delta}}%
\global\long\def\vzero{\vvar 0}%

\global\long\def\ma{\mvar A}%
 
\global\long\def\mb{\mvar B}%
 
\global\long\def\mc{\mvar C}%
 
\global\long\def\md{\mvar D}%
 
\global\long\def\mf{\mvar F}%
 
\global\long\def\mg{\mvar G}%
 
\global\long\def\mh{\mvar H}%
 
\global\long\def\mj{\mvar J}%
 
\global\long\def\mk{\mvar K}%
 
\global\long\def\mm{\mvar M}%
 
\global\long\def\mn{\mvar N}%

\global\long\def\mO{\mvar O}%

\global\long\def\mq{\mvar Q}%
 
\global\long\def\mr{\mvar R}%
 
\global\long\def\ms{\mvar S}%
 
\global\long\def\mt{\mvar T}%
 
\global\long\def\mU{\mvar U}%
 
\global\long\def\mv{\mvar V}%
 
\global\long\def\mx{\mvar X}%
 
\global\long\def\my{\mvar Y}%
 
\global\long\def\mz{\mvar Z}%
 
\global\long\def\mSigma{\mvar{\Sigma}}%
 
\global\long\def\mLambda{\mvar{\Lambda}}%
\global\long\def\mPhi{\mvar{\Phi}}%
 
\global\long\def\mZero{\mvar 0}%
 
\global\long\def\iMatrix{\mvar I}%
\global\long\def\mi{\mvar I}%
\global\long\def\mDelta{\mvar{\Delta}}%

\global\long\def\oracle{\mathcal{O}}%
 
\global\long\def\mw{\mvar W}%

\global\long\def\runtime{\mathcal{T}}%


\global\long\def\mProj{\mvar P}%

\global\long\def\vLever{\sigma}%
 
\global\long\def\mLever{\mSigma}%
 
\global\long\def\mLapProj{\mvar{\Lambda}}%

\global\long\def\penalizedObjective{f_{t}}%
 
\global\long\def\penalizedObjectiveWeight{{\color{red}f}}%

\global\long\def\fvWeight{\vg}%
 
\global\long\def\fmWeight{\mg}%

\global\long\def\vNewtonStep{\vh}%

\global\long\def\norm#1{\|#1\|}%
 
\global\long\def\normFull#1{\left\Vert #1\right\Vert }%
 
\global\long\def\normA#1{\norm{#1}_{\ma}}%
 
\global\long\def\normFullInf#1{\normFull{#1}_{\infty}}%

\global\long\def\normFullSquare#1{\normFull{#1}_{\square}}%
 
\global\long\def\normInf#1{\norm{#1}_{\infty}}%
 
\global\long\def\normOne#1{\norm{#1}_{1}}%
 
\global\long\def\normTwo#1{\norm{#1}_{2}}%
 
\global\long\def\normLeverage#1{\norm{#1}_{\mSigma}}%
 
\global\long\def\normWeight#1{\norm{#1}_{\fmWeight}}%

\global\long\def\cWeightSize{c_{1}}%
 
\global\long\def\cWeightStab{c_{\gamma}}%
 
\global\long\def\cWeightCons{{\color{red}c_{\delta}}}%

\global\long\def\TODO#1{{\color{red}TODO:\text{#1}}}%
\global\long\def\mixedNorm#1#2{\norm{#1}_{#2+\square}}%

\global\long\def\mixedNormFull#1#2{\normFull{#1}_{#2+\square}}%
\global\long\def\CNorm{C_{\mathrm{norm}}}%
\global\long\def\Pxw{\mProj_{\vx,\vWeight}}%
\global\long\def\vq{q}%
\global\long\def\cnorm{\CNorm}%

\global\long\def\next#1{#1^{\mathrm{(new)}}}%

\global\long\def\trInit{\vx^{(0)}}%
 
\global\long\def\trCurr{\vx^{(k)}}%
 
\global\long\def\trNext{\vx^{(k + 1)}}%
 
\global\long\def\trAdve{\vy^{(k)}}%
 
\global\long\def\trAfterAdve{\vy}%
 
\global\long\def\trMeas{\vz^{(k)}}%
 
\global\long\def\trAfterMeas{\vz}%
 
\global\long\def\trGradCurr{\grad\Phi_{\alpha}(\trCurr)}%
 
\global\long\def\trGradAdve{\grad\Phi_{\alpha}(\trAdve)}%
 
\global\long\def\trGradMeas{\grad\Phi_{\alpha}(\trMeas)}%
 
\global\long\def\trGradAfterAdve{\grad\Phi_{\alpha}(\trAfterAdve)}%
 
\global\long\def\trGradAfterMeas{\grad\Phi_{\alpha}(\trAfterMeas)}%
 
\global\long\def\trSetCurr{U^{(k)}}%
\global\long\def\vWeightError{\vvar{\Psi}}%
\global\long\def\code#1{\texttt{#1}}%

\global\long\def\nnz{\mathrm{nnz}}%
\global\long\def\tr{\mathrm{tr}}%
\global\long\def\vones{\vec{1}}%

\global\long\def\volPot{\mathcal{V}}%
\global\long\def\grad{\mathcal{\nabla}}%
\global\long\def\hess{\nabla^{2}}%
\global\long\def\hessian{\nabla^{2}}%

\global\long\def\shurProd{\circ}%
 
\global\long\def\shurSquared#1{{#1}^{(2)}}%
\global\long\def\solver{\mathrm{\mathtt{S}}}
\global\long\def\time{\mathrm{\mathcal{T}}}
\global\long\def\trans{\top}%

\global\long\def\lpweight{w_{p}}%
\global\long\def\mlpweight{\mw_{p}}%
\global\long\def\lqweight{w_{q}}%
\global\long\def\mlqweight{\mw_{q}}%

\newcommand{\bracket}[1]{[#1]}
\global\long\def\interiorPrimal{\Omega^{\circ}}%
\global\long\def\interiorDual{\Omega^{\circ}}%

\newcommand{\tx}{\tilde{\mathbf{x}}}
\newcommand{\errorA}{\epsilon_A}
\newcommand{\errorB}{\epsilon_A}
\newcommand{\errorC}{\epsilon_B}
\newcommand{\errorD}{\epsilon_C}
\newcommand{\BSa}{\mathop{\mathtt{CBinarySearch}}}
\newcommand{\bbv}{\mathbf{v}}
\newcommand{\BSb}{\mathop{\mathtt{CCubicBinarySearch}}}
\newcommand{\rtemp}{r_{\mathrm{temp}}}
\newcommand{\ltemp}{\gamma_{\mathrm{temp}}}
\newcommand{\tilder}{\tilde{r}}
\newcommand{\elltemp}{l_{\mathrm{temp}}}
\global\long\def\vol{\mathrm{vol}}%
\global\long\def\rPos{\R_{> 0}}%
\global\long\def\dWeight{\R_{>0}^{m}}%
\global\long\def\dWeights{\rPos^{m}}%

\global\long\def\cO{\mathcal{O}}%
\global\long\def\tO{\tilde{\cO}}%

\global\long\def\specGeq{\succeq}%

\global\long\def\specLeq{\preceq}%

\global\long\def\specGt{\succ}%

\global\long\def\specLt{\prec}%
\global\long\def\gradient{\nabla}%
\global\long\def\weight{w}%

\global\long\def\vWeight{\vvar{\weight}}%

\global\long\def\mWeight{\mvar W}%
\global\long\def\mNormProjLap{\bar{\mLambda}}%
\global\long\def\volPot{\mathcal{V}}%
\global\long\def\dInterior{\Omega^{\circ}}%
\global\long\def\dFull{\{\dInterior\times\R_{>0}^{m}\}}%
\global\long\def\vWeight{\vvar w}%

\global\long\def\mWeight{\mvar W}%
\global\long\def\polytope{\Omega}%
\global\long\def\interior{\Omega^{\circ}}%
\maketitle


\begin{abstract}
Gradient-based minimax optimal algorithms have greatly promoted the development of continuous optimization and machine learning, with the measurement of performance based on the hardest instance. One seminal work due to Yurii Nesterov \cite{Nesterov1983AMF} with successive follow-up works established $\tilde{\mathcal{O}}( \sqrt{L/\mu})$ gradient and computational complexities for minimizing an $L$-smooth $\mu$-strongly convex objective. However, an ideal algorithm would adapt to the explicit complexity of a particular objective function and incur faster rates for simpler problems, triggering our reconsideration of two defeats of existing optimization modeling and analysis. (i) The worst-case optimality is neither the instance optimality nor such one in reality. (ii) Traditional $L$-smoothness condition may not be the primary abstraction/characterization for modern practical problems. For example, for empirical risk minimization problems,  not merely the spectrum of Hessian is bounded from above by a constant, but even the nuclear norm \cite{zhang2005learning}. 

In this paper, we open up a new way to design and analyze gradient-based algorithms with direct applications in machine learning, including linear regression and beyond. We introduce two factors $(\alpha, \tau_{\alpha})$ to refine the description of the degenerated condition of the optimization problems based on the observation that the singular values of Hessian often drop sharply. We design adaptive algorithms that solve simpler problems without pre-known knowledge with reduced gradient or analogous oracle accesses. The algorithms also improve the state-of-art complexities for several problems in machine learning, thereby solving the open problem of how to design faster algorithms in light of the known complexity lower bounds. Specially, with the $\mathcal{O}(1)$-nuclear norm bounded, we achieve an \textit{optimal} ${\mathit{\tilde{\mathcal{O}}(\mu^{-1/3})}}$  (v.s. $\tilde{\mathcal{O}}(\mu^{-1/2})$) gradient complexity for linear regression. We hope this work could invoke the rethinking for understanding the difficulty of modern problems in optimization.

\end{abstract}

\section{Introduction}
Gradient-based algorithms play as workhorses in a large range of practical applications and have been developed vigorously for a long period. One common model that tailors to study the algorithms is to consider the $L$-smooth condition which requires the objective to have $L$-Lipschitz continuous gradients. It is well-known that during the period from 1983 to 1985, Yurii Nesterov \cite{nesterov1983method,nesterov1984one,nemirovskii1985optimal}    invented several accelerated gradient algorithms that achieve  $\tilde{\cO}( \sqrt{L/\mu})$ gradient and computational complexities for minimizing an $L$-smooth and $\mu$-strongly convex objective. Such kinds of algorithms are often called optimal gradient algorithms in the sense that they achieve min-max optimal gradient complexity. That is to say, there is a hard instance among the $L$-smooth and $\mu$-strongly convex functions for which any algorithm needs $\tilde{\Omega}( \sqrt{L/\mu})$ gradient evaluations to find an $\epsilon$-suboptimal solution.

Up to now, lots of accelerated algorithms have been successfully designed based on the pioneer framework 
under different or more specific settings. See e.g. accelerated coordinate method \cite{allen2016even, nesterov2012efficiency} and high-order acceleration under high-order Lipschitz conditions \cite{nesterov_cubic_2006,monteiro2013accelerated}. One more typical example  in  machine learning is  Empirical Risk Minimization (ERM), for which the objective is often an average of $n$ smooth functions.  And one is able to achieve $\tO(n+\sqrt{n/\mu})$ \cite{allen2017katyusha,lin2015universal,zhang2015stochastic} individual gradient costs  when each individual function is $\mu$-strongly convex.  Accordingly,  lower bound complexities are also established to show the optimality of algorithms using more involved techniques \cite{nesterov1998introductory}. Seeing the situation, we might feel that the research area is relatively mature and it remains open to further accelerate these plausibly unimprovable algorithms in light of the known complexity lower bounds. 

Though the progress of designing the optimal algorithms is breathtaking and worthy of a warm celebration,  an  introspection is -- \textit{are we still climbing the right hill?} One essential question that we are more concerned about is -- \textit{is the current default problem characterization and complexity analysis the best one for studying the real difficulty of modern practical problems, such as for machine learning?}  We argue that there
are two unconscious misunderstandings in modeling and analyzing the optimization problems that potentially hinder further progress for optimization on modern problems. 
\begin{itemize}
    \item[(i)]  Traditional $L$-smoothness condition may not always be the primary characterization for optimization problems. Taking the usual ERM problem as an instance, the goal of the task is to minimize the objective of the form:
\begin{equation}\label{eq:erm1}
     \min_\xb ~~F(\xb)=\frac{1}{n}\sum_{i=1}^n f_i(\ab_i^\top\xb ). 
    \end{equation} 
    To show the objective \eqref{eq:erm1} satisfies the smoothness conditions, we often assume that for each $i\in[n]$, the data $\ab_i$ is normalized to $\|\ab_i\|^2\leq R^2$ and $f\in\mathcal{C}^2$ is convex  and  $L_0$-smooth. Then we have
\begin{equation}\label{eq:estimate ell}
      \left\|\nabla^2 F \right\|_2 \leq      \left\|\nabla^2 F\right\|_*= \frac{1}{n} \sum_{i=1}^n f_i''(\ab_i^\top\xb) \tr(\ab_i\ab_i^\top) \leq  \sum_{i=1}^n\frac{L_0}{n} \|\ab_i\|^2  \leq L_0R^2.
    \end{equation}
    One can observe from \eqref{eq:estimate ell} that  not merely the spectrum of Hessian is bounded  by $L_0R^2$, but also the nuclear norm. The result  demonstrates a sharp drop in singular values of Hessian.
\item[(ii)]  Worst-case optimality is
not the instance optimality. The real practical problems may  usually not be as difficult
as the hardest case.  Consider the hard instance in convex optimization constructed by Nesterov \cite{Nesterov1983AMF}. It is a quadratic function with form: $ \frac{1}{2}(1-x_1)^2+\frac{1}{2}\sum_{i=1}^{d-1}(x_{i+1}-x_i)^2$. For any linear-span algorithm producing the iterations as  $ \xb_t \in \mathrm{Span}\left(\xb_0, \nabla f(\xb_0),\cdots,\nabla f(\xb_{t-1})\right) $ with initialization at  $\xb_0 =\mathbf{0}$, the objective constrains the algorithm per-iteratively solving only one entry of the variable. So once the function has a sufficiently high dimension, limited access to the gradient cannot solve the rest entries. Although for modern tasks,  the optimization problems are often high-dimensional, it is not clear that the number of effective entries that really need to be recovered step by step is large.  Let us take note of the evidence from preliminary experiments in machine learning. 
We conduct both linear regression and neural network training on two benchmark datasets: MNIST \cite{lecun1998gradient} and CIFAR10 \cite{krizhevsky2009learning}. The experiment description is deferred to Appendix \ref{app:experiment}. The experimental result, shown in Fig.~\ref{EXP1}, indicates that the convergence on MNIST is significantly faster than that on CIFAR10. 
For simpler problems,
reduced complexities can be obtained. 
\end{itemize}

\begin{figure*}[t]
	\centering
        \vspace{-0.35cm}
        \subfigtopskip = 5pt
        \subfigcapskip = -5pt
	\subfigure[Linear Regression]{
       \begin{minipage}[t]{0.5\linewidth}
			\centering
			\includegraphics[width=3.3in]{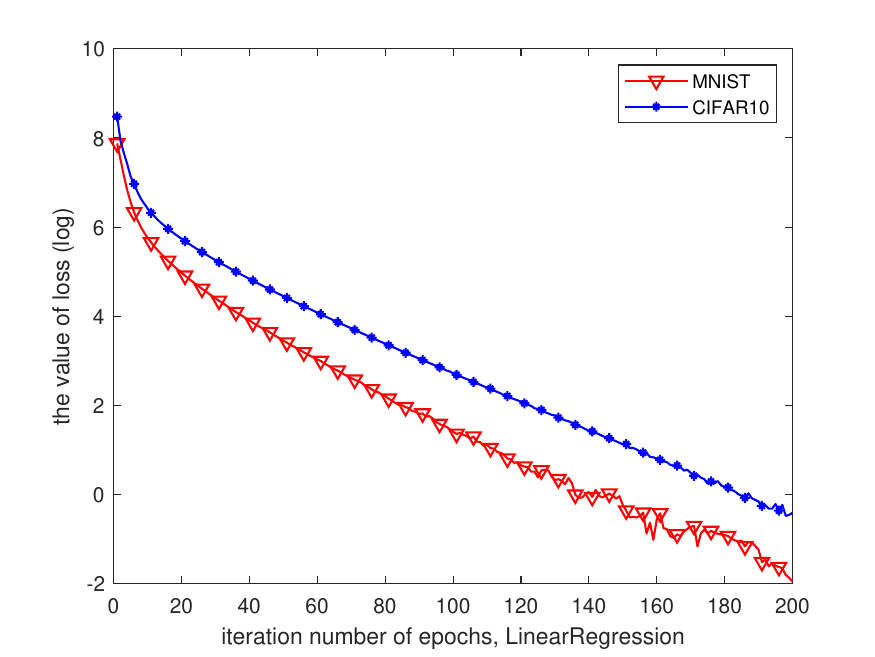}\\			\vspace{0.02cm}
		\end{minipage}%
	}%
       \subfigure[ResNet18]{
       \begin{minipage}[t]{0.5\linewidth}
			\centering
			\includegraphics[width=3.3in]{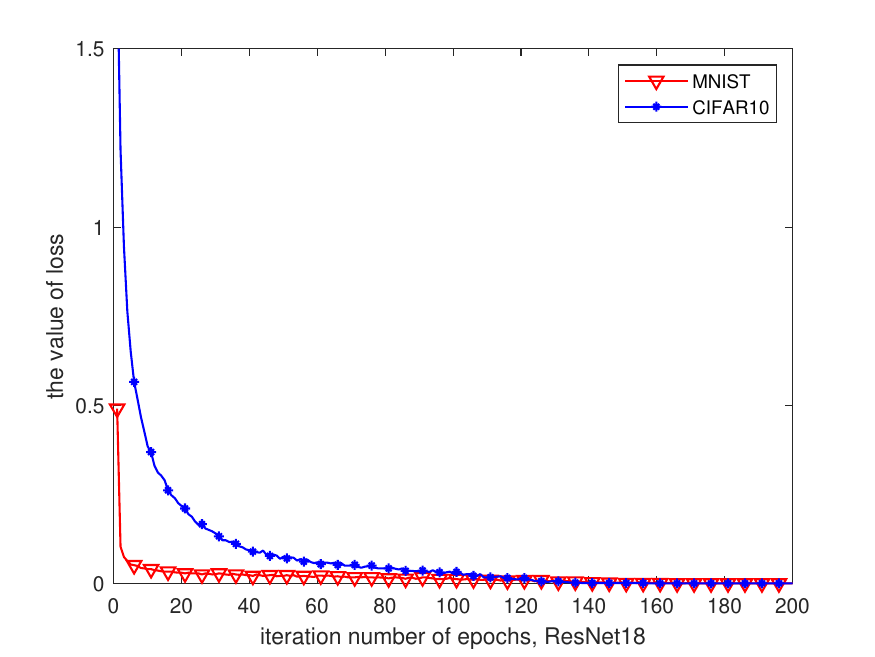}\\			\vspace{0.02cm}
		\end{minipage}%
	}%
	\centering
	\caption{The loss convergence speed of linear regression and neural networks. (a) plots the linear regression loss against the number of epochs on MNIST \cite{lecun1998gradient} and CIFAR10 \cite{krizhevsky2009learning}. (b) plots the ResNet18 with cross-entropy loss value against the number of epochs on MNIST and CIFAR10.}
 \label{EXP1}
	\vspace{-0.2cm}
	
\end{figure*}

\subsection{What We Do}
With this reconsideration, we attempt to answer the following  questions in this paper: 
\begin{itemize}
    \item[(A)]  \textit{Can we propose a more refined model/characterization of optimization problems to study the difficulty of solving practical problems closer to reality?}\label{labelA}
    \item[(B)]  \textit{Based on the model, is it possible to achieve  reduced complexities beyond those lower bounds under realizable and common conditions?} 
\end{itemize}

To answer the question   (\hyperref[labelA] {A}), we introduce two factors $(\alpha, \tau_{\alpha})$ to refine the characterization of an optimization problem. The main intuition is based on the commonly accepted observation that the singular values of Hessian for objectives always drop rapidly.  We use $\alpha\in(0, \infty]$ to describe the level of the  degeneracy of the Hessian matrices and define
\begin{equation}\label{eq: eff}
    \tau_{\alpha} = \sup_{\x\in \mathbb{R}^d} \left(\sum_{i=1}^d \left|\lambda_i(\nabla^2 f(\x))\right|^\alpha\right)^{\frac{1}{\alpha}},\qquad \alpha>0
\end{equation} 
to describe the magnitude of the degeneracy at the $\alpha$-level, where $\lambda_i(\cdot)$ is the $i$-th eigenvalue.
Compared with the smoothness and the dimension of the variables, we think the $(\alpha, \tau_{\alpha})$ description is a more accurate and fine-grained indicator to describe the difficulty of problems. When $\alpha =\infty$, $\tau_{\infty}$ reduces to the standard $L$-smoothness condition. For ERM  under the data normalization setting, we have for $\alpha\geq 1$, $\tau_\alpha \leq L_0 R^2$ and can pick $\alpha =1$. For simpler problems,  we might have $\tau_\alpha \approx 1$ for $\alpha \ll 1$. The description can also be understood as a structural condition for the optimization problems. When the structure is pre-known such as for the aforementioned ERM problems, one can design a specific algorithm under the structural assumption. However, when the structure is \textbf{not} pre-known, choosing the best pair $(\alpha^*, \tau_{\alpha^*})$ before applying the algorithm still requires non-negligible expenses, even for quadratic objectives. The general solution proposed by this paper is to  design adaptive algorithms  
that can automatically fit the structure of the optimization problem. 

In Section \ref{sec: mainquadratic}, we start our analysis from the quadratic objective. The quadratic functions class stands as a representative objective in the convex world. It appears to be the hard case for many gradient oracle models \cite{nesterov1998introductory} and many methods ground on quadratic optimization locally \cite{nesterov2017random,yue2023zeroth}. We propose an adaptive algorithm named AGMAS that uses at most $\tilde{\mathcal{O}}\left( \min_{k}\left\{ k + \sqrt{\frac{\lambda_k}{\mu}} \right\} \right)$ gradient calls to find an $\epsilon$-approximate minimizer. Under the $(\alpha,\tau_{\alpha})$ description, the algorithm does not need to acquire $\alpha$ or $\tau_{\alpha}$. The algorithm adaptively searches for the optimal degeneracy characterization of the problem. This will lead to an $\tilde{\mathcal{O}}\left(\min\{\mu^{-\frac{1}{2}}, \inf_{\alpha} \tau_{\alpha}^{\frac{\alpha}{1+2\alpha}}, d\}\right)$ complexity. For linear regression, the most common case in ERM,  we can obtain the   ${\cO(\mu^{-1/3})}$ gradient complexity, outperforming accelerated gradient descent \cite{Nesterov1983AMF} by a $\mu^{-\frac{1}{6}}$ factor. Note that the acceleration is not a consequence of the normalization of the problem but the fast drop of  eigenvalues. To see it,   a scale shift does not change the condition number of the problem and thus cannot make accelerated gradient descent convergence faster.  The reason for the opportunity to attain the faster convergence rate is  based on the fact that $\tau_1\approx \tau_{\infty}$.
We thereby solve the open problem in machine learning of how to design faster algorithms in light of the known complexity lower bound. The core idea of algorithm design is intuitive: by noting that the Hessian drops quickly, we can adaptively separate the space of the variable into two subspaces according to the magnitude of eigenvalues. At a high level, provided the rapid dropping eigenvalues under the Hessian degeneracy assumption, the eigenspace with large eigenvalue is low-dimensional. This leads to a prospect of acceleration via performing computation methods on large eigenspace. As for the remaining small eigenspace, the gradient method is applied since the leading eigenvalue is much smaller.

After proposing the refined and close-to-reality conditions, it is natural to come up with the following question: is the proposed adaptive algorithm min-max optimal? We provide an affirmative answer to this question. We show that the gradient complexity of AGMAS cannot be further improved. We construct a lower bound in terms of our degeneracy description. To be specific, following the seminal work from \cite{simchowitz2018randomized, braverman2020gradient}, we obtain an oracle lower bound of finding the largest eigenvalue using a distribution over a $3\times3$ block diagonal matrix, and establish an according lower bound for quadratic optimization using the shift-and-inverse paradigm. The lower bound states that for any randomized algorithm, it requires at least $\tilde{\mathcal{O}}\left( \min\{\mu^{-\frac{1}{2}}, \tau_{\alpha}^{\frac{\alpha}{1+2\alpha}}, d\} \right)$ gradient oracle calls to achieve an accurate approximate of the minimizer with a constant probability.

Our algorithm is simple and versatile for further improvement and generalization using techniques from optimization and numerical computation. In Section \ref{sec: genericfunc}, we consider the algorithm for optimizing the general objective under additional Hessian smoothness conditions. In the convex case, by extending Algorithm \ref{alg: esgd}, we achieve 
$\tilde{\mathcal{O}}\left(\tau_\alpha^{\frac{\alpha}{1+2\alpha}} D^{\frac{14\alpha+12}{14\alpha+7}} H^\frac{2}{14\alpha+7}\epsilon^{-\frac{7\alpha+2}{14\alpha+7}} +  D^{\frac{6}{7}}H^{\frac{2}{7}}\epsilon^{-\frac{2}{7}}\right)$ gradient oracle complexity,  and in non-convex case we achieve $\tilde\cO\left(H^{\frac{1+\alpha}{2+4\alpha}} \cdot \Delta \cdot
\tau_{\alpha}^{\frac{\alpha}{1+2\alpha}}\epsilon^{-\frac{3+7\alpha}{2+4\alpha}} \right)$ gradient oracle complexity to find an $\left(\epsilon, \sqrt{H\epsilon}\right)$-approximate second-order stationary point. Considering the common case when $\alpha = 1$, our results are beyond the state-of-the-art result $\tilde\cO(\epsilon^{-1/2})$ in convex case \cite{nesterov1998introductory} and $\tilde\cO(\epsilon^{-7/4})$ in non-convex setting \cite{jin2018accelerated}.

In Section \ref{sec: erm}, we extend our algorithm to solve the ERM problem. We consider two types of complexities: the number of data access and the computational costs. For data access complexity, we solve the problem using a mini-batch version of accelerated stochastic gradient method in primal space \cite{allen2017katyusha}, and perform our eigenvector extractor algorithm over each mini-batch. We obtain an $\tilde{\mathcal{O}}\left( n + n^{\frac{5}{6}}\epsilon^{-\frac{1}{3}} \right)$ data access complexity to obtain an $\epsilon$-approximate minimizer. And when $d\leq n$, using a leverage score sampling technique \cite{agarwal2017leverage}, the algorithm has data access complexity is $\tilde{\mathcal{O}}\left( n + d^{\frac{5}{6}}\epsilon^{-\frac{1}{3}} \right)$. For computational costs, we achieve different complexities in different regimes divided by $\alpha$, $d,n$, $\mu$ and $\tau$. For instance, we prove that given a block-weight-function, we can achieve complexity $\tilde\cO \left((nd)^{\frac{1-5\alpha}{1-3\alpha}}\left( \frac{\tau}{\mu} \right)^{\frac{\alpha}{1-3\alpha}}\right)$  in the regime when $\alpha\le 1/5, d\ge n^{3/2}$ and $ \frac{\tau}{\mu}\ge(nd)^{\frac{4\alpha+2}{5\alpha}} $ through combining our technique and interior point methods. We show that this surpasses accelerated stochastic variance reduction algorithms in some regimes.

\section{Previous Work}
In this section, we review the previous representative works that we think are most related. Some works form the basic of our work. And we discuss the relations and differences.

\textbf{Part I: Gradient-based Algorithms and  Results  on Classical  Setting.}  For the classical general $L$-smooth and $\mu$-stongly convex objective function, Nesterov proposed  the accelerated gradient descent (AGD) \cite{nesterov1983method,nesterov1984one,nemirovskii1985optimal} methods reaching $\tO\left(\sqrt{L/\max(\mu,\epsilon)}\right)$ gradient complexities. The proximal version of AGD is proposed later \cite{beck2009fast}. When the convex objective is an average of $n$ smooth  functions,  the individual gradient complexities can be upper bounded by 
$\tO\left(n+\sqrt{n/\max(\mu,\epsilon})\right)$ \cite{allen2017katyusha,lin2015universal,zhang2015stochastic} or  $\tO\left(n+n^{3/4}\sqrt{1/\max(\mu, \epsilon)}\right)$ \cite{allen2018katyusha,allen2017natasha,carmon2018accelerated}  depending on the convexity of each individual function. 

On the lower bound part,  Nesterov \cite{nesterov1998introductory} studied the iteration complexity of deterministic linear-span algorithms and showed that $\tilde{\Omega}\left(\sqrt{L/\max(\mu,\epsilon)}\right)$  iteration steps are  indispensable. Due to the restriction of  the algorithm class,  the iteration  complexity  cannot imply a  lower bound on computation costs for  more general algorithms, whereas,  implies a record-breaking  lower bound on gradient oracle accesses. Indeed,  Nemirovskij and Yudin \cite{nemirovskij1983problem} generalize the  gradient complexity lower bound for  any deterministic algorithm. The inspiring works from \cite{woodworth2017lower,braverman2020gradient,simchowitz2018randomized} consider any randomized algorithms and establish the same lower bound up to logarithmic  factors.

\textbf{Part II: Newton Method and Beyond.} Our algorithm is a kind of damped Newton method equipped with the limited-memory trick. Both the damped Newton methods (see e.g. \cite{robinson1994newton,ralph1994global})  and limited-memory trick (see e.g. \cite{liu1989limited,nocedal1980updating,nash1991numerical,byrd1994representations}), such as the commonly used quasi-Newton method L-BFGS \cite{liu1989limited,nocedal1980updating,nash1991numerical,byrd1994representations},   are prevalent and developed dating back to more than 50 years ago.   So our algorithms are not brand new.   The novel ingredient is to design the algorithms that dynamically choose the size of subspace and so attain adaptive and provably faster convergence rates under our setting. We also use the cubic regularization trick to extend the algorithm on generic optimization under Hessian smoothness conditions. Cubic Regularization Newton Method \cite{nesterov_cubic_2006} uses the second-order oracle to minimize a regularized objective function in each step. For non-smooth functions, the cubic regularization achieved an $\mathcal O\left(\epsilon^{-3/2}\right)$ convergence rate. For convex functions, cubic regularization method can be accelerated to an $\tilde{\mathcal{O}}\left(-1/3\right)$ convergence rate \cite{nesterov2008accelerating}. For convex functions, the optimal second-order algorithm is the large-step A-HPE method \cite{monteiro2013accelerated}, achieving an optimal $\tilde{\mathcal{O}}\left(\epsilon^{-2/7}\right)$ convergence rate. The large-step A-HPE method can be extended to higher-order algorithms, achieving an optimal convergence rate of $\tilde{\mathcal{O}}\left(\epsilon^{\frac{2}{3k+1}}\right)$ for $k$th-order algorithms. Many previous works studied the convergence rate of cubic regularization methods using first-order algorithms as the sub-problem solver \cite{cartis_adaptive_2011,carmon_gradient_2022,tripuraneni_stochastic_2018,yue2023zeroth}.

The modern interior-point methods are improvements of Newton method, which are designed usually for structural constrained problems such as linear programming of either the following form:
\begin{equation}\label{eq:linear-programming}
    (\text{P})= \min_{\xb\in\Rd_{\ge 0}:\ma^\top \xb=\bbb} \cbb^\top\xb \text{ and (D)}=\max_{\yb\in\Rr:\ma y\ge \cbb}\bbb^\top \yb,
\end{equation}
where $\ma\in\R^{d\times r}$, $b\in\Rr$ and $c\in\Rd$. 
Karmarkar is the first to prove that interior point methods can solve  linear programs in polynomial time \cite{karmarkar1984new}, and the interior methods were developed both in theory and efficiency by several works \cite{renegar1988polynomial,vaidya1989new,vaidya1989speeding,Nesterov1994}.
These years, the research has entered another prosperous age since Lee and Sidford \cite{leeS14,lee2015efficient,lsJournal19} innovatively developed weighted path finding and inverse maintenance technique and made important breakthroughs upon both iteration numbers and amortized cost per iteration to reduce the solving time to $\tilde\cO\left(( \nnz(\A) + r^2)\sqrt{r}\log\frac{1}{\epsilon}\right)$ and was recently improved to $\tilde\cO\left(dr+r^3\right) $ by \cite{van2020solving} in 2020. For the regime of $r = \Omega(d)$, \cite{cohen2019solving, jiang2020faster,LeeSZ19, van2020deterministic} improved the time to $\cO^*\left( (d^\omega)\log\frac{d}{\epsilon}\right)$ for (\ref{eq:linear-programming})(P) and ERM where $\omega\approx 2.37$ is the current matrix multiplication constant \cite{williams2023new}.

\textbf{Part III: Adaptive Algorithms and Works on Hessian  Trace Bounded Problems.} 
The idea of designing adaptive algorithms is widely considered in the field of statistical learning, where one often expect   algorithms adaptively fit an underlying structure of the  data. In the field of optimization, the idea  is also not new.  Representative  and  earlier works include the space dilation methods, (see e.g. \cite{shor1970convergence,shor1972utilization,shor1975convergence}) which compute the deflected
gradients by using a transformed metric based on the Hessian,  
and variable
metric methods, such as the BFGS family of algorithms (see e.g. earlier works \cite{broyden1970convergence,fletcher1970new,goldfarb1970family,shanno1970conditioning}). 
One notable online algorithms are adaptive sub-gradient methods, such as AdaGrad \cite{duchi2011adaptive}, Adam \cite{kingma2014adam} and AdamW \cite{loshchilov2017decoupled} which are the mainstream algorithms used for training deep neural networks. The main tuition of these algorithms is based on the observation that the data instance only has a few  non-zero features. Since  these features are often highly informative and discriminative, it is preferred to adopt  a large step size for these features.  Our work explicitly digs out the complexity advantages of algorithms under the setting where 
eigenvalues of Hessian drop fast and improve the state-of-art complexities for
several problems in machine learning.


There are also some works that study more efficient algorithms for Hessian trace bounded problems \cite{zhang2005learning,allen2016even,nesterov2012efficiency,lee2013efficient}. For example, the state-of-the-art accelerated coordinate descent achieves $\tilde\cO\left(\sum_{i=1}^d \sqrt{\Ab_{ii}}/\max\{\mu,\epsilon\}\right)$ oracle complexity where $\Ab$ is the Hessian matrix of the objective. Moreover, recent research on zeroth-order optimization \cite{yue2023zeroth}  and distributed optimization \cite{yue2023core} shows that an  $\tO\left(\tau_{1/2}/\sqrt{\max(\mu,\epsilon)}\right)$ zeroth-order oracle accesses is enough for quadratic functions. However, these researches only study the case when $\alpha=1$ or $\alpha=1/2$, which is a special case of $(\alpha,\tau_\alpha)$-degenerated functions and is not adaptive for $\alpha$. Besides, considering the gradient-based method proposed by \cite{yue2023core,yue2023zeroth}, to the best of our knowledge, the gradient oracle complexity does not break the existing lower bound $\tO\left(\max\{\mu,\epsilon\}^{-1/2}\right)$. In this paper, we achieve remarkable $\tO\left(\max\{\mu,\epsilon\}^{-1/3}\right)$ beyond the existing results.

\section{Notations}
\textbf{Orders Analysis: } Use the conventional notations $\mathcal{O}(\cdot), \Theta(\cdot), \Omega(\cdot)$ that ignore the absolute constants. $\tilde{\mathcal{O}}(\cdot)$, $\tilde{\Theta}(\cdot)$, $\tilde{\Omega}(\cdot)$ ignore the logarithmic factors and $ {\mathcal{O}}^*(\cdot)$, ${\Theta}^*(\cdot)$, ${\Omega}^*(\cdot)$ ignore the $n^{o(1)}$ factors. We adopt the notations that $f(x)\lesssim g(x)$ if $f(x) = \mathcal{O}(g(x))$ and $f(\xb) = \mathrm{poly}(\xb)$ if $f$ can be bounded by a polynomial of $\xb$. 

\textbf{Vectors Operations:} We let $\langle \x,\y \rangle$ denote the inner product of two vectors $\x$ and $\y$ in the Euclidean space. Besides we apply scalar operators to vectors with the interpretation that these operations should be applied to each coordinate of two vectors, e.g. for $\x, \y \in \Rd$, we denote $\x /\y \in \Rd$ with $[x/y]_i = x_i /y_i$.

\textbf{Matrices:} We call a matrix $\A$ non-degenerate if it has full column rank without zero rows. Use $\lambda_i(\Ab)$ to denote the $i$-th largest eigenvalue of symmetric matrix $\Ab$, and use $\lambda_{\min}(\Ab), \lambda_{\max}(\Ab)$ to denote its smallest, largest eigenvalue. We call a symmetric matrix $\B \in \mathbb{R^{d\times d}}$ positive definite if for all $\x \in \Rd$, $\x^\top \B \x >0$, and positive semidefinite 
if for all $\x \in \Rd$, $\x^\top \B \x \geq 0$. We define $\rm diag(\A) \in \Rd$ with $\rm diag(\A)_i = \A_{ii}$ for all $i \in [d]$ for $\A \in \mathbb{R}^{d \times d}$. 

\textbf{Matrix Operations:} Let $\A \preceq \B$ indicate that for all $\x \in \Rd$, $\x^\top \A\x \leq \x^\top \B\x$ for two symmetric matrices $\A, \B \in \mathbb{R^{d \times d}}$, and define $\prec$, $\succeq$ and $\succ$ analogously. We let $\rm nnz(\A)$ denote the number of nonzero entries in $\A$. We let $\mathbf{P}(\A) = \A(\A^\top \A)^{-1}\A^\top$ denote the orthogonal projection matrix onto a non-degenerate matrix $\A$'s image. We define $\sigma(\A) = \rm diag(\mathbf{P}(\A))$ the leverage scores of $\A$.

\textbf{Norms:} We let $\|\cdot \|$ denote the Euclidean norm of a vector. We let $\|\cdot \|_\A$ denote the Mahalanobia norm of a vector where $\|\x\|_\A = \sqrt{\x^\top \A\x}$ for all $\x \in \Rd$ and positive definite matrix $\A \in \mathbf{R^{d\times d}}$. For positive $\mathbf{w} \in \Rd_{>0}$ we let $\|\cdot\|_\mathbf{w}$ denote the norm where $\|\x\|_\mathbf{w} = \sqrt{ \sum_{i=1}^d w_i x_i^2 }$ for all $\x \in \Rd$.

\textbf{Calculus:} For a function $f(\x) \in C^2$, which means that $f$ is  second-order derivative. We use $\nabla f(\x)$ and $\nabla^2 f(\x)$ to denote the first-order and second-order derivative of $f$. For a function of two vectors $g(\x,\y)$ for all $\x \in \mathbb{R^{n_1}}$ and $\y \in \mathbb{R^{n_2}}$, we let $\nabla_\x g(\mathbf{a},\mathbf{b}) \in \mathbb{R^{n_1}}$ denote the gradient of $g$ as a function of $\x$ for fixed $\y$ at $(\mathbf{a},\mathbf{b})$, and define $\nabla_\y$, $\nabla_{\x\x}^2$ and $\nabla_{\y\y}^2$ analogously. For $h: \mathbb{R^n} \to \mathbb{R^m}$ and $\x \in \mathbb{R^n}$, we let $\mathbf{J}_h(\x) \in \mathbb{R^{m \times n}}$ denote the Jacobian of $h$.

\textbf{Optimization:} We use  $\x^*$ to denote  the minimizer, i.e. $\x^*\overset{\triangle}=\argmin_\x f(\x)$ and   $f^*$ to denote its minimum value, i.e. $f^*\overset{\triangle}=\min_\x f(\x)$. We say a function $f$ is $L$-smooth (or has $L$-Lipschitz continuous gradients), if 
$        \|\nabla f(\x)-\nabla f(\y)\| \le L\|\x-\y\|, \forall \x,\y \in \R^d.$ We say a function $f$ is convex if
$     f(\y)\ge f(\x) + \langle \nabla f(\x), \y-\x\rangle + \frac{\mu}{2}\|\x-\y\|^2, \forall \x,\y \in \R^d,$         where $\mu\ge 0$. Moreover,  if $\mu>0$,  $f$ is said to be $\mu$-strongly convex.  We say $f\in \mathcal C^2$ has $H$-Hessian Lipschitz   continuous Hessian matrices if $\|\nabla^2 f(\x)-\nabla^2 f(\y)\|\le H\|\x-\y\|,  \forall \x,\y \in \R^d.$  We say $\xb$ is an $\epsilon$-minimizer of $f$ if $f(\xb) - f(\xb^*)\leq \epsilon$. And we say $\x$ is an $\epsilon$-approximate first-order stationary point of $f$ if $ \|\nabla f(\x)\|\le\epsilon$ and we say $\x$ is an $(\epsilon,\delta)$-approximate second-order stationary point of $f$ if $ \|\nabla f(\x)\|\le\epsilon$ and $\nabla^2 f(\x) \succeq -\delta$. 

\textbf{Sets:} We call $U \subseteq \mathbb{R^k}$ convex if $t\cdot \x + (1-t)\cdot \y \in U$ for all $\x,\y \in U$ and $t \in [0,1]$ and symmetric if $\x \in \mathbb{R^k}$ if and only if $-\x \in \mathbb{R^k}$. We use $[n]$ to denote the set $\{1,2,\cdots,n\}$ for any $n\in\mathbb{N}_+$. Use $\mathbb{S}^{d\times d}$ to denote the set of $d$-dimensional symmetric matrices and $\mathbb{S}^{d\times d}_+$ to denote $d$-dimensional positive definite matrices.

\section{Quadratic Optimization Problems}\label{sec: mainquadratic}
We start the analysis from quadratic functions for the following reasons. The eigenspace of quadratic functions stays invariant over the iterations, which paves the way towards a space-specific optimization. Further, most gradient methods ground on the quadratic optimizations locally, and quadratic functions appear to be the hard case for many gradient-based methods \cite{nesterov1998introductory}. Formally, the problem we consider writes
\begin{align}\label{eq: quadraticobj}
    \min_{\xb\in\mathbb{R}^n}\frac{1}{2}\xb^{\top}\Ab\xb + \bbb^{\top}\xb.
\end{align}
We propose an algorithm that utilizes the degeneracy of $\Ab$ and finds the optimal degeneracy level and magnitude adaptively. If $\Ab$ satisfies $\tr(\Ab^{\alpha}) = \tau_{\alpha}$ and $\lambda_{\min}(\Ab) = \mu > 0$, the algorithm guarantees a 
$\tilde{\mathcal{O}}\left(\min\left\{ \mu^{-\frac{1}{2}}, \tau_{\alpha}^{\frac{\alpha}{1+2\alpha}}\mu^{-\frac{\alpha}{1+2\alpha}}, d \right\}\right)$ gradient oracle complexity. And this improves the vanilla analysis of $\tilde{\mathcal{O}}\left(\frac{1}{\sqrt{\mu}}\right)$ or $\mathcal{O}(d)$. We also emphasize that our result adaptively finds the optimal $(\alpha, \tau_\alpha)$. 

Correspondingly, we establish the algorithmic lower bound on our oracle model. We show that under the interested setting, our gradient oracle complexity nearly matches the algorithmic lower bound. The lower bound indicates that we have also explored the full region where improvement upon the classical methods can be obtained.



\subsection{Eigen Extractor}\label{sec: mainquadratic_upperbound}

Our method decomposes into two stages: first, we use gradient oracles to detect the large eigenvalue space, then we handle two parts appropriately.

The first stage iteratively searches the largest eigenvalue of a series of matrices, and adaptively finds the optimal proportion of dimension that is part of the large eigenvalue space. Our setting is different from the common eigendecomposition results \cite{allen2016lazysvd,musco2015randomized,shamir2016fast} since we require (1) the large eigenspace being low-rank, (2) the decomposition taking an additive form, (3) the smallest eigenvalue of the remaining matrix changing up to a constant multiplicative error and (4) the algorithm should be adaptive and the largest eigenvalue change slowly. Specifically, we have the following theorem for finding the large eigenvalue space.

\begin{tcolorbox}[colback = blue!5!white, colframe=blue!75!black, title = \textbf{Theorem 1} (Eigen Extractor)]

\begin{thm}\label{thm: eigenextractor}
For a given positive definite matrix $\Ab\in\mathbb{R}^{d\times d}$ and $l\in\mathbb{N}_+$. Denote the $l$-th largest eigenvalue of $\Ab$ by $\lambda_l$. 
With high probability, we can find a rank $r = \tilde{\mathcal{O}}(l)$ matrix $\Ab_1 = \sum_{i=1}^r a_i\mathbf{v}_i\mathbf{v}_i^{\top}$ using $ \tilde{\mathcal{O}}(l)$ gradient oracle calls. $\Ab_1$ satisfies $\Ab - \Ab_1\succeq\frac{\lambda_d}{2}\Ib$ and $\|\Ab - \Ab_1\| = \mathcal{O}(\lambda_k)$.
\end{thm}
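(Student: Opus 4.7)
The plan is to build $\Ab_1$ by iterative deflation: repeatedly extract an approximate top eigenpair of the current residual, subtract a rank-one term, and halt once the residual's operator norm has dropped below a constant multiple of $\lambda_l$. Throughout, the gradient oracle $\nabla f(\xb) = \Ab\xb+\bbb$ supplies matrix-vector products with $\Ab$ via $\Ab\yb = \nabla f(\yb) - \nabla f(\mathbf{0})$ (two oracle calls), and matvecs with the deflated operator $\Ab^{(r)} := \Ab - \sum_{i=1}^{r} a_i \mathbf{v}_i \mathbf{v}_i^{\top}$ cost two oracle calls plus $O(rd)$ arithmetic to fold in the low-rank correction, so throughout the analysis it suffices to count matvecs with $\Ab^{(r)}$.

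The per-iteration subroutine would be a gap-free block Krylov method (in the style of Musco--Musco or the LazySVD primitive of \cite{allen2016lazysvd}): given a target accuracy $\epsilon$, it returns a unit vector $\mathbf{v}$ with $\mathbf{v}^{\top}\Ab^{(r)}\mathbf{v} \geq (1-\epsilon)\lambda_{\max}(\Ab^{(r)})$ using $\widetilde{O}\bigl(\log(d/\epsilon)/\sqrt{\epsilon}\bigr)$ matvecs. I would then set $a_{r+1}=\mathbf{v}^{\top}\Ab^{(r)}\mathbf{v}$ and $\mathbf{v}_{r+1}=\mathbf{v}$, and pass to $\Ab^{(r+1)} = \Ab^{(r)} - a_{r+1}\mathbf{v}_{r+1}\mathbf{v}_{r+1}^{\top}$. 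The stopping rule checks $\lambda_{\max}(\Ab^{(r)}) \leq C\lambda_l$; the Rayleigh quotient found inside the subroutine doubles as a multiplicative estimate of $\lambda_{\max}(\Ab^{(r)})$, so the test is free.

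For the rank/oracle count, I would argue that after $r=\widetilde{O}(l)$ extractions, $\lambda_{\max}(\Ab^{(r)})\leq C\lambda_l$ and the loop terminates. In the exact case this is immediate, since peeling off the true top $l$ eigendirections leaves a residual of norm $\lambda_{l+1}\leq \lambda_l$; with $\epsilon$-approximate extraction an extra $\mathrm{polylog}$ number of extractions suffices via a LazySVD-style induction showing that $\lambda_{\max}(\Ab^{(r+1)}) \leq (1-\Omega(1))\lambda_{\max}(\Ab^{(r)})$ whenever the current top eigenvalue exceeds $\lambda_l$ by more than a constant factor. Choosing $\epsilon$ polynomially small in $d$ and $\kappa=\lambda_1/\lambda_d$ keeps each subroutine at $\widetilde{O}(1)$ matvecs, so the total oracle count is $\widetilde{O}(l)$.

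The principal obstacle is enforcing $\Ab-\Ab_1 \succeq \tfrac{\lambda_d}{2}\Ib$: an approximate eigenvector $\mathbf{v}_i$ inevitably has a small component in the tail eigenspace of $\Ab$, and subtracting $a_i\mathbf{v}_i\mathbf{v}_i^{\top}$ could depress eigenvalues below $\lambda_d$. I would handle this by viewing $\Ab_1$ as a perturbation of the exact rank-$r$ truncation $\widehat{\Ab}_1 = \sum_{i=1}^{r}\lambda_i \mathbf{u}_i\mathbf{u}_i^{\top}$ and bounding $\|\Ab_1-\widehat{\Ab}_1\|_2 \lesssim r\cdot\epsilon\cdot\lambda_1$ by an inductive Davis--Kahan / sin-$\Theta$ estimate applied across deflations. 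Picking $\epsilon = \Theta\!\bigl(\lambda_d/(r\lambda_1)\bigr)$ forces the cumulative perturbation below $\lambda_d/2$, which combined with $\Ab-\widehat{\Ab}_1 \succeq \lambda_d\Ib$ yields the claimed PSD lower bound; crucially, this $\epsilon$ is still only $1/\mathrm{poly}(d,\kappa)$, so it costs only $\widetilde{O}(1)$ matvecs per extraction and the overall $\widetilde{O}(l)$ gradient-oracle budget is preserved.
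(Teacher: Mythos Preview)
Your plan has a genuine gap in the argument for $\Ab - \Ab_1 \succeq \tfrac{\lambda_d}{2}\Ib$. You write that the exact rank-$r$ truncation $\widehat{\Ab}_1 = \sum_{i=1}^{r}\lambda_i\,\mathbf{u}_i\mathbf{u}_i^{\top}$ satisfies $\Ab - \widehat{\Ab}_1 \succeq \lambda_d \Ib$, but this is false: $\Ab - \widehat{\Ab}_1 = \sum_{i>r}\lambda_i\,\mathbf{u}_i\mathbf{u}_i^{\top}$ has $r$ zero eigenvalues, so $\lambda_{\min}(\Ab-\widehat{\Ab}_1)=0$, not $\lambda_d$. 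Consequently, showing $\|\Ab_1-\widehat{\Ab}_1\|\le \lambda_d/2$ gives only $\lambda_{\min}(\Ab-\Ab_1)\ge -\lambda_d/2$, which is the wrong sign. In fact, the better your approximate eigenvectors are, the closer the residual's bottom eigenvalue is to $0$---full deflation intrinsically annihilates the top eigenspace, and no Davis--Kahan-type perturbation bound can recover the $\tfrac{\lambda_d}{2}$ floor from there.

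The paper avoids this precisely by \emph{partial} deflation: it subtracts only $\tfrac{a_k}{5}\mathbf{v}_k\mathbf{v}_k^{\top}$ at each step. Splitting into the eigenspaces with eigenvalue $\ge a_k/2$ and $< a_k/2$, the large block retains at least $\tfrac{a_k}{2}-\tfrac{2a_k}{5}=\tfrac{a_k}{10}>\lambda_d$, while on the small block the shift-and-inverse guarantee $\sum_{\lambda_i\le (1-\delta)\lambda_1}(\mathbf{v}_k^{\top}\mathbf{u}_i)^2\le \epsilon_0$ (with $\delta=\tfrac12$ fixed and $\epsilon_0=\tilde O(\lambda_d/d^2)$) ensures the damage is at most $O(\epsilon_0)$ per step. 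The paper then controls the iteration count not via ``$\lambda_{\max}$ drops by a constant factor'' (which fails under the $1/5$ deflation when many eigenvalues sit near the top) but via a potential $\mathcal{S}_i^{(k)}=\sum_j[\lambda_j^{(k)}-\rho^{i-1}\lambda_l]_+$ that provably loses $\Omega(\rho^{i-1}\lambda_l)$ each step. A secondary issue in your sketch: with the gap-free Krylov primitive you cite, the cost scales as $\tilde O(1/\sqrt{\epsilon})$, so taking $\epsilon=\Theta(\lambda_d/(r\lambda_1))$ gives $\tilde O(\sqrt{r\kappa})$ matvecs per extraction, not $\tilde O(1)$; the paper sidesteps this because shift-and-inverse costs $\tilde O(1/\sqrt{\delta})$ with only logarithmic dependence on the projection accuracy $\epsilon_0$, and $\delta=\tfrac12$ is held constant throughout.
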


\end{tcolorbox}



Leveraging Theorem~\ref{thm: eigenextractor}, we can perform a downstream optimization, which is based on (1) a specialized proximal accelerated gradient, (2) accelerated gradient descent, and (3) conjugate gradient depending on the result from the eigen extractor. We now state our gradient oracle bound for optimizing quadratic functions.

\begin{tcolorbox}[colback = blue!5!white, colframe=blue!75!black, title = \textbf{Theorem 2} (Gradient Complexity for Quadratic Functions)]
\begin{thm}\label{thm: quadratic}
For any accuracy $\epsilon>0$, with high probability, there is an algorithm that finds an $\epsilon$-approximate minimizer of problem~\eqref{eq: quadraticobj} with $\tilde{\mathcal{O}}\left(\min_{k\in [d]}\left\{  k + \sqrt{\frac{\lambda_k}{\max\{\mu,\epsilon\}}}\right\}\right)$ gradient oracle calls, where $\lambda_k$ is the $k$-th largest eigenvalue of $\mathcal{\Ab}$.

If the function class is confined to $(\alpha,\tau_{\alpha})$-degenerated functions, the gradient oracle complexity is $\tilde{\mathcal{O}}\left(\min\left\{ \mu^{-\frac{1}{2}}, \tau_{\alpha}^{\frac{\alpha}{1+2\alpha}}\mu^{-\frac{\alpha}{1+2\alpha}}, d \right\}\right)$. Specifically, with high probability,
\begin{itemize}
    \item[a.] When $\max\{\mu, \varepsilon\}^{-\frac{1}{2}}\leq \tau_\alpha^{\alpha}$, one can find $\xb$ such that $f(\xb) \leq f(\xb^*) + \epsilon$ using $\tilde{\mathcal{O}}\left(\mu^{-1/2}\right)$ gradient oracle calls. 
    \item[b.] When $\max\{\mu,\varepsilon\}^{-\frac{1}{2}}\geq \tau_\alpha^{\alpha}$ and $\tau_{\alpha}^{\frac{\alpha}{1+2\alpha}}\max\{\mu,\varepsilon\}^{-\frac{\alpha}{1+2\alpha}} \leq d$, one can find $\xb$ such that $f(\xb) \leq f(\xb^*) + \epsilon$ using $\tilde{\mathcal{O}}\left(\tau_{\alpha}^{\frac{\alpha}{1+2\alpha}}\max\{\mu,\varepsilon\}^{-\frac{\alpha}{1+2\alpha}}\right)$ gradient oracle calls. \label{item: quadratic}
    \item[c.] When $\tau_{\alpha}^{\frac{\alpha}{1+2\alpha}}\max\{\mu,\varepsilon\}^{-\frac{\alpha}{1+2\alpha}} \geq d$, one can find $\xb$ such that $f(\xb) \leq f(\xb^*) + \epsilon$ using $\tilde{\mathcal{O}}(d)$ gradient oracle calls.
\end{itemize}
\end{thm}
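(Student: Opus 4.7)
The plan is to establish the general bound $\tilde{\cO}\bigl(\min_{k}\{k+\sqrt{\lambda_k/\max\{\mu,\epsilon\}}\}\bigr)$ by combining Theorem 1 (Eigen Extractor) with a specialized proximal accelerated scheme, and then to derive the three regimes (a), (b), (c) by specializing via the $(\alpha,\tau_\alpha)$ decay profile. Throughout, write $M := \max\{\mu,\epsilon\}$ for brevity; if $\mu < \epsilon$ we first add an $\epsilon\|\cdot-\xb_0\|^2$ regularizer, increasing the strong convexity to $\epsilon$ while only shifting the minimizer within the target accuracy.

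First, for each candidate $l\in[d]$, I would invoke Theorem 1 to obtain, with high probability and at the cost of $\tilde\cO(l)$ gradient oracle calls, a rank-$r$ matrix $\Ab_1=\sum_{i=1}^{r} a_i \vb_i\vb_i^\top$ with $r=\tilde\cO(l)$ satisfying $\Ab-\Ab_1\succeq (\mu/2)\Ib$ and $\|\Ab-\Ab_1\|\le C\lambda_l$. Decompose the objective as $f(\xb)=g(\xb)+h(\xb)$ with $g(\xb):=\tfrac12 \xb^\top(\Ab-\Ab_1)\xb+\bbb^\top \xb$ and $h(\xb):=\tfrac12 \xb^\top \Ab_1 \xb$. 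Then $g$ is $(\mu/2)$-strongly convex and $C\lambda_l$-smooth, while $h$ is explicit: because $\Ab_1$ has rank $r$, the proximal operator $\mathrm{prox}_{\eta h}(\yb)=(\Ib+\eta\Ab_1)^{-1}\yb$ can be computed in $\cO(rd)$ time via the stored eigenvectors (e.g., by the Sherman--Morrison--Woodbury identity). Run Nesterov's proximal AGD on $g+h$: each iteration queries $\nabla g(\xb) = \nabla f(\xb) - \Ab_1\xb$, costing one gradient oracle call plus the cheap matrix-vector product. The standard strongly-convex accelerated proximal convergence bound gives an $\epsilon$-minimizer in $\tilde\cO\bigl(\sqrt{C\lambda_l/\mu}\bigr)$ iterations, so the total gradient-oracle cost is $\tilde\cO\bigl(l+\sqrt{\lambda_l/M}\bigr)$.

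Since the optimal $l$ is unknown, I would wrap the above in a doubling schedule $l\in\{1,2,4,\dots,d\}$: run the extractor at $l$, then the proximal AGD for a budget of $\tilde\cO(\sqrt{\lambda_l/M})$ steps, and certify an $\epsilon$-minimizer using a stopping criterion based on the gradient norm and the known strong-convexity $\mu$. A geometric-sum argument shows the aggregate cost is within an $\cO(\log d)$ factor of $\min_k\{k+\sqrt{\lambda_k/M}\}$, and the union bound over $\cO(\log d)$ runs preserves the high-probability guarantee. For the specialization to $(\alpha,\tau_\alpha)$-degenerate matrices, I use the Markov-type inequality $\tau_\alpha^\alpha\ge \sum_{i\le k}\lambda_i^\alpha \ge k\lambda_k^\alpha$, so $\lambda_k\le \tau_\alpha k^{-1/\alpha}$. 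Substituting into $k+\sqrt{\lambda_k/M}$ and optimizing in $k$ yields $k^\star=\Theta\bigl((\tau_\alpha^\alpha/M)^{1/(1+2\alpha)}\bigr)$ with value $\tilde\cO\bigl(\tau_\alpha^{\alpha/(1+2\alpha)} M^{-\alpha/(1+2\alpha)}\bigr)$. Cases (a)--(c) then correspond to where this optimum sits: (a) $k^\star$ collapses to $1$ and plain AGD on the full problem gives the better $\tilde\cO(M^{-1/2})$ bound; (b) $k^\star$ lies in the interior and the adaptive procedure wins; (c) the optimum would exceed $d$, so we fall back to conjugate gradient, which exactly minimizes any $d$-dimensional quadratic within $d$ gradient oracle calls.

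The main obstacle I anticipate is the interaction between the randomized, approximate decomposition $\Ab=\Ab_1+(\Ab-\Ab_1)$ produced by Theorem 1 and the accelerated proximal method on the composite $g+h$. Acceleration is notoriously fragile to inexactness, so I must verify that the conditions $\Ab-\Ab_1\succeq (\mu/2)\Ib$ and $\|\Ab-\Ab_1\|\le C\lambda_l$, which hold only with high probability, are enough to yield the promised accelerated rate on the genuine objective $f$ (not on some surrogate). A secondary subtlety lies in the adaptive doubling: one must argue that a failure to certify an $\epsilon$-minimizer at scale $l$ is itself informative enough to justify advancing to $2l$ without paying more than a logarithmic overhead, and that the high-probability guarantees of the Extractor are preserved across all $\cO(\log d)$ independent runs. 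Everything else --- the reduction from composite smoothness to the prox step, the Markov bound turning $\tau_\alpha$ into eigenvalue decay, and the case analysis --- is standard once these two points are nailed down.
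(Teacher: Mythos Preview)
Your core construction matches the paper's: decompose $\Ab=\Ab_1+(\Ab-\Ab_1)$ via the Eigen Extractor, run accelerated proximal gradient on the pair $(g,h)$ with $h(\xb)=\tfrac12\xb^\top\Ab_1\xb$, and derive the $(\alpha,\tau_\alpha)$ regimes from the Markov-type bound $\lambda_k\le \tau_\alpha k^{-1/\alpha}$. The fixed-$l$ cost analysis and the three-case split are exactly what the paper does.

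The gap is in your adaptive search for $l$. You propose doubling $l$ and at each scale running proximal AGD ``for a budget of $\tilde\cO(\sqrt{\lambda_l/M})$ steps'' and certifying via the gradient norm. But that budget is precisely the convergence guarantee of AGD on $(g,h)$, so certification \emph{always succeeds}: you would stop at $l=1$, pay $\tilde\cO(\sqrt{\lambda_1/M})$, and forfeit the entire improvement. For a doubling scheme to work here, the per-scale AGD budget must be $\tilde\cO(l)$, not $\tilde\cO(\sqrt{\lambda_l/M})$; then ``failure to certify'' genuinely signals $\sqrt{\lambda_l/M}\gg l$ and justifies doubling. Alternatively, since the extractor already hands you $a_r=\Theta(\lambda_l)$, you can compare $\sqrt{a_r/M}$ to $l$ right after extraction and skip AGD whenever $\sqrt{a_r/M}>l$, never wasting gradient calls on an unbalanced scale.

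The paper avoids the outer search altogether by folding the balance test into the extractor. Its Algorithm~1 (option~2) runs a \emph{single} extraction pass and halts the moment the running Rayleigh-quotient estimate $a_k$ satisfies $k=\Omega(\sqrt{a_k/M})$ (or $k=\Omega(d)$, or $k=\Omega(M^{-1/2})$); Algorithm~2 then dispatches to proximal AGD, conjugate gradient, or plain AGD according to which criterion fired. The three stopping criteria map one-to-one onto cases (b), (c), (a). This removes the need for restarts and for the geometric-sum bookkeeping you flag as a ``secondary subtlety.'' As a side note, your ``main obstacle'' --- fragility of acceleration under the randomized decomposition --- is a non-issue: once the extractor returns a \emph{fixed} $\Ab_1$ satisfying $\Ab-\Ab_1\succeq(\mu/2)\Ib$ and $\|\Ab-\Ab_1\|\le C\lambda_l$ (a high-probability event), proximal AGD on $f=g+h$ is an exact deterministic method with exact prox and exact gradients, so the textbook accelerated rate applies verbatim on the genuine objective.
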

\end{tcolorbox}

\begin{window}[0,l,\includegraphics[width=3in]{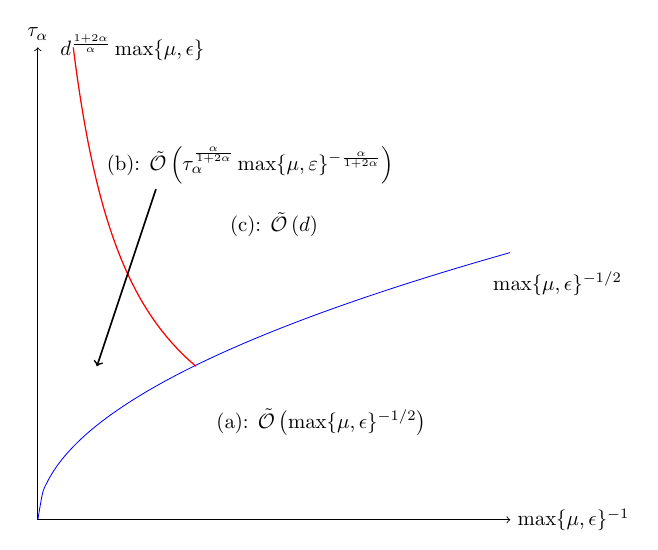},{}]
We emphasize that the algorithm does \textbf{not} need to acquire $(\alpha, \tau_{\alpha})$. Theorem~\ref{thm: quadratic} gives a thorough discussion on how the degeneracy of Hessian makes it possible to break the barrier of the classical lower bounds. If we confine the function class to be $(\alpha,\tau_{\alpha})$-degenerated, taking the high accuracy strongly convex example, when $\tau_{\alpha} \leq \mu^{-1/2}$, it is profitable to apply Theorem~\ref{thm: eigenextractor} to detect large eigenvalue space until the number of the gradient oracle calls come to $d$, entering the regime dominated by the conjugate gradient method \cite{hestenes1952methods}. At the wide middle regime, the improved convergence guarantees can be obtained and we will show that the gradient oracle complexity nearly matches the algorithmic lower bound. As a particular example, when $\alpha = 1$ and $\tau_{\alpha} = 1$, the theorem can be translated into a $\tilde{\mathcal{O}}\left(\mu^{-1/3}\right)$ convergence guarantee, which would be strictly faster than classical methods when $\frac{1}{d^3}\leq\mu\leq 1$. Again, we strengthen that the acceleration is not a consequence of the normalization of the problem but the fast drop of eigenvalues. The left picture demonstrates the three regions and the corresponding upper bound.

\end{window}

\subsection{Lower Bound}
Regarding the lower bound, we consider the oracle model consisting of randomized and adaptive calls of function gradient. The lower bound is confined to the function class of $(\alpha, \tau_{\alpha})$-degeneracy quadratic functions. To demonstrate that we achieve optimal oracle complexity in the entire region where improvement upon the classical methods can be obtained, we require a lower bound framework with a delicate dimension-accuracy relation.

We extend the seminal result of \cite{braverman2020gradient} to adapt to our framework. \cite{braverman2020gradient} reduces the oracle lower bound of optimization to the one of principle component analysis and constructs the randomized lower bound upon a Wishart distribution. Inspired by this, we construct a $3\times 3$ block diagonal random matrix for PCA lower bound, which can be reduced to a lower bound for solving quadratic problems. The theorem is stated below.

\begin{tcolorbox}[colback = blue!5!white, colframe=blue!75!black, title = \textbf{Theorem 3} (Lower Bound)]
\begin{thm}\label{thm: quadraticlowerbound}
Let $C$, $\mu_0$, $d_0$, $\delta<1$ be universal constants. For any $\mu\leq\mu_0$ and $d\geq d_0$, suppose the algorithm $\mathrm{Alg}$ outputs $\hat{\xb}$ such that
\begin{align*}
    \mathbb{P}_{\mathrm{Alg},\xb_0}\left( \|\hat{\xb} - \Ab^{-1}\bbb\|_{\Ab}^2 \leq C\mu\|\xb_0 - \Ab^{-1}\bbb\|^2  \right)\geq \delta
\end{align*}
for any objective $f(\xb) = \frac{1}{2}\xb^{\top}\Ab\xb + \bbb^{\top}\xb$ and initial point $\xb_0$ such that $\mathrm{tr}(\Ab^{\alpha}) \lesssim \tau_{\alpha}^{\alpha}$ and $\lambda_{\min}(\Ab) \gtrsim \mu$.
Then the gradient oracle calls of $\mathrm{Alg}$ should satisfy the following claims.
\begin{itemize}
    \item When $\mu^{-\frac{1}{2}} \leq \tau_{\alpha}^{\alpha}$, $\mathrm{Alg}$ requires at least $\tilde{\Omega}\left( \mu^{-\frac{1}{2}} \right)$ gradient oracle calls.
    \item  When $\mu^{-\frac{1}{2}} \geq \tau_{\alpha}^{\alpha}$ and $ \tau_{\alpha}^{\frac{\alpha}{1+2\alpha}}\mu^{-\frac{\alpha}{1+2\alpha}} \leq d $,  $\mathrm{Alg}$ requires at least $\tilde{\Omega}\left( \tau_{\alpha}^{\frac{\alpha}{1+2\alpha}}\mu^{-\frac{\alpha}{1+2\alpha}} \right)$ gradient oracle calls.
    \item When $ \tau_{\alpha}^{\frac{\alpha}{1+2\alpha}}\mu^{-\frac{\alpha}{1+2\alpha}} \geq d $, $\mathrm{Alg}$ requires at least $\tilde{\Omega}\left( d \right)$ gradient oracle calls.
\end{itemize}
\end{thm}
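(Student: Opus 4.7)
The plan is to establish the lower bound through a two-step argument: first, reduce approximate quadratic minimization to randomized principal component analysis (PCA) via the shift-and-invert paradigm; second, prove a randomized PCA lower bound on a designed block-diagonal ensemble. For the reduction, picking a shift $\lambda$ slightly above $\lambda_{\max}(\Ab)$ and approximately minimizing $\frac{1}{2}\xb^{\top}(\lambda\Ib - \Ab)\xb - \bbb^{\top}\xb$ with random $\bbb$ yields a vector aligned with the top eigenvector of $\Ab$; since each gradient query on this quadratic is one matrix-vector product with $\Ab$, any $T$-query quadratic minimizer converts into an $\tilde{\mathcal{O}}(T)$-query top-eigenvector algorithm. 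The unknown shift can be located by a logarithmic-overhead randomized binary search that respects the constant success probability $\delta$.

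For the PCA lower bound, following \cite{simchowitz2018randomized,braverman2020gradient}, I would take $\Ab = \mathrm{diag}(\Bb_1,\ldots,\Bb_k)$ with each $\Bb_i \in \mathbb{R}^{3\times 3}$ drawn independently from a small Wishart-style distribution whose planted leading direction is uniform within the block; the $3\times 3$ size is the smallest that allows an isotropically hidden top direction while still admitting a controllable eigen-gap. A hypothesis-testing argument (KL divergence against a null distribution that removes the planted direction) then shows that $\tilde{\Omega}(\sqrt{\lambda/\mathrm{gap}})$ matrix-vector queries are necessary to identify the direction with probability $\delta$. The three regimes arise by tuning the number of blocks $k$ and the per-block leading eigenvalue $\lambda$ so as to saturate both the trace bound and the strong convexity:
\begin{itemize}
    \item Regime 1 ($\mu^{-1/2}\le \tau_\alpha^\alpha$): a classical Nesterov-style chain of length $\sim\mu^{-1/2}$ already satisfies $\tr(\Ab^\alpha)\lesssim \tau_\alpha^\alpha$, recovering $\tilde{\Omega}(\mu^{-1/2})$.
    \item Regime 2: take $k \sim \tau_\alpha^{\alpha/(1+2\alpha)}\mu^{-\alpha/(1+2\alpha)}$ and $\lambda \sim k^2\mu$. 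A direct calculation gives $k\lambda^\alpha \sim \tau_\alpha^\alpha$ (trace saturated) and per-block condition number $\lambda/\mu = k^2$. Because one query to $\Ab$ equals one query to every $\Bb_i$, the lower bound does not multiply across blocks and one obtains $\tilde{\Omega}(\sqrt{\lambda/\mu}) = \tilde{\Omega}(k)$.
    \item Regime 3: when the Regime 2 choice forces $k>d/3$, take $d/3$ blocks with $\lambda$ scaled down to still satisfy the trace bound; a Krylov-subspace or full-dimensional Wishart argument gives the dimension-saturated $\tilde{\Omega}(d)$.
\end{itemize}
Finally, to convert the PCA accuracy into the quadratic-error statement $\|\hat{\xb}-\Ab^{-1}\bbb\|_{\Ab}^2 \le C\mu\|\xb_0-\Ab^{-1}\bbb\|^2$, choose $\bbb$ so that $\Ab^{-1}\bbb$ lies essentially in the top eigenspace and take $\xb_0 = \mathbf{0}$; achieving the quadratic accuracy $C\mu\|\Ab^{-1}\bbb\|^2$ then translates to constant angular accuracy on the top eigenvector, which is exactly what the PCA lower bound forbids below the claimed oracle budget.

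The main technical obstacle I expect is engineering the per-block distribution so that all three desiderata hold simultaneously: (i) $\tr(\Ab^\alpha)\lesssim \tau_\alpha^\alpha$ with high probability, (ii) $\lambda_{\min}(\Ab)\gtrsim \mu$ uniformly, and (iii) a sharp $\sqrt{\lambda/\mu}$ randomized PCA lower bound. The Wishart-based hardness argument leverages high-dimensional rotational invariance; compressing it into a $3\times 3$ block while preserving sharpness, and ruling out the possibility that the adaptive algorithm identifies the planted block cheaply from the block-diagonal structure (so that the lower bound is not attenuated across blocks), will require the most delicate work. A secondary subtlety is ensuring the binary-search wrapper in the shift-and-invert reduction does not amplify the failure probability beyond $1-\delta$, which can be handled by standard probability boosting.
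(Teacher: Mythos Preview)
Your high-level architecture---reduce quadratic minimization to top-eigenvector estimation via shift-and-invert, then invoke a randomized PCA lower bound on a designed block-diagonal ensemble---matches the paper exactly, and your parameter-balancing in Regime~2 ($k\sim\tau_\alpha^{\alpha/(1+2\alpha)}\mu^{-\alpha/(1+2\alpha)}$, $\lambda\sim k^2\mu$, so $k\lambda^\alpha\sim\tau_\alpha^\alpha$) is the right algebra. However, the PCA hard instance you propose has a fatal structural flaw.

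You read ``$3\times 3$ block diagonal'' as meaning \emph{many blocks each of size $3$}. But with blocks $\Bb_i\in\mathbb{R}^{3\times 3}$, three matrix--vector products with $\Ab$ (using generic query vectors) reveal every block $\Bb_i$ exactly, because a single query to $\Ab$ gives a simultaneous query to each $\Bb_i$ and three independent queries span $\mathbb{R}^3$. Thus the PCA problem on your ensemble is solvable in $O(1)$ oracle calls regardless of $k$ or $\lambda/\mu$, and no nontrivial lower bound can hold. You flag this yourself (``compressing it into a $3\times 3$ block while preserving sharpness\dots will require the most delicate work''), but the issue is not delicate---it is an absolute obstruction: the Wishart hardness of \cite{simchowitz2018randomized,braverman2020gradient} is a \emph{dimension} lower bound ($\Omega(d)$ queries for a $d$-dimensional Wishart), and it evaporates at dimension $3$.

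What the paper actually does is use \emph{three blocks} (not blocks of size three): a single $s\times s$ Wishart-based block with $s$ chosen to be the desired lower bound ($s\sim\mu^{-1/2}$, $\sim\tau_\alpha^{\alpha/(1+2\alpha)}\mu^{-\alpha/(1+2\alpha)}$, or $\sim d$ in the three regimes), padded by a scaled identity block and a zero block to fill out dimension $d$ while meeting the trace constraint $\tr((\text{shift}\cdot\Ib-\Mb)^\alpha)\lesssim\tau_\alpha^\alpha$. The Wishart lower bound then gives $\Omega(s)$ queries directly on that one block, and the other two blocks are inert fillers. The scaling parameter $c$ in front of the Wishart block (your $\lambda$) is tuned so that both the gap condition $\mathrm{gap}(\Mb)\sim\mu$ and the trace condition hold simultaneously; this is the step where your Regime-2 algebra would reappear, but applied to a single large block rather than many small ones. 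Your reduction step and your Regime-1/Regime-3 intuitions are essentially right, but the core hard instance needs to be rebuilt around one high-dimensional Wishart block.
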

\end{tcolorbox}

\section{Generic Optimization Problems in Convex and Non-convex Setting}\label{sec: genericfunc}


Compared with linear regression or least square problems, generic convex and non-convex optimization problems occupy more important positions. For example, in machine learning,  training of  deep neural networks in general is a non-convex optimization problem. 

Concretely speaking, we extend our analysis of $(\alpha,\tau_\alpha)$-degenerated quadratic functions to generic convex and non-convex optimization. To be specific, we restrict the objective functions to have $H$-continuous Hessian matrices. 

We combine the analysis for quadratic function with the A-NPE framework \cite{monteiro2013accelerated}. 
With a binary search routine for hyper-parameters, optimizing the general convex functions reduces to optimizing a series of quadratic sub-problems. Similarly, we use Cubic Regularization Newton's Method \cite{nesterov_cubic_2006} and related techniques to find an $\left(\epsilon,\sqrt{H\epsilon}\right)$-approximate second-order stationary point for non-convex problems. For general convex objectives, we achieve $\tilde{\mathcal{O}}\left(\tau_\alpha^{\frac{\alpha}{1+2\alpha}} D^{\frac{14\alpha+12}{14\alpha+7}} H^\frac{2}{14\alpha+7}\epsilon^{-\frac{7\alpha+2}{14\alpha+7}} +  D^{\frac{6}{7}}H^{\frac{2}{7}}\epsilon^{-\frac{2}{7}}\right)$ gradient oracle complexity, and for non-convex optimization we can find an $\left(\epsilon,\sqrt{H\epsilon}\right)$-approximate second-order stationary point with $\tilde\cO\left(H^{\frac{1+\alpha}{2+4\alpha}}\tau_{\alpha}^{\frac{\alpha}{1+2\alpha}}\epsilon^{-\frac{3+7\alpha}{2+4\alpha}}  \right)$ gradient oracle calls. In this paper, we mainly focus on the gradient complexity improvement  based on our analysis of $(\alpha,\tau_\alpha)$-degenerated functions. We will consider designing simpler algorithms to solve such problems in future work. Moreover, to solve these problems it is not necessary to know the exact $\alpha$ and $\tau_\alpha$. Our designed Algorithm \ref{alg:A-HPEesgd} and Algorithm \ref{alg:cubicesgd} can adaptively solve the problem only based on local $(\alpha,\tau_\alpha)$.

We propose our result of finding an $\epsilon$-approximate solution of the general convex problems in Theorem \ref{thm:convex-m} and the result of finding an $\left(\epsilon,\sqrt{H\epsilon}\right)$-approximate second-order stationary point for the general non-convex problems in Theorem \ref{thm:non-convex-m} as below.
\subsection{Convex Objective Functions}
\begin{tcolorbox}[colback = blue!5!white, colframe=blue!75!black, title = \textbf{Theorem 4} (General Convex Setting)]
\begin{thm}
    Assume the $(\alpha,\tau_\alpha)$-degenerated objective function $f$ is convex and has $H$-continuous Hessian matrices. Under the same hyper-parameters setting in Algorithm \ref{alg:A-HPEesgd}, it  requires 
    \begin{equation}
        \tilde{\mathcal{O}}\left(\tau_\alpha^{\frac{\alpha}{1+2\alpha}} D^{\frac{14\alpha+12}{14\alpha+7}} H^\frac{2}{14\alpha+7}\epsilon^{-\frac{7\alpha+2}{14\alpha+7}} +  D^{\frac{6}{7}}H^{\frac{2}{7}}\epsilon^{-\frac{2}{7}}\right)
    \end{equation}
    gradient oracle calls to find an $\epsilon$-approximate solution, where 
    \begin{equation}
        D = \inf_{\x^* \in \Xb^*} \sup\left\{\|\x-\x^*\|: f(\x) \leq f(\x_0)\right\}.
    \end{equation}
    \label{thm:convex-m}
\end{thm}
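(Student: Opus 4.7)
The plan is to combine the large-step A-HPE framework of Monteiro--Svaiter with the adaptive quadratic solver from Theorem~\ref{thm: quadratic}. I would structure the proof as an outer-inner decomposition: an outer A-NPE/A-HPE acceleration loop that produces a sequence of proximal subproblems, and an inner solver for each subproblem that exploits the $(\alpha,\tau_\alpha)$-degeneracy. The outer loop itself, assuming exact proximal solves, already gives the $\tilde{\mathcal{O}}(\epsilon^{-2/7})$ outer iteration count responsible for the second additive term $D^{6/7}H^{2/7}\epsilon^{-2/7}$; what is new is showing the inner subproblems admit a faster gradient-based solution than generic strongly convex optimization.

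First I would set up the A-HPE iteration. At iterate $\y_k$ with stepsize $\lambda_k$, the subproblem is to find an inexact solution of
\begin{equation*}
    \min_{\x}\; \Phi_k(\x) \;\coloneqq\; f(\x) + \tfrac{1}{2\lambda_k}\|\x - \y_k\|^2,
\end{equation*}
satisfying the HPE error criterion. Using the Hessian Lipschitz assumption, I would replace $f$ locally by its quadratic Taylor expansion at $\y_k$ and control the residual by $\tfrac{H}{6}\|\x-\y_k\|^3$. By constraining the subproblem to a trust-region-like ball of radius $r_k$ and choosing $\lambda_k$ coupled with $r_k$ in the A-HPE style (so that $\lambda_k\|H r_k\|\lesssim 1$), the subproblem becomes a quadratic-plus-strongly-convex-regularizer problem whose Hessian $\nabla^2 f(\y_k)+\tfrac{1}{\lambda_k}\I$ has smallest eigenvalue $\mu_k \gtrsim 1/\lambda_k$, and whose $\alpha$-Schatten trace is still $\lesssim \tau_\alpha^\alpha$ (shift by identity only adds a dimension-dependent term that I would handle via the $\min\{\cdot,d\}$ clause of Theorem~\ref{thm: quadratic}). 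Then Theorem~\ref{thm: quadratic} applies and each subproblem costs
\begin{equation*}
    \tilde{\mathcal{O}}\!\left(\min\!\left\{\mu_k^{-1/2},\;\tau_\alpha^{\alpha/(1+2\alpha)}\mu_k^{-\alpha/(1+2\alpha)},\;d\right\}\right)
\end{equation*}
gradient oracle calls, where I use adaptivity to avoid needing $(\alpha,\tau_\alpha)$ a priori. The cubic residual will be absorbed as a forcing term in the HPE criterion using a standard contraction argument.

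Next I would import the convergence estimate from the large-step A-HPE analysis: after $K$ outer steps, $f(\x_K)-f^*\lesssim \frac{D^2}{\sum_{k\le K}\lambda_k\sqrt{A_k}}$, with $\lambda_k\sim 1/(H r_k)$ where $r_k\sim \|\x_k-\y_k\|$. Choosing the standard A-NPE schedule yields $K=\tilde{\mathcal{O}}(H^{2/7}D^{6/7}\epsilon^{-2/7})$ and the amplitude of $\mu_k$ roughly matches the target accuracy at step $k$, so that $\mu_k^{-1}\sim H\cdot r_k\sim H^{1/7}D^{4/7}\epsilon^{-3/7}$ at the dominant scale. Plugging the per-subproblem cost into the sum $\sum_k$ and using H\"older/telescoping would yield
\begin{equation*}
    \tilde{\mathcal{O}}\!\left(\tau_\alpha^{\alpha/(1+2\alpha)}\cdot K\cdot \bar\mu^{-\alpha/(1+2\alpha)}\right) \;=\; \tilde{\mathcal{O}}\!\left(\tau_\alpha^{\alpha/(1+2\alpha)} D^{(14\alpha+12)/(14\alpha+7)} H^{2/(14\alpha+7)} \epsilon^{-(7\alpha+2)/(14\alpha+7)}\right),
\end{equation*}
which is exactly the first term of the theorem. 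The additional $D^{6/7}H^{2/7}\epsilon^{-2/7}$ accounts for the outer Hessian-vector or cubic-regularization overhead that persists even in the degenerate limit.

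The main obstacle I expect is showing rigorously that Theorem~\ref{thm: quadratic} remains applicable to the \emph{perturbed} subproblems rather than a pure quadratic. Concretely, the eigen-extractor in Theorem~\ref{thm: eigenextractor} and the downstream solver must tolerate the residual $\mathcal{O}(H r_k^3)$ that arises from replacing $f$ by its second-order Taylor expansion inside the trust region, and must do so while still having the adaptive hyperparameter search (the $\min_k\{k+\sqrt{\lambda_k/\mu}\}$ curve) pick the correct $(\alpha,\tau_\alpha)$-dependent regime. I would handle this by (i) choosing $r_k$ small enough that $Hr_k^3$ is dominated by the HPE tolerance, so the inner solver's guarantee is preserved up to constants, and (ii) verifying that the adaptive mechanism of Algorithm~\ref{alg:A-HPEesgd} gracefully tracks the largest eigenvalues of the true subproblem Hessian $\nabla^2 f(\y_k)+\lambda_k^{-1}\I$. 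Balancing $\lambda_k$ and $r_k$ to make the two terms in the final complexity match the claimed exponents is the most delicate part of the calculation.
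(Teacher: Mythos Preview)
Your plan matches the paper's approach: outer large-step A-NPE with binary search for $\gamma_{k+1}$, inner calls to the quadratic solver of Theorem~\ref{thm: quadratic}. Two corrections are needed.

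First, your ``main obstacle'' is a non-issue. In Algorithm~\ref{alg:A-HPEesgd} the inner subproblem is \emph{exactly} quadratic: one minimizes $f_{\tilde\x_k}(\y)+\tfrac{1}{2\gamma_{k+1}}\|\y-\tilde\x_k\|^2$, where $f_{\tilde\x_k}$ is the second-order Taylor expansion of $f$. Theorem~\ref{thm: quadratic} therefore applies verbatim; there is no cubic residual inside the eigen extractor. The Hessian-Lipschitz error $\|\nabla f(\y)-\nabla f_{\tilde\x_k}(\y)\|\le\tfrac{H}{2}\|\y-\tilde\x_k\|^2$ is handled entirely at the outer level: Lemma~\ref{lem:HPEprecision} shows that a sufficiently accurate solution of the pure quadratic subproblem already satisfies the HPE criterion $\|\gamma_{k+1}\nabla f(\y_{k+1})+\y_{k+1}-\tilde\x_k\|\le\sigma\|\y_{k+1}-\tilde\x_k\|$ for the true $f$, given the choice $\gamma_{k+1}\|\y_{k+1}-\tilde\x_k\|\le 2\sigma_u/H$.

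Second, your summation needs the precise mechanism, not a dominant-scale heuristic. You wrote $\mu_k^{-1}\sim Hr_k$, but in fact $\mu_k\sim Hr_k$, i.e.\ $\mu_k^{-1}\sim\gamma_k$; with your stated scale the $D$ and $H$ exponents come out wrong even though the $\epsilon$ exponent happens to match. The paper instead uses the A-NPE identity $a_{k+1}^2=\gamma_{k+1}A_{k+1}$, which gives $\gamma_k\lesssim(\sqrt{A_k}-\sqrt{A_{k-1}})^2$, and then applies the power-mean inequality to the telescoping sum:
\[
\sum_{k=1}^N\gamma_k^{\alpha/(1+2\alpha)}\;\lesssim\;\sum_{k=1}^N\bigl(\sqrt{A_k}-\sqrt{A_{k-1}}\bigr)^{2\alpha/(1+2\alpha)}\;\le\;N^{1/(1+2\alpha)}A_N^{\alpha/(1+2\alpha)}.
\]
Plugging in $N=\tilde\cO(D^{6/7}H^{2/7}\epsilon^{-2/7})$ from Theorem~\ref{thm:MS4.1} and $A_N=\Theta(D^2/\epsilon)$ yields the first term; the $+N$ accounts for the constant overhead per outer iteration and gives the second term. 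This is the ``H\"older/telescoping'' you allude to, but it must be executed exactly this way.
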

\end{tcolorbox}
\begin{remark}
Based on the Hessian degeneracy and our analysis of the $(\alpha,\tau_\alpha)$-degenerated quadratic functions, we improve the gradient oracle complexity of solving a general convex optimization problem from $\tilde\cO\left(\epsilon^{-1/2} \right)$ to $\tilde{\mathcal{O}}\left(\tau_\alpha^{\frac{\alpha}{1+2\alpha}} D^{\frac{14\alpha+12}{14\alpha+7}} H^\frac{2}{14\alpha+7}\epsilon^{-\frac{7\alpha+2}{14\alpha+7}} +  D^{\frac{6}{7}}H^{\frac{2}{7}}\epsilon^{-\frac{2}{7}}\right)$. 
In a representative case where $\alpha = 1$, we obtain $\tilde\cO\left( \tau_{\alpha}^{1/3}\epsilon^{-3/7} \right)$ gradient oracle complexity.  Another vital fact we notice is that when $\alpha \to \infty$, the complexity of our algorithm matches the lower bound of the traditional result $\cO\left(\epsilon^{-1/2}\right)$, which means that our algorithm is strictly faster than the conclusions in classical convex optimization.
\end{remark}

\subsection{Non-convex Objective Functions}
\begin{tcolorbox}[colback = blue!5!white, colframe=blue!75!black, title = \textbf{Theorem 5} (Non-convex Setting)]
\begin{thm}
    Assume the $(\alpha,\tau_\alpha)$-degenerated objective function $f$ has $H$-continuous Hessian matrices. Under the corresponding hyper-parameters setting in Algorithm \ref{alg:cubicesgd}, it  requires 
    \begin{equation}
        \tilde\cO\left(H^{\frac{1+\alpha}{2+4\alpha}} \cdot \Delta \cdot
\tau_{\alpha}^{\frac{\alpha}{1+2\alpha}}\epsilon^{-\frac{3+7\alpha}{2+4\alpha}} \right)
    \end{equation}
    gradient oracle calls to find an $\left(\epsilon,\sqrt{H\epsilon}\right)$-approximate second-order stationary point, where 
    \begin{equation}
        D = \inf_{\x^* \in \Xb^*} \sup\left\{\|\x-\x^*\|: f(\x) \leq f(\x_0)\right\}, \quad \Delta = f(\x_0) - f^*.
    \end{equation}
    \label{thm:non-convex-m}
\end{thm}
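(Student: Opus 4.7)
\medskip
\noindent\textbf{Proof proposal for Theorem~\ref{thm:non-convex-m}.}
The plan is to use the classical Cubic Regularized Newton outer framework of Nesterov--Polyak \cite{nesterov_cubic_2006} as the skeleton and to inject our adaptive subspace search (Theorem~\ref{thm: quadratic}) as the inner solver. At iteration $k$, Algorithm~\ref{alg:cubicesgd} will approximately minimize the cubic model
\begin{equation*}
m_k(\mathbf{s}) \;=\; \langle \nabla f(\x_k),\mathbf{s}\rangle + \tfrac{1}{2}\mathbf{s}^{\top}\nabla^{2}f(\x_k)\mathbf{s} + \tfrac{M}{6}\|\mathbf{s}\|^{3},
\end{equation*}
where $M = \Theta(H)$. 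Under $H$-Lipschitz Hessian, the standard descent analysis gives $f(\x_{k+1}) - f(\x_k) \lesssim -\sqrt{H}\,\epsilon^{3/2}$ whenever the current iterate is not an $(\epsilon,\sqrt{H\epsilon})$-second-order stationary point. Telescoping over the $\Delta = f(\x_0)-f^{*}$ budget, the number of outer iterations is at most $N_{\mathrm{out}} = \tilde{\mathcal{O}}\!\left(\Delta\sqrt{H}\epsilon^{-3/2}\right)$.

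The core work is the inner solver. I would follow the reduction of Carmon--Duchi (and Tripuraneni et al.~\cite{tripuraneni_stochastic_2018}) which shows that $m_k$ can be $\epsilon$-minimized by running a first-order method on a sequence of quadratic problems of the form $\tfrac12 \mathbf{s}^\top(\nabla^{2}f(\x_k)+\lambda\I)\mathbf{s}+\langle\nabla f(\x_k),\mathbf{s}\rangle$, where $\lambda = \frac{M}{2}\|\mathbf{s}^{*}\|$ is identified by a one-dimensional bisection. Because $\|\mathbf{s}^{*}\|\gtrsim \sqrt{\epsilon/H}$ at non-stationary points, each inner quadratic is strongly convex with parameter
\begin{equation*}
\tilde\mu \;=\; \lambda_{\min}(\nabla^{2}f(\x_k))+\lambda \;\gtrsim\; \sqrt{H\epsilon}.
\end{equation*}
Crucially, $\nabla^{2}f(\x_k)+\lambda\I$ is just a PSD shift of $\nabla^{2}f(\x_k)$, so its spectrum inherits the $(\alpha,\tau_\alpha)$-degeneracy up to lower-order terms: the sum $\sum_{i}|\lambda_i(\nabla^2 f(\x_k))|^\alpha$ is bounded by $\tau_\alpha^\alpha$ by definition. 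Plugging $\tilde\mu\asymp\sqrt{H\epsilon}$ into Theorem~\ref{thm: quadratic}(b) gives an inner complexity
\begin{equation*}
N_{\mathrm{in}} \;=\; \tilde{\mathcal{O}}\!\left(\tau_\alpha^{\frac{\alpha}{1+2\alpha}}\,(\sqrt{H\epsilon})^{-\frac{\alpha}{1+2\alpha}}\right) \;=\; \tilde{\mathcal{O}}\!\left(\tau_\alpha^{\frac{\alpha}{1+2\alpha}} H^{-\frac{\alpha}{2(1+2\alpha)}}\epsilon^{-\frac{\alpha}{2(1+2\alpha)}}\right).
\end{equation*}
Here the adaptivity of Theorem~\ref{thm: quadratic} is essential, because $\alpha$ and $\tau_\alpha$ need not be known to Algorithm~\ref{alg:cubicesgd}.

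Multiplying the two contributions,
\begin{equation*}
N_{\mathrm{out}}\cdot N_{\mathrm{in}} \;=\; \tilde{\mathcal{O}}\!\left(\Delta\cdot H^{\frac{1}{2}-\frac{\alpha}{2(1+2\alpha)}}\,\tau_\alpha^{\frac{\alpha}{1+2\alpha}}\,\epsilon^{-\frac{3}{2}-\frac{\alpha}{2(1+2\alpha)}}\right),
\end{equation*}
and a quick simplification of the exponents yields $H^{\frac{1+\alpha}{2+4\alpha}}\,\epsilon^{-\frac{3+7\alpha}{2+4\alpha}}$, matching the statement. The main obstacle I anticipate is controlling the coupling between the bisection over $\lambda$ and the warm-started subspace extractor of Theorem~\ref{thm: eigenextractor}: we must argue that the rank-$r$ approximation obtained for one value of $\lambda$ can be reused across neighbouring values, and that the approximation error in $\mathbf{s}^{*}$ (hence in $\lambda$) does not blow up the $\sqrt{H\epsilon}$ lower bound on $\tilde\mu$. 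A secondary technical point is the transition to Theorem~\ref{thm: quadratic}(a) or (c) when $\tau_\alpha$ or $d$ becomes the binding factor; the $\min$ in the quadratic bound guarantees that the stated rate is an upper envelope, so the argument goes through verbatim. Finally, because the cubic model is not purely quadratic, one has to absorb the $O(M\|\mathbf{s}\|)$ Hessian-perturbation of the cubic term into the constants of Theorem~\ref{thm: quadratic}; the standard contraction argument for cubic subproblems shows this only inflates the inner gradient count by a logarithmic factor, which is hidden in $\tilde{\mathcal{O}}$.
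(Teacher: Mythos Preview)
Your skeleton---cubic regularization outer loop with $N_{\mathrm{out}}=\tilde{\mathcal{O}}(\Delta\sqrt{H}\epsilon^{-3/2})$ iterations, reduction of each cubic step to a regularized quadratic, and plugging the $(\alpha,\tau_\alpha)$ bound of Theorem~\ref{thm: quadratic} into the inner solver---is exactly the route the paper takes, and your exponent arithmetic is correct. The one genuine gap is the line
\[
\tilde\mu \;=\; \lambda_{\min}\!\bigl(\nabla^{2}f(\x_k)\bigr)+\lambda \;\gtrsim\; \sqrt{H\epsilon}.
\]
This does \emph{not} follow from $\|\mathbf{s}^{*}\|\gtrsim\sqrt{\epsilon/H}$. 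The Nesterov--Polyak first-order condition only gives $\nabla^{2}f(\x_k)+\tfrac{M}{2}\|\mathbf{s}^{*}\|\,\Ib\succeq\mathbf{0}$ (Lemma~\ref{lem:Cubic1}), so at the ``correct'' $\lambda=\tfrac{M}{2}\|\mathbf{s}^{*}\|$ the shifted Hessian can be arbitrarily close to singular (the trust-region hard case). Your bound $\lambda\gtrsim\sqrt{H\epsilon}$ is fine, but $\lambda_{\min}(\nabla^{2}f(\x_k))$ may sit at $-\lambda+o(1)$, killing the strong-convexity input to Theorem~\ref{thm: quadratic}.

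The paper does not finesse this; it adds a dedicated preprocessing step. Before the binary search, Algorithm~\ref{alg:Cubic-search} calls a smallest-eigenvalue finder (Algorithms~\ref{alg: smalleigdec} and~\ref{alg: smalleigdec2}, analyzed in Theorem~\ref{thm: smallesteigfind}) that returns $\hat\lambda$ with $|\hat\lambda-\lambda_d(\nabla^{2}f(\x_k))|\le\tfrac{c_1}{2}\sqrt{H\epsilon}$ in $\tilde{\mathcal{O}}\bigl(\tau_\alpha^{\alpha/(1+2\alpha)}(\sqrt{H\epsilon})^{-\alpha/(1+2\alpha)}\bigr)$ oracle calls---matching the inner budget. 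It then initializes the search at $r_{\mathrm{temp}}\ge\max\{0,-2\hat\lambda/H\}+\Theta(\sqrt{\epsilon/H})$, which \emph{forces} every quadratic in the search to be $\Theta(\sqrt{H\epsilon})$-strongly convex. This eigenvalue finder is not a triviality: because $\nabla^{2}f(\x_k)$ is indefinite, the eigen extractor of Theorem~\ref{thm: eigenextractor} cannot be applied directly, and the paper needs a two-stage shift-and-invert/simultaneous-iteration construction whose analysis takes up a substantial part of the appendix. Your proposal should either invoke this subroutine explicitly or supply an alternative argument that the bisection can be run while maintaining a uniform $\Omega(\sqrt{H\epsilon})$ lower bound on the strong-convexity parameter; the obstacles you list (warm-starting across $\lambda$, cubic perturbation) are comparatively minor.
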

\end{tcolorbox}
\begin{remark}
    Theorem \ref{thm:non-convex-m} indicates that our Algorithm \ref{alg:cubicesgd} can find an $\left(\epsilon,\sqrt{H\epsilon}\right)$-approximate second-order stationary point in $\tilde\cO\left(\epsilon^{-5/3}\right)$ in most cases when $\alpha = 1$, which is faster than the best-known result $\tilde\cO\left(\epsilon^{-7/4}\right)$ in \cite{jin2018accelerated}. When $\alpha \to \infty$, the lower bound of our complexity matches $\tilde\cO\left(\epsilon^{-7/4}\right)$, which means our algorithm is strictly faster than the result in \cite{jin2018accelerated}. Moreover, in Section \ref{sec:generic}, we give an example and prove that a two-layer neural network has $H$-continuous Hessian matrices. 
\end{remark}

\section{Empirical Risk Minimization}\label{sec: erm}
Empirical risk minimization (ERM) problems occur in many machine learning problems and typically take the form of \footnote{The regularization term does not affect the analysis since the magnitude of each eigenvalue only changes by $\mu$. The $\mu$-level change does not affect our results.}
\begin{align}\label{eq: regerm}
\min_\xb \frac{1}{n}\sum_{i=1}^n f_i(\ab_i^\top \xb) + \frac{\mu}{2}\|\xb\|^2. 
\end{align}
Note that the 
We seek methods that solve ERM problems with mild conditions and gain improvement from the degenerated Hessian assumption. Specifically, we are into two different aspects, data access complexity, and computation time.

The data access complexity is one of the bottlenecks for problems such as privacy \cite{dwork2006differential,dwork2014algorithmic} and distributed optimization \cite{scaman2017optimal,yue2023core}. We propose a mini-batch accelerated stochastic gradient method combined with the large eigenspace finding and achieve a state-of-the-art data access oracle for linear regression. For computation complexity, we achieve improved computation time based on the Hessian degenerated assumption. The method applies a similar routine as in the quadratic case to the dual problem and solves the proximal operator via the interior point method (IPM).


\subsection{Data Access Reduction}\label{sec: dataaccess}

When $f_i(\xb) = \frac{1}{2}(\ab_i^{\top}\xb - b_i)^2$, where $\ab_i,\xb\in\mathbb{R}^d$, we can step along the aforementioned idea, using Theorem~\ref{thm: eigenextractor} to find large effective dimensions. To leverage the finite-sum structure, we propose a mini-batch version of the accelerated stochastic gradient method and a batch-wise effective dimension finding. Our theorem on gradient oracle calls for finite sum setting is as follows.





\begin{tcolorbox}[colback = blue!5!white, colframe=blue!75!black, title = \textbf{Theorem 6} (Data Access Complexity for ERM)]
\begin{thm}\label{thm: dataaccessoracle}
    Consider optimizing problem~\ref{eq: regerm} with $f_i(\xb) = \frac{1}{2}(\ab_i^\top\xb - b_i)^2$. With normalized data $\|\ab_i\|\leq 1$, there is an algorithm that generates an $\varepsilon$-approximate minimizer of the problem with high probability, using $\tilde{\mathcal{O}}\left( n + n^{\frac{5}{6}}\mu^{-\frac{1}{3}} \right)$ data accesses and $\tilde{\mathcal{O}}\left( n + d^{\frac{5}{6}}\mu^{-\frac{1}{3}} \right)$ data accesses when $n\geq d$.
\end{thm}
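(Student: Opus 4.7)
The plan is to couple a mini-batch accelerated variance-reduction scheme (Katyusha-style) in the primal with a per-batch application of the Eigen Extractor (Theorem~\ref{thm: eigenextractor}), and, for the $n\ge d$ regime, to prepend a leverage-score sampling step. The crucial observation is that $\|\ab_i\|\le 1$ makes every mini-batch Hessian have nuclear norm $O(1)$, so every batch sub-problem is $(1,O(1))$-degenerate in the sense of Theorem~\ref{thm: quadratic} and admits the $\tilde{\mathcal{O}}(\mu^{-1/3})$ per-batch rate instead of the classical $\tilde{\mathcal{O}}(\mu^{-1/2})$.

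First I would prove a uniform structural lemma: for any mini-batch $B\subset[n]$ of size $b$, the regularized mini-batch loss $F_B(\xb)=\tfrac{1}{b}\sum_{i\in B} f_i(\ab_i^{\top}\xb)+\tfrac{\mu}{2}\|\xb\|^2$ has Hessian $\Hb_B=\tfrac{1}{b}\sum_{i\in B}\ab_i\ab_i^{\top}+\mu\Ib$, so $\mathrm{tr}(\Hb_B)\le 1+\mu d$. Hence every mini-batch quadratic is $(1,O(1))$-degenerate, uniformly in the random draw of $B$, which legitimizes applying Theorem~\ref{thm: quadratic} batch-by-batch.

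Next I would run mini-batch Katyusha with batch size $b$ as the outer loop. Whenever the outer scheme requires a stochastic variance-reduced step on (or an inner sub-problem over) a mini-batch $B$, I would invoke the Eigen Extractor on $\Hb_B$ to split the space into the top eigen-subspace and its complement, then run the accelerated inner routine of Theorem~\ref{thm: quadratic}. This yields $\tilde{\mathcal{O}}(\mu^{-1/3})$ mini-batch gradient evaluations per inner sub-problem, each costing $b$ data accesses. The total data-access cost then has the schematic form $\tilde{\mathcal{O}}\!\left(n+T_{\mathrm{outer}}\cdot b\cdot \mu^{-1/3}\right)$, with $T_{\mathrm{outer}}$ fixed by the accelerated variance-reduction analysis; balancing the two terms by choosing $b\approx n^{2/3}$ produces the claimed $\tilde{\mathcal{O}}(n+n^{5/6}\mu^{-1/3})$.

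For the $n\ge d$ bound, I would preprocess using the leverage-score sampling technique of \cite{agarwal2017leverage} to obtain a reweighted sub-sample of $\tilde{\mathcal{O}}(d)$ rows whose Gram matrix spectrally approximates $\tfrac{1}{n}\Ab^{\top}\Ab$ up to a constant factor, which keeps the regularized minimizer within the desired tolerance. Applying the previous argument to the sub-sampled ERM replaces $n$ by $\tilde{\mathcal{O}}(d)$ inside the stochastic term, and the initial $\tilde{\mathcal{O}}(n)$ absorbs the one-shot leverage-score computation, giving the target $\tilde{\mathcal{O}}(n+d^{5/6}\mu^{-1/3})$.

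The main obstacle is the error-propagation analysis for the coupling: the Eigen Extractor only gives a constant-factor approximation to the dominant eigenspace of $\Hb_B$, and the outer accelerated method is sensitive to inexact proximal oracles. I would need to verify that this inexactness, together with mini-batch stochasticity, costs at most logarithmic factors and preserves the $\sqrt{n/b}$-type outer iteration count needed for the final balance. A secondary subtlety is ensuring the $(1,O(1))$-degeneracy of the first step holds uniformly over all mini-batches actually sampled by the algorithm, which follows from the normalization $\|\ab_i\|\le 1$ together with a standard union bound.
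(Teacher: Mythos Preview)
Your proposal assembles the right ingredients---mini-batching, the per-batch trace bound, the Eigen Extractor, Katyusha, and leverage-score sampling for $n\ge d$---but the coupling of the Eigen Extractor with Katyusha is not a well-defined algorithm, and the ensuing accounting does not go through. A Katyusha iteration computes one variance-reduced stochastic gradient and one proximal step; it does not solve a quadratic sub-problem on the sampled mini-batch. So there is no place in an outer Katyusha loop at which ``$\tilde{\mathcal{O}}(\mu^{-1/3})$ mini-batch gradient evaluations per inner sub-problem'' could be spent, the cost template $T_{\mathrm{outer}}\cdot b\cdot\mu^{-1/3}$ is not the cost of any algorithm you have specified, and the choice $b\approx n^{2/3}$ that you derive from it is unsupported.

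The paper's route avoids this entirely. It fixes a partition into $m=\Theta(\sqrt{n})$ mini-batches of size $\Theta(\sqrt{n})$, runs the Eigen Extractor \emph{once per batch} as preprocessing to rank $r=\tilde{\Theta}(n^{-1/6}\mu^{-1/3})$, and pushes all extracted low-rank pieces into a single quadratic proximal term $h(\xb)=\tfrac{1}{2}\xb^{\top}\bigl(\sum_i\Ab_i\bigr)\xb$. Then \emph{ordinary} Katyusha (Lemma~\ref{lm: katyusha}) is applied to $\tfrac{1}{m}\sum_i g_i + h$ with $g_i(\xb)=f_i(\xb)-\tfrac{1}{2}\xb^{\top}\Ab_i\xb+\tfrac{\mu}{2}\|\xb\|^2$; the whole point of the preprocessing is that each $g_i$ now has smoothness $\tilde{\mathcal{O}}(n^{1/6}\mu^{1/3})$ rather than $O(1)$, so the Katyusha cost is $\sqrt{n}\cdot\bigl(\sqrt{n}+\sqrt{\sqrt{n}\cdot n^{1/6}\mu^{1/3}/\mu}\bigr)=\tilde{\mathcal{O}}(n+n^{5/6}\mu^{-1/3})$, which matches the preprocessing cost $m\cdot r\cdot\sqrt{n}$. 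The proximal operator of $h$ is a low-rank linear solve and costs \emph{zero} data access, which is precisely what makes this decomposition legitimate in the data-access model. Your leverage-score step for $n\ge d$ does match the paper (Lemma~\ref{lm: lsforreg}), and your trace bound $\mathrm{tr}(\Hb_B)\le 1+\mu d$ is correct; note it holds deterministically for every batch, so no union bound is needed.
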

\end{tcolorbox}
The data access complexity surpasses accelerated stochastic gradient methods \cite{allen2017katyusha,lin2015universal,zhang2015stochastic} when $\mu\geq \min\{n,d\}^{-2}$.

\subsection{Solve ERM with IPM Subroutine}


We discuss the prospect of combining our framework and interior point methods. Given ERM problem \eqref{eq: regerm}, for simplicity of the notation, define $\Ab = \left( \ab_1,\ab_2,\cdots,\ab_n \right)^\top$, and we impose a $1$-smoothness condition on $f_i(\xb)$. With a slight abuse of the notations, its dual problem writes
\begin{align}\label{eq: ERM-dual}
    \min_{\xb} &\sum_{i=1}^n f^*_i(\xb_i) + \frac{1}{2n\mu}\xb^\top\Ab\Ab^\top\xb,
\end{align}
where $\xb\in\mathbb{R}^n$.

Leveraging Theorem~\ref{alg: eigenextrator}, we separate $\ma\ma^\top$ into $\Ab_1$ and $\Ab_2$, where $\Ab_1 = \sum_{i = 1}^r p_i\ub_i\ub_i^\top$ is a $r \leq d$ rank matrix. We apply proximal accelerated gradient method \cite{beck2009fast} to problem~\eqref{eq: ERM-dual} via letting $g(\xb) = \frac{1}{\mu}\xb^\top\Ab_2\xb$ and $h(\xb) = \sum_{i=1}^n f^*_i(\xb_i) + \xb^\top\bbb + \frac{1}{2n\mu}\xb^\top\Ab_1\xb_i$. Given the assumption on the Lipschitz smoothness and convexity of $f_i$, $\sum_{i=1}^n f_i^*(\xb_i)$ is $1$-strongly convex and thus the iteration complexity of accelerated gradient descent is $\tilde{\mathcal{O}}\left( \sqrt{\frac{\tau_{\alpha}}{r^{\frac{1}{\alpha}}\mu}} \right)$. The proximal operator writes
\begin{align}\label{eq: ermagdproxmain}
    \arg\min_{\xb} \left\{ \sum_{i = 1}^n f^*(\xb_i) + \frac{1}{\mu}\sum_{i=1}^r p_i\left(\langle \ub_i, \xb \rangle\right)^2 + \frac{1}{2\mu r}\|\xb - \gammab\|^2 \right\},
\end{align}
where $\gamma \in \mathbb{R}^n$ is the one-step gradient descent from the previous iteration. A key observation on \eqref{eq: ermagdproxmain} is that the objective function is element-wise separable except for the $\frac{1}{\epsilon}\sum_{i=1}^r p_i\left(\langle \ub_i, \lambdab \rangle\right)^2$ component, which is low-rank quadratic form when $r \ll d$. One can leverage the interior point method \cite{leeS14,lsJournal19} to solve \eqref{eq: ermagdproxmain}. This leads to a $\tilde{\mathcal{O}}(\sqrt{r})$ iteration, amortised $ \tilde{\mathcal{O}}(r(n+d)+r^2)$-cost IPM. Optimizing $r$ to trade off the iteration number and amortized cost, we will demonstrate that our method improves upon the previous accelerated variance reduction  methods \cite{allen2017katyusha, zhang2015stochastic} in a wide regime. Our formal result for ERM is as follows.
\begin{tcolorbox}[colback = blue!5!white, colframe=blue!75!black, title = \textbf{Theorem 7} (Computation Time for ERM)]
\begin{thm}[]\label{thm: ERM-regu-complexity}
    Given a block-weight-function for IPM subroutine (Section~\ref{sec:ipm}), there is an algorithm optimizing (\ref{eq: regerm}) to an $\epsilon$-approximate minimizer with high probability in  
    \begin{align*}
        \tilde{\mathcal{O}}\left(ndr +\sqrt{\frac{\tau_{\alpha}}{\mu r^{\frac{1}{\alpha}}}}\left( nr^{1.5} + r^{2.5} + nd \right) \right)
    \end{align*}
    total computation time.
\end{thm}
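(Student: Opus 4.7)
The plan is to work in the dual and combine three ingredients: (i) the Eigen Extractor of Theorem~\ref{thm: eigenextractor} to identify the rank-$r$ direction of $\Ab\Ab^{\top}$ carrying the large eigenvalues, (ii) proximal accelerated gradient descent on the dual problem \eqref{eq: ERM-dual}, where the low-rank and separable parts are absorbed into the non-smooth prox term, and (iii) an interior point method with a block-weight-function to implement the prox step efficiently. First I would apply Theorem~\ref{thm: eigenextractor} to $\tfrac{1}{n\mu}\Ab\Ab^{\top}$ to obtain $\Ab_1 = \sum_{i=1}^{r} p_i \ub_i \ub_i^{\top}$ with $r = \tilde{\mathcal{O}}(r)$ and residual $\Ab_2$ satisfying $\|\Ab_2\| = \mathcal{O}(\lambda_{r+1}) = \mathcal{O}(\tau_{\alpha} r^{-1/\alpha})$ by the $(\alpha,\tau_\alpha)$-degeneracy; since each Hessian-vector product with $\Ab\Ab^{\top}$ costs $\mathcal{O}(nd)$, this stage contributes the $\tilde{\mathcal{O}}(ndr)$ term in the final bound.

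Next I would split the dual objective into $g(\xb)=\tfrac{1}{2n\mu}\xb^{\top}\Ab_2\xb$ (smooth) and $h(\xb)=\sum_i f^*_i(\xb_i)+\tfrac{1}{2n\mu}\xb^{\top}\Ab_1\xb$ (to be handled by the prox), and run proximal accelerated gradient descent on $g+h$. Because each $f_i$ is $1$-smooth, each $f_i^{*}$ is $1$-strongly convex, so $h$ is $\Omega(1)$-strongly convex while $g$ has smoothness $\|\Ab_2\|/(n\mu) = \tilde{\mathcal{O}}(\tau_\alpha/(r^{1/\alpha}\mu))$. The standard analysis of proximal AGD then yields $\tilde{\mathcal{O}}\!\bigl(\sqrt{\tau_{\alpha}/(\mu r^{1/\alpha})}\bigr)$ outer iterations to reach an $\epsilon$-approximate dual optimum, which by strong duality and the usual primal-dual conversion (blow-up absorbed into a $\log$) also yields an $\epsilon$-approximate primal solution.

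For the per-iteration cost I would analyze the two expensive steps. The gradient $\nabla g$ is a product with $\Ab_2 = \Ab\Ab^{\top} - \Ab_1$; using the factorization of $\Ab_1$ through its $r$ rank-one pieces, this costs $\tilde{\mathcal{O}}(nd + nr) = \tilde{\mathcal{O}}(nd)$. The prox step is the subproblem \eqref{eq: ermagdproxmain}, which decomposes into $n$ one-dimensional convex pieces, a rank-$r$ quadratic form $\sum_{i=1}^r p_i \langle \ub_i,\xb\rangle^2$, and a separable quadratic regularizer. I would apply the interior point scheme of Section~\ref{sec:ipm} instantiated with the specified block-weight-function: the self-concordant barrier has complexity $\tilde{\mathcal{O}}(\sqrt{r})$ (since effectively only $r$ coupling constraints remain after separation), and inverse maintenance along the central path yields amortized per-iteration cost $\tilde{\mathcal{O}}(r(n+d)+r^2)$, for a total prox cost of $\tilde{\mathcal{O}}(n r^{1.5}+r^{2.5})$ per outer step (the $dr^{1.5}$ term is dominated by $nd$ under $r\le n$). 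Multiplying the outer count by $(nd + nr^{1.5}+r^{2.5})$ and adding the eigen-extraction cost gives the claimed bound.

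The main obstacle will be justifying the amortized IPM cost. Concretely, one must verify that the block-weight-function of Section~\ref{sec:ipm} indeed provides an $\tilde{\mathcal{O}}(\sqrt{r})$-self-concordant barrier for this structured domain and that the central-path iterates change slowly enough to support lazy updates of the Hessian inverse, so that the dominant linear-algebra operations average to $\tilde{\mathcal{O}}(r(n+d)+r^2)$ per Newton step. The warm-start issue between successive prox calls along the outer AGD trajectory is a second, milder, technical point: since consecutive prox anchors $\gammab$ differ by one accelerated-gradient step, standard path-following from the previous center only adds logarithmic factors, which is already absorbed into the $\tilde{\mathcal{O}}$ notation. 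The remaining accounting (duality conversion, high-probability union bound across the $\tilde{\mathcal{O}}(r)$ eigen-extractor calls, and controlling accuracy of the inexact prox) is routine.
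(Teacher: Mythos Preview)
Your proposal is correct and follows essentially the same approach as the paper: apply the Eigen Extractor to $\Ab\Ab^\top$ at cost $\tilde{\mathcal{O}}(ndr)$, run proximal accelerated gradient descent on the dual with the rank-$r$ part plus the separable $\sum_i f_i^*$ in the prox and the residual $\Ab_2$ as the smooth part (yielding $\tilde{\mathcal{O}}(\sqrt{\tau_\alpha/(\mu r^{1/\alpha})})$ outer iterations), and solve each prox via the block-weight-function IPM of Theorem~\ref{thm:ipm_complexity} at cost $\tilde{\mathcal{O}}(nr^{1.5}+r^{2.5})$ plus an $\mathcal{O}(nd)$ gradient evaluation. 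The paper's own proof is just this three-line accounting, so your additional discussion of warm-starting, inexact-prox accuracy, and the duality conversion is more detail than the paper itself provides, but entirely in line with it.
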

\end{tcolorbox}
\begin{remark}

    When $\alpha\le \frac{1}{5}$, setting $r=\left(\sqrt{\frac{\tau}{\mu}}\cdot \frac1d\right)^\frac{2\alpha}{1-\alpha}
    \vee 
    \left(\sqrt{\frac{\tau}{\mu}}\cdot \frac1{nd}\right)^\frac{2\alpha}{1-3\alpha}
    \vee 
    \left(\sqrt{\frac{\tau}{\mu}}\right)^{\frac{2\alpha}{2\alpha+1}}$,
    the total computation complexity is 
    $$
    \tilde\cO\left(nd\left(\left(\sqrt{\frac{\tau}{\mu}}\cdot \frac1d\right)^\frac{2\alpha}{1-\alpha}
    \vee 
    \left(\sqrt{\frac{\tau}{\mu}}\cdot \frac1{nd}\right)^\frac{2\alpha}{1-3\alpha}
    \vee 
    \left(\sqrt{\frac{\tau}{\mu}}\right)^{\frac{2\alpha}{2\alpha+1}}\right)\right).
    $$

    When $\alpha>\frac{1}{5}$, in the regime of $d\ge n^{3/2}$, setting $ r=\left(\frac{\tau}{\mu}\right)^{\frac{\alpha}{2\alpha+1}} \wedge (nd)^{2/5}$, the total computation complexity is 
    $$
    \tilde \cO\left( nd\sqrt{\frac{\tau}{\mu}}\left( \sqrt{\frac{\tau}{\mu}}^{-\frac{1}{2\alpha+1}} \vee (nd)^{-\frac{1}{5\alpha}} \right) \right).
    $$

    Specifically, when $\alpha\le 1/5, d\ge n^{3/2}$ and $ \frac{\tau}{\mu}\ge(nd)^{\frac{4\alpha+2}{5\alpha}} $, we reach $\tilde\cO \left((nd)^{\frac{1-5\alpha}{1-3\alpha}}\left( \frac{\tau}{\mu} \right)^{\frac{\alpha}{1-3\alpha}}\right)$ complexity.
    




\end{remark}

\section{Conclusion}
It is always our ultimate goal to design provably faster algorithms that work on practical problems. An ideal algorithm is expected to adapt to the complexity of a particular objective function and incur faster rates for simpler problems. This paper studies the potential based on the degeneracy of Hessian matrices of the objective function by introducing the factors  $(\alpha, \tau_{\alpha})$. We then propose several provably better and adaptive algorithms that can fit the underlying structure without pre-known the knowledge.  The algorithms also improve the state-of-art complexities for
several problems in the field of machine learning under suitable settings.

Several directions remain to be studied in the future. (A) Based on the pioneering work of \cite{carmon2017convex},  can we  simplify our algorithm for the generic objective function into a single loop? (B) Can we extend our algorithm to the online setting where data is given on the fly? (C) For the task of neural network training, can we propose a more efficient algorithm? (D) More broadly, it remains open to provide a further refined description of the optimization. 

We would say that understanding the difficulties of
modern problems, such as training deep neural networks, is still in its infancy. One promising potential is to study the instance complexity based on the intrinsic structure of the problems. We hope this work could invoke rethinking and inspire new design and analysis for more efficient optimization algorithms.

\bibliographystyle{alpha}
\bibliography{main}

\newpage
\tableofcontents
\newpage

\section{Details of Quadratic Optimization Problems}\label{sec: quadratic}
This section centers around the quadratic optimization problem, which appears to be the hard case for multiple gradient-based optimization problems \cite{nesterov1998introductory,simchowitz2018randomized,yue2023zeroth}. In this section, we present our quadratic optimization algorithm and complete the proof in Section~\ref{sec: mainquadratic}. The problem formally writes as $\min_{\xb\in{\mathbb{R}^d}} f(\xb) $ where $f(\xb) = \frac{1}{2}\xb^\top\Ab\xb + \mathbf{b}^\top\xb$. Here $\Ab\in\mathbb{R}^{d\times d}$. 

\subsection{Proof of Theorem~\ref{thm: eigenextractor}}
We prove an extended version of Theorem~\ref{thm: eigenextractor} stated as follows.

\begin{thm}[Eigen Extractor, Generalization of Theorem~\ref{thm: eigenextractor}]\label{thm: eigenextractorgen}
For a given positive definite matrix $\Ab\in\mathbb{R}^{d\times d}$ and $l\in[d]$. Denote the $l$-th largest eigenvalue of $\Ab$ by $\lambda_l$. For any accuracy $\epsilon = \mathcal{O}(a_l)$, with high probability, we find a rank $r = \Omega(l)$ matrix $\Ab_1 = \sum_{i=1}^r a_i\mathbf{v}_i\mathbf{v}_i^{\top}$ using $\tilde{\mathcal{O}}(l)$ gradient oracle calls. $\Ab_1$ satisfies $\Ab - \Ab_1\succeq\left( \lambda_d(\Ab) - \epsilon\right)\Ib$ and $\|\Ab - \Ab_1\| = \mathcal{O}(\lambda_k)$.

\end{thm}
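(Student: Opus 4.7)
The plan is to construct $\Ab_1$ by iterative deflation of top eigenpairs. For a quadratic $f(\xb) = \tfrac{1}{2}\xb^{\top}\Ab\xb + \bbb^{\top}\xb$, a single gradient oracle call yields $\Ab\xb + \bbb$, so any Hessian--vector product $\Ab\vb$ can be simulated at the cost of two gradient evaluations (and is error-free since $f$ is quadratic). The task then reduces to: given matrix--vector access to $\Ab$, extract approximate top eigenpairs until the residual has spectral norm $\mathcal{O}(\lambda_l)$, and bound the total number of matrix--vector products by $\tilde{\mathcal{O}}(l)$.

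At a high level, I would invoke a LazySVD-style subroutine \cite{allen2016lazysvd} that, using shift-and-invert preconditioning with an accelerated inner solver, returns a unit vector $\vb$ whose Rayleigh quotient approximates $\lambda_1$ of the target matrix to arbitrary multiplicative precision in $\tilde{\mathcal{O}}(1)$ matrix--vector products, with only polylogarithmic dependence on the eigengap and the target accuracy. Starting from $\Ab_0:=\Ab$, at step $i$ I would run this routine on $\Ab_{i-1}$ to obtain $\vb_i$, set $a_i := \vb_i^{\top}\Ab_{i-1}\vb_i - \Theta(\epsilon/l)$, form $\Ab_i := \Ab_{i-1} - a_i\vb_i\vb_i^{\top}$, and repeat. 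The slack subtracted from the Rayleigh quotient in $a_i$ is chosen to absorb the angular error between $\vb_i$ and the true top eigenvector of $\Ab_{i-1}$. The loop terminates at the first index $r$ for which $\lambda_1(\Ab_r) \leq c\lambda_l$ for a suitable constant $c$; at that point we output $\Ab_1 := \sum_{i=1}^r a_i\vb_i\vb_i^{\top}$.

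The analysis then splits into four pieces. (i) \textbf{PSD lower bound:} by induction on $i$, $\Ab_i \succeq (\lambda_d(\Ab) - i\cdot\Theta(\epsilon/l))\Ib$; the step uses Weyl's inequality together with the fact that $\vb_i$ is close (in angle) to a true eigenvector of $\Ab_{i-1}$, combined with the slack baked into $a_i$ so that no other eigenvalue drops by more than $\Theta(\epsilon/l)$. Summing over $r=\tilde{\mathcal{O}}(l)$ steps yields the final $\epsilon$ budget. (ii) \textbf{Rank bound:} since the extracted Rayleigh quotients $a_i$ are non-increasing (they upper-bound subsequent $\lambda_1(\Ab_i)$), Cauchy interlacing forces at least $\Omega(l)$ deflations before $\lambda_1(\Ab_r)$ can drop below $c\lambda_l$. (iii) \textbf{Spectral norm of residual:} $\|\Ab - \Ab_1\| = \|\Ab_r\| = \mathcal{O}(\lambda_l)$ by the stopping rule. (iv) \textbf{Gradient complexity:} each deflation costs $\tilde{\mathcal{O}}(1)$ Hessian--vector products, so the total is $\tilde{\mathcal{O}}(r) = \tilde{\mathcal{O}}(l)$ gradient calls.

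The main obstacle is step (i): approximate eigenvectors can perturb the spectrum of the residual in directions orthogonal to the true top eigenvector, and after $\tilde{\mathcal{O}}(l)$ deflations these perturbations could in principle accumulate catastrophically. The resolution is to couple the LazySVD accuracy parameter (controlling $\sin\angle(\vb_i, \text{true top eigenvector of }\Ab_{i-1})$) to the deflation slack $\Theta(\epsilon/l)$, then argue via a Davis--Kahan-type perturbation that the slack dominates the induced eigenvalue shift per step, and finally union-bound the polylogarithmic failure probabilities of the $\tilde{\mathcal{O}}(l)$ inner calls. A secondary bookkeeping issue is that approximate eigenvectors need not be mutually orthogonal; I would re-orthogonalize $\vb_i$ against previously extracted $\vb_1,\dots,\vb_{i-1}$ via Gram--Schmidt, which costs $\mathcal{O}(id)$ arithmetic operations but no additional gradient oracle calls, and does not affect the stated complexity.
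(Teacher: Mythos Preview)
Your high-level plan (iterative rank-one deflation driven by gap-free shift-and-invert, $\tilde{\mathcal{O}}(1)$ matrix-vector products per extraction) matches the paper's. The disagreement is in the deflation rule, and it creates a genuine gap in your PSD lower bound.

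You set $a_i := \vb_i^{\top}\Ab_{i-1}\vb_i - \Theta(\epsilon/l)$ and subtract $a_i\vb_i\vb_i^{\top}$. With your Gram--Schmidt step, $\vb_i$ is orthogonal to $\vb_1,\dots,\vb_{i-1}$, so $\vb_i^{\top}\Ab_{i-1}\vb_i=\vb_i^{\top}\Ab\vb_i$ and hence $\vb_i^{\top}(\Ab-\Ab_1)\vb_i = \Theta(\epsilon/l)$. Thus the residual has Rayleigh quotient $\Theta(\epsilon/l)$ in direction $\vb_i$, so $\lambda_{\min}(\Ab-\Ab_1)\le\Theta(\epsilon/l)$. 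This violates $\Ab-\Ab_1\succeq(\lambda_d-\epsilon)\Ib$ whenever $\lambda_d$ is not already tiny (and nothing in the hypotheses forces $\lambda_d\lesssim\epsilon/l$; in the base Theorem~\ref{thm: eigenextractor} the target is even $\lambda_d/2$). Your Davis--Kahan / Weyl outline does not rescue this: gap-free shift-and-invert (the paper's Lemma~\ref{lm: shiftninverse}) guarantees only that $\vb_i$ has small mass on eigenvalues below $(1-\delta)\lambda_1(\Ab_{i-1})$, not that it is close in angle to a \emph{single} true eigenvector, so a Davis--Kahan bound is unavailable without a gap you do not have.

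The paper's fix is to deflate by only a constant fraction, setting $\Ab_{k+1}=\Ab_k-\tfrac{a_k}{5}\vb_k\vb_k^{\top}$. Splitting the spectrum of $\Ab_k$ at $a_k/2$, the large-eigenvalue block absorbs the $a_k/5$ subtraction and stays $\succeq\tfrac{a_k}{10}\Ib$, while the small-eigenvalue block only loses the (polynomially small) leakage $\|\vb_k^{\top}\Ub_2\|^2$; this is what yields $\lambda_{\min}(\Ab_{k+1})\ge\lambda_{\min}(\Ab_k)-\varepsilon_0^2$ per step with no gap assumption. The price of partial deflation is that termination in $\tilde{\mathcal{O}}(l)$ steps is no longer immediate: the paper handles this with a potential-function argument, tracking $\mathcal{S}_i^{(k)}=\sum_j[\lambda_j(\Ab_k)-\rho^{i-1}\lambda_l]_+$ across $r=\tilde{\mathcal{O}}(1)$ geometric levels and showing each level empties in $\mathcal{O}(l)$ iterations. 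Your step (iv) asserts $r=\tilde{\mathcal{O}}(l)$ without an analogous argument; with exact full deflation this would be automatic, but once you switch to partial deflation (as you must) it needs proof.
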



To unveil our eigen extractor theorem, we introduce a classical 1-PCA algorithm as a preliminary. We leverage the shift-and-invert paradigm \cite{garber2015fast}, a well-known reduction from finding the leading eigenvector to solving a linear system. It approximates the leading eigenvector of $\Mb$ by applying power method on $(\gamma \Ib - \Mb)^{-1}$, where $\gamma \Ib - \Mb \succeq \mathbf{0}$. Power method with iterations $\wb_{k+1} = (\gamma \Ib - \Mb)^{-1}\wb_k$ can be implemented by solving a linear system (equal to minimizing $\frac{1}{2}\xb^\top(\gamma \Ib - \Mb)\xb - \wb_k^\top\xb$). We use a key result regarding the shift-and-inverse algorithm which is proved in \cite{garber2015fast,allen2016lazysvd}. The lemma shows an error analysis of the shift-and-inverse algorithm on matrix $\Ab$ that takes argument $\delta, \epsilon$, and we will present it without proof as in Lemma~\ref{lm: shiftninverse}. 




\begin{lem}[Shift-and-inverse guarantee, Thoerem 1.1 of \cite{garber2015fast}, Theorem 3.1 of \cite{allen2016lazysvd} (gap-free)]\label{lm: shiftninverse}
    Let $\Ab\in\mathbb{R}^{n\times n}$ be a symmetric matrix with eigenvalues $\lambda_1\geq\lambda_2\geq\cdots\geq \lambda_n$ and corresponding eigenvector $\ub_1, \ub_2, \cdots, \ub_n$. 
    
    Then with high probability, the shift-and-inverse algorithm taking argument $\delta, \epsilon$ produces a $\wb_f$ such that
    \begin{align}\label{eq: tempsni}
        \sum_{\lambda_i\leq \lambda_1(1-\delta)}(\wb_f^\top\ub_i)^2 \leq \epsilon, \quad \text{and}\quad \wb^\top\Ab\wb\geq(1-\delta)(1-\epsilon)\lambda_1
    \end{align}
    with $\tilde{\mathcal{O}}(\sqrt{\frac{1}{\delta}})$ gradient oracle calls.

    Further, suppose there exists an algorithm solving quadratic function with quadratic term $\mathcal{\Ab}$ and accuracy $\epsilon$ within gradient oracle calls $\mathcal{C}\left( \Ab, \epsilon \right)$, then there exists an algorithm find $\ub$ in time $\mathcal{C}\left( \Ab, \delta \right)$.
\end{lem}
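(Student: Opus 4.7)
The plan is to follow the classical shift-and-invert reduction for leading eigenvector computation, exactly as in the cited works of \cite{garber2015fast} and \cite{allen2016lazysvd}. The intuition is that if we choose a shift $\gamma$ slightly above $\lambda_1$, then on the transformed matrix $\mathbf{M} = (\gamma\mathbf{I} - \mathbf{A})^{-1}$, the top eigenvalue becomes $(\gamma-\lambda_1)^{-1}$ while any eigenvalue $\lambda_i \le (1-\delta)\lambda_1$ maps to $(\gamma - \lambda_i)^{-1}$, which is strictly smaller. With $\gamma-\lambda_1 \in [c_1\delta\lambda_1,\,c_2\delta\lambda_1]$ chosen by a logarithmic binary search on $\lambda_1$, the ratio between the top eigenvalue of $\mathbf{M}$ and every ``small'' eigenvalue is at least a constant, so a constant spectral gap has been manufactured from a potentially tiny gap on $\mathbf{A}$.

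First I would set up the shift $\gamma$. Since $\lambda_1$ is not known, I would run a doubling / bisection procedure that tests candidate shifts by running a short inner routine and checking its Rayleigh quotient; this adds only $\log(1/\delta)$ overhead. Next, I would run gap-free power method on $\mathbf{M}$ in the style of the Musco--Musco / \cite{allen2016lazysvd} analysis for $T = O(\log(d/\epsilon))$ iterations starting from a random unit vector: the projection onto the small-eigenvalue subspace is suppressed by $(1/2)^T$, giving $\sum_{\lambda_i \le (1-\delta)\lambda_1}(\mathbf{w}_f^\top\mathbf{u}_i)^2 \le \epsilon$, while the Rayleigh quotient bound $\mathbf{w}_f^\top\mathbf{A}\mathbf{w}_f \ge (1-\delta)(1-\epsilon)\lambda_1$ follows by decomposing $\mathbf{w}_f$ along the eigenbasis and using that the contribution from the ``large'' subspace carries energy $\ge 1-\epsilon$ and eigenvalue $\ge (1-\delta)\lambda_1$.

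The central implementation step is that each power-method iteration $\mathbf{w}_{k+1} \approx (\gamma\mathbf{I}-\mathbf{A})^{-1}\mathbf{w}_k$ is realized by \emph{approximately} minimizing the quadratic $\Phi_k(\mathbf{x}) = \tfrac12 \mathbf{x}^\top(\gamma\mathbf{I}-\mathbf{A})\mathbf{x} - \mathbf{w}_k^\top\mathbf{x}$. This quadratic is $(\gamma-\lambda_1)$-strongly convex and $(\gamma-\lambda_n)$-smooth, so its condition number is $\kappa = (\gamma-\lambda_n)/(\gamma-\lambda_1) = O(1/\delta)$. Accelerated gradient descent therefore solves it to polynomially small error in $\tilde{\mathcal{O}}(\sqrt{1/\delta})$ gradient oracle calls to $\mathbf{A}$. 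Amortizing over the $O(\log(d/\epsilon))$ outer power iterations yields the claimed $\tilde{\mathcal{O}}(\sqrt{1/\delta})$ total gradient-oracle bound, where the logarithms are absorbed into $\tilde{\mathcal{O}}$. For the second part of the statement, the same reduction shows that if a generic quadratic solver with cost $\mathcal{C}(\mathbf{A},\epsilon)$ is available, one just plugs it in as the linear-system solver and obtains a leading-eigenvector procedure with cost $\mathcal{C}(\mathbf{A},\delta)$.

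The main obstacle is not any single step but the careful bookkeeping of inexact warm-started linear solves inside the power method. The inner AGD produces an approximate inverse-vector product, and one must ensure that the aggregate error across $T$ outer iterations still enforces the gap-free guarantee \eqref{eq: tempsni}. I would handle this by choosing the inner tolerance to decay geometrically with $k$ so that the accumulated error is dominated by the final iterate's target $\epsilon$, following the potential-function argument in \cite{allen2016lazysvd}. The other delicate point is the initial bisection for $\gamma$: a naive search could lose $\sqrt{1/\delta}$ factors, so I would interleave the bisection with short ``test'' power-method runs that certify $\gamma$ lies in the correct window before committing, which keeps the overall complexity at $\tilde{\mathcal{O}}(\sqrt{1/\delta})$.
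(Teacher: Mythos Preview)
Your proposal is correct and follows precisely the shift-and-invert reduction of \cite{garber2015fast,allen2016lazysvd} that the lemma cites. Note that the paper itself does not prove this lemma: immediately before stating it, the authors write that they ``present it without proof,'' treating it as a black-box result from the cited references; your sketch is therefore more detailed than anything the paper provides.
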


\begin{lemma}[Byproduct of Theorem 3.1 of \cite{allen2016lazysvd}]\label{lm: findmu}
If there exists an algorithm that solves the quadratic optimization problem to an $\epsilon$-approximate minimizer of a quadratic problem with quadratic term $\Ab$ using gradient oracle calls $\mathcal{C}(\Ab,\epsilon)$, then there exists an algorithm that finds the smallest eigenvalue $\hat{\mu}$ of matrix $\Ab$ up to a constant level multiplicative error using $\tilde{\mathcal{O}}(\mathcal{C}(\Ab,\mu))$ gradient oracle calls.
\end{lemma}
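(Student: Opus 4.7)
The plan is to invoke the shift-and-inverse paradigm in reverse: since the smallest eigenvalue of $\Ab$ equals the reciprocal of the largest eigenvalue of $\Ab^{-1}$, estimating $\hat{\mu}$ up to a constant multiplicative factor reduces to estimating $\lambda_{\max}(\Ab^{-1})$ up to a constant multiplicative factor. To do this, I would run a gap-free, Chebyshev/Krylov-accelerated power method on $\Ab^{-1}$: starting from a standard Gaussian seed $\wb_0$, iterate $\hat{\wb}_{k+1} \approx \Ab^{-1}\wb_k$ for $k = 0, 1, \ldots, T-1$. Each matrix-vector product with $\Ab^{-1}$ is realized by feeding the quadratic $\frac{1}{2}\xb^{\top}\Ab\xb - \wb_k^{\top}\xb$ -- whose unique minimizer is $\Ab^{-1}\wb_k$ -- into the assumed solver at cost $\mathcal{C}(\Ab,\eta)$. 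The estimate returned is $\tilde{\mu} := \|\wb_T\|^2 / (\wb_T^{\top}\hat{\wb}_{T+1})$, i.e., the reciprocal of the Rayleigh quotient of $\wb_T$ with respect to $\Ab^{-1}$.

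For the iteration count, I would invoke the gap-free Krylov/Chebyshev analysis that underlies Lemma~\ref{lm: shiftninverse}: with a Gaussian seed, $T = \tilde{\mathcal{O}}(1)$ inverse iterations suffice for the Rayleigh quotient $\wb_T^{\top}\Ab^{-1}\wb_T/\|\wb_T\|^2$ to lie within a constant multiplicative factor of $\lambda_{\max}(\Ab^{-1}) = 1/\hat{\mu}$ with high probability, and taking the reciprocal yields a constant multiplicative estimate of $\hat{\mu}$. For the precision required from each solve, the standard identity $\|\hat{\xb} - \Ab^{-1}\wb\|_{\Ab}^2 \leq 2\eta$ coming from an $\eta$-accurate quadratic solve implies that choosing $\eta = \mathrm{poly}(\hat{\mu}/\lambda_{\max}(\Ab))$ makes the approximate Krylov vectors exponentially close to the exact ones, so the acceleration analysis goes through unperturbed. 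Because accelerated first-order solvers depend polylogarithmically on $1/\eta$, one has $\mathcal{C}(\Ab,\eta) = \tilde{\mathcal{O}}(\mathcal{C}(\Ab,\hat{\mu}))$, and the total budget is $T \cdot \mathcal{C}(\Ab,\eta) = \tilde{\mathcal{O}}(\mathcal{C}(\Ab,\hat{\mu}))$.

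The main obstacle is the gap-free convergence of inverse iteration coupled with inexact linear solves: without any spectral-gap assumption, plain inverse iteration may take $\Theta(\lambda_{n-1}/\hat{\mu})$ steps, which we cannot afford, and the extra noise from approximate solves must not destroy the acceleration. Both issues are exactly what the shift-and-invert framework of \cite{garber2015fast,allen2016lazysvd} is designed to handle, and since the present lemma is advertised as a byproduct of Theorem 3.1 of \cite{allen2016lazysvd}, the cleanest realization is to wrap their accelerated inverse-iteration template with $\Ab$ playing the role of the base matrix and substitute the assumed quadratic solver as the linear-system subroutine. One mild subtlety is that $\hat{\mu}$ is not known a priori when setting $\eta$; this is handled by the standard doubling trick -- start with a loose $\eta$, run the procedure, halve $\eta$ until consecutive estimates agree -- which costs only an extra $\log$ factor already absorbed in the $\tilde{\mathcal{O}}$.
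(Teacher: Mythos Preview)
The paper does not supply its own proof of this lemma: it is stated verbatim as a ``byproduct of Theorem~3.1 of \cite{allen2016lazysvd}'' and left unproved, exactly like the adjacent Lemma~\ref{lm: shiftninverse}. There is therefore nothing in the paper to compare against beyond the pointer to the cited reference.

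Your sketch is a faithful reconstruction of how the result is extracted from that reference. Running (noisy) power or Krylov iteration on $\Ab^{-1}$, implementing each application of $\Ab^{-1}$ as one call to the assumed quadratic solver on $\tfrac12\xb^{\top}\Ab\xb-\wb_k^{\top}\xb$, and reading off $\mu$ from the Rayleigh quotient is precisely the shift-and-invert template of \cite{garber2015fast,allen2016lazysvd} specialized to shift zero; the gap-free guarantee for constant multiplicative accuracy after $\tilde{\mathcal{O}}(1)$ inexact inverse iterations is exactly what those analyses provide. Your use of the doubling trick to set the inner accuracy without knowing $\mu$ in advance is the standard workaround and is consistent with how the paper later deploys adaptive shifts in Algorithm~\ref{alg: smalleigdec}.

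One implicit assumption you rely on---that $\mathcal{C}(\Ab,\eta)=\tilde{\mathcal{O}}(\mathcal{C}(\Ab,\mu))$ whenever $\eta$ is polynomially related to $\mu$---is not stated in the lemma itself, but it is equally implicit in the paper's own formulation (it is absorbed into the $\tilde{\mathcal{O}}$) and holds for any linearly convergent solver on a $\mu$-strongly convex quadratic. So this is not a gap in your argument relative to what the paper asserts.
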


Provided the 1-PCA analysis, we design an algorithm that iteratively searches the largest eigenvalue of a series of matrices and adaptively finds the optimal proportion of dimension that is part of the large eigenvalue space. Our method is presented as Algorithm~\ref{alg: eigenextrator}.


\begin{algorithm}[!h]
    \caption{Eigen Extractor}
    \label{alg: eigenextrator}
    {\bfseries Input: }$\Ab$, accuracy/ strongly convex parameter $\epsilon$, $\lambda_l (\text{in option 1})$\\
        $\Ab_1 \leftarrow \Ab$;\\
        $k \leftarrow 0$;\\
        Set: $\epsilon_0 = \tilde{\mathcal{O}}(\frac{\epsilon}{d^2})$ and $\delta \leftarrow \frac{1}{2}$;\\
        \While{True}{
            $k \leftarrow k+1$;\\
            Apply shift-and-inverse with $\delta$ and $\epsilon_0$ to find the approximated eigenvector $\mathbf{v}_k$;\\
            $a_k \leftarrow \mathbf{v}_k^\top\Ab\mathbf{v}_k$;\\
            $\Ab_{k+1} \leftarrow \Ab_k - \frac{a_k}{5}\mathbf{v}_k\mathbf{v}_k^\top $.
            \If{$a_k \leq \mathcal{O}(\lambda_l) $ (criteria 1)}{
                {\bfseries Break;} \quad\quad\quad\quad\quad\quad \textit{$\#$(option 1,\ in the proof on Theorem~\ref{thm: eigenextractor})}
            }
            \If{$k = \Omega\left(\sqrt{\frac{a_k}{\epsilon}}\right)$ or $k = \Omega(d)$ or $k = \Omega\left(\epsilon^{-\frac{1}{2}}\right)$ (criteria 2)}{
                {\bfseries Break;}\quad\quad\quad\quad\quad\quad \textit{$\#$(option 2,\ adaptive algorithm)}
            }
        }
        {\bfseries Output:} $a_1,a_2,\cdots,a_k$, $\mathbf{v}_1,\mathbf{v}_2,\cdots,\mathbf{v}_k$ and $\Ab_1 = \sum_{i=1}^k \frac{a_i}{5}\mathbf{v}_i\mathbf{v}_i^{\top}$.
\end{algorithm}

We demonstrate that Algorithm~\ref{alg: eigenextrator} achieves the claimed result in Theorem~\ref{thm: eigenextractorgen}.

\begin{proof}
We show the invoking Algorithm~\ref{alg: eigenextrator} for $\Ab$ with option 1 can obtain the claimed result.

We begin by proving the strong convexity of the output. Denote the eigenvalue decomposition of $\Ab_k$ by $\Ab_k = \Ub_k\Lambdab_k\Ub_k^{\top}$ and writes $a_k = \mathbf{v}_k^{\top}\Ab_k\mathbf{v}_k$ for simplicity. For any $\gammab = \Ub_k\thetab$, where $\thetab,\gammab\in\mathbb{R}^d$, we have
\begin{align*}
    \gammab^{\top}\left( \Ab_k - \frac{1}{5}(\mathbf{v}_k^{\top}\Ab_k\mathbf{v}_k)\mathbf{v}_k\mathbf{v}_k^{\top} \right) \gammab = \thetab^{\top}\Lambda_k\thetab - \frac{a_k}{5} \left(\mathbf{v}_k^{\top}\Ub\thetab\right)^2.
\end{align*}

Lemma~\ref{lm: shiftninverse} indicates that we can acquire an approximation of the leading vector that (1) brings an arbitrarily small perturbation to the space the corresponding eigenvalues of which are below a multiplicative error,(2) only increases the burden by a logarithmic factor. We leverage this to analyze the eigenvalue change. 

Let $\Lambdab_k^{(1)}$ be a diagonal matrix that contains the eigenvalue that is greater than $\frac{a_k}{2}$, and $\Lambdab_k^{(2)}$ be the diagonal matrix that contains the remaining eigenvalue. Let $\Ub_k = (\Ub^{(1)}_k, \Ub^{(2)}_k)$ with each is the corresponding eigenvalue of $\Lambdab_k^{(1)}$ and $\Lambdab_k^{(2)}$,  respectively. Further, we divide $\thetab = (\thetab_1^{\top},\thetab_2^{\top})^{\top}$ accordingly. Then we have the control 
\begin{align*}
    \gammab^{\top}\left( \Ab_k - \frac{1}{5}(\mathbf{v}_k^{\top}\Ab_k\mathbf{v}_k)\mathbf{v}_k\mathbf{v}_k^{\top} \right) \gammab = & \thetab_1^{\top}\Lambdab_k^{(1)}\thetab_1 + \thetab_2^{\top}\Lambdab_k^{(2)}\thetab_2 - \frac{a_k}{5}\left( \mathbf{v}_k^\top\Ub_1\thetab_1 + \mathbf{v}_k^{\top}\Ub_2\thetab_2  \right)^2\\
    \geq & \frac{a_k}{2}\|\thetab_1\|^2 + \lambda_{\min}(\Ab_k)\|\thetab_2\|^2 - \frac{2a_k}{5}\left( \mathbf{v}_k^{\top}\Ub_1\thetab_1  \right)^2 - \frac{2a_k}{5}\left( \mathbf{v}_k^{\top}\Ub_2\thetab_2  \right)^2\\
    \geq & \left(\frac{a_k}{2} - \frac{2a_k}{5} \right)\| \thetab_1 \|^2 + \left(\lambda_{\min}(\Ab_k) - \|\mathbf{v}_k^\top\Ub_2\|^2\right)\|\thetab_2\|^2\\
    \geq & \min\left\{ \frac{a_k}{10},  \lambda_{\min}(\Ab_k) - \|\mathbf{v}_k^\top\Ub_2\|^2\right\}\|\thetab\|^2\\
    \geq & \left(\lambda_{\min}(\Ab_k) - \|\mathbf{v}_k^\top\Ub_2\|^2\right)\|\theta\|^2.
\end{align*}
Thus we have $\lambda_{\min}(\Ab_k) \geq \lambda_{\min}(\Ab) - k\varepsilon^2_0 \geq \lambda_{\min}(\Ab) - \epsilon$.

Then we show that the algorithm uses $\tilde{\mathcal{O}}(l)$ gradient oracles to find the $\Ab_k$ such that $\|\Ab_k\| = \mathcal{O}(\lambda_l)$. We define $\rho = \frac{2}{(1-\delta)(1-\epsilon_0)}$, which is a constant given our choice of $\varepsilon_0$ and $\delta$. Find $r$ such that $\lambda_1\in[\lambda_l\rho^{r-1}, \lambda_l\rho^r)$, then $r = \tilde{\mathcal{O}}(1)$. Provided the above terms, we prove by induction that with at most $\tilde{\mathcal{O}}\left(l\right)$ calls of shift-and-inverse algorithm, we can obtain the claimed result.

Concretely speaking, we inductively prove the following statement for $i = r,r-1,\cdots,2$
\begin{itemize}
    \item[1.] $\lambda_{\max}(\Ab_{k_i}) \leq \lambda_l\rho^{i}$,
    \item[2.] $k_{i} - k_{i+1}=\tilde{\mathcal{O}}(l)$. 
\end{itemize}
Note that $r = \tilde{\mathcal{O}}(1)$, and each iteration we call the shift-and-inverse algorithm once, which only consumes $\tilde{\mathcal{O}}(1)$ gradient oracles. Thus the gradient oracle bound immediate follows from the above claim.

To prove the claim inductively, base case $i=r$ is obvious given that $\lambda_1 < \lambda_l\rho^r$. Suppose that the claims establish for $i+1$. We track the following positive term along the iterations


\begin{align*}
    \mathcal{S}_i^{(k)} = \sum_{\lambda_j^{(k)} \geq \rho^{i-1}\lambda_l} \left[\lambda_j^{(k)} - \rho^{i-1}\lambda_l\right]_+,
\end{align*}
where $[\cdot]_+ = \max\{\cdot,0\}$ and $\lambda_j^{(k)}$ is the $j$-th eigenvalue of $\Ab_k$. Suppose $\Vb$ being a column orthogonal matrix. Its columns $\Vb_i$ are eigenvectors of $\Ab_k$ such that corresponding eigenvalue $\Vb_i^\top\Ab_k\Vb_i\geq\rho^{i-1}\lambda_l$. Denote $\Pb_{\Vb}$ be the projection matrix on $\Vb$, then
\begin{align*}
    \Ab_k - \frac{a_k}{5}\mathbf{v}_k^\top\mathbf{v}_k = & \Ab_k -\frac{a_k}{5}\frac{\Pb_{\Vb}\mathbf{v}_k \mathbf{v}_k^\top\Pb_{\Vb}}{\|\Pb_{\Vb}\mathbf{v}_k\|^2} + \frac{a_k}{5}\frac{\Pb_{\Vb}\mathbf{v}_k \mathbf{v}_k^\top\Pb_{\Vb}}{\|\Pb_{\Vb}\mathbf{v}_k\|^2} - \frac{a_k} {5}\mathbf{v}_k^\top\mathbf{v}_k\\
    \leq & \Ab_k -\frac{a_k}{5}\frac{\Pb_{\Vb}\mathbf{v}_k \mathbf{v}_k^\top\Pb_{\Vb}}{\|\Pb_{\Vb}\mathbf{v}_k\|^2} + \frac{a_k}{5}\max_{\ub\in\mathbb{R}^d}\left( \left\langle \frac{\Pb_{\Vb}\mathbf{v}_k}{\|\Pb_{\Vb}\mathbf{v}_k\|}, \ub\right\rangle^2 - \langle \mathbf{v}_k, \ub\rangle^2 \right)\Ib.
\end{align*}

And the latter maximum terms can be bounded by
\begin{align*}
    \left\langle \frac{\Pb_{\Vb}\mathbf{v}_k}{\|\Pb_{\Vb}\mathbf{v}_k\|}, \ub\right\rangle^2 - \langle \mathbf{v}_k, \ub\rangle^2 \leq & \left\|\frac{\Pb_{\Vb}\mathbf{v}_k}{\|\Pb_{\Vb}\mathbf{v}_k\|} + \mathbf{v}_k\right\|\ \ \left\| \frac{\Pb_{\Vb}\mathbf{v}_k}{\|\Pb_{\Vb}\mathbf{v}_k\|} - \mathbf{v}_k \right\|\\
    \leq  & 2\left\|\frac{\Pb_{\Vb}\mathbf{v}_k}{\|\Pb_{\Vb}\mathbf{v}_k\|} - \mathbf{v}_k\right\|\\
    \leq & 2\left(1 - \left\| \Pb_{\Vb}\mathbf{v}_k \right\|\right) + 2\|\Pb_{\Vb^\perp}\mathbf{v}_k\|\\
    \leq & 2\varepsilon_0 + 2\sqrt{\varepsilon_0}.
\end{align*}
By Weyl's inequality, there are at most $l$ eigenvalues lying on the intervals $[\rho^{i-2}\lambda_l,\lambda_l\rho^{i+1})$. Combining the above two controls,
and thus when $\lambda_{\max}(\Ab_k) \geq \lambda_l\rho^i$,  the change of $\mathcal{S}_i^{(k)}$ over iterations can be bounded by
\begin{align*}
    \mathcal{S}_i^{(k+1)} \leq & \mathcal{S}_i^{(k)} + \frac{2la_k}{5}(\varepsilon_0 + \sqrt{\varepsilon_0}) - \frac{a_k}{5}\\
    \leq & \mathcal{S}_i^{(k)} + \frac{4l}{5}\sqrt{\varepsilon_0}\lambda_{\max}(\Ab_k) - \frac{2}{5\rho}\lambda_{\max}(\Ab_k)\\
    \leq & \mathcal{S}_i^{(k)} + \frac{2l}{5}\sqrt{\varepsilon_0}\lambda_l\rho^{i+1} - \frac{2}{5\rho}\lambda_l\rho^i\\
    \leq & \mathcal{S}_i^{(k)} -\frac{1}{5} \lambda_l\rho^{i-1}.
\end{align*}
Given that $\mathcal{S}_i^{(k_i)} \leq l\lambda_l(\rho^{i+1} - \rho^{i-1})$ and $\rho$ is a constant, after $\tilde{k} = \mathcal{O}\left( l \right)$ iterations, $\lambda_{\max}(\Ab_{k_i+\tilde{k}}) \leq \lambda_l\rho^{i}$, otherwise  $\mathcal{S}_{i}^{(k_i+\tilde{k})}$ would be negative.

\end{proof}

Theorem~\ref{thm: quadratic} follows from combining Theorem~\ref{thm: eigenextractor} and an eigenvalue control under the Hessian degeneracy condition. We formally write the our algorithm in Section~\ref{sec: mainquadratic_upperbound} as Algorithm~\ref{alg: esgd} and restate Theorem~\ref{thm: quadratic}.

\begin{thm}[Gradient Complexity for Quadratic Functions, Theorem~\ref{thm: quadratic} restated]
For any accuracy $\epsilon>0$, with high probability, there is an algorithm that finds an $\epsilon$-approximate minimizer of problem~\eqref{eq: quadraticobj} with $\tilde{\mathcal{O}}\left(\min_{k\in [d]}\left\{  k + \sqrt{\frac{\lambda_k}{\max\{\mu,\epsilon\}}}\right\}\right)$ gradient oracle calls, where $\lambda_k$ is the $k$-th largest eigenvalue of $\mathcal{\Ab}$.

If the function class is confined to $(\alpha,\tau_{\alpha})$-degenerated functions, the gradient oracle complexity is $\tilde{\mathcal{O}}\left(\min\left\{ \mu^{-\frac{1}{2}}, \tau_{\alpha}^{\frac{\alpha}{1+2\alpha}}\mu^{-\frac{\alpha}{1+2\alpha}}, d \right\}\right)$. Specifically, with high probability,
\begin{itemize}
    \item[a.] When $\max\{\mu, \varepsilon\}^{-\frac{1}{2}}\leq \tau_\alpha^{\alpha}$, one can find $\xb$ such that $f(\xb) \leq f(\xb^*) + \epsilon$ using $\tilde{\mathcal{O}}\left(\mu^{-1/2}\right)$ gradient oracle calls. 
    \item[b.] When $\max\{\mu,\varepsilon\}^{-\frac{1}{2}}\geq \tau_\alpha^{\alpha}$ and $\tau_{\alpha}^{\frac{\alpha}{1+2\alpha}}\max\{\mu,\varepsilon\}^{-\frac{\alpha}{1+2\alpha}} \leq d$, one can find $\xb$ such that $f(\xb) \leq f(\xb^*) + \epsilon$ using $\tilde{\mathcal{O}}\left(\tau_{\alpha}^{\frac{\alpha}{1+2\alpha}}\max\{\mu,\varepsilon\}^{-\frac{\alpha}{1+2\alpha}}\right)$ gradient oracle calls. 
    \item[c.] When $\tau_{\alpha}^{\frac{\alpha}{1+2\alpha}}\max\{\mu,\varepsilon\}^{-\frac{\alpha}{1+2\alpha}} \geq d$, one can find $\xb$ such that $f(\xb) \leq f(\xb^*) + \epsilon$ using $\tilde{\mathcal{O}}(d)$ gradient oracle calls.
\end{itemize}
\end{thm}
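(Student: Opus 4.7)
The plan is to pair Algorithm~\ref{alg: eigenextrator} with an accelerated proximal gradient method on a composite reformulation. For any target $k$, Theorem~\ref{thm: eigenextractor} produces a decomposition $\Ab = \Ab_1 + \Ab_2$ with $\Ab_1$ of rank $\tilde{\mathcal{O}}(k)$, $\|\Ab_2\| = \mathcal{O}(\lambda_k)$, and $\Ab_2 \succeq (\mu/2)\Ib$, at a cost of $\tilde{\mathcal{O}}(k)$ gradient oracle calls. I would then split the objective as $g(\xb) = \frac{1}{2}\xb^{\top}\Ab_2\xb + \bbb^{\top}\xb$ (the smooth part) and $h(\xb) = \frac{1}{2}\xb^{\top}\Ab_1\xb$ (the proximal part). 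Each step of accelerated proximal gradient on this split uses exactly one oracle call to compute $\nabla g(\xb) = (\Ab\xb + \bbb) - \Ab_1\xb$ --- the first bracketed term is one gradient oracle on $f$ while $\Ab_1\xb$ is evaluated from the stored low-rank factors with zero additional oracle calls. The proximal step is a low-rank-plus-scaled-identity linear system, solvable in closed form via Woodbury at zero further oracle cost. Since the effective condition number of $g$ is $\mathcal{O}(\lambda_k/\mu)$, the outer loop converges in $\tilde{\mathcal{O}}(\sqrt{\lambda_k/\max\{\mu,\epsilon\}})$ iterations, yielding $\tilde{\mathcal{O}}(k + \sqrt{\lambda_k/\max\{\mu,\epsilon\}})$ total oracle calls for any fixed $k$.

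\textbf{Adaptivity and the three regimes.} The key adaptive mechanism is option~2 of Algorithm~\ref{alg: eigenextrator}: it halts precisely when $k \gtrsim \sqrt{a_k/\max\{\mu,\epsilon\}}$, or $k \gtrsim d$, or $k \gtrsim \max\{\mu,\epsilon\}^{-1/2}$, which is exactly the balance point of the two terms $k + \sqrt{\lambda_k/\max\{\mu,\epsilon\}}$ up to logarithmic factors, so the algorithm realizes $\min_k$ without knowing the spectrum (and if $\mu$ itself is unknown, Lemma~\ref{lm: findmu} estimates it to a constant factor within the same budget). For the $(\alpha,\tau_\alpha)$-refined bound I would use the elementary inequality $k\lambda_k^\alpha \leq \sum_{i\leq k}\lambda_i^\alpha \leq \tau_\alpha^\alpha$, i.e., $\lambda_k \leq \tau_\alpha k^{-1/\alpha}$. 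Substituting into $k + \sqrt{\lambda_k/\mu}$ and differentiating gives an interior minimum at $k^\star \asymp \tau_\alpha^{\alpha/(1+2\alpha)}\mu^{-\alpha/(1+2\alpha)}$ with value $\tilde{\mathcal{O}}(\tau_\alpha^{\alpha/(1+2\alpha)}\mu^{-\alpha/(1+2\alpha)})$ --- regime (b). When $\mu^{-1/2} \leq \tau_\alpha^\alpha$ the $k=0$ endpoint dominates and vanilla AGD delivers $\tilde{\mathcal{O}}(\mu^{-1/2})$ --- regime (a); when $k^\star \geq d$ the dimension cap binds and conjugate gradient on the quadratic finishes in $d$ iterations --- regime (c).

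\textbf{Main obstacle.} The delicate part is the coupling between the additive spectral error of the Eigen Extractor and the proximal-accelerated analysis: the residual $\Ab_2$ must simultaneously retain strong convexity $\Ab_2 \succeq (\mu/2)\Ib$ (so that acceleration actually gives $\sqrt{\lambda_k/\mu}$ rather than $\sqrt{\lambda_k/\epsilon}$) and have operator norm genuinely $\mathcal{O}(\lambda_k)$ at the very same $k$ the adaptive rule selects. Driving the shift-and-invert tolerance $\epsilon_0$ down to $\tilde{\Theta}(\epsilon/d^2)$, as is already done inside the proof of Theorem~\ref{thm: eigenextractor}, guarantees both conditions at only logarithmic overhead; the remaining subtlety is verifying that the option-2 stopping rule outputs a constant-factor minimizer of $k + \sqrt{\lambda_k/\max\{\mu,\epsilon\}}$ despite the unknown and potentially non-monotone rate at which $\lambda_k$ decays along the iterations.
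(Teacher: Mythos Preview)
Your proposal is correct and follows essentially the same route as the paper: invoke option~2 of Algorithm~\ref{alg: eigenextrator} to obtain the low-rank/residual split, run accelerated proximal gradient with the low-rank part as the prox term, and argue that the adaptive stopping rule $k \gtrsim \sqrt{a_k/\mu}$ (together with the caps $k\gtrsim d$ and $k\gtrsim \mu^{-1/2}$) forces termination at a near-optimal $k$; the bound $\lambda_k \le \tau_\alpha k^{-1/\alpha}$ then yields the three regimes exactly as you describe. The paper adds one small remark you omit: the general $\max\{\mu,\epsilon\}$ dependence is obtained from the high-accuracy ($\mu$) case by a standard proximal-point wrapper rather than by carrying $\epsilon$ through the argument directly.
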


\begin{proof}
We only show that Algorithm~\ref{alg: esgd} can achieve the claimed result for high precision case (i.e. result depending on $\mu$). And the high precision case result can be extended to a convergence guarantee depending on $\max{\epsilon, \mu}$ using the proximal point method.

First, we have that for any $l\in[d]$,
\begin{align*}
    \tau_{\alpha}^{\alpha} = \tr(\Ab^{\alpha}) \geq \sum_{i=1}^l\lambda_i^\alpha \geq l\lambda_l^{\alpha},
\end{align*}
and thus $\lambda_l\leq\frac{\tau_{\alpha}}{l^{\frac{1}{\alpha}}}$. Therefore by Theorem~\ref{thm: eigenextractor}, when the iteration $k$ of Algorithm~\ref{alg: eigenextrator} satisfies that $k = \tilde{\Omega}\left( \tau_{\alpha}^{\frac{\alpha}{1+2\alpha}}\mu^{-\frac{\alpha}{1+2\alpha}} \right)$, then there exists $l = \tilde{\Omega}(k)$
\begin{align*}
    \sqrt{\frac{a_k}{\mu}} = \tilde{\mathcal{O}}\left( \sqrt{\frac{\lambda_l}{\mu}}\right) \leq \tilde{\mathcal{O}}\left(\sqrt{\frac{\tau_{\alpha}}{l^{\frac{1}{\alpha}}\mu}}\right) = \tilde{\mathcal{O}}\left( \sqrt{\frac{\tau_{\alpha}}{k^{\frac{1}{\alpha}}\mu}} \right)
\end{align*}
This indicates that $k = \tilde{\Omega}\left(\sqrt{\frac{\tau_{\alpha}}{k^{\frac{1}{\alpha}}\mu}} \right) = \tilde{\Omega}\left(\sqrt{\frac{a_k}{\mu}}\right)$ when $k = \tilde{\Omega}\left( \tau_{\alpha}^{\frac{\alpha}{1+2\alpha}}\mu^{-\frac{\alpha}{1+2\alpha}} \right)$. Thus Algorithm~\ref{alg: eigenextrator} stops with no more than $\tilde{\mathcal{O}}\left( \tau_{\alpha}^{\frac{\alpha}{1+2\alpha}}\mu^{-\frac{\alpha}{1+2\alpha}} \right)$ iterations, with each iteration consuming $\tilde{\mathcal{O}}(1)$ gradient oracle calls. The stopping criteria $k = \Omega\left( \sqrt{\frac{a_k}{\mu}} \right)$ and classical accelerated proximal gradient descent method analysis indicate an overall $\tilde{\mathcal{O}}\left( \tau_{\alpha}^{\frac{\alpha}{1+2\alpha}}\mu^{-\frac{\alpha}{1+2\alpha}} \right)$ gradient complexity upper bound.

Further, stopping criteria $k = \Omega\left( \mu^{-\frac{1}{2}} \right)$ and classical accelerated gradient analysis guarantee a $\tilde{\mathcal{O}}\left( \mu^{-\frac{1}{2}} \right)$ gradient oracle complexity upper bound. Stopping criteria $k = \Omega\left( d \right)$ and conjugate gradient analysis guarantee a $\tilde{\mathcal{O}}\left( d \right)$ gradient oracle complexity upper bound.

Combining these bounds yields the claimed overall $\tilde{\mathcal{O}}\left( \min\left\{ \mu^{-\frac{1}{2}}, \tau_{\alpha}^{\frac{\alpha}{1+2\alpha}}\mu^{-\frac{\alpha}{1+2\alpha}}, d \right\}\right)$ gradient complexity guarantee.



\end{proof}


\begin{algorithm}
    \caption{Accelerated Gradient Method with Adaptive Subspace Search (AGMAS)}
    \label{alg: esgd}
    {\bfseries Input:} $f(\xb) = \frac{1}{2}\xb^{\top}\Ab\xb + \bbb^{\top}\xb$, accuracy $\epsilon$, strongly convex parameter $\mu$;\\
    Invoke option 2 of Algorithm~\ref{alg: eigenextrator} with $\Ab$ and strongly convex parameter $\mu/2$ to obtain $\Ab_1$;\\
    \If{Algorithm~\ref{alg: eigenextrator} stop with $k = \Omega\left( \sqrt{\frac{a_k}{\mu}} \right)$}{
    Obtain $\hat{\xb}$ by proximal AGD with $f(\xb) = \frac{1}{2}\xb^{\top}(\Ab - \Ab_1)\xb + \bbb^{\top}\xb$ and $h(\xb) = \frac{1}{2}\xb^{\top}\Ab_1\xb$ to an $\varepsilon$-approximate minimizer;}
    \If{Algorithm~\ref{alg: eigenextrator} stop with $k = \Omega\left( \mu^{-\frac{1}{2}} \right)$}{
    Obtain $\hat{\xb}$ by accelerated gradient descent on $f$ to an $\varepsilon$-approximate minimizer;
    }
    \If{Algorithm~\ref{alg: eigenextrator} stop with $k = \Omega\left( d \right)$}{
    Obtain $\hat{\xb}$ by conjugate gradient method on $f$ to an $\varepsilon$-approximate minimizer;
    }
    {\bfseries Output:} $\hat{\xb}$.
\end{algorithm}

\subsection{Lower Bound for Quadratic Functions}

In this section, we formalize our lower bound setting and prove Theorem~\ref{thm: quadraticlowerbound}. We investigate the algorithm class using adaptive and randomized gradient oracles and the objective function is constrained to quadratic functions. Specifically, we consider the randomized algorithms that maps function $f:\mathbb{R}^d\rightarrow \mathbb{R}$ to a sequence of iterations
\begin{align*}
    \xb^k = \mathcal{A}^{k-1}\left(\mathrm{\xi}, \nabla f(\xb_0),\cdots,\nabla f(\xb_{k-1}) \right),
\end{align*}
where $\xi$ is a distribution over $[0,1]$ representing the randomness of the algorithm. And the function class $\mathcal{F}_{\tau_{\alpha}, \mu} = \{f(\xb) = \frac{1}{2}\xb^{\top}\Ab\xb + \bbb^{\top}\xb: \ \left(\tr(\Ab^{\alpha})\right)^{\frac{1}{\alpha}} = \tau_{\alpha}, \lambda_{\min}(\Ab) = \mu \}$.

Our lower bound construction follows a series of seminal works \cite{simchowitz2018randomized, braverman2020gradient}, utilizing the shift-and-inverse reduction and a Wishart random matrix construction. We step along the path of \cite{braverman2020gradient} in our specific setting, where Lemma~\ref{prop: reductiontoop} states that one can reduce the problem of approximating the leading eigenvector to solving $\mathcal{O}(1)$ quadratic functions with each $\mu = \mathcal{O}(\frac{1}{\mathrm{gap}})$, and Theorem~\ref{thm: pcalowerbound} establish the algorithmic lower bound of finding leading eigenvector. We adapt the analysis in two major extensions, (1) accommodating to the analysis in terms of effective dimensions, (2) making the lower bound flexible to any ratio of $\tau_{\alpha}, \mu$ and $d$. Throughout the proof, $c_1,c_2,\cdots$ are universal constants, and they may represent different constants as in different contexts.

\begin{lemma}\label{prop: reductiontoop}
Let $(d,\tau_{\alpha},\mu)$ satisfies that $d\geq d_0, \mu\leq \mu_0$ and $\tau_{\alpha}^{\alpha}\geq d\mu^{\alpha}$ (eigenvalue constrain) for some universal constants $d_0$ and $\mu_0$. $C_0\leq 1$ and $\delta_0$ are universal constants. Suppose that $\mathrm{Alg}$ solving a linear system with output $\hat{\xb}$ satisfies
\begin{align*}
    \mathbb{P}_{\mathrm{Alg}, \xb_0}\left( \|\hat{\xb}_0 -\Ab^{-1}\bbb\|_{\Ab}^2 \leq C_0\mu\|\xb_0 - \Ab^{-1}\bbb\|^2 \right) \geq 1 - \delta_0,
\end{align*}
for any $\Ab\in\mathbb{S}_+^{d\times d}$ satisfying $\tr(\Ab^{\alpha}) \lesssim \tau_{\alpha}^{\alpha}$, $\lambda_{\min}(\Ab)\gtrsim \mu$ and starting point $\xb_0\in\mathbb{R}^d$. And $\mathrm{Alg}$ uses $\mathcal{T}$ gradient oracle calls. Then for any $\delta\in(0,\delta_0)$ and constant $c$, there exists an algorithm $\mathrm{Alg}_{\mathrm{eig}}$ with output $\hat{b}$ satisfying
\begin{align}\label{eq: reduct2}
    \mathbb{P}\left(\hat{\bbb}^{\top}\Mb  \hat{\bbb} \geq (1-c\mathrm{gap})\lambda_1(\Mb))\right) \geq 1-\delta
\end{align}
for any $\Mb$ satisfying that (a) $\mathrm{gap}(\Mb)\gtrsim c_1\mu$, (b) $\mathrm{tr}\big( ((1+c\mathrm{gap})\Ib - \Mb)^{\alpha} \big) \lesssim \tau_{\alpha}^{\alpha}$ and (c) $\mathbf{0}\preceq \Mb \preceq (1-c_1\mathrm{gap})\Ib$. And its gradient oracle call number is bounded by $\tilde{O}\left( \mathcal{T} \right)$.

\end{lemma}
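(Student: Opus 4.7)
The plan is to implement the shift-and-invert paradigm: reduce top-eigenvector approximation for $\Mb$ to $\tilde{O}(1)$ approximate inversions of the shifted matrix $\Ab = \gamma\Ib - \Mb$, each realized by one call to $\mathrm{Alg}$. I would choose $\gamma = 1 + c\,\mathrm{gap}$ so that $\lambda_{\min}(\Ab) = \gamma - \lambda_1(\Mb) \geq c\,\mathrm{gap} \gtrsim \mu$ by conditions (a) and (c), while $\mathrm{tr}(\Ab^{\alpha})$ is precisely the quantity bounded by condition (b) by $\tau_{\alpha}^{\alpha}$. Hence $\Ab$ lies in the instance class on which $\mathrm{Alg}$'s guarantee is postulated, and the top eigenvector of $\Mb$ coincides with the bottom eigenvector of $\Ab$, i.e.\ the top eigenvector of $\Ab^{-1}$.

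Next I would specify the driver: sample $\xb_0 \sim \mathcal{N}(\mathbf{0},\Ib)$ and, for $t = 1,\ldots,T$ with $T = \Theta(\log(d/\delta))$, invoke $\mathrm{Alg}$ to solve $\Ab\,\xb_t = \xb_{t-1}$, warm-started from the origin, producing $\hat{\xb}_t$ satisfying
\begin{equation*}
\|\hat{\xb}_t - \Ab^{-1}\xb_{t-1}\|_{\Ab}^{2} \;\leq\; C_0\,\mu\,\|\xb_{t-1}\|^{2}
\end{equation*}
with probability $1-\delta_0$, then renormalize and continue. The output is $\hat{\bbb} = \hat{\xb}_T/\|\hat{\xb}_T\|$. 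A union bound over the $T$ iterations and the Gaussian anti-concentration bound $|\langle\xb_0,\vb_1\rangle|/\|\xb_0\| \gtrsim 1/\sqrt{d}$ (with constant probability) handle the randomness budget, giving success probability $\geq 1-\delta$ for any $\delta > 0$ after rescaling constants.

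The correctness step is the inexact power-method analysis. Converting the $\Ab$-norm bound using $\lambda_{\min}(\Ab) \gtrsim \mathrm{gap}$ gives $\|\hat{\xb}_t - \Ab^{-1}\xb_{t-1}\|^{2} \lesssim (C_0/c)\,\|\xb_{t-1}\|^{2}$, while one exact inverse step contracts the component along $\vb_1$ versus the orthogonal complement by the factor $\lambda_{\min}(\Ab)/\lambda_{2}(\Ab) = \Theta(\mathrm{gap})$, which is much smaller than the residual tolerance $\sqrt{C_0/c}$ for sufficiently small $C_0$. Running the standard inexact-power-method analysis (e.g., Garber--Hazan or Simchowitz et al.) then yields $\hat{\bbb}^{\top}\Mb\hat{\bbb} \geq (1 - c\,\mathrm{gap})\lambda_1(\Mb)$, proving \eqref{eq: reduct2}. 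The gradient oracle count is $T \cdot \mathcal{T} = \tilde{O}(\mathcal{T})$.

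The main obstacle I anticipate is the constant-chasing: I must pin down $\gamma$ and the admissible value of $C_0$ simultaneously so that (i) condition (b) yields the precise constant inside the trace bound that $\mathrm{Alg}$ expects, (ii) the per-solve noise level $\sqrt{C_0\mu/\lambda_{\min}(\Ab)}$ falls below the contraction ratio of the exact shift-and-invert iteration, and (iii) the multiplicative slack $(1 + c\,\mathrm{gap})$ in the shift combined with the accumulated Rayleigh-quotient loss over $T$ iterations still lands inside the promised $(1 - c\,\mathrm{gap})\lambda_1(\Mb)$ guarantee. Once these constants are reconciled and matched to the universal constants in Lemma~\ref{lm: shiftninverse}, everything else follows from the already-established inexact power-method machinery.
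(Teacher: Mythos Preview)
Your plan is the paper's approach: shift-and-invert with noisy power iterations, which is exactly Proposition~8 of \cite{braverman2020gradient} that the paper cites and defers to (adding only the check that $\Ab = (1+c\,\mathrm{gap})\Ib - \Mb$ lands in the $(\alpha,\tau_\alpha)$ class, which conditions (a)--(c) are tailored to guarantee). One quantitative slip to fix: with your shift, $\lambda_{\min}(\Ab)$ and $\lambda_2(\Ab)$ are both $\Theta(\mathrm{gap})$, so the contraction ratio $\lambda_{\min}(\Ab)/\lambda_2(\Ab)$ is a \emph{constant} strictly less than $1$, not $\Theta(\mathrm{gap})$; the correct comparison is then that the per-step relative residual $\sqrt{C_0\mu/\lambda_{\min}(\Ab)} \lesssim \sqrt{C_0}$ must be smaller than the constant spectral gap $1 - \lambda_{\min}(\Ab)/\lambda_2(\Ab)$, which holds for small enough universal $C_0$. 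This does not change the $\tilde{O}(1)$ iteration count or the conclusion.
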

We omit the proof Lemma~\ref{prop: reductiontoop} since it is a direct extension of Proposition~8 in \cite{braverman2020gradient}. It is an application of the shift-and-inverse method and noise power method \cite{hardt2014noisy}. Apart from the notation change, the main difference to Proposition~8 in \cite{braverman2020gradient} is that we adapt the range of the matrix where the claim holds to our interested one.

\begin{thm}\label{thm: pcalowerbound}
For any constant $\beta\in(0,1)$,
let $(d,\tau_{\alpha},\mu)$ satisfy that $d\geq d_0(\beta), \mu\leq \mu_0(\beta)$ and $\tau_{\alpha}^{\alpha}\geq d\mu^{\alpha}$ (eigenvalue constrain) for some universal constants $d_0(\beta)$ and $\mu_0(\beta)$.

For any algorithm finding the leading eigenvalue of a positive definite matrix $\mathrm{Alg}$, denote $\mathcal{T}_{\mathcal{D}}$ the gradient oracle call number for finding $\hat{\lambda}$ such that
\begin{align}\label{eq: pcalowerbound0}
\mathbb{P}_{\Mb\sim\mathcal{D},\mathrm{Alg}}\left( \left| \hat{\lambda} - \lambda_1(\Mb) \right|\leq C\mathrm{gap}(\Mb) \right) \geq 1 - \Theta(\sqrt{\beta}).
\end{align}
Assume $\mathcal{D}$ satisfies that any $\Mb\sim\mathcal{D}$, (a) $\mathrm{gap}(\Mb)\gtrsim c_1\mu$, (b) $\mathrm{tr}\big( ((1+c\mathrm{gap})\Ib - \Mb)^{\alpha} \big) \lesssim \tau_{\alpha}^{\alpha}$ and (c) $\mathbf{0}\preceq \Mb \preceq (1-c_1\mathrm{gap})\Ib$. Then for each following case, there exists a distribution $\mathcal{D}$ such that

\begin{itemize}
    \item[1.] $\mathcal{T}_{\mathcal{D}} = \Omega\left(\mu^{-\frac{1}{2}}\right)$ if $\mu^{-\frac{1}{2}} \leq \tau_{\alpha}^{\alpha}$.
    \item[2.] $\mathcal{T}_{\mathcal{D}} = \Omega\left( \tau_{\alpha}^{\frac{\alpha}{1+2\alpha}}\mu^{-\frac{\alpha}{1+2\alpha}} \right)$ if $\mu^{-\frac{1}{2}} \geq \tau_{\alpha}^{\alpha}$ and $\tau_{\alpha}^{\frac{\alpha}{1+2\alpha}}\mu^{-\frac{\alpha}{1+2\alpha}} \leq d$.
    \item[3.] $\mathcal{T}_{\mathcal{D}} = \Omega(d)$ if $\tau_{\alpha}^{\frac{\alpha}{1+2\alpha}}\mu^{-\frac{\alpha}{1+2\alpha}} \geq d$.
\end{itemize}

\end{thm}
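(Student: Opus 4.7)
The plan is to construct, for each of the three regimes, a distribution $\mathcal{D}$ over block-diagonal positive semidefinite matrices $\Mb = \mathrm{diag}(\Mb_1,\Mb_2,\Mb_3)$ satisfying the regularity conditions (a)--(c), such that any randomized algorithm meeting the oracle guarantee \eqref{eq: pcalowerbound0} is forced to solve a hard PCA instance on the first block. This mirrors the high-level strategy of \cite{braverman2020gradient}, adapted to accommodate the degeneracy budget encoded by $\tau_\alpha$.

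The building block is a parametric form of the Wishart-based PCA lower bound from \cite{simchowitz2018randomized, braverman2020gradient}: for any gap $g \in (0, 1/2)$ and dimension $d' \gtrsim 1/g$, there is a distribution $\mathcal{D}_{\mathrm{Wish}}(d',g)$ supported on PSD $d'\times d'$ matrices whose top eigenvalue lies in $[1-2g, 1-g]$ and whose spectral gap concentrates at $g$, such that any adaptive randomized algorithm requires $\tilde{\Omega}(1/\sqrt{g})$ matrix-vector queries to estimate $\lambda_1$ up to additive $O(g)$ with probability $1 - \Theta(\sqrt{\beta})$. I will place this random instance in $\Mb_1$; the second block $\Mb_2$ is a low-rank deterministic anchor ensuring that the global top eigenvalue does live in $\Mb_1$, and $\Mb_3$ is deterministic padding shaped to meet condition (b). Block-diagonality ensures that every matrix-vector query against $\Mb$ decomposes into per-block queries, so the randomized lower bound on $\Mb_1$ transfers to $\Mb$.

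The case-by-case parameter choice proceeds as follows. In Case~1 ($\mu^{-1/2} \le \tau_\alpha^\alpha$), set $g_1 = \Theta(\mu)$ and $d_1 = \lceil c_0/\mu \rceil$, so the base bound directly yields $\tilde{\Omega}(\mu^{-1/2})$; verifying (b) reduces to $d \mu^\alpha \lesssim \tau_\alpha^\alpha$, which is the eigenvalue hypothesis in the theorem statement. In Case~2 ($\mu^{-1/2} \ge \tau_\alpha^\alpha$ and $T_\star := \tau_\alpha^{\alpha/(1+2\alpha)} \mu^{-\alpha/(1+2\alpha)} \le d$), set $g_1 = \Theta(T_\star^{-2})$ and $d_1 = \Theta(T_\star^2)$: the case hypothesis $\mu^{-1/2} \ge \tau_\alpha^\alpha$ is exactly what certifies $g_1 \ge \mu$, while $T_\star \le d$ certifies $d_1 \le d$; then the base bound yields $\tilde{\Omega}(1/\sqrt{g_1}) = \tilde{\Omega}(T_\star)$. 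In Case~3 ($T_\star \ge d$), use a Wishart block of full dimension $d$: rearranging the case hypothesis forces $\mu \lesssim \tau_\alpha d^{-(1+2\alpha)/\alpha}$, which drives $1/\sqrt{g_1}$ past $d$, and the base bound combined with the Krylov-subspace observation (any $T < d$ matrix-vector queries span only a $T$-dimensional subspace) yields $\tilde{\Omega}(d)$. Since Lemma~\ref{prop: reductiontoop} already transfers PCA lower bounds to quadratic-optimization lower bounds, no further reduction machinery is needed.

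The main technical obstacle will be Case~2. One must choose the deterministic eigenvalues of $\Mb_3$ (and the anchor $\Mb_2$) so that three competing requirements all hold with universal constants: (i) the top eigenvalue of $\Mb$ concentrates in $\Mb_1$ so the algorithm cannot shortcut by reading off a deterministic block; (ii) the gap of $\Mb$ inherits that of $\Mb_1$ up to constants; and (iii) the trace bound $\tr(((1+c g_1) \Ib - \Mb)^\alpha) \lesssim \tau_\alpha^\alpha$ holds. Requirement (iii) forces the padding eigenvalues to be placed near the top of the spectrum (to keep the shifted eigenvalues small), but not so close that they collide with the randomized gap in $\Mb_1$. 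Splitting the trace sum into top-gap and bulk contributions, and handling $\alpha \ge 1$ and $\alpha < 1$ separately via the inequality $(x+y)^\alpha \le 2^\alpha (x^\alpha + y^\alpha)$, should close the bookkeeping and yield the claim with the stated constants.
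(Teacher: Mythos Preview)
Your high-level plan (block-diagonal $\Mb$, Wishart-based hard instance in $\Mb_1$, deterministic padding in $\Mb_2,\Mb_3$) matches the paper's. But there is a genuine gap in Case~2, and a related bookkeeping error about the Wishart block's dimension.

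\textbf{The trace budget in Case 2 cannot be met by padding alone.} In the Wishart PCA lower bound you invoke, the hard $s\times s$ instance has gap $\Theta(1/s^2)$ \emph{and} eigenvalue spread $\Theta(1)$: after the $I-W/5$ transform, eigenvalues fill $[0,1]$, so the shifted block $(1+c\,\mathrm{gap})I-\Mb_1$ has $\Theta(s)$ many eigenvalues of size $\Theta(1)$. Hence $\Mb_1$ by itself contributes $\Omega(s)=\Omega(1/\sqrt{g_1})$ to $\tr(((1+c\,\mathrm{gap})I-\Mb)^\alpha)$. With your choice $g_1=T_\star^{-2}$ this is $\Omega(T_\star)$, and the Case~2 hypothesis $\mu^{-1/2}\ge\tau_\alpha^\alpha$ is precisely equivalent to $T_\star\ge\tau_\alpha^\alpha$. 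So condition~(b) is already violated by $\Mb_1$, regardless of how $\Mb_2,\Mb_3$ are shaped. Your plan to ``place padding eigenvalues near the top of the spectrum'' cannot help, because the problem is the Wishart block, not the padding.

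\textbf{What the paper does instead.} The paper introduces an additional affine contraction $c(I_s-\tfrac15 W_s)+(1-c)I_s$ of the Wishart block, with $s=T_\star$ and $c=\Theta\big(\mu^{1/(1+2\alpha)}\tau_\alpha^{2\alpha/(1+2\alpha)}\big)$. This compresses all eigenvalues of $\Mb_1$ into an interval of width $O(c)$ near $1$, so each shifted eigenvalue is $O(c)$ and the block contributes $s\,c^\alpha=\Theta(\tau_\alpha^\alpha)$ to the trace; simultaneously the gap becomes $\Theta(c/s^2)=\Theta(\mu)$, which is exactly condition~(a). This scaling parameter is the one idea missing from your proposal, and without it Case~2 does not close.

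\textbf{Dimension bookkeeping.} The requirement ``$d'\gtrsim 1/g$'' in your building block is not what the Wishart lower bound gives; the correct tie is $d'=\Theta(1/\sqrt{g})$ (gap $\sim 1/d'^2$, query lower bound $\sim d'$). With your stated $d_1=\Theta(T_\star^2)$ the assertion ``$T_\star\le d$ certifies $d_1\le d$'' is false. Once you adopt the scaling above, the Wishart block has dimension $s=T_\star$, and $T_\star\le d$ is exactly what is needed.
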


We prove Theorem~\ref{thm: pcalowerbound} by constructing a distribution over a parameterized $3\times 3$ block diagonal matrix, and we show that by balancing the order of the parameter, we can obtain the lower bound construction in the near complete regime. The detailed construction is postponed to Appendix~\ref{sec: pcalowerbound}. Theorem~\ref{thm: quadraticlowerbound} immediately follows from combining Lemma~\ref{prop: reductiontoop} and Theorem~\ref{thm: pcalowerbound}.


\begin{proof}
For any algorithm $\mathrm{Alg}$ finding the leading eigenvector of matrix $\Ab$, if $\mathrm{Alg}$ satisfies \eqref{eq: reduct2} with $\delta = \Theta(\sqrt{\beta})$ (We can select $\beta$ to make sure $\Theta(\sqrt{\beta}) \leq \delta_0$) and $c$ sufficiently small, it meets condition of \eqref{eq: pcalowerbound0}. Provided the algorithmic lower bound in Theorem~\ref{thm: pcalowerbound}, we conclude the claimed results.
\end{proof}

\section{Details of Generic Optimization Problems in Convex and Non-convex Setting}\label{sec:generic}

In this section,  we consider the general optimization problems to move forward a single step from our analysis of quadratic optimizing problems. We combine the analysis for Algorithm \ref{alg: esgd} with the Cubic Regularization Newton's Method \cite{nesterov_cubic_2006} and related technologies \cite{monteiro2013accelerated}. For general convex objectives, we achieve $\tilde{\mathcal{O}}\left(\tau_\alpha^{\frac{\alpha}{1+2\alpha}} D^{\frac{14\alpha+12}{14\alpha+7}} H^\frac{2}{14\alpha+7}\epsilon^{-\frac{7\alpha+2}{14\alpha+7}} +  D^{\frac{6}{7}}H^{\frac{2}{7}}\epsilon^{-\frac{2}{7}}\right)$ gradient oracle complexity and for non-convex optimization we can find an $\left(\epsilon,\sqrt{H\epsilon}\right)$-approximate second-order stationary point with $\tilde\cO\left(H^{\frac{1+\alpha}{2+4\alpha}}\tau_{\alpha}^{\frac{\alpha}{1+2\alpha}}\epsilon^{-\frac{3+7\alpha}{2+4\alpha}}  \right)$ gradient oracles.

\subsection{Convex Objective Functions}
For generic convex objective functions, we propose a method to reduce one iteration of solving the general convex optimization problem to logarithmic times of solving another quadratic sub-problem and achieve improved oracle complexity. Specifically, we consider the large-step A-NPE framework proposed by \cite{monteiro2013accelerated} but turn the search
of hyper-parameters process from a solution of a proximal Newton equation to a binary search with a solution of a quadratic function in each step. The detail of the algorithm is shown in Appendix~\ref{app:cubicalgs}, combining with our Algorithm~\ref{alg: esgd}. We achieve the improved gradient oracle complexity $\tilde{\mathcal{O}}\left(\tau_\alpha^{\frac{\alpha}{1+2\alpha}} D^{\frac{14\alpha+12}{14\alpha+7}} H^\frac{2}{14\alpha+7}\epsilon^{-\frac{7\alpha+2}{14\alpha+7}} +  D^{\frac{6}{7}}H^{\frac{2}{7}}\epsilon^{-\frac{2}{7}}\right)$. The result is shown in Theorem \ref{thm:convex} and the proof of Theorem \ref{thm:convex} is shown in Appendix \ref{app:cubicalgs}.

\begin{thm}
    Assume the $(\alpha,\tau_\alpha)$-degenerated objective function $f$ is convex and has $H$-continuous Hessian matrices. Under the same hyper-parameters setting in Algorithm \ref{alg:A-HPEesgd}, it  requires 
    \begin{equation}
        \tilde{\mathcal{O}}\left(\tau_\alpha^{\frac{\alpha}{1+2\alpha}} D^{\frac{14\alpha+12}{14\alpha+7}} H^\frac{2}{14\alpha+7}\epsilon^{-\frac{7\alpha+2}{14\alpha+7}} +  D^{\frac{6}{7}}H^{\frac{2}{7}}\epsilon^{-\frac{2}{7}}\right)
    \end{equation}
    gradient oracle calls to find an $\epsilon$-approximate solution, where 
    \begin{equation}
        D = \inf_{\x^* \in \Xb^*} \sup\left\{\|\x-\x^*\|: f(\x) \leq f(\x_0)\right\}.
    \end{equation}
    \label{thm:convex}
\end{thm}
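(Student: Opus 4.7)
The plan is to combine the large-step A-HPE (accelerated hybrid proximal extragradient) framework of Monteiro--Svaiter \cite{monteiro2013accelerated} with our adaptive quadratic solver Algorithm~\ref{alg: esgd}, using a binary search over the implicit step-size $\lambda_k$. Concretely, each outer A-HPE iteration reduces to finding $y_k$ that approximately satisfies
\begin{equation*}
\nabla f(y_k) + \tfrac{1}{\lambda_k}(y_k - x_k)\approx 0,\qquad \tfrac{1}{2}\le \lambda_k H\|y_k-x_k\|\le 1,
\end{equation*}
which, under the $H$-Hessian-Lipschitz assumption, is well approximated by minimizing the regularized quadratic $\langle \nabla f(x_k), y-x_k\rangle + \tfrac{1}{2}(y-x_k)^\top(\nabla^2 f(x_k) + \lambda_k^{-1}I)(y-x_k)$. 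Since $\lambda_k$ is only implicitly defined, we binary-search it, paying one quadratic solve per trial and a single logarithmic multiplicative overhead overall.

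The central observation is that each subproblem is $\lambda_k^{-1}$-strongly convex and its Hessian $\nabla^2 f(x_k)+\lambda_k^{-1}I$ still belongs to the $(\alpha,\tau_\alpha)$-degenerated class up to a harmless shift in the trace. Applying Theorem~\ref{thm: quadratic} with $\mu = \lambda_k^{-1}$, each subproblem can be solved with
\begin{equation*}
\tilde{\mathcal{O}}\bigl( \tau_\alpha^{\alpha/(1+2\alpha)}\,\lambda_k^{\alpha/(1+2\alpha)} \bigr)
\end{equation*}
gradient oracles, \emph{without} prior knowledge of $(\alpha,\tau_\alpha)$, since Algorithm~\ref{alg: esgd} is adaptive. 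The Monteiro--Svaiter analysis provides two crucial outer-loop facts: (i) the outer iteration count is bounded by $N=\tilde{\mathcal{O}}((D^3H/\epsilon)^{2/7})$, giving the additive term $\tilde{\mathcal{O}}(D^{6/7}H^{2/7}\epsilon^{-2/7})$ of mandatory cost (one gradient per outer iteration, plus the binary search overhead), and (ii) a uniform bound on $\lambda_k$ of the form $\lambda_k\lesssim D^{2/7}H^{-4/7}\epsilon^{-3/7}$, obtained by combining the cumulative progress estimate $A_N\gtrsim \epsilon^{-1}D^2$ with the step-size-balance condition above.

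Summing the per-subproblem cost over the $N$ outer iterations and substituting the bound on $\lambda_k$ yields the product $N\cdot \tilde{\mathcal{O}}(\tau_\alpha^{\alpha/(1+2\alpha)}\lambda_k^{\alpha/(1+2\alpha)})$, which after simplification produces the first term in the theorem, with the exponents $(14\alpha+12)/(14\alpha+7)$, $2/(14\alpha+7)$, $-(7\alpha+2)/(14\alpha+7)$ on $D$, $H$, $\epsilon$ respectively; as a sanity check, $\alpha\to\infty$ recovers the accelerated rate $\tilde{\mathcal{O}}(\sqrt{L}D/\sqrt{\epsilon})$ with $\tau_\infty=L$, while $\alpha\to 0$ recovers pure A-NPE. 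The main obstacles I anticipate are twofold. First, verifying that the \emph{inexact} inner solve from Algorithm~\ref{alg: esgd} meets the A-HPE error criterion without amplifying the outer count is delicate; this is handled by driving the inner accuracy polynomially small in $\epsilon$ so that its logarithm is absorbed into the $\tilde{\mathcal{O}}(\cdot)$. Second, the regularization $+\lambda_k^{-1}I$ a priori increases $\mathrm{tr}((\nabla^2 f+\lambda_k^{-1}I)^\alpha)$ by an additive $d\lambda_k^{-\alpha}$ term, which threatens the bound; the remedy is to exploit the adaptivity of Algorithm~\ref{alg: eigenextrator}, which stops as soon as the residual spectrum matches the already-guaranteed strong convexity $\lambda_k^{-1}$, so that eigendirections dominated by the shift never contribute to the iteration count. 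Carefully threading this adaptivity through the A-HPE recursion is the technical core of the proof.
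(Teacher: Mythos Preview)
Your overall architecture is right and matches the paper's: A-NPE outer loop, binary search on the proximal parameter $\gamma_k$ (your $\lambda_k$), and Algorithm~\ref{alg: esgd} on each regularized quadratic, yielding a per-subproblem cost of $\tilde{\mathcal O}\bigl(\tau_\alpha^{\alpha/(1+2\alpha)}\gamma_k^{\alpha/(1+2\alpha)}\bigr)$. Your treatment of the shift $+\gamma_k^{-1}I$ is also essentially what the paper does: one works with the eigenvalue bound $\lambda_l\bigl(\nabla^2 f+\gamma_k^{-1}I\bigr)\le \tau_\alpha/l^{1/\alpha}+\gamma_k^{-1}$, and since the strong convexity is exactly $\gamma_k^{-1}$, the additive shift contributes only a constant factor to the inner complexity.

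The genuine gap is in your summation step. You assert a \emph{uniform} bound $\lambda_k\lesssim D^{2/7}H^{-4/7}\epsilon^{-3/7}$ and then multiply it by $N$. The Monteiro--Svaiter analysis does \emph{not} furnish any such pointwise bound: from $\gamma_{k+1}H\|y_{k+1}-\tilde x_k\|\asymp 1$ one only gets $\gamma_{k+1}\asymp 1/(H\|y_{k+1}-\tilde x_k\|)$, and there is no a priori lower bound on $\|y_{k+1}-\tilde x_k\|$, so individual $\gamma_k$ can be as large as $\Theta(A_k)$. What you have written down is the \emph{average} value of $\gamma_k$, not a uniform one. The paper's argument instead uses the A-HPE identity $a_{k+1}^2=\gamma_{k+1}A_{k+1}$, which gives
\[
\gamma_{k}\;\lesssim\;\bigl(\sqrt{A_{k}}-\sqrt{A_{k-1}}\bigr)^{2},
\]
and then applies the power-mean (H\"older) inequality with exponent $p=\tfrac{2\alpha}{1+2\alpha}<1$:
\[
\sum_{k=1}^N \gamma_k^{\alpha/(1+2\alpha)}
\;\lesssim\;\sum_{k=1}^N\bigl(\sqrt{A_k}-\sqrt{A_{k-1}}\bigr)^{2\alpha/(1+2\alpha)}
\;\le\;N^{1/(1+2\alpha)}\,A_N^{\alpha/(1+2\alpha)}.
\]
Substituting $N\asymp (D^3H/\epsilon)^{2/7}$ and $A_N\asymp D^2/\epsilon$ then gives the first term of the theorem. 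It is no accident that this coincides numerically with your heuristic $N\cdot(\text{average }\gamma)^{\alpha/(1+2\alpha)}$: the H\"older bound is saturated precisely when all $\gamma_k$ are equal, which is the regime your ``uniform'' value describes. But as a proof, the uniform-bound step fails; replace it with the telescoping/H\"older argument above and the rest of your outline goes through.
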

\subsection{Non-convex Objective Functions}
We consider finding an $\left(\epsilon, \cO(\sqrt{\epsilon}\right))$-approximate second-order stationary point (SSP) for a second-order smooth objective function in the general non-convex setting. We propose our Algorithm \ref{alg:cubicesgd} by combining the Cubic Regularization Newton's Method designed by Nesterov \cite{nesterov_cubic_2006} with our Algorithm \ref{alg: esgd} to solve the quadratic sub-problem in Appendix \ref{app:cubicalgs}. We obtain the improved gradient oracle complexity $\tilde\cO\left(H^{\frac{1+\alpha}{2+4\alpha}}\tau_{\alpha}^{\frac{\alpha}{1+2\alpha}}\epsilon^{-\frac{3+7\alpha}{2+4\alpha}}  \right)$. We show our results in Theorem \ref{thm:non-convex} and the proof of Theorem \ref{thm:non-convex} is shown in Appendix \ref{app:cubicalgs}.

\begin{thm}
    Assume the $(\alpha,\tau_\alpha)$-degenerated objective function $f$ has $H$-continuous Hessian matrices. Under the corresponding hyper-parameters setting in Algorithm \ref{alg:cubicesgd}, it  requires 
    \begin{equation}
        \tilde\cO\left(H^{\frac{1+\alpha}{2+4\alpha}} \cdot \Delta \cdot
\tau_{\alpha}^{\frac{\alpha}{1+2\alpha}}\epsilon^{-\frac{3+7\alpha}{2+4\alpha}} \right)
    \end{equation}
    gradient oracle calls to find an $\left(\epsilon,\sqrt{H\epsilon}\right)$-approximate second-order stationary point, where 
    \begin{equation}
        D = \inf_{\x^* \in \Xb^*} \sup\left\{\|\x-\x^*\|: f(\x) \leq f(\x_0)\right\}, \quad \Delta = f(\x_0) - f^*.
    \end{equation}
    \label{thm:non-convex}
\end{thm}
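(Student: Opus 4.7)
The plan is to instantiate Algorithm \ref{alg:cubicesgd} as a cubic-regularized Newton method \cite{nesterov_cubic_2006} whose cubic subproblem is handled by our adaptive quadratic solver AGMAS from Algorithm \ref{alg: esgd}. Under $H$-Hessian-Lipschitz smoothness, the classical analysis of cubic regularization already gives an outer-iteration complexity of $\tilde{\mathcal{O}}(\Delta \sqrt{H}\,\epsilon^{-3/2})$ for reaching an $(\epsilon,\sqrt{H\epsilon})$-approximate second-order stationary point, using only that the Hessian is Lipschitz and that the regularization parameter $M$ is chosen by an adaptive rule. Hence the task reduces to controlling the per-iteration gradient cost.

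I would next reduce the cubic subproblem $\min_{s} \nabla f(x)^{\top} s + \tfrac{1}{2}s^{\top}\nabla^{2} f(x)\,s + \tfrac{M}{6}\|s\|^{3}$ to a short sequence of shifted quadratic systems. Differentiating yields the optimality condition $\bigl(\nabla^{2} f(x) + \tfrac{M}{2}\|s^{\star}\|\,I\bigr)s^{\star} = -\nabla f(x)$, so a one-dimensional bisection over $\lambda := \tfrac{M}{2}\|s^{\star}\|$ closes the loop with $\tilde{\mathcal{O}}(1)$ calls to an oracle that solves
\begin{equation*}
\min_{s}\ \nabla f(x)^{\top} s + \tfrac{1}{2}\, s^{\top}\bigl(\nabla^{2} f(x) + \lambda I\bigr) s .
\end{equation*}
Because cubic Newton only accepts steps with $\nabla^{2} f(x) + \lambda I \succeq 0$, this quadratic is strongly convex with parameter $\mu = \lambda$, and a standard calculation shows that at the termination threshold $\lambda \asymp \sqrt{H\epsilon}$.

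I would then invoke Theorem \ref{thm: quadratic} on each shifted quadratic. The shifted Hessian inherits the $(\alpha,\tau_{\alpha})$-structure: $\lambda_{k}(\nabla^{2} f(x) + \lambda I) \leq \tau_{\alpha}/k^{1/\alpha} + \lambda$, which places AGMAS squarely in the middle ``degeneracy'' regime (b) of Theorem \ref{thm: quadratic} with gradient cost $\tilde{\mathcal{O}}\!\bigl(\tau_{\alpha}^{\frac{\alpha}{1+2\alpha}}\lambda^{-\frac{\alpha}{1+2\alpha}}\bigr)$. Substituting $\lambda \asymp \sqrt{H\epsilon}$ and multiplying by the outer count gives
\begin{equation*}
  \tilde{\mathcal{O}}\!\left(\Delta\sqrt{H}\,\epsilon^{-3/2}\cdot\tau_{\alpha}^{\frac{\alpha}{1+2\alpha}}(H\epsilon)^{-\frac{\alpha}{2(1+2\alpha)}}\right) \;=\; \tilde{\mathcal{O}}\!\left(\Delta\,H^{\frac{1+\alpha}{2+4\alpha}}\tau_{\alpha}^{\frac{\alpha}{1+2\alpha}}\epsilon^{-\frac{3+7\alpha}{2+4\alpha}}\right),
\end{equation*}
matching the claim.

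The main obstacle will be the bookkeeping around the inexact subsolver. Two technical points require care: first, the accuracy to which each shifted quadratic is solved must be calibrated against the outer step length so that the bisection on $\lambda$ converges in polylogarithmically many rounds and the cumulative inner error does not erode the cubic-Newton descent guarantee---this is standard in the inexact cubic Newton literature, using a geometric tolerance tied to $\|s^{\star}\|$. Second, the algorithm must remain adaptive to $\alpha$ and $\tau_{\alpha}$: the regularization weight $M$ is produced on the fly by the Cartis--Gould--Toint rule \cite{cartis_adaptive_2011}, while AGMAS itself chooses its subspace dimension through the stopping criterion of Algorithm \ref{alg: eigenextrator} without knowledge of $(\alpha,\tau_{\alpha})$. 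Verifying that these two adaptive mechanisms compose cleanly---and in particular that the ``$d$'' fallback branch of Theorem \ref{thm: quadratic} is never activated in the regime of interest---constitutes the principal technical content.
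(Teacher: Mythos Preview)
Your outline matches the paper's approach exactly: cubic regularization gives $N = \tilde{\mathcal{O}}(\Delta\sqrt{H}\,\epsilon^{-3/2})$ outer iterations, a binary search over the shift reduces each cubic subproblem to $\tilde{\mathcal{O}}(1)$ regularized quadratics, AGMAS solves each at the stated per-step cost, and the product yields the theorem.

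There is, however, a genuine technical gap. Your claim that the shifted quadratic is strongly convex with parameter $\mu = \lambda$ is incorrect: the strong convexity is $\lambda + \lambda_{\min}(\nabla^2 f(x))$, which in the nonconvex setting can be arbitrarily close to zero (or negative) even when $\lambda$ is near the cubic-Newton optimal shift. The bisection will necessarily probe shifts near the PSD threshold, where Theorem~\ref{thm: quadratic} either does not apply or has unbounded cost. The paper addresses this by first estimating $\lambda_{\min}(\nabla^2 f(x_k))$ to additive accuracy $\Theta(\sqrt{H\epsilon})$ via a dedicated subroutine (Theorem~\ref{thm: smallesteigfind}, Algorithms~\ref{alg: smalleigdec} and~\ref{alg: smalleigdec2}), and then initializing the binary search with an explicit $\Theta(\sqrt{\epsilon/H})$ buffer above the detected threshold (the line setting $l_{k+1}$ in Algorithm~\ref{alg:Cubic-search}), which guarantees strong convexity at least $\Theta(\sqrt{H\epsilon})$ throughout. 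This smallest-eigenvalue estimation under the $(\alpha,\tau_\alpha)$ condition is itself nontrivial---a naive shift-and-invert does not achieve the right complexity because the Hessian may have $\Theta(d)$ eigenvalues clustered near zero---and its cost $\tilde{\mathcal{O}}\bigl(\tau_\alpha^{\alpha/(1+2\alpha)}(H\epsilon)^{-\alpha/(2+4\alpha)}\bigr)$ per outer step matches the main term, so the overall bound is unaffected once it is accounted for.
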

\section{Details of Data Access Complexities of Linear Regression}

We restate the data access oracle complexities in Section~\ref{sec: dataaccess}.
\begin{thm}[Data Access Oracle Bound, Theorem~\ref{thm: dataaccessoracle} restated]
    Consider optimizing problem~\ref{eq: regerm} with $f_i(\xb) = \frac{1}{2}(\ab_i^\top\xb - b_i)^2$. With normalized data $\|\ab_i\|\leq 1$, there is an algorithm that generates an $\varepsilon$-approximate minimizer of the problem with high probability, using $\tilde{\mathcal{O}}\left( n + n^{\frac{5}{6}}\mu^{-\frac{1}{3}} \right)$ data accesses and $\tilde{\mathcal{O}}\left( n + d^{\frac{5}{6}}\mu^{-\frac{1}{3}} \right)$ data accesses when $n\geq d$.
\end{thm}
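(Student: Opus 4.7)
The plan is to follow the roadmap sketched at the end of Section~\ref{sec: dataaccess}: combine a mini-batch accelerated variance-reduced scheme in the primal space (in the style of Katyusha~\cite{allen2017katyusha}) with the Eigen Extractor (Theorem~\ref{thm: eigenextractor}) applied over each sampled batch, and prepend a leverage-score sampling step for the case $n \geq d$. The crucial observation enabling acceleration is that for any subset $S \subseteq [n]$ of normalized samples, the empirical Hessian $\Hb_S = \frac{1}{|S|}\sum_{i \in S} \ab_i \ab_i^{\top} + \mu \Ib$ has nuclear norm at most $1 + d\mu = \mathcal{O}(1)$, so it is $(1,\mathcal{O}(1))$-degenerated and Theorem~\ref{thm: quadratic} with $\alpha=1$ solves a quadratic sub-problem defined by $\Hb_S$ in $\tilde{\mathcal{O}}(\mu^{-1/3})$ gradient-oracle calls on $\Hb_S$.

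First I would instantiate an accelerated SVRG/Katyusha outer loop on the finite-sum $F$ with mini-batch size $m$ to be chosen. Per epoch, one full-gradient anchor costs $n$ data accesses, and each inner block forms a variance-reduced gradient estimator $\tilde{g}_t = \Hb_{S_t}(\xb_t - \tilde{\xb}) + \nabla F(\tilde{\xb})$ on a batch $S_t$ of size $m$; since $f_i$ is quadratic, this estimator is affine in $\xb_t$, so a whole inner block along a fixed $S$ amounts to solving a quadratic in $\xb$ whose Hessian is $\Hb_S$. Plugging Algorithm~\ref{alg: esgd} into this inner solve yields $\tilde{\mathcal{O}}(\mu^{-1/3})$ iterations costing $\mathcal{O}(m)$ data accesses each, i.e.\ $\tilde{\mathcal{O}}(m\mu^{-1/3})$ per inner block. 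Using the standard accelerated-SVRG schedule (with $L=\mathcal{O}(1)$ here) giving $\tilde{\mathcal{O}}(\sqrt{n/m})$ inner blocks per epoch and $\tilde{\mathcal{O}}(1)$ epochs produces a total data access count of $\tilde{\mathcal{O}}\!\left(n + \sqrt{n/m}\cdot m\cdot \mu^{-1/3}\right) = \tilde{\mathcal{O}}(n + \sqrt{nm}\,\mu^{-1/3})$. Optimizing over $m$ at $m = n^{2/3}$ produces the claimed $\tilde{\mathcal{O}}(n + n^{5/6}\mu^{-1/3})$.

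For the regime $n \geq d$, I would first run the leverage-score sampler of \cite{agarwal2017leverage}, which in $\tilde{\mathcal{O}}(n)$ data accesses produces a reweighted subsample of $\tilde{\mathcal{O}}(d)$ rows of $\Ab = (\ab_1,\ldots,\ab_n)^{\top}$ whose Gram matrix is a constant-factor spectral approximation of $\Ab^{\top}\Ab$. The ridge-regression optimum of the subsampled reweighted problem is within the target accuracy of the original optimum, and the reweighted rows remain of bounded norm, so applying the bound from the previous paragraph with effective sample size $\tilde{\mathcal{O}}(d)$ gives $\tilde{\mathcal{O}}(d + d^{5/6}\mu^{-1/3})$ additional data accesses, and an overall $\tilde{\mathcal{O}}(n + d^{5/6}\mu^{-1/3})$ bound.

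The most delicate step will be coupling the batch-wise Eigen Extractor guarantee, which identifies an approximate top subspace of $\Hb_{S_t}$ rather than of the true Hessian, to the outer Katyusha convergence. I would handle this by viewing each inner quadratic solve as an \emph{inexact proximal step} whose accuracy tolerance decays geometrically along the outer loop, in the spirit of Catalyst-style composition, so the additive subspace error from Theorem~\ref{thm: eigenextractor} does not accumulate; a union bound over the $\tilde{\mathcal{O}}(\mu^{-1/3})$ inner iterations and $\tilde{\mathcal{O}}(\sqrt{n/m})$ blocks then lifts the per-call high-probability guarantee of Theorem~\ref{thm: eigenextractor} into the claimed global high-probability bound at the cost of logarithmic factors, which are already absorbed into $\tilde{\mathcal{O}}(\cdot)$.
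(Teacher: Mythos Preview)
Your main argument has a genuine gap. Katyusha/accelerated SVRG does not contain an ``inner block'' in which a batch $S$ is held fixed and the resulting batch quadratic is \emph{solved}; each inner iteration draws a fresh sample and takes a single variance-reduced gradient step. What you describe --- driving $\tilde g_t=\Hb_{S}(\xb_t-\tilde\xb)+\nabla F(\tilde\xb)$ to zero --- is a subsampled-Newton step $\xb\leftarrow\tilde\xb-\Hb_S^{-1}\nabla F(\tilde\xb)$, and the Katyusha analysis does not transfer to it. Consequently the count ``$\tilde{\mathcal O}(\sqrt{n/m})$ inner blocks per epoch, $\tilde{\mathcal O}(1)$ epochs'' has no basis in any standard result (on $n/m$ components with $L=\mathcal O(1)$ Katyusha needs $\tilde{\mathcal O}(n/m+\sqrt{(n/m)/\mu})$ stochastic gradients, not $\sqrt{n/m}$). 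A tell-tale symptom: your final cost $\tilde{\mathcal O}(n+\sqrt{nm}\,\mu^{-1/3})$ is \emph{increasing} in the batch size $m$, so ``optimizing at $m=n^{2/3}$'' is not a minimization --- if the expression were right the optimum would be $m=1$, which is degenerate.

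The paper's proof sidesteps the coupling you flag as ``most delicate'' by doing the eigen extraction \emph{once, as preprocessing}, and then running \emph{unmodified} Katyusha. Concretely: fix a partition of $[n]$ into $m=\Theta(\sqrt n)$ batches of size $\Theta(\sqrt n)$; on each batch Hessian (trace $\le 1$) run Algorithm~\ref{alg: eigenextrator} to peel off a rank-$l$ piece $\Ab_i$ so that the residual smoothness is $\mathcal O(1/l)$; then apply Lemma~\ref{lm: katyusha} to $\frac1m\sum_i g_i+h$ with $g_i(\xb)=f_i(\xb)-\tfrac12\xb^\top\Ab_i\xb+\tfrac{\mu}{2}\|\xb\|^2$ and $h$ collecting all the $\Ab_i$'s into the prox term. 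Since each $g_i$ is now a fixed convex quadratic with smoothness $\mathcal O(1/l)$, Katyusha applies verbatim and the total data access is $\tilde{\mathcal O}\bigl(n l + n + n/\sqrt{m l\mu}\bigr)$; balancing at $l\asymp n^{-1/6}\mu^{-1/3}$ with $m=\sqrt n$ gives $\tilde{\mathcal O}(n+n^{5/6}\mu^{-1/3})$. Note the batch size here is $\sqrt n$, not $n^{2/3}$. Your handling of the $n\ge d$ case via leverage-score reduction is the same as the paper's (Lemma~\ref{lm: lsforreg}).
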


We propose a mini-batch accelerated stochastic gradient method combined with the large eigenspace finding. To leverage the large eigenspace finding techniques, instead of using a single stochastic gradient per iteration, we adopt a $\Theta(\sqrt{n})$-mini-batch setting with an according mini-batch size $\Theta(\sqrt{n})$. Next, we invoke Algorithm~\ref{alg: eigenextrator} to perform large eigenspace finding on each mini-batch. We perform accelerated stochastic gradient descent on primal space \cite{allen2017katyusha} of the mini-batch optimization problem. We summarize our algorithm as Algorithm~\ref{alg: finitesum}. And we can make use of the adaptive leverage score computation procedure in \cite{agarwal2017leverage}, to improve the data access complexity in the large sample regime.

\begin{algorithm}[H]
    \caption{Regression Solver}
    \label{alg: finitesum}
    \begin{algorithmic}
        \STATE{\bfseries Input:} accuracy $\epsilon$, mini batch number $m$, $f(\xb) = \frac{1}{2n}\sum_{i=1}^n(\ab_i^\top\xb - b_i)^2$.
        \STATE Divide the finite-sum problem into $m$ mini batches $f(\xb) = \frac{1}{m}\sum_{i=1}^m f_i(\xb_i)$.
        
        \For{$l = 1,2,\cdots,m$}{
        \STATE Apply Algorithm~\ref{alg: eigenextrator} to $f_i(\xb)+\frac{\mu}{2}\|\xb\|^2$ and obtain $\Ab_i$.
        }
        \STATE Apply Katyusha with $f(\xb) = \frac{1}{m}\sum_{i=1}^m\left( f_i(\xb) - \frac{1}{2}\xb^{\top}\Ab_i\xb + \frac{\mu}{2}\|\xb\|^2 \right)$ and $h(\xb) = \frac{1}{2} \xb^\top\left(\sum_{i=1}^m\Ab_i\right)\xb$.
    \end{algorithmic}
\end{algorithm}


\subsection{Proof of Theorem~\ref{thm: dataaccessoracle}}
We first state two results regarding the accelerated stochastic gradient and leverage score sampling.
\begin{lemma}[Theorem 5 of \cite{agarwal2017leverage}, with nonspecific regression solver]\label{lm: lsforreg}
    Suppose an algorithm solving linear regression problem for $A\in\mathbb{n,d}$ using data access complexty $\mathcal{C}(n)$. There is an algorithm solving the linear regression problem using data access $\tilde{\mathcal{O}}(n + \mathcal{C}(d)))$, with high probability.
\end{lemma}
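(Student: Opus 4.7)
The plan is to leverage-score sample rows of $A$ to reduce the tall $n\times d$ regression problem to an equivalent $\tilde{\mathcal{O}}(d)\times d$ problem, and then use the assumed $\mathcal{C}(\cdot)$ solver on the smaller problem as a preconditioner inside an outer iterative refinement loop on the original data. The key enabling fact is that the row space of $A$ has rank at most $d$, so sampling $\tilde{\mathcal{O}}(d)$ rows with probabilities proportional to leverage scores produces a constant-distortion subspace embedding, i.e., $\frac12 A^\top A \preceq \tilde A^\top \tilde A \preceq \frac32 A^\top A$.

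First I would compute approximate leverage scores $\tilde\sigma_i \approx [A(A^\top A)^{-1}A^\top]_{ii}$ within a universal constant factor. This is done by a standard two-sketch estimator: draw a JL sketch $S$, compute a QR factorization of $SA$ to obtain $R$, then estimate $\|e_i^\top AR^{-1}\|^2$ by another JL projection on the right. Each row of $A$ is touched $\tilde{\mathcal{O}}(1)$ times, so the total data-access cost of this step is $\tilde{\mathcal{O}}(n)$. Sampling $m=\tilde{\mathcal{O}}(d)$ rows with probability $p_i \propto \tilde\sigma_i$ and rescaling by $1/\sqrt{mp_i}$ yields the sampled matrix $\tilde A \in \mathbb{R}^{m\times d}$ whose Gram matrix satisfies the spectral-approximation bound above with high probability, by a matrix-Chernoff argument applied to the independent rank-one summands.

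Next I would run preconditioned Richardson (or preconditioned conjugate gradient) on the normal equations $A^\top A\, x = A^\top b$ using the solver hypothesized by the lemma to implement multiplication by $(\tilde A^\top\tilde A)^{-1}$. Because the preconditioned operator has condition number bounded by a universal constant, $\mathcal{O}(\log(1/\epsilon))$ outer iterations suffice for $\epsilon$-accuracy. Each outer iteration requires (i) one residual evaluation $r = A^\top(Ax-b)$, costing $\mathcal{O}(n)$ data accesses on the original matrix, and (ii) one approximate solve against $\tilde A^\top \tilde A z = r$, which by hypothesis costs $\mathcal{C}(m) = \mathcal{C}(\tilde{\mathcal{O}}(d))$ data accesses. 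Summing over the $\tilde{\mathcal{O}}(1)$ iterations and adding the one-time leverage-score cost yields the target bound $\tilde{\mathcal{O}}(n + \mathcal{C}(d))$.

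The main obstacle is making sure the solver-as-preconditioner step remains correct under inexact solves: each inner call returns $z$ only up to some tolerance, so the preconditioned iteration must be framed as an inexact Richardson scheme whose error still contracts geometrically. This follows from a standard perturbation analysis provided the inner tolerance is set to a small constant relative to the spectral-approximation factor; since $\mathcal{C}(\cdot)$ typically has only logarithmic dependence on accuracy, shrinking the inner tolerance inflates the count by another $\tilde{\mathcal{O}}(1)$ factor, which is absorbed into the $\tilde{\mathcal{O}}(\cdot)$. A secondary subtlety is that the hypothesized solver must accept arbitrary right-hand sides on $\tilde A$ (not only $\tilde A^\top b$ where $b$ comes from the data), but this is implicit in the lemma statement, since ``solving linear regression'' is posed in terms of the matrix argument alone.
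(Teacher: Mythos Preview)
The paper does not give its own proof of this lemma; it is imported verbatim as Theorem~5 of \cite{agarwal2017leverage} and used as a black box in the proof of Theorem~\ref{thm: dataaccessoracle}. Your reconstruction is a correct and standard account of the underlying argument: approximate leverage scores in $\tilde{\mathcal{O}}(n)$ accesses, sample $\tilde{\mathcal{O}}(d)$ rows to obtain a constant-factor spectral approximation $\tilde A^\top\tilde A \approx A^\top A$, and run preconditioned Richardson on the full normal equations with the hypothesized solver applied to the sampled system as the preconditioner. The per-iteration and total cost accounting you give matches the claimed $\tilde{\mathcal{O}}(n+\mathcal{C}(d))$ bound, and your remarks on handling inexact inner solves and on the implicit assumption that the solver accepts arbitrary right-hand sides are appropriate. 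One small implicit assumption you are relying on is that $\mathcal{C}(\tilde{\mathcal{O}}(d)) = \tilde{\mathcal{O}}(\mathcal{C}(d))$, i.e., that $\mathcal{C}$ grows at most polynomially; this is harmless here since the only solver the paper plugs in has $\mathcal{C}(n)=\tilde{\mathcal{O}}(n+n^{5/6}\mu^{-1/3})$.
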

\begin{lemma}[Theorem 2.1 of \cite{allen2017katyusha}]\label{lm: katyusha}
Consider optimizing function $f(\xb) = \frac{1}{n}\sum_{i=1}^nf_i(\xb) + h(\xb)$, with each $f_i(\xb)$ convex and $L$-smooth, and $h(\xb)$ $\mu$-strongly convex. Katyusha algorithm achieves an $\varepsilon$-approximated minimizer using at most $\tilde{\mathcal{O}}\left( n + \sqrt{nL/\mu} \right)$ data accesses.
\end{lemma}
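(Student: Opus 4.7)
The plan is to reproduce the original Katyusha analysis of Allen-Zhu \cite{allen2017katyusha}, which is a variance-reduced accelerated stochastic gradient method tailored to the sum-plus-prox composite form. First I would fix an epoch length $m = \Theta(n)$. At the start of each epoch $s$, the algorithm computes a full gradient $\nabla f(\tilde{\xb}^s)$ at a snapshot $\tilde{\xb}^s$ (cost $n$). Within an epoch, at each inner iteration $k$ one draws a uniform $i \in [n]$ and forms the SVRG-style estimator
\[
\tilde{\nabla}_k = \nabla f_i(\xb_k) - \nabla f_i(\tilde{\xb}^s) + \nabla f(\tilde{\xb}^s),
\]
which is unbiased and whose variance contracts as $\xb_k \to \tilde{\xb}^s$. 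The iterates follow Katyusha's three-point mixing rule $\xb_k = \tau_1 \zb_k + \tau_2 \tilde{\xb}^s + (1 - \tau_1 - \tau_2)\yb_k$, followed by a mirror-descent update on $\zb_k$ that is proximal with respect to $h$, and a gradient-style update on $\yb_k$. Here $\tau_2 \tilde{\xb}^s$ is Katyusha's ``negative momentum'' term that anchors the iterate to the snapshot.

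Next, I would carry out a per-iteration potential analysis. Define a Lyapunov function
\[
\Phi_k = \|\zb_k - \xb^*\|^2 + c \cdot \bigl( F(\yb_k) - F(\xb^*) \bigr),
\]
where $F = f + h$. Using $L$-smoothness of each $f_i$ together with the unbiasedness of $\tilde\nabla_k$, the crucial variance inequality is
\[
\E \bigl\| \tilde{\nabla}_k - \nabla f(\xb_k) \bigr\|^2 \leq 2L\bigl( f(\tilde{\xb}^s) - f(\xb_k) - \langle \nabla f(\xb_k), \tilde{\xb}^s - \xb_k \rangle \bigr).
\]
Combining this with $\mu$-strong convexity of $h$ in the mirror-descent step and the three-point descent lemma for the $\yb_k$ update, I would obtain a one-step inequality of the form $\E[\Phi_{k+1}] + (\text{prox}) \leq (1-\tau_1\mu/L)\Phi_k + \tau_2 \cdot (\text{snapshot gap})$. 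Summing over an epoch and using $\tilde{\xb}^{s+1}$ defined as a weighted average of the inner iterates makes the snapshot-gap term telescope across epochs.

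Setting $\tau_2 = 1/2$ and $\tau_1 = \min\{1/2, \sqrt{m\mu/(3L)}\}$ yields a per-epoch contraction factor of $1 - \Omega(\tau_1 m)$. Thus achieving $\epsilon$-accuracy takes $S = \tilde{\cO}(1/(m\tau_1))$ epochs, and since each epoch costs $m + n = \cO(n)$ data accesses, the overall bound is $\tilde{\cO}(nS) = \tilde{\cO}(n + \sqrt{nL/\mu})$ after substituting $m = \Theta(n)$ and $\tau_1 = \Theta(\sqrt{n\mu/L})$ in the interesting regime $L/\mu \gtrsim n$; in the opposite regime the $\tau_1 = 1/2$ cap reduces the scheme to plain SVRG with the $\tilde\cO(n)$ bound. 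The main obstacle is the balancing act performed by the negative-momentum term: one must show that the amount of ``pull'' toward $\tilde{\xb}^s$ supplied by $\tau_2$ is exactly enough to absorb the variance bound into terms that telescope across an epoch, without spoiling the acceleration provided by $\tau_1$. A subtle secondary point is that strong convexity is assumed only on $h$, not on the $f_i$'s, so the geometric contraction must be extracted from the proximal step via the Bregman divergence rather than from the smooth part, which dictates the exact placement of the $\mu$ factor in the potential.
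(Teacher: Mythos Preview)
Your sketch is a reasonable outline of the original Katyusha analysis, but the paper does not prove this lemma at all: it is stated purely as a citation of Theorem~2.1 in \cite{allen2017katyusha} and invoked as a black box. No proof is needed here beyond the reference.
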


\begin{proof}
We invoke Algorithm~\ref{alg: finitesum} with $m = \Theta(\sqrt{n})$. Denote that $g_i(\xb) = f_i(\xb) - \frac{1}{2}\xb^{\top}\Ab_i\xb + \frac{\mu}{2}\|\xb\|^2$. Note that each $g_i(x)$ is convex given by Theorem~\ref{thm: eigenextractor}, Katyusha algorithm converges to an $\epsilon$-approximate minimizer with $\tilde{\mathcal{O}}\left( m + \sqrt{m\left(\max_{i\in[m]}\lambda_{\max}(\nabla^2 g_i(\xb))\right)/\mu} \right)$ gradient calls of $g_i(\xb)$. And each gradient call of $g_i(\xb)$ or  $f_i(\xb)$ accesses $\mathcal{O}\left(\frac{n}{m}\right)$ data.

Since we assume that the data is normalized, for each $f_i(\xb)$, we have that $\tr\left(\nabla^2f_i(\xb)\right)\leq 1$. This indicates that $\max_{i\in[m]}\lambda_{r}(\nabla^2 f_i(\xb)) \leq \frac{1}{r}$. Combining Theorem~\ref{thm: eigenextractor}, we can use $\tilde{\mathcal{O}}\left( n^{-\frac{1}{6}}\mu^{-\frac{1}{3}} \right)$ gradient oracle calls of $f_i(\xb)$ to find $\lambda_k\left(\nabla^2 g_i(\xb) \right)\leq n^{\frac{1}{6}}\mu^{\frac{1}{3}}$.
Thus the overall data access complexity is $\tilde{\mathcal{O}}\left( \frac{n}{\sqrt{n}}\left( \sqrt{n} + n^{\frac{3}{4}}\sqrt{n^{1/6}\mu^{-2/3}} \right) + n n^{-\frac{1}{6}} + \mu^{-\frac{1}{3}}\right) = \tilde{\mathcal{O}}\left( n + n^{-\frac{5}{6}}\mu^{-\frac{1}{3}} \right)$.

And that claim under $d<n$ follows immediately follows from the previous analysis and Lemma~\ref{lm: lsforreg}.
\end{proof}

\section{Details of Interior Point Methods for ERM}\label{sec:ipm}

\subsection{From Empirical Risk Minimization to IPM}

In this section, our goal is to discuss the prospect of leveraging IPM (specifically, weighted path finding) to solve the proximal operator of ERM, which writes
\begin{align}\label{eq: ermagdprox}
    \arg\min_{\xb} \left\{ \sum_{i = 1}^n f_i^*(\xb_i) + \bbb^\top\xb + \frac{1}{\mu}\sum_{i=1}^r p_i\left(\langle \ub_i, \xb \rangle\right)^2 + \frac{1}{2\mu r}\|\xb - \gammab\|^2 \right\}.
\end{align}
Let $\zeta_i\defeq \langle \ub_i, \xb \rangle$ for $ i\in [r]$, then ~\eqref{eq: ermagdprox} is equivalent to
\begin{align}
    \min_{\xb, \zeta} \left\{ \sum_{i = 1}^n \left[f_i^*(\xb_i)+\frac{1}{2\mu r}\xb_i^2\right] + \frac{1}{\mu}\sum_{i=1}^r p_i\zeta_i^2 +\rm{Linear}\right\}.
\end{align}
Note that minimizing convex function $u(\xb)$ is equivalent to minimizing $y$ over $\left\{(\xb,y): u(\xb)\le y\right\}$. Therefore the minimization problem can be reformulated as
\begin{equation}
    \min_{\begin{array}{c}
\xb\in\Rd~:~\ma^{\trans}\xb=\bbb\\
\forall i\in[d/2]~:~(\xb_{2i-1},\xb_{2i})\in K_i
\end{array}}\cbb^{^\top}\xb\,,\label{eq:intro:2d-barrier}
\end{equation}
where $ d=O(n)$, $\ma\in\R^{d\times r}$ is non-degenerate (or we can add $\cO(1)$ auxiliary constraints),  $\vb\in\Rr$ and $ \vc\in\Rd$. And all $K_i\in\R^2$ are bounded convex sets with self-concordant barrier functions $\phi_i$ whose gradient and Hessian can be computed in $O(1)$ time. For simplicity, we assume $d$ is an even number, or we can add an additional dummy variable. We denote $\Omega$ as the domain of $x$, and $\dInterior$ as its relative interior. We assume that all $\phi_i$ have barrier parameter no more than $2$. Such barrier functions for epigraphs are well known for a variety of univariate convex functions. We have the following theorem.
\begin{thm}[Complexity of IPM Subroutine]
\label{thm:ipm_complexity}
    Given a block-weight-function $g(x)$ (Definition~\ref{def:gen:weight_function}), for Problem (\ref{eq: ermagdprox}), we can compute $\xb$ that is $\epsilon$-optimal 
    in time $\tilde O\left( nr^{1.5}+r^{2.5}\log(1/\epsilon) \right)$.
\end{thm}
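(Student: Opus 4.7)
The plan is to cast the minimization~\eqref{eq: ermagdprox} into the canonical 2D-block linear program~\eqref{eq:intro:2d-barrier} that the paper already exhibits, and then invoke the weighted path-following interior point method of Lee--Sidford~\cite{leeS14,lsJournal19} instantiated with the block-weight-function $g$ (Definition~\ref{def:gen:weight_function}). The reformulation is routine: each term $f_i^*(\xb_i)+\tfrac{1}{2\mu r}\xb_i^2$ is univariate convex, so its epigraph $(\xb_i,y_i)\in\mathrm{epi}(\cdot)$ is a 2D convex set with an $O(1)$-computable self-concordant barrier of parameter $\le 2$; the low-rank quadratic terms $\tfrac{p_i}{\mu}\zeta_i^2$ are lifted by introducing $\zeta_i=\langle \ub_i,\xb\rangle$ together with 2D epigraphs $\zeta_i^2\le s_i$. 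After reformulation, the number of 2D blocks is $d=O(n)$ and the equality constraints live in $\R^r$, so $\ma\in\R^{d\times r}$ as required.

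Two quantitative ingredients are needed: (i) the outer iteration count and (ii) the amortized cost per iteration. For (i), the weighted log-barrier associated with $g$ has effective weighted barrier parameter $O(r)$ rather than $O(d)$, and a standard short-step central path analysis yields $\tilde O(\sqrt{r}\log(1/\epsilon))$ Newton-like iterations to reach an $\epsilon$-optimal primal--dual pair. Each iteration reduces to solving a linear system $(\ma^\top\Hb\ma)\Delta=\mathbf{g}$, where $\Hb\in\R^{d\times d}$ is block-diagonal with $2\times 2$ blocks determined by the Hessian of the weighted barrier and by $g$.

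For (ii), solving the $r\times r$ system from scratch every iteration would cost $\tilde O(nr^{\omega-1}+r^\omega)$. Instead I would maintain a spectral approximation of $(\ma^\top\Hb\ma)^{-1}$ via Sherman--Morrison--Woodbury with lazy block updates, mirroring the inverse maintenance in~\cite{leeS14,lsJournal19}. Because each block is only $2$-dimensional, a single low-rank refresh costs $\tilde O(r^2)$, and the stability property encoded in the block-weight-function guarantees that the aggregated magnitude of block changes per outer iteration is $\tilde O(n)$. A partition-by-magnitude amortization then drops the per-iteration cost to $\tilde O(nr+r^2)$. Multiplying by $\tilde O(\sqrt{r}\log(1/\epsilon))$ iterations yields the claimed $\tilde O(nr^{1.5}+r^{2.5}\log(1/\epsilon))$ runtime.

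The principal obstacle is the amortization in (ii): verifying that the stability conditions hidden in Definition~\ref{def:gen:weight_function} are exactly what is required to bound the cumulative Woodbury update cost, analogous to the leverage-score stability lemmas of~\cite{lsJournal19}. The simplification we get in the 2D-block setting is that each individual rank update costs $\tilde O(r^2)$ regardless of Hessian structure, which bypasses the more delicate projection-matrix machinery needed in the general LP setting; the remaining bookkeeping (initial centering, final rounding, and the linear coupling $\zeta_i=\langle \ub_i,\xb\rangle$) is standard and contributes only logarithmic overhead.
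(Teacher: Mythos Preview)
Your proposal is correct and follows essentially the same architecture as the paper: reformulate~\eqref{eq: ermagdprox} into the 2D-block LP~\eqref{eq:intro:2d-barrier}, invoke the weighted path-following with the block-weight-function to get $\tilde O(\sqrt{r})$ iterations (the paper's Theorem~\ref{thm:r-iteration}), and then use Lee--Sidford-style inverse maintenance to obtain $\tilde O(nr+r^{2})$ amortized cost per iteration (the paper's Theorem~\ref{thm:iteration_solver}). The only minor difference is that the paper makes the inverse-maintenance step concrete via leverage-score sampling as in~\cite{lee2015efficient} together with a block-$\sigma$-stability assumption (Definition~\ref{ass:stability_sigma}) that is shown to hold along the path, whereas you describe a somewhat more generic SMW-with-lazy-updates picture; your identified ``principal obstacle'' is exactly the gluing lemma the paper supplies.
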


\begin{proof}
    The proof is a direct result of the  Theorem~\ref{thm:r-iteration} which shows  that $\tilde\cO(\sqrt r)$-iteration algorithm to find an $\epsilon$-optimal solution   and   Theorem~\ref{thm:iteration_solver} which show that the  per iteration  cost can be reduced to $\tilde \cO(dr+r^2)=\tilde \cO(nr+r^2)$.
    

\end{proof}

Theorem~\ref{thm:r-iteration} and Theorem~\ref{thm:iteration_solver} will be proved in the Section \ref{subsec:weight_function} and \ref{subsec:inverse-maintenance}, respectively. Now we prove Theorem~\ref{thm: ERM-regu-complexity}.
\begin{proof}[Proof of Theorem~\ref{thm: ERM-regu-complexity}]
Algorithm~\ref{alg: eigenextrator} takes $\cO(ndr)$ time to extract the first $r$ eigenvalues. The accelerated gradient method takes $\cO\left(\frac{\tau}{\mu r^{1/\alpha}}\right)$, and each iteration consists of calculating gradient which takes $\cO(nd)$ time, and IPM subroutine which costs $\tilde O\left( dr^{1.5}+r^{2.5}\log(1/\epsilon) \right)$ time.
    
\end{proof}

\subsection{Some Notations}

\textbf{Vector, Matrices, Norm: }
Given a vector $\vb\in\Rd$, we define its infinity norm $\|\vb\|_\infty\defeq\max_i|\vb_i|$ and its square norm $\|v\|_\square \triangleq\max_{i\in [d/2]} \sqrt{v_{2i-1}^2+v_{2i}^2}$.

We say a $2-$block diagonal matrix  $\mm\in\oplus_{i=1}^{d/2}\mathbb R^{2\times 2}$ if $\mm$ can be written as
$$
\mm=\begin{pmatrix}
	\mm_1&&&\\
	&\mm_2&&\\
	&&\ddots&\\
	&&&\mm_{d/2}\\
\end{pmatrix}.
$$ 
Moreover, if each $\mm_i$ is PSD, we write $\mm\in\oplus_{i=1}^{d/2}\R_+^{2\times 2}$ and we define its square norm as $\|\mm\|_\square \triangleq \max_{i\in [d/2]} \|\mm\|_{op}$.

\textbf{Leverage Scores} We denote the leverage scores of a matrix $\ma\in\R^{d\times r}$ by vector $\sigma(\ma)$ and when $\ma$ is clear in the context, we simply use $\sigma$. We say a leverage score of the $i$-th row of $\ma$ is $\sigma_i\defeq\left[ \ma(\ma^\top\ma)^{-1}\ma^\top \right]_{ii}$ for $i\in[d]$. $\sigma^{(k)}$ is an abbreviation of $\sigma({\mm^{(k)}}^{1/2}\ma)$.

\textbf{Analysis} For bivariable functions $\phi_1,\dots, \phi_{d/2}$, we let $\Phi''(\xb)\in \oplus_{i=1}^{d/2}\R_+^{2\times 2}$ be a $2$-block matrix with the $i$-th $2\times 2$ block being the Hessian of $\phi_i$ at $\xb$.  Let $\phi'(\xb)\defeq(\phi_1'(\xb_1,\xb_2)^\top,\dots, \phi_{d/2}'(\xb_{d-1},\xb_{d})^\top)^\top$.

\textbf{Quotient of matrices}
    We define the log-quotient between two $2\times 2 $ positive definite matrices $ \mm_1$ and $ \mm_2$ to be the minimal $\epsilon \ge 0$ such that $e^{-\epsilon}\mm_1 \preceq \mm_2 \preceq e^\epsilon\mm_1$.
For simplicity we denote $\log\left( \mm_1 /\mm_2 \right)$. For two 2-block diagonal matrices $\mm,\mn\in\oplus_{i=1}^{d/2}\R_+^{2\times 2}$, we let $\log\left( \mm /\mn \right)$ be a vector $v$ in $\Rd$ such that
$
v_{2i-1}=v_{2i}=\log\left( \mm_i /\mn_i \right).
$

\subsection{Weight Function and Centering Analysis\label{subsec:weight_function}}
In this part we follow the reweighted path finding technique of Lee and Sidford \cite{leeS14,lsJournal19}. The main difference is that we deal with bivariate barrier functions rather than univariate functions. Therefore we need a new definition of weight function. Given this, the proofs are essentially identical to the counterparts in \cite{leeS14,lsJournal19} up to minor modifications of notation and constants.

Starting from a feasible point $ x^{(0)}$, we alternates increasing $ t$ and minimizing the penalized objective function
\begin{equation}
    \min_{\ma^\top \xb=\bbb}f_t(\xb,w)=t\cdot\cbb^\top\xb+\sum_{i=1}^{d/2} w_i\phi_i(\xb_{2i-1},\xb_{2i}),
\end{equation}
where $\phi(\xb_{2i-1},\xb_{2i})$ is the self-concordance barrier function for $ K_i$, and $w_i>0$ is the weight. 
Denote $\xb_t\defeq \min_{\ma^\top \xb=\bbb} f_t(\xb,w)$. Note that when $t\rightarrow \infty$, we have $\xb_t\rightarrow \xb^*$.

Let $\delta_t(\xb,w)$ denotes the centrality that will define later. It is a distance to measure how close between $\xb$ and $\xb_t$, with $\delta_t(\xb,w)=0$ iff $\xb=\xb_t$. Our goal is to increase $t$ while maintaining the centrality small (specifically, let $\delta$ below some sufficiently small fixed constant) through a Newton step on $\xb$. And the weight $w$ updates each time after $\xb$ updates.

In the rest parts we let $\wb$ denotes a length-$d$ vector, with $\wb_{2i-1}=\wb_{2i}=\vWeight_i$ for all $i\in[d/2]$, and we write $\mw$ as the diagonal matrix form of $\wb$. We denote $f_t(\xb,\wb)$ instead of $f_t(\xb,\vWeight)$.

In order to control the magnitude of a Newton step, we need the following definition.

\begin{defn}[Centrality Measure]
	\label{Def:centrality_measure} For $\{\xb,\wb\}\in\dInterior\times\Rd_{>0}$
	and $t\geq0$, we let $\vh_{t}(\xb,\wb)$ denote the \emph{projected
		newton step} for $\xb$ on the penalized objective $f_{t}$ given
	by
	$$
	\vh_{t}(\xb,\wb)\defeq-\Phi''(\xb)^{-1/2}\mProj_{\xb,\wb}{\Wb^{-1}\Phi''(\xb)^{-1/2}}\left({\grad_{\xb}f_{t}(\xb,\wb)}\right),
	$$
	where $\mProj_{\xb,\wb}\defeq \mi -\mw^{-1}\ma_\xb\left( \ma_\xb^\top \mw^{-1}\ma_\xb \right)^{-1}\ma_\xb^\top$ for $\ma_\xb\defeq \Phi''(\xb)^{-1/2}\ma$.
	We measure the \emph{centrality} of $\{\xb,\wb\}$ by
	\begin{equation}
		\delta_{t}(\xb,\wb)\defeq\min_{\veta\in\Rn}\mixedNormFull{
  {\Wb^{-1}\Phi''(\xb)^{-1/2}}
  \left({\grad_{\xb}f_{t}(\xb,\wb)-\ma\veta}\right)
  }\wb, \label{eq:centrality_definition}
	\end{equation}
	where for all $\vy\in\Rm$ we let $\mixedNorm{\vy}{\vWeight}\defeq\norm{\vy}_{\square}+\cnorm\norm{\vy}_{\mWeight}$
	for $\cnorm>0$ defined in Definition~\ref{def:gen:weight_function}.
\end{defn}

Upon a Newton step on $\xb$, we can reduce the centrality to $ 4\delta_t(\xb,\wb)^2$ if $\delta_t(\xb,\wb)$ is below some efficiently small constant and the weight $\wb$ satisfies some certain properties. Upon increase $t$ to $(1+\alpha)t$, we increase the centrality by $O(\alpha\sqrt {\|\wb\|_1})$.

For our situation we define the block-weight-function to assign the weights $\wb$.
\begin{defn}
	[Block Weight Function]\label{def:gen:weight_function} Differentiable
	$\fvWeight:\dInterior\rightarrow\R_{>0}^{d}$ is a \emph{$(\cWeightSize,\cWeightStab$,$c_{k}$)
		-block-weight function} if the following hold for all $\xb\in\dInterior$
	and $i\in[d]$:
	
	\begin{itemize}
        \item The \emph{block property}, that is $g(\xb)_{2j-1}=g(\xb)_{2j}$ for all $j\in[d/2]$.
		\item The \emph{size, $c_{1}$, }satisfies $\cWeightSize\geq\max\{1,\normOne{\fvWeight(\xb)}\}$.
		\item The \emph{sensitivity,} $c_{s}$, satisfies $c_{s}\geq 4\cordVec i^{^\top}\mg(\xb)^{-1}\ma_{\xb}\left(\ma_{\xb}^{\top}\mg(\xb)^{-1}\ma_{\xb}\right)^{-1}\ma_{\xb}^{\top}\mg(\xb)^{-1}\cordVec i$.
		
		\item The\emph{ consistency, }$c_{k}$,\emph{ }satisfies $\mixedNorm{\fmWeight(\xb)^{-1}\mj_{g}(x)(\mPhi''(\xb))^{-1/2}}{\vg(\xb)}\leq1-c_{k}^{-1}<1$.
		
	\end{itemize}
    For efficiency, we need $ c_1=\cO(r)$, $ c_s, c_k=\tilde \cO(1)$. Besides, we require that the weight function is easy to compute:
    \begin{itemize}
        \item There is an algorithm $\texttt{ComputeApxWeight}(\xb,\wb^{(0)},\epsilon) $ such that given a initial weight $ \wb^{(0)}$ with $\| 
\wb_{(0)}^{-1}(g(\xb)-\wb^{(0)}) \|_\infty\le 2^{-20}$, it can w.h.p. output $\wb$ with $\| 
g(\xb)^{-1}(g(\xb)-\wb^{(0)}) \|_\infty\le \epsilon$ in $\tilde O(\poly(1/\epsilon))$ steps. Each step we can be implemented in $\tilde \cO(dr+\time)$, where $\time$ is the complexity needed to solve $(\ma^\top\mm\ma)^{-1}\zb$.
\item there is an algorithm $\texttt{ComputeInitialWeight}(\ma,\xb)$ such that even without $\wb^{(0)}$, the algorithm have the same guarantee in $\tilde \cO(\sqrt{d})$ steps.
    \end{itemize}
	Through we assume we have such a weight function and define $ \cnorm\defeq24\sqrt{c_{s}}c_{k}, c_\gamma\defeq1+\frac{\sqrt{2c_s}}{C_{norm}}\le 1+\frac{1}{16c_k}$.
\end{defn}

\begin{remark}
    Besides the block property, only the \emph{consistency} is essentially different from the original definition in \cite{lsJournal19}. We left the construction of such block-weight-function in future work.
\end{remark}

\begin{remark}
    The weight function is crucial for reducing the iteration from $\tilde\cO(\sqrt d)$ to $\tilde\cO(\sqrt r)$. In the framework of self-concordance theory by Nesterov and Nemirovski \cite{Nesterov1994}, $w_i$ is set to $1$ so $t$ can only increase by $1+d^{-1/2}$ each time. Hence $\tilde\cO(\sqrt d)$ iterations are needed. There is a trade-off on the choice of $\wb$ since large $\wb$ results in more iterations while small $\wb$ can cause instability in Newton steps.
    
    A breakthrough of Lee and Sidford \cite{leeS14, lsJournal19} shows that one can construct a weight function $g(x)$ to assign $w$ such that $\|w\|_1=\tilde\cO(r)$ and that the Newton step for $x$ can decrease the centrality quadratically. Hence the iterations can be reduced to $\tilde\cO(\sqrt r)$.
    
\end{remark}

In each iteration, we first apply a Newton step for $\xb$, then approximately calculate the weight function $g(\xb)$ and update the weight $\wb$ such that keep $\wb$ and $g(\xb)$ close and that the update of $\wb$ is relatively small. The algorithm \texttt{CenteringStep} formally describe the procedure.

\begin{lemma}[Centrality Change in Each Step]\label{thm:centrality-change}
    We have
    \begin{itemize}
    
        \item (Changing $t$) For $\{\xb,\wb\}\in\dInterior\times\Rd_{>0}$, $t>0$ and $\alpha>0$, we have
        $$\delta_{(1+\alpha)t}(\xb,\wb)\le (1+\alpha)\delta_t(\xb,\wb)+ \alpha\left( \sqrt 2+C_{norm}\sqrt{\|\wb\|_1} \right).$$
        \item (Changing $\xb$) For $\{\xb,\wb\}\in\dInterior\times\Rd_{>0}$ such that $\delta_t(\xb,\wb)\le\frac{1}{10}$ and $\frac{4}{5}g(\xb)\le\xb\le\frac{5}{4}g(\xb)$ and consider a Newton step $ \next\xb=\xb+h_t(\xb,\wb)$, we have
        $$ \delta_t(\xb^{(new)},\wb)\le 4(\delta_t(\xb,\wb))^2.$$
        \item (Changing $\wb$) For $ \wb,\vb$ such that $\epsilon=\mixedNormFull{\log(\wb)-\log(\vb)}{\wb}\le \frac{1}{10}$, we have
        $$ \delta_t(\xb,\vb)\le (1+4\epsilon)(\delta_t(\xb,\wb)+\epsilon).$$
    \end{itemize}
\end{lemma}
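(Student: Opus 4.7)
The plan is to handle the three bullets separately; each reduces to a triangle inequality in the mixed norm $\mixedNormFull{\cdot}{\wb}$ applied to a suitable decomposition of the centrality vector. Throughout, the optimal dual variable $\veta^*$ achieving the infimum in Definition~\ref{Def:centrality_measure} for $\delta_t(\xb,\wb)$ is used as the feasible (possibly not optimal) dual variable for the new centrality being bounded.

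For the first bullet (changing $t$), I would start from the identity
\[
\grad_\xb f_{(1+\alpha)t}(\xb,\wb) \;=\; (1+\alpha)\,\grad_\xb f_t(\xb,\wb) \;-\; \alpha\,\mw\,\phi'(\xb),
\]
which follows by direct differentiation of $f_t(\xb,\wb)=t\cbb^\top\xb+\sum_i w_i\phi_i$. Using $(1+\alpha)\veta^*$ as the dual variable and applying the triangle inequality for $\mixedNormFull{\cdot}{\wb}$, the first term contributes $(1+\alpha)\delta_t(\xb,\wb)$, while the residual is $\alpha\,\mixedNormFull{\Phi''(\xb)^{-1/2}\phi'(\xb)}{\wb}$. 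Splitting this via $\mixedNormFull{\cdot}{\wb}=\|\cdot\|_\square+\CNorm\|\cdot\|_\mw$ and using that each $\phi_i$ is a self-concordant barrier with parameter $\le 2$, I get $\|\Phi''(\xb)^{-1/2}\phi'(\xb)\|_\square\le\sqrt{2}$ pointwise and $\|\Phi''(\xb)^{-1/2}\phi'(\xb)\|_\mw\le\sqrt{2\|\wb\|_1}$, producing the stated bound.

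For the second bullet (Newton step on $\xb$), the plan is the classical self-concordance quadratic-convergence argument adapted to the bivariate block setting. I would first verify that $\vh_t(\xb,\wb)$ lies in the feasible subspace $\{\ma^\top\vh=\mathbf{0}\}$, so that feasibility is preserved. Writing $\next\xb=\xb+\vh_t(\xb,\wb)$, I would expand $\grad_\xb f_t(\next\xb,\wb)$ to first order around $\xb$, subtract the correction $\mw\,\Phi''(\xb)\vh_t$ that is built into the definition of the projected Newton step, and bound the remainder in the $\mixedNormFull{\cdot}{\wb}$ norm. Self-concordance controls the Hessian change on the Dikin ellipsoid $\{\vu:\|\vu\|_{\Phi''(\xb)}\le\tfrac{1}{10}\}$, which the hypothesis $\delta_t\le\tfrac{1}{10}$ places us in; the assumption $\tfrac{4}{5}\vg(\xb)\le\wb\le\tfrac{5}{4}\vg(\xb)$ is used to swap $\vg$-weighted projections for $\wb$-weighted ones inside the norm at the cost of a universal constant. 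Tracking constants carefully yields the factor $4$.

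For the third bullet (changing $\wb$ to $\vb$), I would combine two observations. First, $\grad_\xb f_t(\xb,\vb)-\grad_\xb f_t(\xb,\wb)=(\mv-\mw)\phi'(\xb)$, so after multiplying by $\mv^{-1}\Phi''(\xb)^{-1/2}$ the additional contribution is bounded by $\mixedNormFull{(\mv^{-1}\mw-\mi)\,\mw^{-1}\Phi''(\xb)^{-1/2}\mw\phi'(\xb)}{\vb}$, which is $O(\epsilon)$ by the log-Lipschitz estimate $\|\mv^{-1}\mw-\mi\|_\infty\le e^{\epsilon}-1$ together with the barrier-parameter bound used in the first bullet. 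Second, switching the reference weight from $\wb$ to $\vb$ in the mixed norm multiplies the norm by at most $e^\epsilon\le 1+4\epsilon$ on the regime $\epsilon\le\tfrac{1}{10}$, applied separately to the $\square$- and $\wb$-components. Combining these multiplicatively gives exactly $(1+4\epsilon)(\delta_t(\xb,\wb)+\epsilon)$. The main obstacle I anticipate is the second bullet: although quadratic convergence of damped Newton is standard, our centrality lives in the non-standard mixed norm $\mixedNormFull{\cdot}{\wb}$ and the Newton direction uses the projection $\mProj_{\xb,\wb}$ that depends simultaneously on $\wb$ and $\Phi''(\xb)$, so the bookkeeping for the clean constant $4$ is the step most prone to off-by-constant slippage and is where the hypothesis $\tfrac{4}{5}\vg(\xb)\le\wb\le\tfrac{5}{4}\vg(\xb)$ becomes essential.
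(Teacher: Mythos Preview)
Your proposal is correct and takes essentially the same approach as the paper. The paper's own proof simply defers to Lemmas 14, 15, 17 of Lee--Sidford \cite{lsJournal19}, noting only the cosmetic modifications needed for the bivariate block setting (replacing $\|\cdot\|_\infty$ by $\|\cdot\|_\square$, and keeping matrix operations in order since $\Phi''$ is $2$-block rather than diagonal); your three-part sketch is precisely an expansion of that argument, and your identification of the second bullet as the place where the block structure and the $\wb\approx g(\xb)$ hypothesis demand the most care matches the paper's emphasis.
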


Given Lemma~\ref{thm:centrality-change}, there exists an algorithm \texttt{CenteringStep} that preforms a single step on $\xb$ and $\wb$, and decrease the centrality by $ (1-\frac{1}{4c_k})$. The proof essentially follows from Theorem 19 in \cite{lsJournal19}. By alternates updating $(\xb,\wb)$ and increasing $t$, we can prove the following theorem.

\begin{thm}[Path Finding for Linear-Objective Optimization Problem]\label{thm:r-iteration}
    For problem (\ref{eq:intro:2d-barrier}), given a block-weight-function (Definition~\ref{def:gen:weight_function}) and a starting feasible point $\xb^{(0)}$, there exist an algorithm \texttt{PathFollowing} that outputs an $\epsilon$-optimal solution in $\tilde\cO(\sqrt{r})$ iterations,  where each iteration consists of solving $\tilde\cO(1)$ linear systems and linear systems between iterations satisfy \emph{block-$\sigma$-stability assumption}(Definition~\ref{ass:stability_sigma}). 
\end{thm}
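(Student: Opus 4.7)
The plan is to adapt the classical reweighted interior-point analysis of Lee and Sidford to this bivariate-barrier setting, with the three parts of Lemma~\ref{thm:centrality-change} serving as the main levers. The algorithm \texttt{PathFollowing} maintains an invariant $\delta_t(\xb,\wb)\le 1/10$ together with $\wb\approx g(\xb)$ in the $\square$-norm. In each outer iteration, I would increase $t\to(1+\alpha)t$ for $\alpha=\Theta\!\left(C_{\mathrm{norm}}^{-1}/\sqrt{\|\wb\|_1}\right)$. By the first part of Lemma~\ref{thm:centrality-change}, this blows up the centrality by at most an $O(1)$ additive constant, landing in the basin of quadratic convergence of the Newton map. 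I then call \texttt{CenteringStep} repeatedly: each call combines a projected Newton step on $\xb$ (second part of Lemma~\ref{thm:centrality-change}, giving $\delta\mapsto 4\delta^2$) with an approximate weight update produced by \texttt{ComputeApxWeight} (third part, contributing an $O(\epsilon)$ additive error), so the net effect is contraction by a factor $1-1/(4c_k)$. Hence $\tilde{\cO}(1)$ inner steps restore $\delta_t\le 1/10$, each solving $\tilde{\cO}(1)$ linear systems of the form $(\ma^\top\mm\ma)^{-1}\zb$.

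For the outer-iteration count, the size axiom of Definition~\ref{def:gen:weight_function} gives $\|\wb\|_1=\tilde{\Theta}(\|g(\xb)\|_1)=\tilde{\cO}(c_1)=\tilde{\cO}(r)$, so $\alpha=\tilde{\Omega}(1/\sqrt{r})$. The standard self-concordant barrier duality gap bound $f(\xb_t)-f^*\le \nu/t$ with barrier parameter $\nu=\cO(\|\wb\|_1)=\tilde{\cO}(r)$ shows that $t=\tilde{\Theta}(r/\epsilon)$ suffices for an $\epsilon$-optimal solution. Multiplying $t$ by $(1+\alpha)$ each iteration, the total number of outer iterations is $\tilde{\cO}(\sqrt{r}\log(1/\epsilon))=\tilde{\cO}(\sqrt{r})$. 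The initial pair $(\xb^{(0)},\wb^{(0)})$ is obtained by \texttt{ComputeInitialWeight}$(\ma,\xb^{(0)})$, which also sets the starting $t_0$ small enough that the invariant holds initially, absorbed into the polylogarithmic factors.

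The main obstacle, and the step that distinguishes this argument from the purely univariate version in \cite{lsJournal19}, is verifying the block-$\sigma$-stability assumption (Definition~\ref{ass:stability_sigma}) needed to invoke Theorem~\ref{thm:iteration_solver}. I would establish this from the slow evolution of $(\xb,\wb)$: the Newton step obeys $\|\Phi''(\xb)^{1/2}\vh_t\|_\square=\cO(\delta_t)=\cO(1)$, which via second-order self-concordance of each bivariate barrier $\phi_i$ yields a bounded log-quotient change $\log(\Phi''(\xb^{\mathrm{new}})/\Phi''(\xb))=\cO(1)$ block-wise. The consistency axiom of $g$ together with the third part of Lemma~\ref{thm:centrality-change} analogously controls $\log(\wb^{\mathrm{new}}/\wb)$ in the $\square$-norm. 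Combining, the 2-block matrix $\mm^{(k)}=\mw^{(k)-1}\Phi''(\xb^{(k)})$ has small log-quotient drift between consecutive iterations, which translates to the leverage-score perturbation bound required by the block-$\sigma$-stability assumption; the block structure is preserved because both $\Phi''$ and $\mw$ are $2$-block diagonal by construction.

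Putting these pieces together yields the stated $\tilde{\cO}(\sqrt{r})$ outer iterations, $\tilde{\cO}(1)$ linear solves per iteration, with consecutive systems satisfying block-$\sigma$-stability. The bulk of the analysis is a direct translation of the Lee--Sidford calculations (potential functions, local norm change, approximate weight maintenance) once Definition~\ref{def:gen:weight_function} is substituted in place of its univariate counterpart; the genuinely new work is purely the block/bivariate bookkeeping in the last paragraph above.
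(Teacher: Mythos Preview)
Your proposal is correct and takes essentially the same approach as the paper: the paper's own proof simply states that it ``follows from Theorem 40--42 in \cite{lsJournal19}'' after establishing the \texttt{CenteringStep} contraction via Lemma~\ref{thm:centrality-change}, and your outline unpacks precisely this Lee--Sidford adaptation (increase $t$ by $1+\tilde\Theta(1/\sqrt{r})$, restore centrality with $\tilde\cO(1)$ centering steps, bound the gap via the barrier parameter). Your explicit discussion of how the block-$\sigma$-stability follows from the $\square$-norm control of the Newton step and weight update is more detailed than what the paper writes down, but it is exactly the bivariate bookkeeping the paper alludes to when it says the proofs are ``essentially identical to the counterparts in \cite{leeS14,lsJournal19} up to minor modifications of notation and constants.''
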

We left the detailed algorithms and proofs in \ref{appsubsec:weighted-path}.

\subsection{Inverse Maintenance}\label{subsec:inverse-maintenance}
In this part we follow the inverse maintenance technique of \cite{lee2015efficient} which exploits the leverage score sampling technique to approximately and implicitly maintains a sequence of matrices $(\ma^\top\md^{(k)}\ma)^{-1}$. Our situation requires sampling each two rows in a block simultaneously 

In the previous parts, it is shown that problem (\ref{eq:intro:2d-barrier}) can be solved in $\tilde O\left( \sqrt{r}\right)$ iterations provided a block-weight-function. Furthermore, the sequence of linear systems are slowly changing. In this part, we will adopt and slightly modify the original inverse maintenance technique to adapt the case for $\ma^\top\mm^{(k)}\ma$ when each $\mm^{(k)}$ is a 2-block matrix and do not change too rapidly. The main difference is that we have to sample each two rows in a block simultaneously.

For formality, we need the following definitions.

\begin{defn}[Linear System Solver \cite{lee2015efficient}]
 Given a PD
matrix $\mb\in\R^{d\times d}$, an algorithm $\solver$ w.r.t. $ \mb$ is a $\mathcal{T}$-time solver of  if for all $\vb\in\R^{d}$ and $\epsilon\in(0,1/2]$,
the algorithm outputs a vector $\solver(\vb,\epsilon)\in\R^{d}$ in
time $O(\time\log(\epsilon^{-1}))$ such that with high probability
in $d$, $\norm{\mathcal{\solver}(\vb,\epsilon)-\mb^{-1}\vb}_{\mb}^{2}\leq\epsilon\norm{\mb^{-1}\vb}_{\mb}^{2}$.
We call the algorithm $\solver$ linear if $\solver(\vb,\epsilon)=\mq_{\epsilon}\vb$
for some $\mq_{\varepsilon}\in\R^{d\times d}$ that depends only on
$\mb$ and $\epsilon$.
\end{defn}

\begin{defn}[Block-$\sigma$-Stability Assumption]
\label{ass:stability_sigma} We say that the inverse maintenance
problem satisfies the block-$\sigma$-stability assumption if for each $k\in[l]$
we have $\norm{\log(\mm^{(k)}/\mm^{(k-1)})}_{\vLever^{(k)}}\leq0.1$,
$\norm{\log(\mm^{(k)}/\mm^{(k-1)})}_{\infty}\leq0.1$, and $\beta^{-1}\ma^{\top}\mm^{(0)}\ma\preceq\ma^{\top}\mm^{(k)}\ma\preceq\beta\ma^{\top}\mm^{(0)}\ma$
for $\beta=\poly(n).$
\end{defn}

\begin{remark}\label{remark:decomposition-invariant}
    Note that $\norm{\log(\mm^{(k)}/\mm^{(k-1)})}_{\vLever(\mc^{(k)}\ma)}$ is invariant under different decompositions of  ${\mc_i^{(k)}}^\top\mc_i^{(k)}=\mm_i^{(k)}$ since if $ {\mc_i^{(k)}}^\top\mc_i^{(k)}={\mf_i^{(k)}}^\top\mf_i^{(k)}=\mm_i^{(k)}$ for PD matrix $ \mm_i^{(k)}$, there exists an orthogonal matrix $\mO_i^{(k)} $ such that $ \mc_i^{(k)}\mO_i^{(k)}=\mf_i^{(k)}$.
\end{remark}

In Section~\ref{appsub:inverse-maintain} we will how to adopt the original proof to our situation and give a modified algorithm \texttt{InverseMaintainer}.
\begin{thm}
\label{thm:iteration_solver} Suppose that the inverse maintenance
problem satisfies the block $\sigma$ stability assumption. Then Algorithm~\ref{alg:sparseframework}
maintains a $\tilde{O}(\nnz(\ma)+r^{2})$-time solver with high probability
in total time $\otilde(r^{\omega}+l(\nnz(\ma)+r^{2}))$ where $l$
is the number of rounds. Specifically, when $l=\tilde\cO(\sqrt r)$, the total time for constructing solvers and solving linear systems is $\tilde\cO(dr^{1.5}+r^{2.5})$.
\end{thm}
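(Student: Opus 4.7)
The plan is to adapt the inverse maintenance framework of Lee and Sidford~\cite{lee2015efficient} by replacing per-row leverage score sampling with a block-wise variant that preserves the 2-block diagonal structure of $\mm^{(k)}$. The overarching goal is to maintain, implicitly, a 2-block diagonal PSD approximation $\tilde\mm^{(k)}$ such that $\ma^\top\tilde\mm^{(k)}\ma$ is a constant-factor spectral approximation of $\ma^\top\mm^{(k)}\ma$ while only $\tilde O(r)$ of its blocks are nonzero. Given such a sparse approximation, I would keep in memory a dense factorization of $(\ma^\top\tilde\mm^{(k)}\ma)^{-1}$ of size $\tilde O(r^2)$ and use it as a preconditioner for preconditioned Richardson iteration (or conjugate gradient) against the true matrix $\ma^\top\mm^{(k)}\ma$; by a standard analysis each resulting linear solve runs in $\tilde O(\nnz(\ma)+r^2)$ time.

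First I would set up the block-level leverage score machinery. Following Remark~\ref{remark:decomposition-invariant}, the natural object to sample is a 2-row block of $\mc\ma$, where $\mc^\top\mc=\mm$, with block score
\[
\bar\sigma_i \defeq \tr\Bigl(\mc_i\ma_i(\ma^\top\mm\ma)^{-1}\ma_i^\top\mc_i^\top\Bigr),\qquad i\in[d/2],
\]
where $\ma_i\in\R^{2\times r}$ is the $i$-th 2-row block of $\ma$ and $\mc_i\in\R^{2\times 2}$ is any Cholesky factor of $\mm_i$. These scores sum to $r$ and are invariant under the choice of $\mc_i$ by Remark~\ref{remark:decomposition-invariant}. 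A matrix Chernoff bound applied at the block level (each retained block contributes a rank-$\le 2$ PSD sample) then gives that, upon independently keeping block $i$ with probability $p_i=\min\{1,c\log(d)\,\bar\sigma_i\}$ and rescaling by $1/p_i$, the resulting sparsifier satisfies $\ma^\top\tilde\mm\ma\approx_{1/2}\ma^\top\mm\ma$ with high probability while retaining only $\tilde O(r)$ blocks. Computing an initial such sparsifier together with $(\ma^\top\tilde\mm^{(0)}\ma)^{-1}$ takes $\tilde O(r^{\omega})$ time once the block scores have been estimated by a Johnson--Lindenstrauss style sketch against the preliminary inverse.

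For the amortized maintenance across the $l$ rounds, I would follow the potential-function argument of~\cite{lee2015efficient}. The block-$\sigma$-stability assumption of Definition~\ref{ass:stability_sigma} controls $\|\log(\mm^{(k)}/\mm^{(k-1)})\|_{\vLever^{(k)}}$ in the leverage-score-weighted norm, which in the block formulation bounds a weighted sum of the squared log-changes $\bar\sigma_i^{(k)}\cdot\|\log(\mm_i^{(k)}/\mm_i^{(k-1)})\|^2$. This immediately bounds the expected number of blocks whose sampling probability $p_i$ shifts by more than a constant factor between consecutive rounds, hence the number of blocks that must be resampled. Each resampling of a block touches at most two rows of $\ma$ and updates the cached inverse by a rank-$O(1)$ correction via the Sherman--Morrison--Woodbury formula at cost $\tilde O(r^2)$, so the total maintenance cost is $\tilde O\bigl(l(\nnz(\ma)+r^2)\bigr)$. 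Adding the one-time initialization gives the claimed $\tilde O\bigl(r^{\omega}+l(\nnz(\ma)+r^2)\bigr)$ bound, and substituting $l=\tilde O(\sqrt{r})$ together with $\nnz(\ma)\le dr$ yields the advertised $\tilde O(dr^{1.5}+r^{2.5})$.

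The main obstacle is the block version of the leverage-score sparsifier and its efficient update: in the original per-row argument one reads each $\sigma_j$ off the diagonal of $\mc\ma(\ma^\top\mm\ma)^{-1}\ma^\top\mc^\top$, whereas $\bar\sigma_i$ is the trace of a $2\times 2$ sub-block and must be estimated consistently across updates so that the potential-function bookkeeping still produces a constant-probability guarantee per round. I would handle this by estimating each $\bar\sigma_i$ with two independent JL sketches against the cached approximate inverse and a union bound certifying each estimate within a $(1\pm 1/10)$ multiplicative factor; once this is in place, the remainder of the amortization goes through verbatim from~\cite{lee2015efficient} with the per-row constants replaced by their block analogues, and the \texttt{InverseMaintainer} routine inherits the stated guarantees.
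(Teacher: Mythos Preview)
Your proposal follows the same overall route as the paper: port the Lee--Sidford inverse-maintenance framework to the 2-block setting by sampling blocks rather than rows, maintain a sparse $\tilde O(r)$-block spectral approximation, and precondition with its cached inverse. One cosmetic difference is that you take the block score to be the trace $\bar\sigma_i=\sigma_{2i-1}+\sigma_{2i}$, whereas the paper's Algorithm~\ref{alg:sparseframework} uses $\tau_i=\max\{\sigma_{2i-1},\sigma_{2i}\}$; the two agree up to a factor of $2$, so either suffices for the matrix-Chernoff sparsification bound.

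There is one step where your argument is thinner than the paper's and would need to be filled in. You write that the block-$\sigma$-stability assumption ``immediately bounds the expected number of blocks whose sampling probability $p_i$ shifts by more than a constant factor.'' But Definition~\ref{ass:stability_sigma} only controls the $\sigma$-weighted change of $\mm$, whereas $p_i$ depends on the \emph{leverage score} $\bar\sigma_i$, which is a global quantity and can move even when $\mm_i$ does not. The paper closes this gap by invoking a block version of Lemma~14 of \cite{lee2015efficient} (showing $\|\log\sigma^{(k+1)}-\log\sigma^{(k)}\|_{\sigma^{(k)}}\lesssim \|\log(\mm^{(k+1)}/\mm^{(k)})\|_{\sigma^{(k)}}$), and then does a short case analysis in the Update-Times part: when a block is resampled because $\tau_i$ moved by a constant factor, one of $\sigma_{2i-1}$ or $\sigma_{2i}$ must have moved by a constant factor, and the potential $\sum_k \sigma_j^{(k)}(\log\sigma_j^{(k+1)}-\log\sigma_j^{(k)})^2$ accumulated over $[k_1,k_2)$ is then $\Omega(\tau_i^{(k_2)}/l)$, possibly with the dominant index $j\in\{2i-1,2i\}$ switching between the endpoints. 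Your trace formulation may in fact let you sidestep this switching case analysis, but you would still need the Lemma~14 analogue to convert $\mm$-stability into leverage-score stability before the potential argument applies. Once that is in place, the remainder (batched low-rank Woodbury updates yielding the $\tilde O(r^\omega+l(\nnz(\ma)+r^2))$ total) is indeed as in \cite{lee2015efficient}.
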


\appendix

\section{Proofs in Section \ref{sec: quadratic}}\label{sec: pcalowerbound}

\subsection{Proof of Theorem~\ref{thm: pcalowerbound}}

We leverage the following theorem on the Wishart random matrix's largest eigenvalue computing and its spectral properties.

\begin{lemma}[Extentsion of Theorem 10 of \cite{braverman2020gradient}]\label{lm: wishartprop}
There exists a universal constant $p_0$ and function $d : (0, 1) \rightarrow N$ such that the following holds: for all $\beta \in (0, 1)$, and all $d \geq d(\beta)$, we have that $W \sim \mathrm{Wishart}(d)$ satisfies
\begin{itemize}
    \item[(a)] Any algorithm $\mathrm{Alg}$ which makes $T \leq (1-\beta) d$ adaptively chosen oracle calls, and returns an estimate $\hat{\lambda}_{\min}$ of $\lambda_{\min}(\Wb)$ satisfies
    \begin{align*}
        \mathbb{P}\left( \left| \hat{\lambda}_{\min} - \lambda_{\min}(\Wb) \right|\geq \frac{1}{4d^2} \right)\geq c_{\mathrm{wish}}\sqrt{\beta}
    \end{align*}
    \item[(b)] There exists constants $C_1(\beta)$, $C_2(\beta)$ and $C_3(\beta)$ such that
    \begin{align*}
        \mathbb{P}_{\Wb}\big( \left\{ \lambda_d(\Wb) \leq C_1(\beta)d^{-2} \right\} \cap & \left\{C_2(\beta)d^{-2} \leq \lambda_{d-1}(W) - \lambda_d(W)\leq C_3(\beta)d^{-2} \right\}\\
        & \cap  \left\{ \|\Wb\|\leq 5 \right\} \big) \geq 1 - \frac{c_{\mathrm{wish}}\sqrt{\beta}}{2}
    \end{align*}
\end{itemize}
\end{lemma}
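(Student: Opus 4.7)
The plan is to invoke the Wishart construction of \cite{braverman2020gradient} essentially as in their Theorem 10, modifying it only in that we track the spectral regularity event of (b) at a failure probability coupled to the hardness probability in (a). Throughout, I would use $\Wb=\frac{1}{d}\Gb\Gb^\top$ where $\Gb\in\R^{d\times d}$ has i.i.d.\ standard Gaussian entries; this is a \emph{square} Wishart ensemble, whose smallest eigenvalue sits at the hard edge on the scale $\Theta(d^{-2})$, which is exactly the scale of the accuracy threshold $\frac{1}{4d^2}$ in the statement.

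For part (a), I would follow the rotational-invariance / reduction-to-hypothesis-testing argument of Simchowitz--Mackey--Jordan as extended by Braverman et al. The essential point is that the law of $\Wb$ is invariant under orthogonal conjugation, so after any $T\le (1-\beta)d$ adaptively chosen gradient queries the algorithm's transcript can be coupled with a sequence of Gaussian vectors in a fixed $T$-dimensional subspace, while the orthogonal complement, of dimension at least $\beta d$, still carries a rotationally uniform component of the spectral data. A total-variation comparison at the hard edge, where both the true scale of $\lambda_{\min}(\Wb)$ and the scale of plausible alternatives are $\Theta(d^{-2})$, then shows that no estimator can resolve $\lambda_{\min}(\Wb)$ within additive accuracy $\frac{1}{4d^2}$ with success probability better than $1-c_{\mathrm{wish}}\sqrt{\beta}$. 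The $\sqrt{\beta}$ rate is exactly the Gaussian hard-edge TV scaling appearing in \cite{braverman2020gradient}, and the constant $c_{\mathrm{wish}}$ is the universal constant it produces.

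For part (b), all three events are standard in random matrix theory for the square Wishart ensemble. The bound $\lambda_d(\Wb)\le C_1(\beta)d^{-2}$ follows from Edelman's explicit formula for the hard-edge density of $d^2\lambda_d(\Wb)$; the two-sided gap $C_2(\beta)d^{-2}\le \lambda_{d-1}-\lambda_d\le C_3(\beta)d^{-2}$ follows from the joint density of the two smallest eigenvalues combined with hard-edge level repulsion (lower bound) and a local density estimate (upper bound); and $\|\Wb\|\le 5$ is a soft-edge Bai--Yin / Marchenko--Pastur concentration bound, since $\|\Wb\|\to 4$ almost surely with Tracy--Widom fluctuations of order $d^{-2/3}$. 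Choosing each $C_i(\beta)$ so that the individual failure event has probability at most $\tfrac{c_{\mathrm{wish}}\sqrt{\beta}}{6}$, which is possible by taking $d(\beta)$ sufficiently large, and union-bounding over the three events yields the claimed bound. The main subtlety, and the step I would flag as the real content of the ``extension,'' is the coupling between (a) and (b) on the same sample: the downstream application in Theorem~\ref{thm: pcalowerbound} needs both the algorithmic failure of (a) and the good spectral event of (b) to hold simultaneously, so that the restricted distribution still lies in the parameter regime (a,b,c) of that theorem. This simultaneous event has probability at least $c_{\mathrm{wish}}\sqrt{\beta}-\tfrac{c_{\mathrm{wish}}\sqrt{\beta}}{2}=\tfrac{c_{\mathrm{wish}}\sqrt{\beta}}{2}$ by a single union bound, which is the quantitative content we will actually feed into the proof of Theorem~\ref{thm: pcalowerbound}.
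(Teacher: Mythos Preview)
Your proposal is correct and covers all the ingredients the paper uses. The paper's own proof is considerably terser: it simply notes that part (a), the bound $\lambda_d(\Wb)\le C_1(\beta)d^{-2}$, the \emph{lower} gap bound, and $\|\Wb\|\le 5$ are already contained in Theorem~10 of \cite{braverman2020gradient}, so the only new content in the ``extension'' is the \emph{upper} bound $\lambda_{d-1}(\Wb)-\lambda_d(\Wb)\le C_3(\beta)d^{-2}$, which the paper obtains directly from the joint limiting distribution of $(d^2\lambda_d(\Wb),d^2\lambda_{d-1}(\Wb))$ at the hard edge due to Ram\'irez--Rider--Vir\'ag \cite{ramirez2009diffusion}. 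You instead re-derive the whole of part (b) from Edelman's density, level repulsion, and Bai--Yin, which is fine and arguably more self-contained; your ``local density estimate'' for the upper gap bound is exactly the step the paper isolates as new. The one place your framing diverges from the paper is that you flag the simultaneous coupling of (a) and (b) as the real content of the extension, whereas the paper treats that coupling as already present in Braverman et al.\ and singles out only the upper gap bound as new; this does not affect correctness, since you handle the upper gap bound as well.
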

\begin{proof}
    The difference of Lemma~\ref{lm: wishartprop} and Theorem 10 of \cite{braverman2020gradient} is that we convert the event $\left\{ \lambda_{d-1}(W) - \lambda_d(W)\geq C_2(\beta)d^{-2}  \right\}$ to $\left\{C_2(\beta)d^{-2} \leq \lambda_{d-1}(W) - \lambda_d(W)\leq C_3(\beta)d^{-2} \right\}$. The correctness follows from the limiting distribution of $(d^2\lambda_d(\Wb_d), d^2\lambda_{d-1}(\Wb_d))$ as in \cite{ramirez2009diffusion}.
\end{proof}

\begin{proof}[Proof of Theorem~\ref{thm: pcalowerbound}]
Denote the event in the claim (b) of Lemma~\ref{lm: wishartprop} by $\mathcal{E}$.

\textbf{Case 1:}
Define
\begin{align}\label{eq: pcalbcase1}
    \Mb = \left(
    \begin{array}{ccc}
        c\left(\Ib_{s} - \frac{1}{5}\Wb_s\right) + (1-c)\Ib_s & \mathbf{0} & \mathbf{0}\\
        \mathbf{0} & \left(1-\left(\frac{\tau_{\alpha}^{\alpha}}{d}\right)^{1/\alpha}\right)\Ib_{d-s-1}  & \mathbf{0}\\
        \mathbf{0} & \mathbf{0} & \mathbf{0}
    \end{array}
    \right),
\end{align}
where $s = \Theta_{\beta}\left(\mu^{-\frac{1}{2}}\right)$. Suppose that $\Wb_s$ in the construction of $\Mb$ follows the Wishart distribution conditioned on $\mathcal{E}$. There exists a constant $c\in(0,1)$, such that $\Mb$ satisfies the following conditions

\begin{itemize}
    \item[(a).] $\lambda_{\max}\left(\Ib_s -\frac{1}{5}\Wb_s\right) \geq 1 - \mu$. And $c_1\mu \leq \mathrm{gap}(\Ib_s - \frac{1}{5}\Wb_s) \leq c_2\mu$.
    Note that here $\Ib_s -\frac{1}{5}\Wb_s$ refers to the first block in \eqref{eq: pcalbcase1} and so is the below.
    \item[(b).] $\mathbf{0}\preceq \Mb \preceq \Ib$.
    \item[(c).] $(\lambda_{1}(\Mb), \lambda_2(\Mb))  = \left(\lambda_{1}\left(c\left(\Ib_s -\frac{1}{5}\Wb_s\right) + (1-c)\Ib_s\right), \lambda_{2}\left(c\left(\Ib_s -\frac{1}{5}\Wb_s\right) + (1-c)\Ib_s\right)\right)$.
    \item[(d).] $\tr\left(\left((1+c_0\mathrm{gap}(\Mb))\Ib - \Mb\right)^{\alpha}\right) \lesssim \tau_{\alpha}^{\alpha}$. 
\end{itemize}

We prove the above claims in sequence.

Claim (a) and (b) are direct consequences of Lemma~\ref{lm: wishartprop} since we choose $s = \Theta(\mu^{-\frac{1}{2}})$. 

For claim (c), it suffices to show that there exists $c$ such that $1 - \left( \frac{\tau_{\alpha}}{d} \right)^{1/\alpha} \leq 1 - c(1+ c_2)\mu$, where the latter is demonstrated as in Claim (a) to be the lower bound of $\Ib_s -\frac{1}{5}\Wb_s$. The aforementioned control is equivalent to $\mu \lesssim \left( \frac{\tau_{\alpha}}{d} \right)^{1/\alpha}$, which is obvious since we impose the constrain $\tau_{\alpha}^{\alpha} \geq d\mu^{\alpha}$.

For claim (d), we have
\begin{align*}
    \tr\big(\left((1 + c\mathrm{gap})\Ib_d -\Mb\right)^{\alpha} \big) \leq & (cs+1)(1+c_0\mathrm{gap})^{\alpha} + (d-s)\left( \mathrm{gap} + \left(\frac{\tau_{\alpha}^{\alpha}}{d}\right)^{1/\alpha} \right)^{\alpha}\\
    \lesssim & cs + 1 + d\mathrm{gap}^{\alpha} +  \tau_{\alpha}^{\alpha}\\
    \overset{\text{a}}{\lesssim} & s + d\mathrm{gap}^{\alpha} + \tau_{\alpha}^{\alpha}\\
    \overset{\text{b}}{\lesssim} & \tau^{\alpha}_{\alpha} + \tau^{\alpha}_{\alpha} + \tau^{\alpha}_{\alpha}\\
    \lesssim & \tau_{\alpha}^{\alpha},
\end{align*}
where $\overset{\text{a}}{\lesssim}$ follows from $\mathrm{gap} = \Theta\left(\mu^{-\frac{1}{2}}\right)$ and $\tau_{\alpha}^{\alpha}\geq d\mu^{\alpha}$; $\overset{\text{b}}{\lesssim}$ follows from that $s = \Theta(\mu^{-\frac{1}{2}}) = \mathcal{O}(\tau_{\alpha}^{\alpha})$ in case 1.

Claim (c) shows that $\mathrm{gap}(\Mb) = c\mathrm{gap}\left(\Ib_s - \frac{1}{5}\Wb_s\right)$. Further, claim (c) indicates that finding the leading eigenvector of $\Mb$ is equivalent to the problem for $\Ib_s - \frac{1}{5}\Wb_s$. Specifically, if a algorithm $\mathrm{Alg}$ finds $\hat{\lambda}$ under distribution induced by our \eqref{eq: pcalbcase1} such that
\begin{align*}
    \mathbb{P}_{\Mb\sim\mathcal{D}, \mathrm{Alg},\xb_0}\left( |\hat{\lambda} - \lambda_1(\Mb)| \leq C\mu \right)\geq \Theta(\sqrt{\beta}).
\end{align*}
Then with the same oracle complexity, there is an algorithm $\mathrm{Alg'}$ finds the largest eigenvalue of $\Ib - \frac{1}{5}\Wb$ under the Wishart distribution conditioned on the event $\mathcal{E}$ such that
\begin{align*}
    \mathbb{P}_{\Wb\sim\mathcal{D'},\mathrm{Alg}',\xb_0}\left( \left|\hat{\lambda} - \lambda_1\left(\Ib - \frac{1}{5}\Wb\right)\right| \leq C\frac{1}{s^2} \right)\geq \Theta(\sqrt{\beta})
\end{align*}
and vice versa. By claim (a) in Lemma~\ref{lm: wishartprop}, the output $\hat{\lambda}$ of any algorithm $\mathrm{Alg}$ using less than $s(1-\beta)$ gradient oracles satisfies that 
\begin{align}\label{eq: temppcalbcase1}
\begin{aligned}
    \mathbb{P}_{\mathrm{Alg}, \Mb\sim\mathcal{D}}\left( \left| \hat{\lambda} - \lambda_1(\Mb) \right| \geq C \mu\right)
    = & \mathbb{P}_{\mathrm{Alg'}, \Wb\sim\mathcal{D}'}\left( \left| \hat{\lambda} - \lambda_1(\Wb) \right| \geq C \frac{1}{s^2}\right)\\
    \geq & \mathbb{P}_{\mathrm{Alg'}, \Wb\sim\mathrm{Wishart}(d)}\left( \left| \hat{\lambda} - \lambda_1(\Wb) \right| \geq C \frac{1}{s^2}\right) - \mathbb{P}(\mathcal{E})\\
    \geq &\frac{c_{\mathrm{wish}}\sqrt{\beta}}{2}\\
    = & \Theta(\sqrt{\beta}).
\end{aligned}
\end{align}
Recall we set $s = \Theta\left( \mu^{-\frac{1}{2}} \right)$, the gradient oracle condition we adopt during the control \eqref{eq: temppcalbcase1} is equal to $\Theta\left((1-\beta)\mu^{-\frac{1}{2}} \right)$. Then we finish the first proof.

\textbf{Case 2:}
Define 
\begin{align}\label{eq: pcalbcase2}
    \Mb = \left(
    \begin{array}{ccc}
        c\left(\Ib_{s} - \frac{1}{5}\Wb_s\right) + (1-c)\Ib_s & \mathbf{0} & \mathbf{0}\\
        \mathbf{0} & \left(1-\left(\frac{\tau_{\alpha}^{\alpha}}{d}\right)^{1/\alpha}\right)\Ib_{d-s-1}  & \mathbf{0}\\
        \mathbf{0} & \mathbf{0} & \mathbf{0}
    \end{array}
    \right).
\end{align}
Set $s = \Theta\left( \tau_{\alpha}^{\frac{\alpha}{1+2\alpha}}\mu^{-\frac{\alpha}{1+2\alpha}} \right)$ and $c = \Theta\left( \mu^{\frac{1}{1+2\alpha}}\tau_{\alpha}^{\frac{2\alpha}{1+2\alpha}} \right)$.
Similar to the previous case, we state a series of claims and prove them in sequence. Suppose that $\Wb_s$ in the construction of $\Mb$ follows the Wishart distribution conditioned on $\mathcal{E}$. We have
\begin{itemize}
    \item[(a).] $\lambda_{\max}(\Ib_s -\frac{1}{5}\Wb_s) \geq 1 - \mu$. And $c_1\mu \leq \mathrm{gap}(\Mb) \leq c_2\mu$.
    \item[(b).] $0\preceq \Mb \preceq \Ib$.
    \item[(c).] $(\lambda_{1}(\Mb), \lambda_2(\Mb))  = \left(\lambda_{1}\left(\Ib_s -\frac{1}{5}\Wb_s\right), \lambda_{2}\left(\Ib_s -\frac{1}{5}\Wb_s\right)\right)$.
    \item[(d).] $\tr\left(\left((1+c_0\mathrm{gap}(\Mb))\Ib - \Mb\right)^{\alpha}\right) \lesssim \tau_{\alpha}^{\alpha}$.
\end{itemize}
Claim (a) follows from
\begin{align*}
    1 - \lambda_{\max}\left(\Ib - \frac{1}{5}\Wb_{s}\right) \lesssim  \mu^{\frac{1}{1+2\alpha}}\tau_{2\alpha}^{\frac{\alpha}{1+2\alpha}} \tau_{\alpha}^{-\frac{2\alpha}{1+2\alpha}}\mu^{\frac{2\alpha}{1+2\alpha}} = \mu
\end{align*}
and
\begin{align*}
    \lambda_1\left(\Ib - \frac{1}{5}\Wb_{s}\right) - \lambda_2\left(\Ib - \frac{1}{5}\Wb_{s}\right) = \Theta\left( \mu^{\frac{1}{1+2\alpha}}\tau_{\alpha}^{2\frac{\alpha}{1+2\alpha}} \tau_{\alpha}^{-\frac{2\alpha}{1+2\alpha}}\mu^{\frac{2\alpha}{1+2\alpha}}\right) = \Theta(\mu).
\end{align*}

For claim (b), if it suffices to prove that $c\in(0,1)$, which can be attained through $\mu^{-\frac{1}{2}}\geq\tau_{\alpha}^{\alpha}$ in case 2.

For claim (c), similar to the previous proof, it is equivalent to $c\mu\leq \left( \frac{\tau^{\alpha}_{\alpha}}{d} \right)^{1/\alpha}$, which is obvious since $c\in(0,1)$ and we impose the condition $d\mu^{\alpha}\leq\tau_{\alpha}^{\alpha}$

For claim (d), the trace of the shifted matrix is 
\begin{align*}
    \tr\big(\left((1 + c_0\mathrm{gap})\Ib_d -\Mb\right)^{\alpha} \big) \leq & 1+ c_0\mathrm{gap} + s (c_0\mathrm{gap} + c)^{\alpha} + (d-s)\left( \mathrm{gap} + \left(\frac{\tau_{\alpha}^{\alpha}}{d}\right)^{1/\alpha} \right)^{\alpha}\\
    \lesssim & 1 + s\mathrm{gap}^{\alpha} + sc^{\alpha} + d \mathrm{gap}^{\alpha} + \tau_{\alpha}^{\alpha}\\
    \overset{\text{a}}{\lesssim} &  \tau_{\alpha}^{\alpha} + sc^{\alpha} + d \mathrm{gap}^{\alpha} + \tau_{\alpha}^{\alpha} \\
    \overset{\text{a}}{\lesssim}    &\tau_{\alpha}^{\alpha} +\tau_{\alpha}^{\alpha} + \tau_{\alpha}^{\alpha} + \tau_{\alpha}^{\alpha}\\
    \lesssim & \tau_{\alpha}^{\alpha},
\end{align*}
where in $\overset{\text{a}}{\lesssim}$ we use that $\mathrm{gap} = \mathcal{O}(c)$; $\overset{\text{b}}{\lesssim}$ follows from $sc^{\alpha} = \Theta\left( \tau_{\alpha}^{\frac{\alpha}{1+2\alpha}}\mu^{-\frac{\alpha}{1+2\alpha}}\mu^{\frac{\alpha}{1+2\alpha}}\tau_{\alpha}^{\frac{2\alpha^2}{1+2\alpha} }\right) = \Theta\left( \tau_{\alpha}^{\alpha} \right)$ and $\mathrm{gap} = \mathcal{O}\left(\mu^{-\frac{1}{2}}\right)$ and $\tau_{\alpha}^{\alpha}\geq d\mu^{\alpha}$.

Equipped with the claims, we can step along the similar analysis as in \eqref{eq: temppcalbcase1}, which demonstrates that any algorithm with oracle calls less than $s(1-\beta)$, satisfies the analysis in \eqref{eq: temppcalbcase1}. Recall that we set $s = \Theta\left( \tau_{\alpha}^{\frac{\alpha}{1+2\alpha}}\mu^{-\frac{\alpha}{1+2\alpha}} \right)$. The oracle upper bound that we adopt is equivalent to $\Theta\left((1-\beta)\tau_{\alpha}^{\frac{\alpha}{1+2\alpha}}\mu^{-\frac{\alpha}{1+2\alpha}}\right)$, which completes our proof for case 2.

\textbf{Case 3:}
We consider the same matrix as in Case 2, i.e. \eqref{eq: pcalbcase2}. Here we set $s = \Theta(d)$ and we can prove similar results as in the previous two cases. Thus we omit the proof in case 3.

\end{proof}

\section{Algorithms and Proofs in Section~\ref{sec:generic}}\label{app:cubicalgs}
In this section, we present the missing algorithms and proofs in Section~\ref{sec:generic}.
\subsection{Convex Case}
\subsubsection{Algorithms}
In this section, we present the algorithms for generic smooth convex and non-convex functions. We define $f_\x(\y)$ to be the second-order Taylor expansion (SOE) of f at $\x$: 
\begin{equation}
    f_\x(\y) = f(\x) + \langle\nabla f(\x), \y-\x\rangle + \frac{1}{2}\langle\nabla^2 f(\x)(\y-\x), \y-\x\rangle.
\end{equation}

The complete algorithm for smooth convex functions is shown in Algorithm~\ref{alg:A-HPEesgd}. In each iteration, Algorithm~\ref{alg:A-HPEesgd} uses Algorithm~\ref{alg:A-HPEsearch} to find an inexact solution that satisfies the following conditions:
\begin{equation}
    \begin{split}
    &a_{k+1}= \frac{\gamma_{k+1}+\sqrt{\gamma_{k+1}^2+\gamma_{k+1}A_k}}{2},\\
    &\tilde\x_k = \frac{A_k}{A_k+a_{k+1}}\y_k + \frac{a_{k+1}}{A_k+a_{k+1}}\x_k,\\
    &\gamma_{k+1}\nabla f_{\tx_{k+1}}(\y_{k+1})+\y_{k+1}-\tx_k\approx\mathbf 0,\\\label{equ:approximate}
    &\frac{2\sigma_l}{H} \le \gamma_{k+1}\|\y_{k+1}-\tilde \x_k\|\le \frac{2\sigma_u}{H}.\\
    \end{split}
\end{equation}
In the third line of Equation~\ref{equ:approximate}, we notice that if the equality holds, then $\y_{k+1}$ is the exact solution of the following quadratic optimization problem:
\begin{equation}
    \min_{\y\in \R^d} f_{\tilde \x_k}(\y) + \frac{1}{2\gamma_{k+1}}\|\y-\tilde \x_k\|^2.\label{equ:APEsubproblem1}
\end{equation}
We use binary search in Algorithm~\ref{alg:A-HPEsearch} to determine $\gamma_{k+1}$, and apply Algorithm~\ref{alg: esgd} to \ref{equ:APEsubproblem1} to find an $\errorB$-approximated solution.

\begin{algorithm}[!h]
    \caption{Inexact Large-step A-NPE with Algorithm \ref{alg: esgd}}\label{alg:A-HPEesgd}
    
    \SetAlgoLined
    \textbf{Input: }{$\sigma_l<\sigma_u<\sigma<1$, $\sigma_l=\frac{\sigma_u}{2}$, $A_0=0$, $\errorB < \frac{(\sigma-\sigma_u)^2}{2\gamma_{k+1}(L\gamma_{k+1} + 1 +(\sigma-\sigma_u)^2)\left(L+\frac{1}{\gamma_{k+1}} \right)} \cdot \left(f(\tx_k) - \min_\y \left\{f_{\tx_k}(\y)+\frac{1}{2\gamma_{k+1}}\|\y-\tx_k\|^2\right\}\right)$, $k=0$, $\gamma_0 = \frac{\sigma_l(1-\sigma^2)^{1/2}}{16DH}$;}\\
    \While{$k<N$}{
        {$(\y_{k+1}, a_{k+1}, \gamma_{k+1})\gets\BSa(\tx_k, H, \sigma_l,\sigma_u, A_k, \gamma_k, \errorB)$};\\
        {$\bbv_{k+1}\gets\nabla f(\y_{k+1})$};\\
        $A_{k+1} \gets A_k+a_{k+1}$;\\
        $\x_{k+1} \gets \x_k - a_{k+1}\bbv_{k+1}$;\\
        $k \gets k+1$;
    }
\end{algorithm}

\begin{algorithm}[!h]
    \caption{$\BSa$: Binary search to find $\gamma_k$}\label{alg:A-HPEsearch}
    {\textbf{Input: }$(\tx_k, H, \sigma_l,\sigma_u, A_k, \gamma_k,\errorB)$};\\
    {$\gamma_{k+1}\gets\gamma_k$};\\
    \While{True}{
        {$ a_{k+1} \gets \frac{\gamma_{k+1}+\sqrt{\gamma_{k+1}^2+4\gamma_{k+1}A_{k}}}{2}$};\\
        {$\tilde\x_k \gets \frac{A_k}{A_k+a_{k+1}}\y_k + \frac{a_{k+1}}{A_k+a_{k+1}}\x_k$};\\
        {Solve \eqref{equ:APEsubproblem1} with Algorithm~\ref{alg: esgd}, and find an $\errorB$-approximated solution $\y_{k+1}$};\\
        \uIf{$\gamma_{k+1}\|\y_{k+1}-\tilde \x_k\|\le \frac{2\sigma_l}{H}$}
        {$\gamma_{k+1}\gets2\gamma_{k+1}$;}
        \uElseIf{$\gamma_{k+1}\|\y_{k+1}-\tilde \x_k\|\ge \frac{2\sigma_u}{H}$}
        {$\gamma_{k+1}\gets\frac{1}{2}\gamma_{k+1}$;}
        \Else
        {\Return{$(\y_{k+1}, a_{k+1}, \gamma_{k+1})$};\hfill{$\triangleright$ Require: $\frac{2\sigma_l}{H} \le \gamma_{k+1}\|\y_{k+1}-\tilde \x_k\|\le \frac{2\sigma_u}{H}$.}}
    }
\end{algorithm}

\subsubsection{Proof of Theorem~\ref{thm:convex}}
We give the proof of Theorem \ref{thm:convex} below.

\begin{proof}[Proof of Theorem~\ref{thm:convex}]
    In each call of Algorithm~\ref{alg:A-HPEsearch}, the problem \ref{equ:APEsubproblem1} is solved $\mathcal{O}\left(\left|\log \frac{\gamma_{k+1}}{\gamma_k}\right|\right)$ times, and in each time $\ltemp\le \max\{\gamma_k,\gamma_{k+1}\}$. Now we consider the gradient complexity of solving problem~\ref{equ:APEsubproblem1}. Denote $g(\y)=f_{\tx}(\y) + \frac{1}{2\ltemp}\|\y-\tx_k\|^2$. We use the eigen extractor in Algorithm~\ref{alg: eigenextrator} to extract some of the large eigenvectors and use accelerated methods to optimize the remainder of the problem. Specifically, $\lambda_l(\nabla^2 g(\y))\le \lambda_l (\nabla^2 f(\y))+\frac{1}{\ltemp}\le \frac{\tau_\alpha}{l^{\frac{1}{\alpha}}} + \frac{1}{\ltemp}$. As in the proof of Theorem~\ref{thm: quadratic}, we choose $k=\tilde\Theta\left(\tau_\alpha^{\frac{\alpha}{1+2\alpha}}\ltemp^{\frac{\alpha}{1+2\alpha}} \right)$. This requires $\tilde{\mathcal{O}}\left(\tau_\alpha^{\frac{\alpha}{1+2\alpha}}\ltemp^{\frac{\alpha}{1+2\alpha}}\right)$ gradient oracle calls. Applying the results of accelerated optimization problems, the optimization of the remainder term needs $\tilde{\mathcal{O}}\left(\sqrt{\left(\frac{\tau_\alpha}{k^{\frac{1}{\alpha}}}+\frac{1}{\ltemp}\right)\cdot \ltemp}\right)=\tilde{\mathcal{O}}\left(\tau_\alpha^{\frac{\alpha}{1+2\alpha}}\ltemp^{\frac{\alpha}{1+2\alpha}}\right)$ gradient oracle calls. Therefore, the overall number of gradient oracle calls is $\tilde{\mathcal{O}}\left(\tau_\alpha^{\frac{\alpha}{1+2\alpha}}\ltemp^{\frac{\alpha}{1+2\alpha}}\right)$.

    In order to find an $\epsilon$-approximated solution, we need to find the first $N$ such that $A_N\ge \frac{D^2}{\epsilon}$. Suppose that $A_N=\Theta\left(\frac{D^2}{\epsilon}\right)$. According to Theorem~\ref{thm:MS4.1}, $N=\tilde {\mathcal{O}}\left(D^{6/7}H^{2/7}\epsilon^{-2/7}\right)$ iterations. Ignoring all the logarithmic factors, the total gradient complexity is:
    \begin{equation}
        \begin{split}
            &\quad\sum_{k=1}^N \left(\tilde{\mathcal O}\left(\tau_\alpha\max\{\gamma_j,\gamma_{j+1}\} \right)^{\frac{\alpha}{1+2\alpha}} + 1 \right)\\
            &= \tilde{\mathcal{O}}\left(\tau_\alpha^{\frac{\alpha}{1+2\alpha}}\left(\sum_{k=1}^N \gamma_k^{\frac{\alpha}{1+2\alpha}} \right)+N\right)\\
            &\le \tilde{\mathcal{O}}\left(\tau_\alpha^{\frac{\alpha}{1+2\alpha}}\left(\sum_{k=1}^N \left(\sqrt{A_{k}}-\sqrt{A_{k-1}}\right)^{\frac{2\alpha}{1+2\alpha}} \right)+N\right)\\
            &\le \tilde{\mathcal{O}}\left(\tau_\alpha^{\frac{\alpha}{1+2\alpha}}\cdot N^{\frac{1}{1+2\alpha}}A_N^{\frac{\alpha}{1+2\alpha}}+N \right)\\
            &= \tilde{\mathcal{O}}\left(\tau_\alpha^{\frac{\alpha}{1+2\alpha}} D^{\frac{14\alpha+12}{14\alpha+7}} H^\frac{2}{14\alpha+7}\epsilon^{-\frac{7\alpha+2}{14\alpha+7}} +  D^{\frac{6}{7}}H^{\frac{2}{7}}\epsilon^{-\frac{2}{7}}\right).
        \end{split}
    \end{equation}

\end{proof}

\subsubsection{Useful Results in \cite{monteiro2013accelerated}}

We first present a theorem on the number of iterations of Algorithm \ref{alg:A-HPEesgd}, whose proof can be found in \cite{monteiro2013accelerated}:
\begin{theorem}[Theorem 4.1 in \cite{monteiro2013accelerated}]
    If all the parameters satisfy the requirements of Algorithm \ref{alg:A-HPEesgd}, then for every integer $1\le k\le n$, the following statements hold:
    \begin{equation}
        A_k\ge \left(\frac{2}{3}\right)^{7/2}\cdot\left(\frac{\sigma_l(1-\sigma^2)^{1/2}}{16DH}\right)\cdot k^{7/2},
    \end{equation}
    and
    \begin{equation}
        f(\y_k)-f^* \le \frac{3^{7/2}}{\sqrt{2}}\frac{HD^3}{\sigma_l\sqrt{1-\sigma^2}}\frac{1}{k^{7/2}}.
    \end{equation}
    \label{thm:MS4.1}
\end{theorem}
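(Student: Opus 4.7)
The plan is to follow the A-HPE template of Monteiro--Svaiter, adapted to absorb the inexactness of the quadratic subproblem solver supplied by Algorithm~\ref{alg: esgd}. I would first set up the Lyapunov potential $E_k := A_k(f(\y_k) - f^*) + \tfrac{1}{2}\|\x_k - \x^*\|^2$ and show it is nearly non-increasing. Using the algebraic identity $a_{k+1}^2 = \gamma_{k+1}(A_k + a_{k+1}) = \gamma_{k+1}A_{k+1}$, the extrapolation $\tilde\x_k = (A_k\y_k + a_{k+1}\x_k)/A_{k+1}$, the approximate proximal relation $\gamma_{k+1}\nabla f(\y_{k+1}) + \y_{k+1} - \tilde\x_k \approx \mathbf{0}$, the update $\x_{k+1} = \x_k - a_{k+1}\nabla f(\y_{k+1})$, and convexity of $f$, a standard expansion produces
\[
 E_{k+1} \le E_k - \frac{A_{k+1}\gamma_{k+1}(1-\sigma^2)}{2}\|v_{k+1}\|^2 + (\text{subproblem error}),
\]
where $v_{k+1} := (\tilde\x_k - \y_{k+1})/\gamma_{k+1}$. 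The tolerance $\errorB$ in Algorithm~\ref{alg:A-HPEesgd} is calibrated precisely so the error term fits inside this slack; telescoping then yields both the Lyapunov bound $A_k(f(\y_k)-f^*) \le \tfrac{1}{2}D^2$ and the residual sum $\sum_{j=1}^k A_j\gamma_j(1-\sigma^2)\|v_j\|^2 \le D^2$.

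The second ingredient is the large-step rule $\gamma_j\|\y_j - \tilde\x_{j-1}\| \ge 2\sigma_l/H$ enforced by $\BSa$. Since $\|v_j\| = \|\y_j - \tilde\x_{j-1}\|/\gamma_j$, this lower-bounds $\|v_j\|^2 \ge 4\sigma_l^2/(H^2\gamma_j^4)$, sharpening the residual estimate to
\[
 \sum_{j=1}^k \frac{A_j}{\gamma_j^3} \;\lesssim\; \frac{H^2 D^2}{\sigma_l^2(1-\sigma^2)}.
\]
Coupling this with the inequality $\sqrt{A_{k+1}} - \sqrt{A_k} = a_{k+1}/(\sqrt{A_{k+1}}+\sqrt{A_k}) \ge \tfrac{1}{2}\sqrt{\gamma_{k+1}}$ (a direct consequence of $a_{k+1}^2 = \gamma_{k+1}A_{k+1}$) gives $\sqrt{A_k} \ge \tfrac{1}{2}\sum_{j=1}^k \sqrt{\gamma_j}$, which is the second key estimate.

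The decisive step is to combine these two inequalities by H\"older. Writing
\[
 k = \sum_{j=1}^k \left(\frac{A_j}{\gamma_j^3}\right)^{1/7} \cdot A_j^{-1/7}\gamma_j^{3/7}
\]
and applying H\"older with conjugate indices $7$ and $7/6$, one arranges the right-hand factors so that the $\ell^{7/6}$ norm of $A_j^{-1/7}\gamma_j^{3/7}$ becomes a power of $\sum_j \sqrt{\gamma_j}\le 2\sqrt{A_k}$; combined with the residual bound this yields a closed inequality $k^{7/2} \lesssim (DH/(\sigma_l(1-\sigma^2)^{1/2}))\cdot A_k$, which is exactly the stated form after tracking the explicit constant $(2/3)^{7/2}/16$ through the H\"older step. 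The functional rate $f(\y_k)-f^* \lesssim HD^3 k^{-7/2}$ is then immediate from $A_k(f(\y_k)-f^*) \le D^2/2$. The main obstacle I anticipate is this H\"older bookkeeping: picking the exponents so that the growth inequality on $\sqrt{A_k}$ and the residual bound on $\sum A_j/\gamma_j^3$ enter on equal footing and produce exactly the $7/2$ exponent. A secondary care point is propagating the subproblem inexactness --- the solver minimizes only the quadratic model $f_{\tilde\x_k} + \tfrac{1}{2\gamma_{k+1}}\|\cdot - \tilde\x_k\|^2$ up to $\errorB$, so one must control the model error $\|\nabla f(\y_{k+1}) - \nabla f_{\tilde\x_k}(\y_{k+1})\| \le \tfrac{H}{2}\|\y_{k+1} - \tilde\x_k\|^2$ via $H$-Hessian Lipschitzness and verify it lies within the HPE tolerance governed by $\sigma - \sigma_u$, which is exactly how the stated value of $\errorB$ was chosen.
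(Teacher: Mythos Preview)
The paper does not prove this statement itself---it is quoted from \cite{monteiro2013accelerated} with the proof deferred there, and the only additional work the paper supplies is Lemma~\ref{lem:HPEprecision}, which verifies that the inexact quadratic solve meets the HPE tolerance~$\sigma$, precisely the point you address in your final paragraph. Your outline is the standard Monteiro--Svaiter argument (Lyapunov decrease, the large-step residual bound $\sum_j A_j/\gamma_j^3\lesssim H^2D^2/(\sigma_l^2(1-\sigma^2))$, the growth relation $\sqrt{A_j}-\sqrt{A_{j-1}}\ge\tfrac12\sqrt{\gamma_j}$, and the H\"older/integral-comparison step closing at exponent $7/2$), so it matches what the paper cites; the only place to sharpen your wording is that the $\ell^{7/6}$ factor $\sum_j A_j^{-1/6}\gamma_j^{1/2}$ is bounded not literally as a power of $\sum_j\sqrt{\gamma_j}$ but via the Riemann-sum estimate $\sum_j A_j^{-1/6}(\sqrt{A_j}-\sqrt{A_{j-1}})\le\tfrac32 A_k^{1/3}$, after which the closure proceeds exactly as you say.
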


We present a new framework for considering errors from inexactly solving solutions. With Lemma \ref{lem:HPEprecision}, we show that if $\errorB$ is small enough, the $(\y_{k+1}, \gamma_{k+1})$ returned by Algorithm~\ref{alg:A-HPEsearch} meets the requirements in the A-NPE method in \cite{monteiro2013accelerated}, thus the results in \cite{monteiro2013accelerated} still hold.

\begin{lemma}
    If $$\errorB < \frac{(\sigma-\sigma_u)^2}{2\gamma_{k+1}(L\gamma_{k+1} + 1 +(\sigma-\sigma_u)^2)\left(L+\frac{1}{\gamma_{k+1}} \right)} \cdot \left(f(\tx_k) - \min_y \left\{f_{\tx_k}(\y)+\frac{1}{2\gamma_{k+1}}\|\y-\tx_k\|^2\right\}\right),$$ $\y_{k+1}$ satisfies 
    \begin{equation}
        \|\gamma_{k+1}\nabla f(\y_{k+1})+ \y_{k+1}-\tx_{k}\|^2\le \sigma^2\|\y_{k+1}-\tx_k^2\|.
    \end{equation}
    \label{lem:HPEprecision}
\end{lemma}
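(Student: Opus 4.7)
The plan is to convert the small function-value error on the quadratic subproblem~\eqref{equ:APEsubproblem1} into a small residual for the A-NPE inexactness condition, exploiting the slack between $\sigma_u$ and $\sigma$ already built into the stopping rule of Algorithm~\ref{alg:A-HPEsearch}. Set $h(\y):=f_{\tx_k}(\y)+\frac{1}{2\gamma_{k+1}}\|\y-\tx_k\|^2$ and let $\y^*:=\arg\min_\y h(\y)$. Since $f$ is convex and $L$-smooth, $\nabla^2 f(\tx_k)\succeq 0$, so $h$ is $\frac{1}{\gamma_{k+1}}$-strongly convex and $L_h:=L+\frac{1}{\gamma_{k+1}}$-smooth. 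From the hypothesis $h(\y_{k+1})-h(\y^*)\le\errorB$, strong convexity gives $\|\y_{k+1}-\y^*\|^2\le 2\gamma_{k+1}\errorB$ and smoothness gives $\|\nabla h(\y_{k+1})\|^2\le 2L_h\errorB$.

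Using $\nabla h(\y)=\nabla f_{\tx_k}(\y)+\frac{1}{\gamma_{k+1}}(\y-\tx_k)$, the residual decomposes as
\[
\gamma_{k+1}\nabla f(\y_{k+1})+\y_{k+1}-\tx_k=\gamma_{k+1}\nabla h(\y_{k+1})+\gamma_{k+1}\bigl(\nabla f(\y_{k+1})-\nabla f_{\tx_k}(\y_{k+1})\bigr).
\]
The $H$-Hessian-Lipschitz property yields $\|\nabla f(\y_{k+1})-\nabla f_{\tx_k}(\y_{k+1})\|\le\frac{H}{2}\|\y_{k+1}-\tx_k\|^2$, and the stopping rule $\gamma_{k+1}\|\y_{k+1}-\tx_k\|\le\frac{2\sigma_u}{H}$ then bounds this contribution by $\sigma_u\|\y_{k+1}-\tx_k\|$. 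Hence by the triangle inequality it suffices to show $\gamma_{k+1}\sqrt{2L_h\errorB}\le(\sigma-\sigma_u)\|\y_{k+1}-\tx_k\|$, which upon squaring amounts to $2\gamma_{k+1}^2 L_h\errorB\le(\sigma-\sigma_u)^2\|\y_{k+1}-\tx_k\|^2$.

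To close the argument I would lower bound $\|\y_{k+1}-\tx_k\|$. The $L_h$-smoothness of $h$ at $\tx_k$, together with $\nabla h(\y^*)=0$ and $h(\tx_k)=f(\tx_k)$, gives
\[
\|\y^*-\tx_k\|^2\ge\tfrac{2}{L_h}\bigl(h(\tx_k)-h(\y^*)\bigr)=\tfrac{2}{L_h}\bigl(f(\tx_k)-h(\y^*)\bigr),
\]
so the reverse triangle inequality combined with $\|\y_{k+1}-\y^*\|\le\sqrt{2\gamma_{k+1}\errorB}$ yields a lower bound on $\|\y_{k+1}-\tx_k\|$ in terms of $f(\tx_k)-h(\y^*)$ and $\sqrt{\errorB}$. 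Substituting into $2\gamma_{k+1}^2 L_h\errorB\le(\sigma-\sigma_u)^2\|\y_{k+1}-\tx_k\|^2$, using $L_h=(L\gamma_{k+1}+1)/\gamma_{k+1}$, and applying $(a+b)^2\le 2(a^2+b^2)$ to split the two contributions, reproduces exactly the explicit threshold stated in the lemma. I expect the main obstacle to be this last implicit step: the lower bound on $\|\y_{k+1}-\tx_k\|$ itself depends on $\sqrt{\errorB}$, so one must solve a quadratic-in-$\sqrt{\errorB}$ inequality, and the extra $(\sigma-\sigma_u)^2$ summand appearing inside the parentheses of the denominator is precisely the artifact of this self-referential bookkeeping. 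Everything else (the decomposition, the Hessian-Lipschitz estimate, and the strong convexity and smoothness bounds on $h$) is standard and only serves to set up this final algebraic reconciliation.
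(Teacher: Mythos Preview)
Your proposal is correct and matches the paper's proof almost line for line: the same quadratic subproblem function $h$ (the paper calls it $g$), the same smoothness bound $\|\nabla h(\y_{k+1})\|^2\le 2L_h\errorB$, the same decomposition of the residual into $\gamma_{k+1}\nabla h(\y_{k+1})$ plus the Hessian-Lipschitz correction bounded by $\sigma_u\|\y_{k+1}-\tx_k\|$ via the stopping rule, and the same lower bound on $\|\y_{k+1}-\tx_k\|^2$ obtained from $\|\y^*-\tx_k\|^2\ge \tfrac{2}{L_h}(f(\tx_k)-h(\y^*))$ together with $\|\y_{k+1}-\y^*\|^2\le 2\gamma_{k+1}\errorB$.

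One clarification: the final bookkeeping is simpler than you fear. Rather than squaring the reverse triangle inequality (which would indeed give a quadratic in $\sqrt{\errorB}$), the paper---and effectively you, via the $(a+b)^2\le 2(a^2+b^2)$ step you mention---uses the forward triangle inequality $\|\y^*-\tx_k\|\le\|\y_{k+1}-\tx_k\|+\|\y_{k+1}-\y^*\|$ squared to obtain
\[
\|\y_{k+1}-\tx_k\|^2\;\ge\;\tfrac{1}{2}\|\y^*-\tx_k\|^2-\|\y_{k+1}-\y^*\|^2\;\ge\;\tfrac{1}{L_h}\bigl(f(\tx_k)-h(\y^*)\bigr)-2\gamma_{k+1}\errorB,
\]
which is \emph{linear} in $\errorB$. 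Substituting this into $2\gamma_{k+1}^2 L_h\errorB\le(\sigma-\sigma_u)^2\|\y_{k+1}-\tx_k\|^2$ and collecting the two $\errorB$ terms on the left reproduces the stated threshold directly, and the $(\sigma-\sigma_u)^2$ summand in the denominator comes from exactly this linear collection step, not from a quadratic.
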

\begin{proof}[Proof of Lemma \ref{lem:HPEprecision}]
    Denote 
    \begin{equation}
        g(\y) = f_{\tilde \x_k}(\y) + \frac{1}{2\gamma_{k+1}}\|\y-\tilde \x_k\|^2.
        \label{equ:gdef}
    \end{equation}
    By the $L+\frac{1}{\gamma_{k+1}}$-Lipschitz contiouity of $\nabla g$, we have
    \begin{equation}
        g(\y)-g^*\ge \frac{1}{2\left(L+\frac{1}{\gamma_{k+1}}\right)} \|\nabla g(\y)\|^2.
        \label{equ:errorBpf}
    \end{equation}
    Let $\y=\y_{k+1}$ in \eqref{equ:errorBpf}. We have
    \begin{equation}
        \begin{split}
        \|\gamma_{k+1}\nabla f_{\tx_k}(\y_{k+1})+ \y_{k+1}-\tx_{k}\|^2 &\stackrel{\eqref{equ:gdef}}{=} \gamma_{k+1}^2 \|\nabla g(\y)\|^2\\
        &\stackrel{\eqref{equ:errorBpf}}{\le} \left(2L\gamma_{k+1}^2 + 2\gamma_{k+1}\right)(g(\y_{k+1})-g^*)\\
        &\le \left(2L\gamma_{k+1}^2 + 2\gamma_{k+1}\right)\errorB.
        \end{split}
        \label{equ:errorBpf1}
    \end{equation}
    The optimal solution to \eqref{equ:APEsubproblem1} is 
    \begin{equation}
    \begin{split}
        \y^* &= \tx_{k} - \left(\nabla^2 f(\tx_k) + \frac{1}{\gamma_{k+1}}\I\right)^{-1} \nabla f(\tx_k)\\
    \end{split}
    \end{equation}
    and
    \begin{equation}
    \begin{split}
        g^* &= f(\tx_k) - \frac{1}{2}\left\langle \left(\nabla^2 f(\tx_k) + \frac{1}{\gamma_{k+1}}\I\right)^{-1} \nabla f(\tx_k), \nabla f(\tx_k)\right\rangle\\
        &\stackrel{}{\ge} f(\tx_k) - \frac{1}{2}\left(L+\frac{1}{\gamma_{k+1}}\right)\|\tx_k-\y^*\|^2\\
        &\stackrel{}{\ge} f(\tx_k) - \left(L+\frac{1}{\gamma_{k+1}}\right)\left(\|\tx_k-\y_{k+1}\|^2+\|\y_{k+1}-\y^*\|^2 \right)\\
        &\stackrel{a}{\ge} f(\tx_k) - \left(L+\frac{1}{\gamma_{k+1}}\right)\left(\|\tx_k-\y_{k+1}\|^2+ 2\gamma_{k+1}\errorB \right),
    \end{split}
    \label{equ:errorBpf2}
    \end{equation}
    where $\stackrel{a}{\ge}$ uses the $\gamma_{k+1}$-strong convexity of $g$.
    Therefore, if $\errorB < \frac{(\sigma-\sigma_u)^2}{2\gamma_{k+1}(L\gamma_{k+1} + 1 +(\sigma-\sigma_u)^2)\left(L+\frac{1}{\gamma_{k+1}} \right)}\cdot(f(\tx_k)-g^*)$, we have
    \begin{align}
            &\|\gamma_{k+1}\nabla f_{\tx_{k}}+\y_{k+1}-\tx_k\|^2 \notag\\
\stackrel{\eqref{equ:errorBpf1}}{\le}&\left(2L\gamma_{k+1}^2 + 2\gamma_{k+1}\right)\errorB\notag\\
            \le& \frac{(\sigma-\sigma_u)^2}{L+\frac{1}{\gamma_{k+1}}}(f(\tx_k)-g^*)\notag\\
            &\quad+ \left(2L\gamma_{k+1}^2+2\gamma_{k+1}-(2L\gamma_{k+1}^2 + 2\gamma_{k+1}+2(\sigma-\sigma_u)^2\gamma_{k+1})\right)\errorB\notag\\
    \stackrel{\eqref{equ:errorBpf2}}{=} &
            (\sigma-\sigma_u)^2 \|\y_{k+1}-\tx_k\|^2, 
        \label{equ:errorBpf3}
    \end{align}
    and we have
    \begin{equation}
   \begin{split}
        &\quad \|\gamma_{k+1}\nabla f(\y_{k+1})+\y_{k+1}-\tx_k\|^2\\
        &=  \|(\gamma_{k+1}\nabla f_{\tx_k}(\y_{k+1})+\y_{k+1}-\tx_k)+(\gamma_{k+1}\nabla f_{\tx_k}(\y_{k+1})-\gamma_{k+1}\nabla f(\y_{k+1}))\|^2\notag\\
        &\le \|\gamma_{k+1}\nabla f_{\tx_k}(\y_{k+1})+\y_{k+1}-\tx_k\|^2 \notag\\
        &\quad + 2\|\gamma_{k+1}\nabla f(\y_{k+1})+\y_{k+1}-\tx_k\|\cdot\|\gamma_{k+1}\nabla f(\tx_k)\nabla^2 f(\tx_{k})(\y_{k+1}-\tx_k)-\gamma_{k+1}\nabla f(\y_{k+1}) \|\notag\\
        &\quad + \|\gamma_{k+1}\nabla f(\tx_k) + \nabla^2 f(\tx_{k})(\y_{k+1}-\tx_k)-\gamma_{k+1}\nabla f(\y_{k+1})\|^2\notag\\
        &\stackrel{\eqref{equ:errorBpf3}}{\le} (\sigma-\sigma_u)^2 \|\y_{k+1}-\tx_k\|^2 + 2(\sigma-\sigma_u) \|\y_{k+1}-\tx_k\|\cdot \frac{H\gamma_{k+1}\|\y_{k+1}-\tx_k\|}{2}\notag\\
        &\quad+ \left(\frac{H\gamma_{k+1}\|\y_{k+1}-\tx_k\|}{2}\right)^2 \notag\\
        &\le \left(\sigma - \sigma_u + \frac{H}{2}\cdot \frac{2\sigma_u}{H}\right)^2 = \sigma^2.\notag
   \end{split}
    \end{equation}
\end{proof}

\subsection{Non-convex Case}
\subsubsection{Algorithms}
An illustration of the  algorithm is shown in Algorithm \ref{alg:cubic}. To solve the subproblem,  we use a binary search to determine $r_k \approx \|\x_{k+1}-\x_k\|$.  With a given $r_k$,  the subproblem can be transferred to a quadratic minimization problem and is solvable by Algorithm~\ref{alg: esgd}. The whole algorithm is shown in Algorithm \ref{alg:cubicesgd}, where the updates use Algorithm \ref{alg:Cubic-search}. In this section, $c_1,c_2$ and $c$ are positive constants. 

\begin{algorithm}[!h]
    \caption{Illustration: Inexact Cubic Regularization Algorithm }\label{alg:cubic}
    \While{\text{stopping criterion is not met}}{
       Approximately solve the following optimization problem using Binary Search and Algorithm~\ref{alg: esgd}:
        {\begin{align}\x_{k+1}&\gets \argmin_\y f(\x_k) + \langle\nabla f(\x_k), \y-\x_k\rangle + \frac12\langle\nabla^2 f(\x_k) (\y-\x_k), \y-\x_k\rangle + \frac{H}{6}\|\y-\x_k\|^3.\notag
        \end{align}}
        $k\gets k+1$;
    }
\end{algorithm}

\begin{algorithm}[!h]
    \caption{Inexact Cubic Regularization Algorithm with Algorithm~\ref{alg: esgd}}\label{alg:cubicesgd}
    \textbf{Input: }{Desired accuracy $\epsilon$;}\\
    \While{$r_k\ge\sqrt{\frac{\epsilon}{H}}$}{
        {$(\x_{k+1}, r_{k+1})\gets \BSb(\x_k, H, r_k)$};\\
        {$k\gets k+1$};
    }
\end{algorithm}

\begin{algorithm}[!h]
    \caption{$\BSb$: Binary search to find $r_k$}\label{alg:Cubic-search}
    {\textbf{Input: }$(\x_k, H, \errorC)$};\\
    {Detect the smallest eigenvalue of $\nabla^2 f(\x_k)$: Compute $\lambda$ such that $\left|\lambda-\lambda_d(\nabla^2 f(\x_k))\right|<\frac{c_1}{2}\sqrt{\epsilon H}$};\\
    {$l_{k+1}\gets\max\{0,-\frac{2\lambda}{H}\}+(5c_1+2c_2)\sqrt{\frac{\epsilon}{H}},u_{k+1}\gets\infty$};\\
    {$\rtemp\gets l_{k+1}$};\\
    \While{True}{
        {Solve \eqref{equ:Cubicsubproblem1} with Algorithm~\ref{alg: esgd}, and find an $\errorC$-approximated solution $\y_{k+1}$:}
        \begin{equation}
            \min_{\y\in \R^d} f_{\x_k}(\y) + \frac{H\rtemp}{4}\|\y-\x_k\|^2.
            \label{equ:Cubicsubproblem1}
        \end{equation}
        \uIf{$\|\y_{k+1}-\x_k\| \le \rtemp-c_2\sqrt{\frac{\epsilon}{H}}$}{
        {$u_{k+1}\gets \rtemp$};\\
        \If{$u_{k+1}==l_{k+1}$}{\Return{$(\y_{k+1},\|\y_{k+1}-\x_k\|)$};}
        \While{True}{
        {$\elltemp \gets \frac{u_{k+1}}{2}$};\\
        {$l_{k+1}\gets \frac{u_{k+1}}{2}$};\\
        {$i\gets 0$};\\
        \While{True}{
            {$\rtemp\gets\frac{u_{k+1}+\elltemp}{2}$};\\
            {Solve \eqref{equ:Cubicsubproblem1} with Algorithm~\ref{alg: esgd}, and find an $\errorC$-approximated solution $\y_{k+1}$};\\
            \uIf{$\|\y_{k+1}-\x_k\| > \rtemp-c_2\sqrt{\frac{\epsilon}{H}}$}{
                {$\elltemp\gets\rtemp$};
            }
            \uElseIf{$\|\y_{k+1}-\x_k\| < l_{k+1} + c_2\sqrt{\frac{\epsilon}{H}}$}{
                {$r_{k+1}\gets \rtemp$};
            }
            \Else{
                \Return{$(\y_{k+1}, \|\y_{k+1}-\x_k\|)$};
            }
            $i\gets i+1$;\\
            \If{$i>K$}{Break;}
        }
        }
        }
        \ElseIf{$\|\y_{k+1}-\x_k\| > \rtemp-c_2\sqrt{\frac{\epsilon}{H}}$}{
        {$\rtemp\gets 2\rtemp$};
        }
    }
    
\end{algorithm}

\subsubsection{Detecting the smallest eigenvalue}

\begin{algorithm}[!h]
    \caption{Smallest Eigenvalue Finder I}\label{alg: smalleigdec}
    {\textbf{Input: }$\Ab\in\mathbb{S}^{d\times d}$, accuracy $\epsilon$.};\\
    {$\epsilon_1 = \mathcal{O}\left(\frac{d}{p^2}\right)$, $\epsilon_2 = \mathcal{O}\left( \frac{\epsilon}{d^2} \right)$, $l = \tilde{\mathcal{O}}(1)$};\\
    {Initial guess $\delta_0$ of leading eigenvalue of $u\Ib - \Ab$};\\
    {$k \leftarrow 0$};\\
    {Initialize $\wb_0$ with uniform distribution on $d$-dimensional sphere};\\
        \For{$t = 1,2,\cdots, l$}{
        {Find $\wb_t$ such that $\left\|\frac{1}{2}\wb^{\top}_t\left((\delta_k - u)\Ib + \Ab \right)\wb_t - \wb^{\top}_t\wb_{t-1}\right\|\leq \epsilon_1$};\\
        }
        {Find $\wb$ such that $\left\|\frac{1}{2}\wb^{\top}\left((\delta_k - u)\Ib + \Ab \right)\wb - \wb^{\top}\wb_{l}\right\|\leq \epsilon_1$};\\
        {$\Delta_k \leftarrow \frac{1}{2}\frac{1}{\wb_l\wb - \epsilon_1}$;}\\
    \While{True}{
        {$ s = 1$};\\
        {$\Ab_1^{(k)} \leftarrow (\delta_k - u)\Ib + \Ab$};\\
        {Apply shift-and-inverse on $\Ab_{s}^{(k)}$ with $\delta = \frac{1}{3}$, $\epsilon_2$ to obtain $\mathbf{v}_s$};\\
        {$a_s^{(k)} \leftarrow \mathbf{v}_s^{\top}\Ab_s^{(k)}\mathbf{v}_s$};\\
        \While{$a_s^{(k)} < \frac{3}{2}\left( \delta_k - u \right)$ and $s \leq \frac{1}{2}\sqrt{\frac{a_s^{(k)}}{\Delta_k}}$}{
        {$\Ab_{s+1}^{(k)} \leftarrow \Ab_s^{(k)} - \frac{a_s^{(k)}}{5}\mathbf{v}_s\mathbf{v}_s^{\top}$};\\
        {Applying shift-and-inverse to $\Ab_{s+1}^{(k)}$ with $\delta = \frac{1}{3}$ and $\epsilon_2$ to obtain $\mathbf{v}_{s+1}$};\\
        {$a_{s+1}^{(k)}\leftarrow \mathbf{v}_s^{\top}\Ab_{s+1}^{(k)}\mathbf{v}_s$};\\
        {$s \leftarrow s+1$};
        }
        {$s^{(k)}\leftarrow s$};\\
        \If{$\Delta_k \leq \frac{\epsilon}{3}$ (criteria 1)}{
        {{\bfseries Output:} $\delta_{k}$};
        }
        \If{$a_s^{(k)} \leq 2 (\delta_k - u)$ and $\Delta_k \leq \frac{1}{3}(\delta_k - u)$ (criteria 2)}{
        {{\bfseries Output:} $\Ab_{s^{(k)}}^{(k)}, a_{s^{(k)}}^{(k)}$};
        }
        {Initialize $\wb_0$ with uniform distribution on $d$-dimensional sphere};\\
        \For{$t = 1,2,\cdots, l$}{
        {Find $\wb_t$ such that $\left\|\frac{1}{2}\wb^{\top}_t\left((\delta_k - u)\Ib + \Ab \right)\wb_t - \wb^{\top}_t\wb_{t-1}\right\|\leq \epsilon_1$};\\
        }
        {Find $\wb$ such that $\left\|\frac{1}{2}\wb^{\top}\left((\delta_k - u)\Ib + \Ab \right)\wb - \wb^{\top}\wb_{l}\right\|\leq \epsilon_1$};\\
        {$\Delta_{k+1} \leftarrow \frac{1}{2}\frac{1}{\wb_l\wb - \epsilon_1}$ and $\delta_{k+1} = \delta_k - \frac{\Delta_k}{2}$};\\
        {$k\leftarrow k+1$};
    }
\end{algorithm}

\begin{algorithm}[!h]
    \caption{Smallest Eigenvalue Finder II}\label{alg: smalleigdec2}
    {\textbf{Input: }$\Mb\in\mathbb{S}^{d\times d}_{+}$, accuracy $\epsilon$.};\\
    {$k \leftarrow 0$, $l = \tilde{\mathcal{O}}(1)$, $\Mb_0 \leftarrow \Mb$, $\Ub \leftarrow ()$};\\
    {$\epsilon_3 = \mathcal{O}\left(\frac{\epsilon^2}{d^2}\right)$};\\
    {Apply shift-and-inverse to $\Mb$ with $\delta = \frac{1}{900}$ and $\epsilon_3$ to obtain $\hat{\delta}$ approximating $\|\Mb\|$};\\
    \While{True}{
        {$k \leftarrow k + 1$};\\
        {Applying shift-and-inverse to $\Mb_k$ with $\delta = \frac{1}{900}$ and $\epsilon_3$ to obtain $\mathbf{v}$};\\
        {$\mathbf{v} \leftarrow \left( (\Ib - \Ub_{k-1}\Ub_{k-1}^{\top})\mathbf{v} \right)/\|(\Ib - \Ub_{k-1}\Ub_{k-1}^{\top})\mathbf{v}\|$};\\
        {$b_k \leftarrow \mathbf{v}^{\top}\Mb\mathbf{v}$};\\
        {$\Ub_k\leftarrow (\Ub_{k-1}, \mathbf{v})$};\\
        {$\Mb_k \leftarrow(\Ib - \mathbf{v}\mathbf{v}^{\top})\Mb_{k-1}(\Ib - \mathbf{v}\mathbf{v}^{\top})$};\\
        \If{$k\geq\sqrt{\frac{\hat{\delta}}{\epsilon}}$}{
        {Apply shift-and-inverse on $\Mb$ with $\delta = \frac{\epsilon}{3\hat{\delta}}$ and $\epsilon_3$ to obtain $\mathbf{v}$};\\
        {{\bfseries Output:} $\mathbf{v}^{\top}\Mb\mathbf{v}$}.
        }
        \If{$b_k\leq \frac{19}{20}\hat{\delta}$
        }{
        {Sample $\Vb_0\in\mathbb{R}^{d\times k}$ with i.i.d. $\mathcal{N}\left(0,\frac{1}{d}\right)$ entries.};\\
        \For{$i = 1,2,\cdots, l$}{
        {$\Vb_i \leftarrow \Mb\Vb_{i-1}$};
        }
        {Perform QR decomposition on $\Vb_l$ and obtain $\Vb_l = \Qb\Rb$};\\
        {{\bfseries Output:} the largest eigenvalue of $\Qb^{\top}\Mb\Qb$}.
        }
        }
\end{algorithm}

\begin{theorem}[Finding Smallest Eigenvalue]\label{thm: smallesteigfind}
    For any matrix $\Ab\in\mathbb{S}^{d\times d}$ with $\|\Ab\|\leq 1$ satisfying the $(\alpha, \tau_{\alpha})$-degeneracy, with $\tilde{\mathcal{O}}\left( \epsilon^{-\frac{\alpha}{1+2\alpha}}\tau_{\alpha}^{\frac{\alpha}{1+2\alpha}} \right)$ gradient oracles, there exists an algorithm that finds $\hat{\lambda}$ satisfying $|\hat{\lambda} - \lambda_d(\Ab)|\leq \epsilon$ with high probability.
\end{theorem}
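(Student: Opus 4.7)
The plan is to convert the smallest-eigenvalue task on $\Ab$ into a largest-eigenvalue task on a positive semi-definite matrix and then invoke the adaptive extraction strategy together with a gap-free shift-and-invert. Concretely, I would fix a constant $u\geq 1$ and set $\tilde\Mb := u\Ib - \Ab \succeq 0$; since $\lambda_d(\Ab) = u - \lambda_1(\tilde\Mb)$, producing an additive $\epsilon$-approximation to $\lambda_1(\tilde\Mb)$ yields the desired $\hat\lambda$. The $(\alpha,\tau_\alpha)$-degeneracy of $\Ab$ carries over in a useful structural form: for every $t>0$, at most $(\tau_\alpha/t)^\alpha$ eigenvalues of $\tilde\Mb$ lie outside $[u-t,\,u+t]$, so most eigenvalues of $\tilde\Mb$ concentrate near the constant $u$ and only a controlled number of ``interesting'' eigenvalues must be resolved.

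I would then run Algorithm~\ref{alg: smalleigdec2} on $\tilde\Mb$. The routine first produces a constant-factor multiplicative estimate $\hat\delta$ of $\lambda_1(\tilde\Mb)$ by a single shift-and-invert call with relative tolerance $1/900$, costing $\tilde{\mathcal{O}}(1)$ gradient oracles via Lemma~\ref{lm: shiftninverse}. It then iteratively extracts approximate leading eigenvectors of the projected residual $\Mb_k \gets (\Ib-\vb\vb^\top)\Mb_{k-1}(\Ib-\vb\vb^\top)$, each extraction costing $\tilde{\mathcal{O}}(1)$ gradient oracles. Two termination rules apply: (i) if the iteration count reaches $k^\star=\sqrt{\hat\delta/\epsilon}$, a final gap-free shift-and-invert on $\tilde\Mb$ with relative precision $\epsilon/(3\hat\delta)$ returns $\hat\lambda$ at an extra cost of $\tilde{\mathcal{O}}(\sqrt{\hat\delta/\epsilon})$ by the second assertion of Lemma~\ref{lm: shiftninverse}; (ii) if at some iteration the Rayleigh quotient $b_k$ of the extracted vector drops below $\tfrac{19}{20}\hat\delta$, a block power iteration over the $k$-dimensional extracted subspace refines the estimate using only $\tilde{\mathcal{O}}(k)$ further gradient oracle calls.

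The key analytical step is bounding the iteration index at which termination occurs. By Weyl's interlacing applied to the projections, the top eigenvalue of $\Mb_k$ is majorized by $\lambda_{k+1}(\tilde\Mb)$; combined with the degeneracy bound above, the second termination rule must trigger once $k$ exceeds the count of eigenvalues of $\tilde\Mb$ above $\tfrac{19}{20}\hat\delta$, which the degeneracy caps at $\tilde{\mathcal{O}}(\tau_\alpha^\alpha)$. Combining both termination rules, the total gradient oracle complexity is
\[
    \tilde{\mathcal{O}}\Bigl(\min\bigl\{\sqrt{\hat\delta/\epsilon},\;\tau_\alpha^\alpha\bigr\}\Bigr),
\]
and a direct case analysis parallel to that in the proof of Theorem~\ref{thm: quadratic} shows this minimum is bounded by $\tilde{\mathcal{O}}\!\left(\tau_\alpha^{\alpha/(1+2\alpha)}\epsilon^{-\alpha/(1+2\alpha)}\right)$, giving the claimed bound.

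The hard part will be propagating the per-extraction shift-and-invert error through the chain of residual projections. Because $\vb$ is only an approximate leading eigenvector of $\Mb_k$, each projection leaks a small component of the true leading direction back into the residual, and I must control the cumulative leakage so that neither the stopping conditions nor the final output are corrupted. I expect this to follow a perturbation analysis analogous to the one used for Theorem~\ref{thm: eigenextractor}: choosing the per-call shift-and-invert tolerance $\epsilon_3 = \mathcal{O}(\epsilon^2/d^2)$, as already built into Algorithm~\ref{alg: smalleigdec2}, keeps the total leakage below $\mathcal{O}(\epsilon)$ while inflating the gradient oracle count by only polylogarithmic factors.
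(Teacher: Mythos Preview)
Your plan has a genuine gap at the step where you bound the trigger time of termination rule~(ii). You assert that ``the count of eigenvalues of $\tilde\Mb$ above $\tfrac{19}{20}\hat\delta$'' is capped at $\tilde{\mathcal O}(\tau_\alpha^\alpha)$ by degeneracy. This is false. The degeneracy of $\Ab$ tells you eigenvalues of $\tilde\Mb=u\Ib-\Ab$ cluster near the fixed constant $u$, but it says nothing about how many sit near the \emph{top} $\lambda_1(\tilde\Mb)=u-\lambda_d(\Ab)$. When $\lambda_d(\Ab)$ is itself close to $0$, the top of $\tilde\Mb$ is close to $u$, and the entire $\Theta(d)$ cluster of near-zero eigenvalues of $\Ab$ lands within $[\tfrac{19}{20}\hat\delta,\hat\delta]$. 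Concretely, take $\Ab$ with spectrum $\{1,0,0,\dots,0\}$: here $\tau_\alpha=1$, yet $\tilde\Mb$ has $d-1$ eigenvalues equal to $1=\lambda_1(\tilde\Mb)$, so rule~(ii) cannot fire until $k=d-1$, and rule~(i) gives only $\epsilon^{-1/2}$, not the claimed $\epsilon^{-\alpha/(1+2\alpha)}$.

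The paper is explicit that this is the obstruction (see the paragraph right after the theorem statement). Its proof is necessarily two-stage: Algorithm~\ref{alg: smalleigdec} runs shift-and-invert on $u\Ib-\Ab$ but solves each inner linear system by first eigen-extracting the large \emph{positive} eigenvalues of $\Ab$ (of which there are few) and only then running AGD; this stage either reaches accuracy $\epsilon$ directly (Lemma~\ref{lm: smalleigfind1}, criteria~1) or certifies that $\lambda_d(\Ab)<0$ and hands off a preprocessed matrix $\Mb=2a_s\Ib-\Ab_s$ with the crucial property $\|\Mb\|=\mathcal O(|\lambda_d(\Ab)|)$ (Lemma~\ref{lm: smalleigfind1}, claim~2). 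Only on this $\Mb$ does Algorithm~\ref{alg: smalleigdec2} achieve the target complexity (Lemma~\ref{lm: smalleigfind2}): when $|\lambda_d(\Ab)|\le 2\tau_\alpha^{2\alpha/(1+2\alpha)}\epsilon^{1/(1+2\alpha)}$ the bound $\sqrt{\hat\delta/\epsilon}$ is already small enough because $\hat\delta=\mathcal O(|\lambda_d(\Ab)|)$, and when $|\lambda_d(\Ab)|$ is larger the degeneracy argument on the \emph{negative} part of the spectrum of $\Ab$ legitimately bounds the number of eigenvalues of $\Mb$ above $\tfrac{19}{20}\hat\delta$. Your single-stage shortcut with a fixed shift $u=\Theta(1)$ loses exactly this scaling of $\hat\delta$ with $|\lambda_d(\Ab)|$, and there is no evident repair short of reinstating the preprocessing stage.
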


To prove Theorem~\ref{thm: smallesteigfind}, we propose a two stage algorithm the finds the smallest eigenvalue of the symmetric matrix $\Ab$.

The first stage is Algorithm~\ref{alg: smalleigdec}, which uses an shift-and-inverse method to find the smallest eigenvalue. And for each quadratic optimization problem of the shift-and-inverse problem, we follow the similar procedure as in the Section~\ref{sec: mainquadratic}, applying eigen extractor to find the large eigenvalue space and then performing accelerated proximal gradient descent. However, this method may fail to achieve the claimed gradient complexity when the smallest eigenvalue exceeds certain threshold. Specifically, the amount eigenvalue around $0$ can be up to $\Theta(d)$ even given the degeneracy result, and thus there can be at most $\Theta(d)$ eigenvalues of $\delta\Ib -\Ab$ that are around $\delta$. Therefore, when $\delta$ is large, we cannot find a low rank matrix that represents the large eigenvalue space of $\Ab$ even if the it is highly degenerated.

The second stage is Algorithm~\ref{alg: smalleigdec2}. When the first stage fails, we show that we can find a matrix $\Mb$ that enters the regime of Algorithm~\ref{alg: smalleigdec2} and its largest eigenvalue can be transformed into an approximation of the smallest eigenvalue of $\Ab$. Algorithm~\ref{alg: smalleigdec2} finds a largest eigenvalue of $\Mb$ and consists of two parts. According to the norm of the $\Mb$, Algorithm~\ref{alg: smalleigdec2} either directly applies the shift-and-inverse method, or finds a vector space that almost contains the largest eigenvector of $\Ab$, and then solves the full PCA of $\Ab$ on this vector space.

Before entering the proof of Theorem~\ref{thm: smallesteigfind}, we first state some useful lemmas. Lemma~\ref{lm: simultaneous_iteration} is used in Algorithm~\ref{alg: smalleigdec2}. It shows that if $\lambda_k(\Ab)\leq\rho\lambda_1(\Ab)$ for some constant $\rho< 1$, with $\tilde{\mathcal{O}}(k)$ gradient oracles, we can find a vector space that almost contains the largest eigenvector of $\Ab$.
\begin{lemma}[Simultaneous Iteration]\label{lm: simultaneous_iteration}
Let $\Ab\in\mathbb{S}_+^{d\times d}$ and $\lambda_{k+1}(\Ab) \leq \rho\lambda_1(\Ab)$ for some constant $\rho < 1$. Then for any accuracy $\epsilon$, there exist $l = \tilde{\mathcal{O}}\left(\frac{1}{1-\rho}\right)$ satisfies the following property. Let a random matrix $\Vb_0$ that has i.i.d. $\mathcal{N}\left(0,\frac{1}{d}\right)$ elements. $\Qb\Rb = \Ab^l\Vb_0$ is the QR decomposition of $\Ab^l\Vb_0$. And $\Qb$ satisfies
\begin{align*}
    \|\Qb\Qb^{\top}\ub\| \geq 1 - \epsilon.
\end{align*}
\end{lemma}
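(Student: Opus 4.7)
The plan is to run the standard analysis of subspace (block power) iteration, tailored to the statement here. First I would diagonalize $\Ab = \sum_{i=1}^d \lambda_i \ub_i \ub_i^\top$ with $\lambda_1 \ge \lambda_2 \ge \cdots \ge \lambda_d \ge 0$, and split the spectrum at index $k$: let $\Ub_1 = (\ub_1,\dots,\ub_k)$, $\Ub_2 = (\ub_{k+1},\dots,\ub_d)$ and $\Lambdab_1,\Lambdab_2$ be the corresponding diagonal blocks. Write
\begin{equation*}
\Vb_0 = \Ub_1 \Cb_1 + \Ub_2 \Cb_2, \qquad \Cb_1 = \Ub_1^\top \Vb_0 \in \R^{k\times k},\ \Cb_2 = \Ub_2^\top \Vb_0 \in \R^{(d-k)\times k}.
\end{equation*}
Because $\Vb_0$ has i.i.d.\ $\mathcal{N}(0,1/d)$ entries and $(\Ub_1,\Ub_2)$ is orthogonal, $\Cb_1$ and $\Cb_2$ are independent Gaussian matrices with the same entrywise variance. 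Standard non-asymptotic bounds on extremal singular values of Gaussian matrices (e.g.\ Davidson--Szarek) give, with probability at least $1-\delta$,
\begin{equation*}
\sigma_{\min}(\Cb_1) \gtrsim \frac{\delta}{\sqrt{d}}\cdot\frac{1}{\sqrt{k}}, \qquad \|\Cb_2\| \lesssim \sqrt{\tfrac{d-k}{d}} + \sqrt{\tfrac{k}{d}} + \sqrt{\tfrac{\log(1/\delta)}{d}}= \cO(1)
\end{equation*}
with $\delta = \mathrm{poly}(\epsilon/d)$ chosen at the end.

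Next, I would expand $\Ab^l \Vb_0 = \Ub_1 \Lambdab_1^l \Cb_1 + \Ub_2 \Lambdab_2^l \Cb_2$ and factor out the invertible $\Lambdab_1^l \Cb_1$:
\begin{equation*}
\Ab^l \Vb_0 = \bigl(\Ub_1 + \Ub_2 \Lambdab_2^l \Cb_2 \Cb_1^{-1} \Lambdab_1^{-l}\bigr)\,\Lambdab_1^l \Cb_1 \;=:\; (\Ub_1 + \Ub_2 \Eb)\,\Lambdab_1^l\Cb_1.
\end{equation*}
Since $\|\Lambdab_2^l\| \le (\rho\lambda_1)^l$ and $\|\Lambdab_1^{-l}\| \le \lambda_1^{-l}$, we obtain
\begin{equation*}
\|\Eb\| \le \rho^l \cdot \|\Cb_2\|\cdot\|\Cb_1^{-1}\| \le \rho^l \cdot \mathrm{poly}(d/\epsilon).
\end{equation*}
Choosing $l = \Theta\!\left(\tfrac{1}{1-\rho}\log(d/\epsilon)\right) = \tilde{\cO}\!\left(\tfrac{1}{1-\rho}\right)$ (here we use $\rho^l \le e^{-(1-\rho)l}$) makes $\|\Eb\| \le \epsilon/2$.

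For the conclusion, note that the column span of $\Qb$ equals the column span of $\Ab^l \Vb_0$, which, because $\Lambdab_1^l\Cb_1$ is invertible, equals the column span of $\Ub_1 + \Ub_2 \Eb$. I would then use the standard $\sin\Theta$ identity: the principal angles between $\mathrm{range}(\Qb)$ and $\mathrm{range}(\Ub_1)$ are controlled by $\|\Eb(\I + \Eb^\top\Eb)^{-1/2}\|\le \|\Eb\|$, so
\begin{equation*}
\|(\I - \Qb\Qb^\top)\Ub_1\| \le \|\Eb\| \le \epsilon/2.
\end{equation*}
Since $\ub = \ub_1 \in \mathrm{range}(\Ub_1)$, this yields $\|\Qb\Qb^\top \ub\|^2 \ge 1 - (\epsilon/2)^2 \ge 1 - \epsilon$, as required.

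The main obstacle is the high-probability control of $\|\Cb_1^{-1}\|$: the smallest singular value of the square $k\times k$ Gaussian submatrix can be arbitrarily small in principle, and one must show that with the logarithmic overhead in $l$ it contributes only a polynomial factor that is absorbed. Everything else is a bookkeeping application of the spectral decomposition and the $\sin\Theta$ theorem.
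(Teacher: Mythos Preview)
Your plan follows the same route as the paper: diagonalize, split at index $k$, factor $\Ab^l\Vb_0=(\Ub_1+\Ub_2\Eb)\,\Lambdab_1^l\Cb_1$ with $\Eb=\Lambdab_2^l\Cb_2\Cb_1^{-1}\Lambdab_1^{-l}$, and control the error by $\rho^l$ times a polynomial random-matrix factor absorbed by the logarithm in $l$. There is, however, one slip that matters. You write $\|\Lambdab_1^{-l}\|\le\lambda_1^{-l}$, but in fact $\|\Lambdab_1^{-l}\|=\lambda_k^{-l}$, so the correct operator-norm bound is $\|\Eb\|\le(\lambda_{k+1}/\lambda_k)^l\,\|\Cb_2\|\,\|\Cb_1^{-1}\|$. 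The hypothesis $\lambda_{k+1}\le\rho\lambda_1$ gives no control over $\lambda_{k+1}/\lambda_k$, which can be arbitrarily close to $1$ (take $\lambda_1=1$, $\lambda_2=\cdots=\lambda_k=\rho+\eta$, $\lambda_{k+1}=\cdots=\lambda_d=\rho$ with $\eta\downarrow 0$). Then your $\sin\Theta$ bound $\|(\Ib-\Qb\Qb^\top)\Ub_1\|\le\|\Eb\|$ cannot be made small with $l=\tilde\cO(1/(1-\rho))$.

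The paper sidesteps this by only controlling the projection of $\ub_1$, not of all of $\Ub_1$: it picks out the \emph{first column} $\ab_1=(\Ub_1+\Ub_2\Eb)e_1=\ub_1+\Ub_2\Eb e_1\in\mathrm{range}(\Qb)$, and since $\Lambdab_1^{-l}e_1=\lambda_1^{-l}e_1$ one does get $\|\Eb e_1\|\le\rho^l\,\|\Cb_2\|\,\|\Cb_1^{-1}\|$. Because $\ab_1/\|\ab_1\|$ lies in $\mathrm{range}(\Qb)$, one has $\|\Qb\Qb^\top\ub_1\|\ge|\langle\ub_1,\ab_1/\|\ab_1\|\rangle|\ge 1/(1+\rho^l\cdot\mathrm{poly}(d))$, and the choice $l=\Theta\bigl(\tfrac{1}{1-\rho}\log(d/\epsilon)\bigr)$ finishes. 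Your argument is easily patched the same way: drop the full $\sin\Theta$ step on $\Ub_1$ and bound $\|\Eb e_1\|$ directly.
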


\begin{proof}
Write the eigenvalue decomposition of $\Ab$ as $\Ub\Lambdab\Ub^{\top}$. Let $\Lambdab_1\in\mathbb{R}^{k\times k}$ and $\Lambdab_1\in\mathbb{R}^{k\times k}$ be diagonal matrices and $\diag(\Lambdab_1) = (\lambda_1(\Ab),\lambda_2(\Ab),\cdots,\lambda_k(\Ab))$, $\diag(\Lambdab_2) = (\lambda_{k+1}(\Ab),\cdots,\lambdab_d(\Ab))$. Denote $\Ub = (\Ub_1,\Ub_2)$ where $\Ub_1 \in\mathbb{R}^{d\times k}, \Ub_2 = \Ub_1 \in\mathbb{R}^{d\times (d-k)}$. Then writs $\Vb_0 = \Ub\Lb$ and $\Lb = (\Lb_1,\Lb_2)^{\top}$ where $\Lb_1\in\mathbb{R}^{k\times k}, \Lb_1\in\mathbb{R}^{(d-k)\times k}$.

Then 
\begin{align*}
\Ab^k\Vb_0 = \Ub\left( 
\begin{array}{c}
     \Lambda_1^k\Lb_1   \\
     \Lambda_2^k\Lb_2 
\end{array}
\right) = \Ub\left(
\begin{array}{c}
     \Ib   \\
     \Lambda_2^k\Lb_2\Lb_1^{-1}\Lambda_1^{-k} 
\end{array}
\right) \Lambda_1^k\Lb_1.
\end{align*}
Since $\Qb\Rb$ is the QR decomposition of $\Ab^k \Vb_0$, $\Qb\Qb^{\top}\ub$ is the projection of $\ub$ on the column space of $\Ab^k\Vb_0$. The column space of $\Ab^k\Vb_0$ is equal to the one of $\Ub\left(
\begin{array}{c}
     \Ib   \\
     \Lambda_2^k\Lb_2\Lb_1^{-1}\Lambda_1^{-k} 
\end{array}
\right)$. Denote $\ab_1$ to be the first column of $\Ub\left(
\begin{array}{c}
     \Ib   \\
     \Lambda_2^k\Lb_2\Lb_1^{-1}\Lambda_1^{-k} 
\end{array}
\right)$. Then we have
\begin{align*}
    \Qb\Qb^{\top}\ub \geq \Qb\Qb^{\top}\left(\frac{\ab_1}{\|\ab_1\|}\right)\left(\frac{\ab_1}{\|\ab_1\|}\right)^{\top}\ub = \left(\frac{\ab_1}{\|\ab_1\|}\right)^{\top}\ub\left(\frac{\ab_1}{\|\ab_1\|}\right).
\end{align*}
And $\left(\frac{\ab_1}{\|\ab_1\|}\right)^{\top}\ub$ can be controlled by
\begin{align*}
    \left(\frac{\ab_1}{\|\ab_1\|}\right)^{\top}\ub = & \left(\ub_1 + \sum_{i = k+1}^d\frac{\lambda_i^k}{\lambda_1^k}\left(\Lb_2\Lb_1^{-1}\right)_{i,1}\ub_i\right)^{\top}\ub_1 / \left\|\ub_1 + \sum_{i = k+1}^d\frac{\lambda_i^k}{\lambda_1^k}\left(\Lb_2\Lb_1^{-1}\right)_{i,1}\ub_i\right\|\\
    \geq & 1/\left(1+\rho^k\sum_{i = k+1}^d\left(\Lb_2\Lb_1^{-1}\right)_{i,1}\right).
\end{align*}
Since $\Lb_1$ is a Gaussian ensemble, again by Lemma~\ref{lm: wishartprop}, we have $(\lambda_{\min}(\Lb_1))^2 = \frac{1}{\mathrm{poly}\left(d,\frac{1}{p}\right)}$, where $p$ is the failure probability. Besides, we have $\|\Lb_2\|_F = \mathrm{poly}\left(d,\frac{1}{p}\right)$ directly. A simple bound on $\sum_{i = k+1}^d\left(\Lb_2\Lb_1^{-1}\right)_{i,1}$ indicates that $\sum_{i = k+1}^d\left(\Lb_2\Lb_1^{-1}\right)_{i,1} \leq \|\Lb_2\|_F/\lambda_{\min}(\Lb_1) = \mathrm{poly}\left(d,\frac{1}{p}\right)$. Therefore, with $\log_{\rho}\left(\frac{\epsilon}{\mathrm{poly}(d,1/p)}\right) = \tilde{\mathcal{O}}\left( \frac{1}{1-\rho} \right)$ iterations, we have 
\begin{align*}
    \|\Qb\Qb^{\top}\ub\| \geq \left\| \left(\frac{\ab_1}{\|\ab_1\|}\right)^{\top}\ub\left(\frac{\ab_1}{\|\ab_1\|}\right) \right\|\geq \frac{1}{1+\epsilon}\geq 1-\epsilon,
\end{align*}
which completes the proof.
\end{proof}

Note that after neglecting the method solving the quadratic problem, the update of $\delta$, $\Delta$ in Algorithm~\ref{alg: smalleigdec} follows the course of shift-and-inverse method of $u\Ib - \Ab$. Therefore we have the following Lemma adapted from \cite{garber2015fast}. 
\begin{lemma}[Lemma 4.2 of \cite{garber2015fast}]\label{lm: inexectpower}
    With high probability, for any iteration $k$, we have $u$, $\delta_k$, $\Delta_k$ in Algorithm~\ref{alg: smalleigdec} satisfy:
    \begin{itemize}
        \item[(a)] $0\leq \frac{1}{2}\left( \delta_s - u + \lambda_d(\Ab) \right)\leq \Delta_s \leq \delta_s - u + \lambda_d(\Ab)$;
        \item[(b)] $\Delta_{k+1} = \Theta(\Delta_k)$;
        \item[(c)] $\delta_k - u \leq \delta_{k-1} - u$.
    \end{itemize}
\end{lemma}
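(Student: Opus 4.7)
Proof proposal for Lemma~\ref{lm: inexectpower}.

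The inner loop of Algorithm~\ref{alg: smalleigdec} runs an (approximate) inverse power iteration on the positive definite matrix $\Bb_k \defeq (\delta_k-u)\Ib + \Ab$: the condition $\bigl|\tfrac12\wb_t^{\top}\Bb_k\wb_t - \wb_t^{\top}\wb_{t-1}\bigr|\le \varepsilon_1$ is exactly the statement that $\wb_t$ is an $\varepsilon_1$-approximate minimizer of the quadratic whose exact minimizer is $\Bb_k^{-1}\wb_{t-1}$. Since $\Bb_k\succ 0$ by the invariant $\delta_k - u > -\lambda_d(\Ab)$ (which we will verify inductively with part~(a)), the shift-and-inverse guarantee of Lemma~\ref{lm: shiftninverse} implies that after $l=\tilde{\mathcal{O}}(1)$ iterations, the normalized iterate $\wb_l/\|\wb_l\|$ has overlap at least $1-O(\varepsilon_1)$ with the eigenvector $\ub_d$ of $\Ab$ corresponding to $\lambda_d(\Ab)$, which is also the bottom eigenvector of $\Bb_k$ with eigenvalue $\delta_k - u + \lambda_d(\Ab)$. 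A last inverse step producing $\wb$ gives $\wb \approx \Bb_k^{-1}\wb_l$ up to error $\varepsilon_1$, so $\wb_l^{\top}\wb$ is an additive-$O(\varepsilon_1)$ approximation to the Rayleigh quotient $\wb_l^{\top}\Bb_k^{-1}\wb_l$.

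For part (a), I would bound $\wb_l^{\top}\Bb_k^{-1}\wb_l$ on both sides. The upper bound $\wb_l^{\top}\Bb_k^{-1}\wb_l \le 1/(\delta_k - u + \lambda_d(\Ab))$ is just the largest-eigenvalue bound of $\Bb_k^{-1}$; after subtracting $\varepsilon_1$ in the denominator, this yields $\Delta_k \ge \tfrac12(\delta_k - u + \lambda_d(\Ab))$. For the matching upper bound $\Delta_k \le \delta_k - u + \lambda_d(\Ab)$, I use the $(1-O(\varepsilon_1))$-overlap with $\ub_d$: writing $\wb_l = c\,\ub_d + \rb$ with $|c|\ge 1-O(\varepsilon_1)$ and $\rb\perp\ub_d$ gives $\wb_l^{\top}\Bb_k^{-1}\wb_l \ge \tfrac{c^2}{\delta_k - u + \lambda_d(\Ab)}$, and choosing $\varepsilon_1$ small enough (which the algorithm does, since $\varepsilon_1 = \tilde O(1)$-controlled) forces $\wb_l^{\top}\wb \ge \tfrac{1}{2(\delta_k-u+\lambda_d(\Ab))}+\varepsilon_1$, i.e. $\Delta_k\le \delta_k-u+\lambda_d(\Ab)$. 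The positivity $\Delta_k\ge 0$ is then automatic from $\Bb_k\succ 0$.

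For part (b), combine the update $\delta_{k+1}=\delta_k-\Delta_k/2$ with part~(a) applied at step $k$:
\begin{equation*}
\delta_{k+1} - u + \lambda_d(\Ab) \;=\; (\delta_k - u + \lambda_d(\Ab)) - \tfrac{\Delta_k}{2} \;\in\; \bigl[\tfrac12 \Delta_k,\; \tfrac{3}{2}\Delta_k\bigr],
\end{equation*}
and then applying part~(a) at step $k+1$ sandwiches $\Delta_{k+1}$ between $\tfrac14\Delta_k$ and $\tfrac{3}{2}\Delta_k$, giving $\Delta_{k+1}=\Theta(\Delta_k)$. Part (c) is immediate from $\delta_k = \delta_{k-1} - \Delta_{k-1}/2$ together with $\Delta_{k-1}\ge 0$ from part (a).

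The main subtlety is not the algebra but bookkeeping the two distinct error sources, namely (i) the shift-and-inverse convergence error $O(\rho^l)$ in the direction of $\wb_l$ toward $\ub_d$, and (ii) the per-solve additive error $\varepsilon_1$ from the inexact quadratic minimization; one must choose $l$ logarithmically large and $\varepsilon_1$ polynomially small (both absorbed into $\tilde O$) so that both the lower bound $\wb_l^{\top}\wb \le 1/(\delta_k-u+\lambda_d(\Ab))$ and the upper bound $\wb_l^{\top}\wb \ge 1/(2(\delta_k-u+\lambda_d(\Ab))) + \varepsilon_1$ hold simultaneously with high probability, which is the only place where the random initialization of $\wb_0$ and the union bound over the outer iterations $k$ enter.
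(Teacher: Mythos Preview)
The paper does not actually prove this lemma; it observes that the outer $(\delta_k,\Delta_k)$ update in Algorithm~\ref{alg: smalleigdec} is precisely the shift-refinement loop of shift-and-invert and then cites Lemma~4.2 of \cite{garber2015fast} without further argument. Your sketch is a correct reconstruction of that standard proof: the inexact inverse power iterations drive $\wb_l^{\top}\wb$ to $1/(\delta_k-u+\lambda_d(\Ab))$ up to controllable error, which gives part~(a), and parts~(b) and~(c) are purely algebraic consequences of~(a) together with the update $\delta_{k+1}=\delta_k-\Delta_k/2$, exactly as you wrote. One minor slip in wording: the alignment of $\wb_l$ with $\ub_d$ is governed by the number $l$ of power iterations and the spectral ratio of $\Bb_k$, not by the per-solve tolerance $\varepsilon_1$; you correctly separate these two error sources in your final paragraph, but the earlier phrase ``overlap at least $1-O(\varepsilon_1)$'' conflates them.
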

Besides, ignoring the stopping criteria and the followup procedure, Algorithm~\ref{alg: smalleigdec2} follows the course of Lazysvd of \cite{allen2016lazysvd} to perform the adaptive search on the eigenvalue. And thus $b_k$ in Algorithm~\ref{alg: smalleigdec2} satisfies the following lemma.
\begin{lemma}[Theorem~4.1 of \cite{allen2016lazysvd}]\label{lm: lazysvd}
Let $\delta$ be the parameter of the shift-and-inverse subroutine in Algorithm~\ref{alg: smalleigdec2}. Then with high probability, for each iteration $k$, $b_k$ in Algorithm~\ref{alg: smalleigdec2} satisfies that
\begin{align*}
    (1-2\delta)\lambda_k(\Mb)\leq b_k\leq \frac{\lambda_k(\Mb)}{1-2\delta}.
\end{align*}
\end{lemma}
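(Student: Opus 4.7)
The proof follows the gap-free spectral analysis of \cite{allen2016lazysvd}. The plan is to decouple the statement into a structural identity about the deflated matrices, an application of the shift-and-inverse guarantee, and an inductive control of $\lambda_1(\Mb_{k-1})$ versus $\lambda_k(\Mb)$.

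First I would establish the invariant $\Mb_k = (\Ib - \Ub_k\Ub_k^\top)\Mb(\Ib - \Ub_k\Ub_k^\top)$ by induction on $k$. The projection step $\mathbf{v} \leftarrow ((\Ib - \Ub_{k-1}\Ub_{k-1}^\top)\mathbf{v})/\|\cdot\|$ together with the inductive hypothesis on $\Mb_{k-1}$ ensures that $\mathbf{v}_k \perp \mathrm{col}(\Ub_{k-1})$, so the rank-one deflation $(\Ib-\mathbf{v}_k\mathbf{v}_k^\top)$ composes cleanly with $(\Ib - \Ub_{k-1}\Ub_{k-1}^\top)$ to yield $(\Ib - \Ub_k\Ub_k^\top)$. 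An immediate corollary is $b_k = \mathbf{v}_k^\top\Mb\mathbf{v}_k = \mathbf{v}_k^\top\Mb_{k-1}\mathbf{v}_k$, since $\mathbf{v}_k \in \mathrm{range}(\Ib - \Ub_{k-1}\Ub_{k-1}^\top)$. This reduces the lemma to analysing the Rayleigh quotient of the shift-and-inverse output against the deflated matrix.

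Second, I would invoke the gap-free shift-and-inverse guarantee (Lemma~\ref{lm: shiftninverse}) applied to $\Mb_{k-1}$ with gap parameter $\delta$ and target accuracy $\epsilon_3$: the output satisfies $\mathbf{v}_k^\top \Mb_{k-1}\mathbf{v}_k \geq (1-\delta)(1-\epsilon_3)\lambda_1(\Mb_{k-1})$, and furthermore concentrates its mass on the eigenspace of $\Mb_{k-1}$ with eigenvalues exceeding $(1-\delta)\lambda_1(\Mb_{k-1})$, up to a residual of $\epsilon_3$. The upper bound $b_k \leq \lambda_1(\Mb_{k-1})$ follows immediately from $\Mb_{k-1}\preceq \lambda_1(\Mb_{k-1})\Ib$.

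The hard part is bridging $\lambda_1(\Mb_{k-1})$ and $\lambda_k(\Mb)$. If the $\mathbf{v}_i$'s were exact top eigenvectors of the successive $\Mb_{i-1}$'s, one would have $\lambda_1(\Mb_{k-1}) = \lambda_k(\Mb)$ by deflation. In the noisy regime one proves inductively the sandwich $(1-\delta)\lambda_k(\Mb) \leq \lambda_1(\Mb_{k-1}) \leq \lambda_k(\Mb)/(1-\delta)$ by combining the gap-free concentration of $\mathbf{v}_i$ near the top eigenspace of $\Mb_{i-1}$ with a Weyl-type interlacing bound on the perturbation $(\Ib - \mathbf{v}_i\mathbf{v}_i^\top)\Mb_{i-1}(\Ib - \mathbf{v}_i\mathbf{v}_i^\top)$ versus the idealised rank-one deflation by a true eigenvector. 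The choice $\epsilon_3 = \mathcal{O}(\epsilon^2/d^2)$ is small enough to absorb the accumulated deviation across at most $d$ deflation rounds, so the sandwich does not degrade with $k$ beyond a single factor of $(1-\delta)$.

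Combining the three ingredients yields $b_k \geq (1-\delta)(1-\epsilon_3)\lambda_1(\Mb_{k-1}) \geq (1-2\delta)\lambda_k(\Mb)$ and $b_k \leq \lambda_1(\Mb_{k-1}) \leq \lambda_k(\Mb)/(1-2\delta)$ after absorbing lower-order error terms into the constant $2\delta$. The high-probability claim follows from a union bound over the at most $d$ shift-and-inverse subroutines, each succeeding with probability $1 - 1/\mathrm{poly}(d)$ by Lemma~\ref{lm: shiftninverse}. The main obstacle is the inductive interlacing argument, which is delicate precisely because no eigengap is assumed and the approximation errors from each round must be prevented from compounding.
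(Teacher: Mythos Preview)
The paper does not prove this lemma at all: it is stated as a direct citation of Theorem~4.1 in \cite{allen2016lazysvd}, with the one-line justification that Algorithm~\ref{alg: smalleigdec2} ``follows the course of Lazysvd'' once the stopping criteria are stripped away. Your proposal correctly reconstructs the LazySVD argument (deflation invariant, shift-and-inverse Rayleigh bound, inductive interlacing), so it is not wrong, but it supplies a proof where the paper simply invokes the cited result as a black box.
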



\noindent\textbf{Proof of Theorem~\ref{thm: smallesteigfind}.} Equipped with the above lemmas, we begin our proof of Theorem~\ref{thm: smallesteigfind}. The proof is a combination of the following Lemma~\ref{lm: smalleigfind1} and \ref{lm: smalleigfind2}. These two lemma corresponds the two stages of our algorithm.
\begin{lemma}[Output of Algorithm~\ref{alg: smalleigdec}]\label{lm: smalleigfind1}
    Let matrix $\Ab\in\mathbb{S}^{d\times d}$ and $\epsilon>0$ be the desired accuracy. Assume $\Ab$ is $(\alpha, \tau_{\alpha})$-degenerated, and we can assume $\epsilon\leq\tau_{\alpha}$ without loss of generality. Then with high probability, the output Algorithm~\ref{alg: smalleigdec} satisfies that: if algorithm stops with criteria 1, $|u - \delta_{k} - \lambda_d(\Ab)|\leq \epsilon$; if algorithm stops with criteria 2, we have
    \begin{itemize}
    \item[0.] $\lambda_d(\Ab)\leq 0$.
    \item[1.] $2a_{s^{(k)}}^{(k)}\Ib - \Ab_{s^{(k)}}^{(k)} \succeq \mathbf{0}$.
    \item[2.] $a_{s^{(k)}}^{(k)} \leq -6\lambda_d(\Ab)$ 
    \item[3.] $\left|\lambda_l\left(2a_{s^{(k)}}^{(k)}\Ib - \Ab_{s^{(k)}}^{(k)}\right) - (2a_{s^{(k)}}^{(k)} - \delta_k + u - \lambda_{d-l+1}(\Ab))\right|\leq\frac{\epsilon}{2}$ for any $l$ satisfying $\lambda_{d+1-l}(\Ab)\leq 0$.
    \end{itemize}
    And the total gradient oracle calls of Algorithm~\ref{alg: smalleigdec} is $\tilde{\mathcal{O}}\left( \tau_{\alpha}^{\frac{\alpha}{1+2\alpha}}\epsilon^{-\frac{\alpha}{1+2\alpha}} \right)$.
\end{lemma}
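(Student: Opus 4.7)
The plan is to treat the outer loop and the inner loop of Algorithm~\ref{alg: smalleigdec} as two interacting pieces: the outer loop is an inexact power method on $u\Ib - \Ab$ governing the sequence $(\delta_k,\Delta_k)$, and the inner loop is an eigen extractor run on the shifted matrix $\Ab_1^{(k)} = (\delta_k - u)\Ib + \Ab$. For the dynamics of $(\delta_k, \Delta_k)$ I will invoke Lemma~\ref{lm: inexectpower} directly: as long as the quadratic subproblems are solved to accuracy $\varepsilon_2 = \tilde{\cO}(\epsilon/d^2)$ by Algorithm~\ref{alg: esgd}, the guarantees (a)--(c) transfer verbatim from the standard shift-and-invert analysis.

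Correctness when criterion 1 fires is immediate: $\Delta_k \leq \epsilon/3$ together with Lemma~\ref{lm: inexectpower}(a) gives $0\leq \delta_k - u + \lambda_d(\Ab) \leq 2\Delta_k \leq \tfrac{2\epsilon}{3}$, so $|u - \delta_k - \lambda_d(\Ab)|\leq \epsilon$. When criterion 2 fires, $\Delta_k \leq (\delta_k-u)/3$ combined with Lemma~\ref{lm: inexectpower}(a) yields $\lambda_d(\Ab) \in [-(\delta_k - u),\,-(\delta_k-u)/3]$; in particular $\lambda_d(\Ab)\leq 0$ (claim 0), and $a_{s^{(k)}}^{(k)} \leq 2(\delta_k - u) \leq -6\lambda_d(\Ab)$ (claim 2). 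Claim 1 follows because exiting the inner while-loop with $a_{s^{(k)}}^{(k)} < \tfrac{3}{2}(\delta_k-u)$ and the extractor invariant from Theorem~\ref{thm: eigenextractorgen} imply $\Ab_{s^{(k)}}^{(k)} \preceq c\, a_{s^{(k)}}^{(k)}\Ib$ for an absolute constant $c\leq 2$ (by the same constant-tuning used in the proof of Theorem~\ref{thm: eigenextractorgen}), hence $2a_{s^{(k)}}^{(k)}\Ib - \Ab_{s^{(k)}}^{(k)} \succeq \mathbf{0}$.

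For claim 3, I will match eigenvalues through the additive decomposition $\Ab_{s^{(k)}}^{(k)} = (\delta_k - u)\Ib + \Ab - \sum_{i=1}^{s^{(k)}-1}\tfrac{a_i^{(k)}}{5}\mathbf{v}_i\mathbf{v}_i^{\top}$. The extractor guarantees that the rank-$(s^{(k)}-1)$ subtracted matrix captures the large-eigenvalue subspace of $(\delta_k - u)\Ib + \Ab$ up to a perturbation of order $\varepsilon_2 \ll \epsilon$; hence for any index $l$ with $\lambda_{d-l+1}(\Ab)\leq 0$, the corresponding eigenvalue $\lambda_d - u + \lambda_{d-l+1}(\Ab)\leq \delta_k - u$ lies in the untouched small-eigenvalue block, so $\lambda_l(\Ab_{s^{(k)}}^{(k)}) = (\delta_k - u) + \lambda_{d-l+1}(\Ab) \pm \tfrac{\epsilon}{2}$. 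Reversing the spectrum via $2a_{s^{(k)}}^{(k)}\Ib - \Ab_{s^{(k)}}^{(k)}$ yields the claimed identity up to $\epsilon/2$. The most delicate point here is showing that the small-eigenvalue block is truly invariant under the extractor's perturbations; I will handle this by the same projection argument used in the proof of Theorem~\ref{thm: eigenextractorgen}, where one decomposes $\Ub = (\Ub^{(1)}, \Ub^{(2)})$ along the large/small eigenvalue threshold and bounds the cross-term by $\sqrt{\varepsilon_2}$.

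For the oracle complexity, the outer loop runs only $\tilde{\cO}(1)$ times by Lemma~\ref{lm: inexectpower}(b), since each $\Delta_{k+1} = \Theta(\Delta_k)$ and we stop once $\Delta_k \lesssim \epsilon$ (criterion 1) or $\Delta_k \lesssim \delta_k - u$ (criterion 2). Inside one outer iteration, the inner loop exits by the bound $s \geq \tfrac{1}{2}\sqrt{a_s/\Delta_k}$, and the degeneracy $\tr(\Ab^{\alpha}) \leq \tau_{\alpha}^{\alpha}$ propagates to $(\delta_k-u)\Ib + \Ab$ so that $a_s \lesssim \tau_{\alpha}/s^{1/\alpha}$; combining the two inequalities gives $s = \tilde{\cO}\bigl(\tau_{\alpha}^{\alpha/(1+2\alpha)}\Delta_k^{-\alpha/(1+2\alpha)}\bigr)$. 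Each step of the inner loop calls shift-and-invert at $\delta = 1/3$, which by Lemma~\ref{lm: shiftninverse} costs $\tilde{\cO}(1)$ gradient evaluations on $\Ab$ (the rank-one updates contribute only lower-order $\tilde{\cO}(s)$ overhead). Summing over the at most $\tilde{\cO}(1)$ outer iterations and using $\Delta_k \gtrsim \epsilon$ at termination delivers the claimed $\tilde{\cO}\bigl(\tau_{\alpha}^{\alpha/(1+2\alpha)}\epsilon^{-\alpha/(1+2\alpha)}\bigr)$ bound. The main obstacle will be bookkeeping the error propagation through the three nested layers (outer power method, extractor, shift-and-invert) to ensure that the cumulative slack is still $\leq \epsilon/2$ in claim 3 while keeping the oracle count logarithmic in $1/\epsilon$ per shift-and-invert call.
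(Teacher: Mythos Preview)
Your correctness arguments (criterion 1, claims 0, 2, 3) track the paper's proof closely, and the eigenvalue-preservation argument for claim 3 via the large/small decomposition is exactly the right idea. One small slip: for claim 1 you assume the inner loop exited via $a_{s^{(k)}}^{(k)} < \tfrac{3}{2}(\delta_k - u)$, but under criterion 2 you only know $a_{s^{(k)}}^{(k)} \leq 2(\delta_k - u)$ and the exit could have come from the $s$-condition. The paper bypasses this by appealing directly to the shift-and-invert quality bound (Lemma~\ref{lm: shiftninverse} with $\delta=1/3$), which gives $\lambda_1(\Ab_{s^{(k)}}^{(k)}) \leq a_{s^{(k)}}^{(k)}/\bigl((1-\tfrac{1}{3})(1-\varepsilon_2)\bigr) \leq 2a_{s^{(k)}}^{(k)}$ regardless of how the inner loop exited.

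The genuine gap is in your complexity bound. You account only for the cost of the inner eigen-extractor loop and conclude each outer iteration costs $\tilde{\cO}(s)$. But each outer iteration also executes the inexact power-method step that produces $\Delta_{k+1}$: this requires solving $l = \tilde{\cO}(1)$ quadratic problems with quadratic term $(\delta_k - u)\Ib + \Ab$ to accuracy $\varepsilon_1$. These are \emph{not} $\tilde{\cO}(1)$-cost matrix-vector products; naively they have condition number $\lambda_1/\lambda_d$, which can be arbitrarily large. The paper's fix is to reuse the extracted matrix $\Ab_{s^{(k)}}^{(k)}$ as a preconditioner (proximal AGD with proximal term built from the rank-$s^{(k)}$ extraction), so that the effective condition number becomes $a_{s^{(k)}}^{(k)}/\Delta_k$. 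Then a case split is needed: if $a_{s^{(k)}}^{(k)} > 2(\delta_k - u)$, the extractor necessarily exited via $s \geq \tfrac{1}{2}\sqrt{a_s/\Delta_k}$, so $\sqrt{a_s/\Delta_k} = \tilde{\cO}(s)$ and the power-method cost matches the extractor cost; if $a_{s^{(k)}}^{(k)} \leq 2(\delta_k - u)$, one must bound $a_s/\Delta_k$ indirectly via the \emph{previous} outer iteration (using $\Delta_k = \Theta(\Delta_{k-1})$ and $\delta_k \leq \delta_{k-1}$), since criterion 2 may not have fired at step $k-1$. Without this two-case analysis, your overall $\tilde{\cO}\bigl(\tau_{\alpha}^{\alpha/(1+2\alpha)}\epsilon^{-\alpha/(1+2\alpha)}\bigr)$ bound is not substantiated.
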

\begin{proof}



We first show that the procedure takes $\tilde{\mathcal{O}}\left( \tau_{\alpha}^{\frac{\alpha}{1+2\alpha}}\epsilon^{-\frac{\alpha}{1+2\alpha}} \right)$ gradient oracle calls. First, we claim the inexact power method outside the iteration can be solved within $\tilde{\mathcal{O}}(1)$ gradient oracle calls since we can find a large enough initial $\delta_0$ to ensure a constant level condition number of this initial quadratic problem. 

Note that each iteration of the algorithm can be split into following two parts:
\begin{itemize}
    \item[(a)] Eigen extractor of $(\delta_k - u)\Ib + \Ab$;
    \item[(b)] A series of quadratic problems that consists the inexact power method.
\end{itemize}

By Lemma~\ref{lm: shiftninverse}, each iteration of the eigen extractor part takes $\tilde{\mathcal{O}}(1)$ gradient oracle calls. We now begin with showing that this procedure will stop in $\tilde{\mathcal{O}}\left(\tau_{\alpha}^{\frac{\alpha}{1+2\alpha}}\epsilon^{-\frac{\alpha}{1+2\alpha}}\right)$ iterations. For any $k$ in the iterations, consider $s$ satisfying $a_s^{(k)} > \frac{3}{2}(\delta_k - u)$ and $s < \frac{1}{2}\sqrt{\frac{a_s^{(k)}}{\Delta_k}}$. By Theorem~\ref{thm: eigenextractor}, there exists $l = \tilde{\Theta}(s)$, such that $a_s^{(k)} = \mathcal{O}\left( \lambda_l\big((\delta_k - u)\Ib + \Ab\big) \right)$. We have the following control on $a_s^{(k)}$
\begin{align}\label{eq: escsadl_aest}
    a_s^{(k)} = \mathcal{O}\left( \lambda_l\big((\delta_k - u)\Ib + \Ab\big) \right) \overset{\text{a}}{=} \mathcal{O}\left( \lambda_l(\Ab) \right) \overset{\text{b}}{=} \mathcal{O}\left( \frac{\tau_{\alpha}}{l^{1/\alpha}} \right) \overset{\text{c}}{=} \tilde{\mathcal{O}}\left( \frac{\tau_{\alpha}}{s^{1/\alpha}} \right).
\end{align}
where the $\overset{\text{a}}{=}$ follows from $a_s^{(k)} >\frac{3}{2}(\delta_k - u)$ and $a_s\geq 0$; $\overset{\text{b}}{=}$ follows from the degeneracy condition; $\overset{\text{c}}{=}$ follows from $l = \tilde{\Theta}(s)$. Plugging \eqref{eq: escsadl_aest} into $s = \mathcal{O}\left( \sqrt{\frac{a_s^{(k)}}{\Delta_k}} \right)$ yields
\begin{align}\label{eq: smlfind1sbound}
    s = \tilde{\mathcal{O}}\left( \frac{\tau_{\alpha}}{s^{1/\alpha}\Delta_k} \right),
\end{align}
which indicates that 
\begin{align*}
s \overset{\text{a}}{=} \tilde{\mathcal{O}}\left( \tau_{\alpha}^{\frac{\alpha}{1+2\alpha}}\Delta_k^{-\frac{\alpha}{1+2\alpha}}\right) \overset{\text{b}}{=} \tilde{\mathcal{O}}\left( \tau_{\alpha}^{\frac{\alpha}{1+2\alpha}}\Delta_{k-1}^{-\frac{\alpha}{1+2\alpha}}\right) \overset{\text{c}}{=} \tilde{\mathcal{O}}\left( \tau_{\alpha}^{\frac{\alpha}{1+2\alpha}}\epsilon^{-\frac{\alpha}{1+2\alpha}}\right),
\end{align*}
where $\overset{\text{a}}{=}$ is an immediate consequence of \eqref{eq: smlfind1sbound}; $\overset{\text{b}}{=}$ follows from Lemma~\ref{lm: inexectpower} that $\Delta_k = \Theta(\Delta_{k+1})$; $\overset{\text{c}}{=}$ follows from the stopping criteria 1 and Algorithm~\ref{alg: smalleigdec} does not stop at the $(k-1)$-th iteration. This shows that the eigen extractor part takes $\tilde{\mathcal{O}}\left( \tau_{\alpha}^{\frac{\alpha}{1+2\alpha}}\epsilon^{-\frac{\alpha}{1+2\alpha}}\right)$ gradient oracle calls.

After the eigen extractor part, we solve $\tilde{\mathcal{O}}(1)$ quadratic optimization method with quadratic problem having the same quadratic term at each iteration. We can solve these quadratic optimization problems by accelerated proximal gradient method that uses the gradient of
$\frac{1}{2}\mathbf{w}^{\top} \Ab_{s^{(k)}}^{(k)} \mathbf{w} - \mathbf{w}^{\top}\bbb$ and solves the proximal operator on $\frac{1}{2}\mathbf{w}^{\top}\left( (\delta_k - u)\Ib + \Ab - \Ab_{s^{(k)}}^{(k)} \right)\mathbf{w}$ for any $\bbb\in\mathbb{R}^d$.


Given the stopping criteria $s \geq \frac{1}{2}\sqrt{\frac{a_s^{(k)}}{\Delta_k}}$, for any $s < s^{(k)}$, we have $\lambda_1(\Ab_s^{(k)})\geq a_s^{(k)} \geq 4s^2\Delta_k \geq 4\Delta_k$. Further, Lemma~\ref{lm: inexectpower} indicates that $\Delta_k\geq \frac{1}{2}\lambda_d((\delta_k - u)\Ib+\Ab)\geq \frac{1}{2}\lambda_d(\Ab_s^{(k)})$. Thus $\lambda_1(\Ab_s^{(k)})\geq 2\lambda_d(\Ab_s^{(k)})$, and a similar argument as the smallest eigenvalue control in Theorem~\ref{thm: eigenextractor} shows that the $\lambda_d(\Ab_s^{(k)}) = \Theta(\lambda_d((\delta_k - u)\Ib - \Ab))$ for any $s\leq s^{(k)}$. (Here $s = s^{(k)}$ holds since the we do not extract $a^{(k)}_{s^{(k)}}/5\mathbf{v}_{s^{(k)}}\mathbf{v}_{s^{(k)}}^{\top}$ from $\Ab_{s^{(k)}}^{(k)}$.) Then by Lemma~\ref{lm: shiftninverse} and \ref{lm: inexectpower}, the condition number of the matrix $\Ab_s$ is bounded by $\mathcal{O} \left(a_{s^{(k)}}^{(k)}/\Delta_k\right)$.
    
For any $k$ in the iteration, if $a_{s^{(k)}}^{(k)} > 2(\delta_k - u)$, the eigen extractor step stops with stopping criteria $s \geq \frac{1}{2}\sqrt{\frac{a_s^{(k)}}{\Delta_k}}$. By the analysis in the previous part we have $s^{(k)} = \tilde{\mathcal{O}}\left(\tau_{\alpha}^{\frac{\alpha}{1+2\alpha}}\epsilon^{-\frac{\alpha}{1+2\alpha}}\right)$. Then the gradient complexity of the optimization is bounded by
\begin{align*}
    \tilde{\mathcal{O}}\left(\sqrt{a_{s^{(k)}}^{(k)}/\Delta_k}\right) = \tilde{\mathcal{O}}\left(s^{(k)}\right) = \tilde{\mathcal{O}}\left( \tau_{\alpha}^{\frac{\alpha}{1+2\alpha}} \epsilon^{-\frac{\alpha}{1+2\alpha}}  \right).
\end{align*}
Therefore the quadratic problem can be solved within $\tilde{\mathcal{O}}\left( \tau_{\alpha}^{\frac{\alpha}{1+2\alpha}} \epsilon^{-\frac{\alpha}{1+2\alpha}} \right)$ gradient oracle calls. 



Then we consider the case where $a_{s^{(k)}}^{(k)} \leq 2(\delta_k - u)$. Since the the last iteration (here we refer to the last iteration of $k$) of the algorithm may conflict with the property of the other iterations, we consider bounding the gradient complexity of the quadratic problem in the $k$-th iteration by: (1) considering the previous iteration; (2) bound the difference between two iterations.

If $a_{s^{(k-1)}}^{(k-1)} \leq 2(\delta_{(k-1)} - u)$, since the algorithm does not stop at the previous iteration, we have $\Delta_{k-1} \geq \frac{1}{3}(\delta_{k-1} - u)$. Then, we have
\begin{align*}
    \frac{a_{s^{(k)}}^{(k)}}{\Delta_k} \overset{\text{a}}{=} \mathcal{O}\left( \frac{\delta_k - u}{\Delta_{k-1}} \right) \overset{\text{b}}{=} \mathcal{O}\left( \frac{\delta_{k-1} - u}{\Delta_{k-1}} \right) = \mathcal{O}(1),
\end{align*}
where $\overset{\text{a}}{=}$ follows from $a_{s^{(k)}}^{(k)} \leq 2\left(\delta_k - u\right)$ in the interested case and $\Delta_k = \Theta(\Delta_{k-1})$; $\overset{\text{a}}{=}$ follows from $\delta_k \leq \delta_{k-1}$. If $a_{s^{(k-1)}}^{(k-1)}> 2(\delta_{k-1} - u)$,
\begin{align*}
    \frac{a_{s^{(k)}}^{(k)}}{\Delta_k} = \mathcal{O}\left( \frac{\delta_k - u}{\Delta_{k-1}} \right) = \mathcal{O}\left( \frac{\delta_{k-1} - u}{\Delta_{k-1}} \right) = \mathcal{O}\left(\frac{a_{s^{(k-1)}}^{(k-1)}}{\Delta_{k-1}}\right).
\end{align*}
From the previous analysis of the situation $a_s\geq2(\delta_k - u)$, we have $a_{s^{(k-1)}}^{(k-1)}/\Delta_{k-1} = \tilde{\mathcal{O}}\left( \tau_{\alpha}^{\frac{2\alpha}{1+2\alpha}}\epsilon^{-\frac{2\alpha}{1+2\alpha}} \right)$. These analysis conclude that the gradient oracle calls of the optimization problem is bounded by $\tilde{\mathcal{O}}\left( \tau_{\alpha}^{\frac{\alpha}{1+2\alpha}}\epsilon^{-\frac{\alpha}{1+2\alpha}} \right)$.


Combining the analysis from the eigen extractor part and quadratic problem part, the gradient complexity of each iteartion is $\tilde{\mathcal{O}}\left( \tau_{\alpha}^{\frac{\alpha}{1+2\alpha}} \epsilon^{-\frac{\alpha}{1+2\alpha}} \right)$. To obtain the claimed gradient complexity, it remains proving a $\tilde{\mathcal{O}}(1)$ iteration complexity bound of Algorithm~\ref{alg: smalleigdec}. Noting that by Lemma~\ref{lm: inexectpower} $\Delta_k \geq \frac{1}{2}(\delta_s - u + \lambda_d(\Ab))$, 
\begin{align*}
    \delta_{k+1} - (u - \lambda_d) \leq \delta_k - (u-\lambda_d(\Ab)) - \frac{1}{4}(\delta_k - u + \lambda_d(\Ab)) = \frac{3}{4}(\delta_k - (u - \lambda_d(\Ab))).
\end{align*}
Again by Lemma~\ref{lm: inexectpower}, $\Delta_k \leq \delta_{k} - (u - \lambda_d) \leq \left(\frac{3}{4}\right)^k \left( \delta_0 - (u - \lambda_d)\right)$, which implies that the algorithm will stop within $\tilde{\mathcal{O}}(1)$ iterations.

Then we prove the claim properties of Algorithm~\ref{alg: smalleigdec} output. If Algorithm~\ref{alg: smalleigdec} stops with criteria 1, then Lemma~\ref{lm: inexectpower} shows that $\delta_k - u + \lambda_d(\Ab) \leq \Delta_k \leq \frac{\epsilon}{3}$ and $\delta_k - u + \lambda_d(\Ab) \geq 0$. This indicates that $\lambda_d(\Ab) - \epsilon \leq u - \delta_k \leq \lambda_d(\Ab)$. Thus we obtain an $\epsilon$ approximate estimate of $\lambda_d(\Ab)$. 





Then we consider the case when Algorithm~\ref{alg: smalleigdec} stops with criteria 2. For claim 0, if $\lambda_d(\Ab)\geq 0$, by Lemma~\ref{lm: inexectpower}, $\Delta_k \geq \frac{1}{2}\left( \delta_k - u + \lambda_d(\Ab) \right) \geq \frac{1}{2}\left( \delta_k - u\right) > \frac{1}{3}\left( \delta_k - u\right)$. This violates the condition that the algorithm stops with criteria 2 and therefore $\lambda_d(\Ab) < 0$. For claim 1, by Lemma~\ref{lm: shiftninverse}, $\lambda_1\left(\Ab_{s^{(k)}}^{(k)}\right) \leq \frac{a_{s^{(k)}}^{(k)}}{\left(1-\frac{1}{3}\right)(1-\epsilon_2)}\leq 2a_{s^{(k)}}^{(k)}$. 

For claim 2, combining the stopping criteria $\Delta_k \leq \frac{1}{3}(\delta_k - u)$ and Lemma~\ref{lm: inexectpower} yields $\frac{1}{2}(\delta_k - u + \lambda_d(\Ab)) \leq \Delta_k \leq \frac{1}{3}(\delta_k - u)$, which indicates $\delta_k - u \leq - 3\lambda_d(\Ab)$. Thus we have $a_s \leq 2(\delta_k - u)\leq -6\lambda_d(\Ab)$, which proves claim 2.

For claim 3, we have
\begin{align*}
    \lambda_l\left(2a_{s^{(k)}}^{(k)}\Ib - \Ab_{s^{(k)}}^{(k)}\right) = & \lambda_l\left(2a_{s^{(k)}}^{(k)}\Ib - \Ab_1\right) + \lambda_l\left(2a_{s^{(k)}}^{(k)}\Ib - \Ab_{s^{(k)}}^{(k)}\right) - \lambda_l\left(2a_{s^{(k)}}^{(k)}\Ib - \Ab_1\right)\\
    = & 2a_{s^{(k)}}^{(k)} - \delta_k + u -\lambda_{d-l+1}(\Ab) + \left(\lambda_{d-l+1}(\Ab_1) - \lambda_{d-l+1}\left(\Ab_{s^{(k)}}^{(k)}\right)\right).
\end{align*}
For any $s<s^{(k)}$ $a_s^{(k)} \geq \frac{3}{2}(\delta_k - u)$ and by Lemma~\ref{lm: shiftninverse}, we have $\frac{1}{1-\frac{1}{3}}\cdot\frac{3}{2}(\delta_k - u)\geq \delta_k - u$. Further, for any $l$ satisfying $\lambda_{d-l+1}(\Ab)\leq 0$, we have
\begin{align*}
    \lambda_{d-l+1}\left(\Ab_s^{(k)}\right) \leq \lambda_{d-l+1}(\Ab_1) = (\delta_k - u) + \lambda_{d-l+1}(\Ab_1) \leq \delta_k - u. 
\end{align*}
Similar to the proof of Theorem~\ref{thm: eigenextractorgen}, iteratively applying Lemma~\ref{lm: shiftninverse} yields $\left\|\lambda_{d-l+1}\left(\Ab_{^{(k)}}^{(k)}\right) - \lambda_{d-l+1}(\Ab_1)\right\|\leq s^{(k)}\epsilon_2\leq \frac{\epsilon}{2}$ (since $s^{(k)} \leq d$), which proves claim 3.


\end{proof}

In the following lemma, we omit the $(k)$ superscript on $s$ and $\Ab_s$ indicating the iteration of Algorithm~\ref{alg: smalleigdec}, since we only consider the stopping iteration. And we apply Algorithm~\ref{alg: smalleigdec2} on $\Mb = 2a_s\Ib - \Ab_s$.

\begin{lemma}[Output of Algorithm~\ref{alg: smalleigdec2}]\label{lm: smalleigfind2}
Assume $\Ab\in\mathbb{S}^{d\times d}$ and satisfies the $(\alpha,\tau_{\alpha})$-degeneracy condition. Let $\epsilon$ be the desired accuracy, we assume $\tau_{\alpha} \geq \epsilon$ with loss of generality. If $a_s$ and $\Ab_s$ are the output of Algorithm~\ref{alg: smalleigdec} when stopping with criteria 2. Then with high probability, applying Algorithm~\ref{alg: smalleigdec} on $\Mb = 2a_s\Ib - \Ab_s$ generates $\hat{\delta}$ that satisfies $|(-\hat{\delta} + 2a_s - (\delta_k - u)) - \lambda_d(\Ab)|\leq \epsilon$ with $\tilde{\mathcal{O}}\left(\tau_{\alpha}^{\frac{\alpha}{1+2\alpha}}\epsilon^{-\frac{\alpha}{1+2\alpha}}\right)$ gradient oracle calls.
\end{lemma}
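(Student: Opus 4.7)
The plan is to leverage the structure guaranteed by the criteria~2 exit of Algorithm~\ref{alg: smalleigdec} to reduce smallest-eigenvalue estimation on $\Ab$ to largest-eigenvalue estimation on the PSD matrix $\Mb = 2a_s\Ib - \Ab_s$. Since property~3 of Lemma~\ref{lm: smalleigfind1} asserts $|\lambda_1(\Mb) - (2a_s - (\delta_k - u) - \lambda_d(\Ab))|\le\epsilon/2$ (taking $l=1$, which is valid by property~0 that $\lambda_d(\Ab)\le 0$), it suffices to produce $\hat\delta$ satisfying $|\hat\delta - \lambda_1(\Mb)|\le \epsilon/2$; plugging into $-\hat\delta + 2a_s - (\delta_k - u)$ then recovers $\lambda_d(\Ab)$ within $\epsilon$.

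The accuracy of $\hat\delta$ follows by case analysis on which branch of Algorithm~\ref{alg: smalleigdec2} terminates. In the shift-and-inverse branch the relative precision is $\epsilon/(3\hat\delta)$, which by Lemma~\ref{lm: shiftninverse} yields an $(\epsilon/3)$-additive estimate of $\lambda_1(\Mb)$. In the simultaneous iteration branch, Lemma~\ref{lm: lazysvd} guarantees that the deflation stopping condition $b_k\le (19/20)\hat\delta$ witnesses a constant multiplicative spectral gap $\lambda_1(\Mb)/\lambda_{k+1}(\Mb)$; invoking Lemma~\ref{lm: simultaneous_iteration} with $\tilde{\mathcal{O}}(1)$ iterations produces $\Qb$ whose column span contains a $(1-\epsilon/(3\hat\delta))$-approximation of the top eigenvector, so the exact top eigenvalue of $\Qb^\top\Mb\Qb$ agrees with $\lambda_1(\Mb)$ up to $\epsilon/3$.

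For the gradient oracle count, the core step is to bound the number $k$ of deflation iterations performed before either branch triggers. If iteration $i\le k$ satisfies $b_i>(19/20)\hat\delta$, Lemma~\ref{lm: lazysvd} forces $\lambda_i(\Mb)\gtrsim\hat\delta$, and property~3 of Lemma~\ref{lm: smalleigfind1} translates this into $|\lambda_{d-i+1}(\Ab)|\gtrsim |\lambda_d(\Ab)|$ (using $\hat\delta \le 13|\lambda_d(\Ab)|$, which follows from $a_s\le -6\lambda_d(\Ab)$ in property~2 and the bound $\delta_k - u\le 3|\lambda_d(\Ab)|$ implicit in the criteria~2 proof). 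The $(\alpha,\tau_\alpha)$-degeneracy of $\Ab$ then yields
\begin{equation*}
k\cdot |\lambda_d(\Ab)|^\alpha \;\lesssim\; \sum_{i=1}^{k}|\lambda_{d-i+1}(\Ab)|^\alpha \;\le\; \tau_\alpha^\alpha,
\end{equation*}
i.e. $k = \tilde{\mathcal{O}}(\tau_\alpha^\alpha/|\lambda_d(\Ab)|^\alpha)$. The Branch~1 trigger additionally enforces $k\le\sqrt{\hat\delta/\epsilon}\lesssim\sqrt{|\lambda_d(\Ab)|/\epsilon}$, and balancing the two upper bounds at $|\lambda_d(\Ab)|\asymp\tau_\alpha^{2\alpha/(1+2\alpha)}\epsilon^{1/(1+2\alpha)}$ gives the worst-case estimate $k=\tilde{\mathcal{O}}(\tau_\alpha^{\alpha/(1+2\alpha)}\epsilon^{-\alpha/(1+2\alpha)})$. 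Each deflation step costs $\tilde{\mathcal{O}}(1)$ oracle calls via Lemma~\ref{lm: shiftninverse} with constant accuracy; the final Branch~1 shift-and-inverse with accuracy $\epsilon/(3\hat\delta)$ costs $\tilde{\mathcal{O}}(\sqrt{\hat\delta/\epsilon})$; and Branch~2's simultaneous iteration costs $\tilde{\mathcal{O}}(k)$ matrix-vector products with $\Mb$. All three contributions are dominated by $\tilde{\mathcal{O}}(\tau_\alpha^{\alpha/(1+2\alpha)}\epsilon^{-\alpha/(1+2\alpha)})$.

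The main obstacle is the bookkeeping that connects the abstract Lazysvd-style deflation analysis (which yields a bound on eigenvalues of $\Mb$) with the effective-dimension bound on $\Ab$, since property~3 of Lemma~\ref{lm: smalleigfind1} only gives accurate translation for indices $l$ with $\lambda_{d-l+1}(\Ab)\le 0$. I would handle this by arguing that for $|\lambda_d(\Ab)|\ge \epsilon$ (the nontrivial case), any index $l$ contributing to the Lazysvd stopping rule necessarily satisfies $|\lambda_{d-l+1}(\Ab)|\gtrsim |\lambda_d(\Ab)|>0$, so the translation is valid precisely where it is needed; the remaining triviality $|\lambda_d(\Ab)|<\epsilon$ is handled by outputting $0$.
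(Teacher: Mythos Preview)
Your overall architecture matches the paper's: reduce to estimating $\lambda_1(\Mb)$ to accuracy $\epsilon/2$ via property~3 of Lemma~\ref{lm: smalleigfind1} with $l=1$, then bound the deflation loop of Algorithm~\ref{alg: smalleigdec2} until one of the two branches fires. The paper presents the iteration bound as a case split on whether $|\lambda_d(\Ab)|$ exceeds $2\tau_\alpha^{2\alpha/(1+2\alpha)}\epsilon^{1/(1+2\alpha)}$; your ``balance the two upper bounds on $k$'' is the same tradeoff and reaches the identical crossover.

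Where your sketch does not close is precisely the obstacle you flag, and the proposed fix is circular. To use property~3 to translate $b_l>(19/20)\hat\delta$ into $|\lambda_{d-l+1}(\Ab)|\gtrsim|\lambda_d(\Ab)|$ you already need the hypothesis $\lambda_{d-l+1}(\Ab)\le 0$; saying ``the translation is valid precisely where it is needed'' assumes what must be proved. The paper handles this case by a separate argument that bypasses property~3 entirely: when $\lambda_{d-l+1}(\Ab)>0$, the same small-eigenspace stability used to establish claim~3 of Lemma~\ref{lm: smalleigfind1} gives $\lambda_{d-l+1}(\Ab_s)\ge(\delta_k-u)-\epsilon/2$, hence $\lambda_l(\Mb)\le 2a_s-(\delta_k-u)+\epsilon/2$. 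But $\lambda_1(\Mb)$ exceeds this quantity by $|\lambda_d(\Ab)|-\epsilon$, and since $2a_s-(\delta_k-u)=O(|\lambda_d(\Ab)|)$ (from $a_s\le-6\lambda_d(\Ab)$ and $\delta_k-u\ge|\lambda_d(\Ab)|$), one gets $\lambda_l(\Mb)\le\tfrac{17}{18}\lambda_1(\Mb)$ directly. So Branch~2 triggers at any such $l$ without invoking degeneracy at all; indices with $\lambda_{d-l+1}(\Ab)>0$ terminate the loop rather than needing to be counted. Your degeneracy sum then legitimately runs only over negative eigenvalues, where property~3 is available.

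A minor side remark: the ``output $0$'' fallback for $|\lambda_d(\Ab)|<\epsilon$ is not what the algorithm does, but the conclusion is still fine since in that regime $\hat\delta=O(|\lambda_d(\Ab)|)=O(\epsilon)$ and Branch~1 fires in $O(1)$ iterations.
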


\begin{proof}


We first show that the algorithm will stop within $\tilde{\mathcal{O}}\left( \tau_{\alpha}^{\frac{\alpha}{1+2\alpha}}\epsilon^{-\frac{\alpha}{1+2\alpha}} \right)$ iterations. We consider two scenarios. The first one is that $\lambda_d(\Ab) \leq - 2\tau_{\alpha}^{\frac{2\alpha}{1+2\alpha}}\epsilon^{\frac{1}{1+2\alpha}}$. We consider $l= 2\tau_{\alpha}^{\frac{\alpha}{1+2\alpha}}\epsilon^{-\frac{\alpha}{1+2\alpha}}$. We assume that $\tau_{\alpha}^{\frac{2\alpha}{1+2\alpha}}\epsilon^{\frac{1}{1+2\alpha}} = \Omega(\epsilon)$, otherwise we have $\sqrt{\frac{\hat{\delta}}{\epsilon}} = \mathcal{O}(1)$ and the algorithm will stop within $\mathcal{O}(1)$ iterations because of the stopping criteria $k\geq \sqrt{\frac{\hat{\delta}}{\epsilon}}$. When $\lambda_{d-l+1}(\Ab)\leq 0$, we have
\begin{align*}
    l|-\lambda_{d-l+1}(\Ab)|^{\alpha} \leq \sum_{i=1}^l|-\lambda_{d-i+1}(\Ab)|^{\alpha}\leq \tau_{\alpha}^{\alpha},
\end{align*}
which indicates that $-\lambda_{d-l+1}(\Ab)\leq \frac{1}{2} \tau_{\alpha}^{\frac{2\alpha}{1+2\alpha}}\epsilon^{\frac{1}{1+2\alpha}} \leq -\frac{1}{4}\lambda_d(\Ab)$. Provided this analysis, we have
\begin{align}\label{eq: boundlargeeignum}
\begin{aligned}
     \frac{49}{52}(-\lambda_d(\Ab)) - (-\lambda_{d-l+1}(\Ab)) \overset{\text{a}}{\geq} & \frac{9}{13}(-\lambda_d(\Ab))\\
    \overset{\text{b}}{\geq} & \frac{1}{6}\cdot\frac{9}{13}a_s\\
    \overset{\text{c}}{\geq} & \frac{3}{52}(2a_s - (\delta_k - u)),
\end{aligned}
\end{align}
where $\overset{\text{a}}{\geq}$ is by the control of $\lambda_{d-l+1}(\Ab)$; $\overset{\text{b}}{\geq}$ follows from the property of the Algorithm~\ref{alg: smalleigdec} output; $\overset{\text{c}}{\geq}$ follows from Lemma~\ref{lm: inexectpower} and property of the Algorithm~\ref{alg: smalleigdec} output, which shows that $\delta_k - u\geq -\lambda_d \geq 0$. Rewriting \eqref{eq: boundlargeeignum} yields $ 2a_s - (\delta_k - u) - \lambda_{d-l+1}(\Ab) \leq \frac{49}{52}\left( 2a_s - (\delta_k - u) - \lambda_d(\Ab) \right)$. By the property of Algorithm~\ref{alg: smalleigdec} output,
\begin{align}\label{eq: smalleigdec2rho1}
\begin{aligned}
    \lambda_l(\Mb) = \lambda_l(2a_s\Ib - \Ab_s) \overset{\text{a}}{\leq} & 2a_s - (\delta_k - u) - \lambda_{d-l+1}(\Ab) + \frac{\epsilon}{2}\\
    \leq & \frac{49}{52}\left( 2a_s - (\delta_k - u) - \lambda_d(\Ab) \right) + \frac{\epsilon}{2}\\
    \overset{\text{b}}{\leq} & \frac{49}{52}\lambda_1(2a_s\Ib - \Ab_s) + \frac{101}{104}\epsilon\\
    \overset{\text{c}}{\leq} & \frac{17}{18}\lambda_1(2a_s\Ib-\Ab_s) = \frac{17}{18}\lambda_1(\Mb),
\end{aligned}
\end{align}
where $\overset{\text{a}}{\leq}$ and $\overset{\text{b}}{\leq}$ come from the property of the output of Algorithm~\ref{alg: smalleigdec}; $\overset{\text{c}}{\leq}$ comes from $\tau_{\alpha}^{\frac{2\alpha}{1+2\alpha}}\epsilon^{\frac{1}{1+2\alpha}} = \Omega(\epsilon)$. And when $\lambda_{d-l+1}(\Ab) > 0$, we have $\lambda_{d-l+1}(\Ab_s)\geq (\delta_k-u) - \frac{\epsilon}{2}$. And thus a similar argument to \eqref{eq: smalleigdec2rho1} states that
\begin{align*}
    \lambda_l(\Mb) = \lambda_l(2a_s\Ib - \Ab_s) \leq & 2a_s - (\delta_k - u) + \frac{\epsilon}{2}\\
    \leq & \frac{12}{13}\left(2a_s - (\delta_k - u) - \lambda_d(\Ab)\right) + \frac{\epsilon}{2}\\
    \leq & \frac{12}{13}\lambda_1(2a_s\Ib - \Ab_s) + \frac{25}{26}\epsilon\\
    \leq & \frac{17}{18}\lambda_1(2a_s\Ib - \Ab_s) = \frac{17}{18}\lambda_1(\Mb).
\end{align*}
Given that $\lambda_l(\Mb) \leq \frac{17}{18}\lambda_1(\Mb)$, by Lemma~\ref{lm: lazysvd}, we have $b_l \leq \frac{1}{1-\frac{1}{450}}\lambda_l(\Mb)\leq \frac{1}{1-\frac{1}{450}}\cdot\frac{17}{18}\lambda_1(\Mb) \leq \frac{18}{19} \frac{1}{1-1/450}\hat{\delta}\leq \frac{19}{20}\hat{\delta}$, which meets the stopping criteria $b_k\leq \frac{19}{20}\hat{\delta}$. Since we set $l = 2 \tau_{\alpha}^{\frac{\alpha}{1+2\alpha}}\epsilon^{-\frac{\alpha}{1+2\alpha}}$, this indicates that the algorithm will stop in $\mathcal{O}\left(\tau_{\alpha}^{\frac{\alpha}{1+2\alpha}}\epsilon^{-\frac{\alpha}{1+2\alpha}}\right)$ iterations.









The second scenario is that $\lambda_d(\Ab) \geq - 2\tau_{\alpha}^{\frac{2\alpha}{1+2\alpha}}\epsilon^{\frac{1}{1+2\alpha}}$. In this case, the iteration upper bound in (criteria 1) satisfies that  $\sqrt{\frac{\hat{\delta}}{\epsilon}} \overset{\text{a}}{=} \mathcal{O}\left(\sqrt{\frac{-\lambda_d(\Ab)}{\epsilon}}\right) = \mathcal{O}\left( \sqrt{\frac{\tau_{\alpha}^{\frac{2\alpha}{1+2\alpha}}\epsilon^{\frac{1}{1+2\alpha}}}{\epsilon}} \right) = \mathcal{O}\left( \tau_{\alpha}^{\frac{\alpha}{1+2\alpha}}\epsilon^{-\frac{\alpha}{1+2\alpha}} \right)$, where $\overset{\text{a}}{=}$ comes from $\hat{\delta} = \mathcal{O}(\|\Mb\|) = \mathcal{O}(a_s) = \mathcal{O}(-\lambda_d(\Mb))$. This leads to a $\tilde{\mathcal{O}}\left(\tau_{\alpha}^{\frac{\alpha}{1+2\alpha}}\epsilon^{-\frac{\alpha}{1+2\alpha}}\right)$ iteration bound.

If the algorithm stops with criteria $k\geq\sqrt{\frac{\hat{\delta}}{\epsilon}}$, by Lemma~\ref{lm: shiftninverse}, the shift-and-inverse algorithm takes $\tilde{\mathcal{O}}\left(\sqrt{\frac{\|\Mb\|}{\epsilon}}\right) = \tilde{\mathcal{O}}\left(\sqrt{\frac{\hat{\delta}}{\epsilon}}\right) = \tilde{\mathcal{O}}\left( \tau_{\alpha}^{\frac{\alpha}{1+2\alpha}}\epsilon^{-\frac{\alpha}{1+2\alpha}} \right)$ gradient oracle calls. And also by Lemma~\ref{lm: shiftninverse}, the output satisfies $\lambda_1(\Mb) - \frac{1}{2}\epsilon\leq(1-\frac{\epsilon}{3\hat{\delta}})(1-\epsilon_3)\lambda_1(\Mb)\leq\mathbf{v}^{\top}\Mb\mathbf{v}\leq \lambda_1(\Mb)$. Thus we obtain an $\frac{\epsilon}{2}$-approximation of $\lambda_1(\Mb)$. Further, by the property of Algorithm~\ref{alg: smalleigdec} output, $-\lambda_1(\Mb) + 2a_s - \delta_k + u$ is an $\frac{\epsilon}{2}$-approximation of $\lambda_d(\Ab)$. Combining two error bound shows that we can obtain an $\epsilon$-approximation of $\lambda_d(\Ab)$.

If the algorithm stops with criteria $b_k \leq \frac{19}{20}\hat{\delta}$, we enter the part of simultaneous iteration. Algorithm takes $\tilde{\mathcal{O}}(1)$ iterations of $\Vb_{i} = \Mb\Vb_{i-1}$ and each one takes $k$ gradient oracle calls. Further finding the largest eigenvalue of $\Qb^{\top}\Mb\Qb$ does not take additional gradient calls. Therefore the total gradient complexity of this procedure is $\tilde{\mathcal{O}}\left(k\right) = \tilde{\mathcal{O}}\left(\tau_{\alpha}^{\frac{\alpha}{1+2\alpha}}\epsilon^{-\frac{\alpha}{1+2\alpha}}\right)$. Meanwhile, Lemma~\ref{lm: lazysvd} indicates that
\begin{align*}
    \lambda_k(\Mb)\leq \frac{1}{1-\frac{1}{450}}b_k \leq \frac{1}{1-\frac{1}{450}}\frac{19}{20}\hat{\delta} \leq \frac{20}{21}\lambda_1(\Mb).
\end{align*}
Let $\ub$ the eigenvector corresponding the largest eigenvalue of $\Mb$. Lemma~\ref{lm: simultaneous_iteration} shows that
\begin{align}\label{eq: QMQapproxiamteM}
\begin{aligned}
   \left(\Qb\ub\right)^{\top}\Qb^{\top}\Mb\Qb\left(\Qb^{\top}\ub\right) = & \ub^{\top}\Qb\Qb^{\top}\Mb\Qb\Qb^{\top}\ub - \ub^{\top}\Mb\ub + \lambda_1(\Mb)\\
    = & \ub^{\top}\Qb\Qb^{\top}\Mb(\Ib - \Qb\Qb^{\top})\ub + \ub^{\top}(\Ib - \Qb\Qb^{\top})\Mb\ub + \lambda_1(\Mb)\\
    \geq & \lambda_1(\Mb)(1 - 2\epsilon_4),
\end{aligned}
\end{align}
where $\epsilon_4$ is an arbitrary error that is of the order $\epsilon_4 = \mathrm{poly}(\frac{1}{\epsilon}, d)$. We set $\epsilon_4 \leq \frac{\epsilon}{8\hat{\delta}}$. Plugging this into \eqref{eq: QMQapproxiamteM} yields $\left(\Qb\ub\right)^{\top}\Qb^{\top}\Mb\Qb\left(\Qb^{\top}\ub\right) \geq \lambda_1(\Mb) - \frac{\epsilon}{2}$. Since $\|\Qb\ub\|\leq 1$, $\lambda_1\left(\Qb^{\top}\Mb\Qb\right)$ is an $\frac{\epsilon}{2}$-approximation of $\lambda_1(\Mb)$. Similar to the previous discussion, we can obtain an $\epsilon$ approximation of $\lambda_d(\Ab)$.

Besides, each iteration of the algorithm invokes a shift-and-inverse algorithm with constant multiplicative gap, which consumes $\tilde{\mathcal{O}}(1)$ gradient oracle calls. To conclude, Algorithm~\ref{alg: smalleigdec2} obtains an $\epsilon$-approximation of $\lambda_d(\Ab)$, using $\tilde{\mathcal{O}}\left(\tau_{\alpha}^{\frac{\alpha}{1+2\alpha}}\epsilon^{-\frac{\alpha}{1+2\alpha}} + \tau_{\alpha}^{\frac{\alpha}{1+2\alpha}}\epsilon^{-\frac{\alpha}{1+2\alpha}}\right) = \tilde{\mathcal{O}}\left(\tau_{\alpha}^{\frac{\alpha}{1+2\alpha}}\epsilon^{-\frac{\alpha}{1+2\alpha}}\right)$ gradient oracle calls.



\end{proof}

\subsubsection{Proof of Theorem~\ref{thm:non-convex-m}}
We give the proof of Theorem \ref{thm:non-convex-m} below.
\begin{proof}[Proof of Theorem \ref{thm:non-convex-m}]
    In each call of Algorithm \ref{alg:Cubic-search}, the problem \ref{equ:Cubicsubproblem1} is solved polylog times. We consider the inner gradient complexity of solving problem~\ref{equ:Cubicsubproblem1}. We denote $g(\y)=f_{\x_k}(\y) + \frac{H\rtemp}{4}\|\y-\x_k\|^2$ and use the eigen extractor in Algorithm~\ref{alg: eigenextrator} to extract some of the large eigenvectors, and use accelerated methods to optimize the remainder of the problem. Specifically, $\lambda_l(\nabla^2 g(\y))\le \lambda_l f(\y)+H\rtemp \le \frac{\tau_\alpha}{l^{\frac{1}{\alpha}}} + H\rtemp$. As in the proof of Theorem~\ref{thm: quadratic}, we choose $k=\tilde\Theta\left(\tau_\alpha^{\frac{\alpha}{1+2\alpha}}(H\rtemp)^{-\frac{\alpha}{1+2\alpha}} \right)$. This requires $\tilde{\mathcal{O}}\left(\tau_\alpha^{\frac{\alpha}{1+2\alpha}}(H\rtemp)^{-\frac{\alpha}{1+2\alpha}}\right)$ gradient oracle calls. Then according to the accelerated gradient-based algorithms for optimization problems, the optimization of the remainder term needs $\tilde{\mathcal{O}}\left(\sqrt{\left(\frac{\tau_\alpha}{k^{\frac{1}{\alpha}}}+H\rtemp\right)\cdot (H\rtemp)^{-1}}\right)=\tilde{\mathcal{O}}\left(\tau_\alpha^{\frac{\alpha}{1+2\alpha}}(H\rtemp)^{-\frac{\alpha}{1+2\alpha}}\right)$ gradient oracle calls. Therefore, the overall number of gradient oracle calls is $\tilde{\mathcal{O}}\left(\tau_\alpha^{\frac{\alpha}{1+2\alpha}}(H\rtemp)^{-\frac{\alpha}{1+2\alpha}}\right)$.

    Now we consider the outer iteration to find an $\left(\epsilon,\sqrt{H\epsilon}\right)$-approximate second-order stationary point for the non-convex objective. To obtain it, we need to find a sequence of $r_k$ such that $r_k \ge \sqrt{\frac{\epsilon}{H}}$ and $r_N \leq \cO\left(\sqrt{\frac{\epsilon}{H}}\right)$ where $N = \cO(H^{1/2}\Delta\epsilon^{-3/2})$. We note that $\rtemp \ge \Omega\left(\sqrt{\frac{\epsilon}{H}}\right)$. Ignoring all the logarithmic factors, the total gradient complexity is:
    \begin{equation}
        \begin{split}
            &\quad\sum_{k=1}^N \left(\tilde{\mathcal O}\left(\tau_{\alpha}^{\frac{\alpha}{1+2\alpha}}\left(H\rtemp\}\right)^{-\frac{\alpha}{1+2\alpha}} \right) + 1 \right)\\
            &\le \tilde{\mathcal{O}}\left(H^{-\frac{\alpha}{2+4\alpha}}\cdot\tau_\alpha^{\frac{\alpha}{1+2\alpha}}\epsilon^{-\frac{\alpha}{2+4\alpha}}\cdot N+N \right)\\
            &\le \tilde{\mathcal{O}}\left(H^{\frac{1+\alpha}{2+4\alpha}} \cdot \Delta \cdot\tau_\alpha^{\frac{\alpha}{1+2\alpha}} \epsilon^{-\frac{3+7\alpha}{2+4\alpha}}\right).
        \end{split}
    \end{equation}
\end{proof}

\subsubsection{Properties of Approximate Solutions}
To prove Theorem~\ref{thm:non-convex}, we first analyze the properties of the output of Algorithm~\ref{alg:Cubic-search}. Suppose $M\ge H$ is the regularization parameter. For the ease of notations, we define $h_{\x, r}(\y)$, $\bar g_{\x, M}(\y)$, $T_M(\x)$ and $\bar r_{M}$ as follows:
\begin{equation}
    \begin{split}
        h_{\x,a}(\y) &\overset{\triangle}{=} f_{\x}(\y) + \frac{a}{4}\|\y-\x\|^2,\\
        g_{\x, M}(\y) &\overset{\triangle}{=} f_\x(\y) + \frac{M}{6}\|\y-\x\|^3,\\
        T_M(\x) &\overset{\triangle}{=} \argmin_{\y\in\R^d} g_{\x, M}(\y),\\
        \bar r_M&\overset{\triangle}{=} \|T_M(\x)-\x\|.
    \end{split}
\end{equation}
Note that if we perform an exact Cubic regularization optimization step at $\x_k$, we arrive at $T_M(\x_k)$. However, the exact solution $T_M(\x_k)$ cannot be directly computed with gradient oracles. Instead, we optimize $g_{\x,M}(\y)$ to give an inexact solution. The following lemma establishes the link between optimizing $g_{\x,M}(\y)$ and optimizing $h_{\x,M\bar r_M}(\y)$:
\begin{lemma}
    $h_{\x,\bar r_M}(\y)$ is convex, and
    \begin{equation}
        T_M(\x) \in \argmin_{\y\in \R^d} h_{\x, M\bar r_M}(\y).
    \end{equation}
    \label{lem:samesolution}
\end{lemma}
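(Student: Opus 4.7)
The plan is to reduce the lemma to the first- and second-order optimality conditions for Nesterov--Polyak's cubic regularization minimizer $T_M(\x)$; I interpret the first claim as asserting convexity of $h_{\x,M\bar r_M}$ (matching the subscript in the second claim).

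First I would compute
$$\nabla g_{\x,M}(\y) = \nabla f(\x) + \nabla^2 f(\x)(\y-\x) + \tfrac{M}{2}\|\y-\x\|(\y-\x),$$
and observe that the first-order stationarity of $T_M(\x)$ gives
\begin{equation}\label{eq:plan-fo}
\nabla f(\x) + \nabla^2 f(\x)(T_M(\x)-\x) + \tfrac{M\bar r_M}{2}(T_M(\x)-\x) = 0.
\end{equation}
Differentiating $h_{\x,M\bar r_M}(\y) = f_\x(\y) + \tfrac{M\bar r_M}{4}\|\y-\x\|^2$ directly produces the same expression on the left-hand side, so \eqref{eq:plan-fo} is equivalent to $\nabla h_{\x,M\bar r_M}(T_M(\x))=0$. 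This handles the optimality part of the claim, provided we know $h_{\x,M\bar r_M}$ is convex (so that a stationary point is a global minimizer).

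For convexity, the plan is to invoke the standard Nesterov--Polyak matrix inequality: since $T_M(\x)$ is a global minimizer of $g_{\x,M}$,
$$\nabla^2 f(\x) + \tfrac{M\bar r_M}{2}\I \;\succeq\; 0,$$
which is precisely $\nabla^2 h_{\x,M\bar r_M}(\y) \succeq 0$ for every $\y$ (note $\nabla^2 h_{\x,M\bar r_M}$ is independent of $\y$). Combined with the previous paragraph, $T_M(\x)$ is a global minimizer of the convex quadratic $h_{\x,M\bar r_M}$.

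The main obstacle is justifying the matrix inequality. One is tempted to read it off from $\nabla^2 g_{\x,M}(T_M(\x))\succeq 0$, but the Hessian of $\tfrac{M}{6}\|\cdot\|^3$ at a point at distance $\bar r_M$ from the center contributes both $\tfrac{M\bar r_M}{2}\I$ and an additional rank-one positive term $\tfrac{M}{2\bar r_M}(T_M(\x)-\x)(T_M(\x)-\x)^\top$, so the inequality does not follow from second-order necessary conditions alone. The standard trick, which I would cite from \cite{nesterov_cubic_2006} rather than rederive, exploits that $T_M(\x)$ is a \emph{global} minimizer of $g_{\x,M}$: comparing $g_{\x,M}(T_M(\x))$ against $g_{\x,M}(2\x - T_M(\x))$ (the reflection of $T_M(\x)$ through $\x$) and combining with \eqref{eq:plan-fo} yields the inequality. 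With that in hand, both assertions of the lemma follow.
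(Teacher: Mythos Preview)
Your proposal is correct and matches the paper's proof essentially step for step: the paper computes $\nabla^2 h_{\x,M\bar r_M}(\y)=\nabla^2 f(\x)+\tfrac{M\bar r_M}{2}\I$, invokes Proposition~1 of \cite{nesterov_cubic_2006} (stated in the paper as Lemma~\ref{lem:Cubic1}) for the positive-semidefiniteness, and then checks $\nabla h_{\x,M\bar r_M}(T_M(\x))=\nabla g_{\x,M}(T_M(\x))=0$. Your extra remark about why the matrix inequality does not follow from second-order necessary conditions alone is a nice clarification, but the argument and the citation target are the same.
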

\begin{proof}
    The Hessian matrix of $h_{\x,\bar r_M}$ is:
    \begin{equation}
        \nabla^2 h_{\x,M\bar r_M}(\y) = \nabla^2 f(\x) + \frac{M\bar r_M}{2}\cdot \I.
    \end{equation}
    According to Lemma~\ref{lem:Cubic1}, $\nabla^2 h_{\x,M\bar r_M}(\y)\succeq \mathbf 0$. Therefore, $h_{\x, M\bar r_M}(\y)$ is convex. Using the first-order condition of $\tx_{k+1}$, we have:
    \begin{equation}
        \nabla g_{\x,M}(T_M(\x))=\nabla f(\x) + \langle\nabla^2 f(\x), T_M(\x)-\x\rangle+\frac{M}{2}\bar r_M\cdot(T_M(\x)-\x)=\mathbf 0.
    \end{equation}
    We can verify that 
    \begin{equation}
        \nabla h_{\x, M\bar r_M}(T_M(\x)) = \nabla g_{\x, M}(T_M(\x))=\mathbf 0.
    \end{equation}
    Therefore, $T_M(\x)$ is also a minimizer of $h_{\x, M\bar r_M}$, namely
    \begin{equation}
        T_M(\x) \in \argmin_{\y\in \R^d} h_{\x, M\bar r_M}(\y).
    \end{equation}
\end{proof}

With Lemma~\ref{lem:samesolution}, we optimize $h_{\x,M\bar r_M}(\y)$ instead of $g_{\x,M}(\y)$ in each Cubic regularization step. The optimization of $h_{\x,M\bar r_M}(\y)$ can be handled by Algorithm~\ref{alg: esgd}, as $h_{\x,M\bar r_M}(\y)$ is a convex quadratic function. However, the function $h_{\x,M\bar r_M}(\y)$ cannot be directly computed and optimized, as $\bar r_M$ is unknown in prior. We propose Algorithm~\ref{alg:Cubic-search} to search for the parameter $\bar r_H$. We add an $c_2\sqrt{\frac{\epsilon}{H}}$ term to $r_{k+1}$ as a regularization, to make $h_{\x_k, Hr_{k+1}}(\y)$ $c_2\sqrt{\frac{\epsilon}{H}}$-strongly convex.

\begin{lemma}
    If $M_1\ge M_2$ and $\|\nabla f(\x)\|>0$, then $\bar r_{M_1}\le\bar r_{M_2}$.
    \label{lem:regularization}
\end{lemma}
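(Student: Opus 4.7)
The proof is a standard swap/exchange argument applied to the cubic-regularized surrogate $g_{\x,M}(\y) = f_{\x}(\y) + \frac{M}{6}\|\y-\x\|^{3}$. Write $\y_{i} = T_{M_{i}}(\x)$ and $r_{i} = \bar r_{M_{i}} = \|\y_{i}-\x\|$ for $i=1,2$. By definition $\y_{i}$ minimizes $g_{\x,M_{i}}$, so
\begin{align*}
f_{\x}(\y_{1}) + \tfrac{M_{1}}{6}r_{1}^{3} &\le f_{\x}(\y_{2}) + \tfrac{M_{1}}{6}r_{2}^{3}, \\
f_{\x}(\y_{2}) + \tfrac{M_{2}}{6}r_{2}^{3} &\le f_{\x}(\y_{1}) + \tfrac{M_{2}}{6}r_{1}^{3}.
\end{align*}
Adding these two inequalities cancels the $f_{\x}$ terms and collapses to
\[
(M_{1}-M_{2})\bigl(r_{1}^{3}-r_{2}^{3}\bigr) \le 0.
\]
Since $M_{1}\ge M_{2}$, we conclude $r_{1}^{3}\le r_{2}^{3}$ and hence $\bar r_{M_{1}}\le\bar r_{M_{2}}$.

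\textbf{Technical points to address.} First, the minimizer $T_{M}(\x)$ must be well defined: because the cubic term dominates at infinity and $f_{\x}$ is a quadratic, $g_{\x,M}$ is coercive and attains its infimum, so the argmin is non-empty (if non-unique, pick any representative; the swap argument only needs two valid minimizers). Second, the hypothesis $\|\nabla f(\x)\| > 0$ enters only to exclude the trivial case $\x$ itself being a stationary point of $g_{\x,M}$ (in which $\bar r_{M}=0$ uniformly and the conclusion is vacuous); when $\|\nabla f(\x)\| > 0$, the first-order condition $\nabla f(\x) + (\nabla^{2} f(\x) + \tfrac{Mr}{2}\I)(\y-\x) = 0$ forces $\y\ne\x$, so $r_{i} > 0$ and the cube root is strictly monotone.

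\textbf{Main obstacle.} There is no real obstacle: the argument is two lines once one writes both optimality inequalities in the right form. The only subtlety is that if one tried to work directly with the first-order equation $r = \|(\nabla^{2} f(\x) + \tfrac{Mr}{2}\I)^{\dagger}\nabla f(\x)\|$ in the presence of negative eigenvalues of $\nabla^{2} f(\x)$, one would need to invoke the Cubic-regularization positivity result $\nabla^{2} f(\x) + \tfrac{M\bar r_{M}}{2}\I \succeq \mathbf 0$ (already recorded in the paper) to ensure monotonicity of the norm in $M$ and $r$. The swap argument bypasses this entirely and is therefore the cleaner route.
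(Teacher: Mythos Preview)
Your swap argument is correct and in fact cleaner than the paper's route. The paper argues by contradiction: assuming $\bar r_{M_1}>\bar r_{M_2}$, it works with the auxiliary quadratic surrogate $h_{\x,a}(\y)=f_{\x}(\y)+\tfrac{a}{4}\|\y-\x\|^{2}$ and the earlier lemma that $T_{M}(\x)$ also minimizes $h_{\x,M\bar r_{M}}$, chaining optimality inequalities for $h$ at two different coefficients to reach $h_{\x,M_{1}\bar r_{M_{1}}}(T_{M_{1}}(\x))<h_{\x,M_{1}\bar r_{M_{1}}}(T_{M_{1}}(\x))$. Your approach bypasses the quadratic surrogate entirely and works directly with the defining optimality of $T_{M}(\x)$ for $g_{\x,M}$; the two-line addition $(M_{1}-M_{2})(r_{1}^{3}-r_{2}^{3})\le 0$ does all the work. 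What this buys you is that you never need to invoke the equivalence between cubic and quadratic subproblems, and you do not need the contradiction hypothesis to make a strict inequality go through. Indeed, as you note, the assumption $\|\nabla f(\x)\|>0$ is inessential for your argument (both sides vanish when the gradient does), whereas the paper uses $\bar r_{M}>0$ to obtain the strict inequality that closes its contradiction.
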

\begin{proof}
    We prove the lemma by contradiction. $\|\nabla f(\x)\|>0$ implies that $\bar r_{M}(\x)>0$. If $\bar r_{M_1}>\bar r_{M_2}$, we have:
    \begin{equation}
        \begin{split}
            h_{\x, M_1\bar r_{M_1}}(T_{M_1}(x))&\overset{a}{\le} f_{\x}(T_{M_2}(\x)) + \frac{M_1\bar r_{M_1}}{4} \|T_{M_2}(\x)-\x\|^2\\
            &= f_{\x}(T_{M_2}(\x)) + \frac{M_1\bar r_{M_1}}{4}\|T_{M_2}(\x)-\x\|^2\\
            &= f_{\x}(T_{M_2}(\x)) + \frac{M_1\bar r_{M_2}}{4} \|T_{M_2}(\x)-\x\|^2 + \frac{M_1(\bar r_{M_1}-\bar r_{M_2})}{4} \|T_{M_2}(\x)-\x\|^2\\
            &\overset{b}{\le} f_{\x}(T_{M_1}(\x)) + \frac{M_1\bar r_{M_2}}{4} \|T_{M_1}(\x)-\x\|^2 + \frac{M_1(\bar r_{M_1}-\bar r_{M_2})}{4} \|T_{M_2}(\x)-\x\|^2\\
            &\overset{c}{<} f_{\x}(T_{M_1}(\x)) + \frac{M_1\bar r_{M_2}}{4} \|T_{M_1}(\x)-\x\|^2 + \frac{M_1(\bar r_{M_1}-\bar r_{M_2})}{4} \|T_{M_1}(\x)-\x\|^2\\
            &= h_{\x, M_1\bar r_{M_1}}(T_{M_1}(x)),
        \end{split}
        \label{equ:contradiction}
    \end{equation}
    which is a contradiction. In \eqref{equ:contradiction}, $\overset{a}{\le}$ uses $T_{M_1}(\x) \in \argmin_{\y\in \R^d} h_{\x, M_1\bar r_{M_1}}(\y)$, $\overset{b}{\le}$ uses $T_{M_2}(\x) \in \argmin_{\y\in \R^d} h_{\x, M_2\bar r_{M_1}}(\y)$, and $\overset{c}{<}$ uses $\bar r_{M}(\x) > 0$.
\end{proof}
 Define $\tx_{k+1} = \argmin_{\y\in \R^d} h_{\x_k, Hu_{k+1}} $, and $\tilde r_{k+1} = \left\|\tx_{k+1}-\x_k\right\|$. We note that $\tx_{k+1} = T_{\frac{Hu_{k+1}}{\tilde r_{k+1}}}(\x_k)$. We have the following Lemmas:
\begin{lemma}
    In Algorithm~\ref{alg:cubicesgd}, if $r_{k+1} \ge (4c_1+2c_2)\cdot\sqrt{\frac{\epsilon}{H}}$ and $\errorC < \frac{c_2^3}{2}\cdot \sqrt{\frac{\epsilon^3}{H}}$, then $f(\x_{k+1})\le f(\x_k) - \left(\frac{8c_1^3+24c_1^2c_2+12c_1^2c_2+c_2^3}{12(4c_1+2c_2)^3}\right)\cdot Hr_{k+1}^3$. 
\end{lemma}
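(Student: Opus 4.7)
The overall strategy is to mirror the Nesterov--Polyak cubic-regularization descent lemma while carefully tracking two inexactness budgets: the $\errorC$-approximate minimization of the subproblem $h_{\x_k,H\rtemp}$ and the binary-search tolerance $c_2\sqrt{\epsilon/H}$ relating $\rtemp$ to $r_{k+1}=\|\y_{k+1}-\x_k\|$. First, $H$-Hessian Lipschitz continuity yields
$$
f(\y_{k+1})-f(\x_k) \le \langle\nabla f(\x_k), \y_{k+1}-\x_k\rangle + \tfrac12\langle \nabla^2 f(\x_k)(\y_{k+1}-\x_k),\y_{k+1}-\x_k\rangle + \tfrac{H}{6}r_{k+1}^3,
$$
and I would introduce the gradient residual $\bg \defeq \nabla h_{\x_k,H\rtemp}(\y_{k+1}) = \nabla f(\x_k)+\nabla^2 f(\x_k)(\y_{k+1}-\x_k)+\tfrac{H\rtemp}{2}(\y_{k+1}-\x_k)$, which vanishes for an exact subproblem minimizer. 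Solving this identity for $\langle\nabla f(\x_k),\y_{k+1}-\x_k\rangle$ and substituting leaves an inequality with the same shape as the exact cubic descent, but with cubic coefficient $H\rtemp$ instead of $H\bar r_H$ and with a residual inner product $\langle\bg,\y_{k+1}-\x_k\rangle$ to be controlled.

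Next, the algorithmic choice $l_{k+1}=\max\{0,-2\lambda/H\}+(5c_1+2c_2)\sqrt{\epsilon/H}$ together with the accuracy guarantee $|\lambda-\lambda_d(\nabla^2 f(\x_k))|\le\tfrac{c_1}{2}\sqrt{\epsilon H}$ of the smallest-eigenvalue finder gives $H\rtemp \ge -2\lambda_d(\nabla^2 f(\x_k))+(4c_1+2c_2)\sqrt{\epsilon H}$, hence
$$
\nabla^2 f(\x_k) \succeq -\tfrac{H\rtemp}{2}\I + (2c_1+c_2)\sqrt{\epsilon H}\,\I,
$$
so $h_{\x_k,H\rtemp}$ is $\mu'$-strongly convex with $\mu'=(2c_1+c_2)\sqrt{\epsilon H}$. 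Plugging this operator inequality into the quadratic Taylor term and invoking the binary-search exit condition $\rtemp \ge r_{k+1}-c_2\sqrt{\epsilon/H}$ to lower bound $-\tfrac{H\rtemp}{2}r_{k+1}^2$ collapses the estimate to
$$
f(\y_{k+1})-f(\x_k) \le \langle\bg, \y_{k+1}-\x_k\rangle - \tfrac{Hr_{k+1}^3}{12} - \left(c_1+\tfrac{c_2}{4}\right)\sqrt{\epsilon H}\,r_{k+1}^2,
$$
where the $-\tfrac{Hr_{k+1}^3}{12}$ is the familiar Nesterov--Polyak term and the last term is a spare negative curvature bonus purchased by the choice of $l_{k+1}$.

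The final step, which I expect to be the main obstacle, is to absorb $\langle\bg,\y_{k+1}-\x_k\rangle \le \|\bg\|\,r_{k+1}$ into the spare bonus. Using $\mu'$-strong convexity of $h_{\x_k,H\rtemp}$, the $\errorC$-accuracy of $\y_{k+1}$ gives $\|\y_{k+1}-\tx_{k+1}\|\le\sqrt{2\errorC/\mu'}$; combining with local smoothness of $h_{\x_k,H\rtemp}$ (which I would control via $H$-Hessian Lipschitz rather than a global $L$-smoothness assumption) yields $\|\bg\|=\cO(\sqrt{L_h \errorC})$. The hypotheses $\errorC<\tfrac{c_2^3}{2}\sqrt{\epsilon^3/H}$ and $r_{k+1}\ge(4c_1+2c_2)\sqrt{\epsilon/H}$ are precisely what is needed to force $\|\bg\|\,r_{k+1}\le(c_1+c_2/4)\sqrt{\epsilon H}\,r_{k+1}^2$; then cubing $r_{k+1}\ge(4c_1+2c_2)\sqrt{\epsilon/H}$ lets me re-express the remaining $\sqrt{\epsilon H}\,r_{k+1}^2$ bonus as the stated fraction of $Hr_{k+1}^3$, producing the constant $\frac{8c_1^3+24c_1^2c_2+12c_1c_2^2+c_2^3}{12(4c_1+2c_2)^3}$. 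The delicate numerical bookkeeping and the control of $L_h$ without a global smoothness assumption are where the real care is needed; everything else is a routine extension of the Nesterov--Polyak descent analysis.
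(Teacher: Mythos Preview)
Your approach differs from the paper's, and the step you flag as delicate --- controlling $\|\bg\|$ via ``local smoothness of $h_{\x_k,H\rtemp}$'' --- is in fact a genuine gap under the lemma's stated hypothesis.  The subproblem $h_{\x_k,H\rtemp}$ is a fixed quadratic in $\y$ with constant Hessian $\nabla^2 f(\x_k)+\tfrac{H\rtemp}{2}\I$, so its local and global smoothness coincide and equal $L_h=\lambda_{\max}(\nabla^2 f(\x_k))+\tfrac{H\rtemp}{2}$.  The $H$-Hessian Lipschitz property of $f$ gives no bound whatsoever on $\lambda_{\max}(\nabla^2 f(\x_k))$; the only available bound is $\lambda_{\max}(\nabla^2 f(\x_k))\le\tau_\alpha$ from the degeneracy assumption.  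Consequently your estimate $\|\bg\|=\cO(\sqrt{L_h\errorC})$ becomes $\cO(\sqrt{\tau_\alpha\errorC})$, and forcing $\|\bg\|\,r_{k+1}\le(c_1+c_2/4)\sqrt{\epsilon H}\,r_{k+1}^2$ then requires $\errorC$ to scale with $\tau_\alpha^{-1}$ (indeed the \emph{next} lemma in the paper imposes exactly such a condition, $\errorC\lesssim\epsilon^{2.5}H^{0.5}/\tau_\alpha^2$, for the purpose of bounding gradient norms).  The present lemma, however, assumes only $\errorC<\tfrac{c_2^3}{2}\sqrt{\epsilon^3/H}$ with no $\tau_\alpha$ dependence, so your residual-control step cannot close.

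The paper sidesteps the gradient residual entirely.  It introduces the exact minimizer $\tx_{k+1}$ of $h_{\x_k,Hu_{k+1}}$, uses Lemma~\ref{lem:samesolution} to identify $\tx_{k+1}=T_M(\x_k)$ for $M=Hu_{k+1}/\tilde r_{k+1}$, and then invokes Nesterov's cubic descent lemma (Lemma~\ref{lem:Cubic4}) at $\tx_{k+1}$.  The passage from $\tx_{k+1}$ to $\x_{k+1}$ is handled purely at the level of function values: strong convexity converts the $\errorC$ objective gap into $|r_{k+1}-\tilde r_{k+1}|\le c_2\sqrt{\epsilon/H}$, which together with $r_{k+1}\ge(4c_1+2c_2)\sqrt{\epsilon/H}$ gives $\tilde r_{k+1}\in\bigl[\tfrac{4c_1+c_2}{4c_1+2c_2},\tfrac{4c_1+3c_2}{4c_1+2c_2}\bigr]\,r_{k+1}$, and the stated constant then drops out of $-\tfrac{M}{12}\tilde r_{k+1}^3$ plus the additive $\errorC$.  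No smoothness constant for $h$ is ever needed, which is precisely why the $\tau_\alpha$-free hypothesis on $\errorC$ suffices here.
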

\begin{proof}


    The objective function $h_{\x_k,Hu_{k+1}}$ in Algorithm~\ref{alg:Cubic-search} is $c_2\sqrt{\epsilon H}$-strongly convex. Therefore, to ensure that $\x_{k+1}$ is an $\errorC$-approximated solution, we have $|r_{k+1}-\tilde r_{k+1}|\le \|\tx_{k+1}-\x_{k+1}\|\le c_2\sqrt{\frac{\epsilon}{H}}$. Combined with the assumption that $r_{k+1} \ge (4c_1+2c_2)\sqrt{\frac{\epsilon}{H}}$, we have $\frac{4c_1+c_2}{4c_1+2c_2} r_{k+1} \le \tilde r_{k+1}\le \frac{4c_1+3c_2}{4c_1+2c_2} r_{k+1}$.

    Finally, using Lemma~\ref{lem:Cubic4}, we have:
    \begin{equation}
        \begin{split}
            &\quad f(\x_{k+1})-f(\x_k)\\
            &= f(\x_{k+1})-g_{\x,\frac{Hu_{k+1}}{\tilde r_{k+1}}}(\x_{k+1}) + g_{\x,\frac{Hu_{k+1}}{\tilde r_{k+1}}}(\x_{k+1}) - g_{\x,\frac{Hu_{k+1}}{\tilde r_{k+1}}}(\tx_{k+1}) + g_{\x,\frac{Hu_{k+1}}{\tilde r_{k+1}}}(\tx_{k+1}) - f(\x_k)\\ 
            &\leq \errorC + g_{\x,\frac{Hu_{k+1}}{\tilde r_{k+1}}}(\tx_{k+1}) - f(\x_k)\\
            &\leq \frac{c_2^3}{4}\cdot\sqrt{\frac{\epsilon^3}{H}} - \frac{1}{12}\cdot \frac{Hu_{k+1}}{\tilde r_{k+1}}\cdot \tilde r_{k+1}^3\\
            &\leq \frac{c_2^3}{4\cdot (4c_1+2c_2)^3}\cdot Hr_{k+1}^3  - \frac{1}{6}\cdot \frac{(4c_1+c_2)^2}{(4c_1+2c_2)^2}\cdot Hr_{k+1}^3\\
            &=-\left(\frac{8c_1^3+24c_1^2c_2+12c_1^2c_2+c_2^3}{12(4c_1+2c_2)^3}\right)\cdot Hr_{k+1}^3.
        \end{split}
    \end{equation}
\end{proof}

\begin{lemma}
    In Algorithm~\ref{alg:cubicesgd}, if $r_{k+1} \ge (4c_1+2c_2)\cdot\sqrt{\frac{\epsilon}{H}}$ and $\errorC < 2^{-c}\min\left\{\frac{c_2^3}{2}\cdot \sqrt{\frac{\epsilon^3}{H}}, \frac{(4c_1+2c_2)^4\epsilon^{2.5}H^{0.5}}{2\tau_\alpha^2}\right\}$, we have:
    \begin{equation}
        \|\nabla f(\x_{k+1})\| < \left(1+\frac{(4c_1+3c_2)(12c_1+7c_2)}{2(4c_1+2c_2)^2}\right)\cdot Hr_{k+1}^2,
    \end{equation}
    \begin{equation}
        \nabla^2 f(\x_{k+1})\succeq -\left(1+\frac{c_2}{4c_1+2c_2}\right)\cdot Hr_{k+1}\I.
    \end{equation}
\end{lemma}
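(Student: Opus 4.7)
The proof mirrors the standard cubic-regularization analysis (Nesterov--Polyak) but must absorb two error sources from Algorithm~\ref{alg:cubicesgd}: the $\errorC$-approximate minimization of the strongly convex sub-problem $h_{\x_k,Hu_{k+1}}$, and the binary-search slack $c_2\sqrt{\epsilon/H}$ in matching the regularization $u_{k+1}$ to the resulting step length $r_{k+1}$. The preceding lemma already established the ratio bound $\tfrac{4c_1+c_2}{4c_1+2c_2}r_{k+1}\le \tilde r_{k+1}\le \tfrac{4c_1+3c_2}{4c_1+2c_2}r_{k+1}$, and an analogous control $u_{k+1}\in\bigl[\tfrac{4c_1+c_2}{4c_1+2c_2},\tfrac{4c_1+3c_2}{4c_1+2c_2}\bigr]r_{k+1}$ follows from the termination criterion of the inner binary search. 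Moreover, the choice of $l_{k+1}$ combined with the $\tfrac{c_1}{2}\sqrt{\epsilon H}$-accurate smallest-eigenvalue estimate ensures $\nabla^2 h_{\x_k,Hu_{k+1}}\succeq (2c_1+c_2)\sqrt{\epsilon H}\,\I$, so strong convexity plus $\errorC$-approximate optimality of $\x_{k+1}$ yields $\|\x_{k+1}-\tx_{k+1}\|\le \sqrt{\tfrac{2\errorC}{(2c_1+c_2)\sqrt{\epsilon H}}}=O(c_2\sqrt{\epsilon/H})$ under the assumed bound on $\errorC$.

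For the gradient estimate, I would split
\begin{equation*}
\nabla f(\x_{k+1}) = \nabla h_{\x_k,Hu_{k+1}}(\x_{k+1}) \;-\; \tfrac{Hu_{k+1}}{2}(\x_{k+1}-\x_k) \;+\; \bigl[\nabla f(\x_{k+1}) - \nabla f_{\x_k}(\x_{k+1})\bigr].
\end{equation*}
The first summand has norm at most $\sqrt{2L'\errorC}$ with smoothness constant $L'\le \tau_\alpha + \tfrac{Hu_{k+1}}{2}$ for $h$; the branch $\errorC\le \tfrac{(4c_1+2c_2)^4\epsilon^{5/2}H^{1/2}}{2\tau_\alpha^2}$ is calibrated precisely to render this contribution subleading to $Hr_{k+1}^2$. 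The second summand equals $\tfrac{Hu_{k+1}r_{k+1}}{2}$, and $H$-Hessian Lipschitz continuity bounds the third by $\tfrac{H}{2}r_{k+1}^2$. Plugging in the ratio bound on $u_{k+1}/r_{k+1}$ and collecting the quadratic-in-$r_{k+1}$ terms yields the stated coefficient $1+\tfrac{(4c_1+3c_2)(12c_1+7c_2)}{2(4c_1+2c_2)^2}$.

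For the Hessian estimate, the Nesterov--Polyak second-order optimality condition at the exact cubic minimizer $\tx_{k+1}$ (applied with $M=Hu_{k+1}/\tilde r_{k+1}$ and $\|s\|=\tilde r_{k+1}$) gives $\nabla^2 f(\x_k) + \tfrac{Hu_{k+1}}{2}\I\succeq 0$, while $H$-Hessian Lipschitz continuity yields $\nabla^2 f(\x_{k+1})\succeq \nabla^2 f(\x_k) - Hr_{k+1}\I$. Combining and substituting $u_{k+1}\le \tfrac{4c_1+3c_2}{4c_1+2c_2}r_{k+1}$ after simplification delivers the claimed $\nabla^2 f(\x_{k+1})\succeq -\bigl(1+\tfrac{c_2}{4c_1+2c_2}\bigr)Hr_{k+1}\I$. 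The main technical hurdle I anticipate is bookkeeping the constants tightly: in particular, extracting the clean relation $|u_{k+1}-r_{k+1}|\le c_2\sqrt{\epsilon/H}$ (rather than the looser $u_{k+1}\in[r_{k+1},2r_{k+1}]$ permitted by the outer binary-search envelope) from the nested loops of Algorithm~\ref{alg:Cubic-search}, and verifying that both branches of the $\errorC$ assumption are tuned so that neither the optimization residual nor the $\tx_{k+1}$-approximation error disrupts the target coefficients.
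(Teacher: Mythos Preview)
Your route to the Hessian estimate does not reach the stated constant. From $\nabla^2 f(\x_k)+\tfrac{Hu_{k+1}}{2}\I\succeq 0$ and Hessian--Lipschitz over the full step $\|\x_{k+1}-\x_k\|=r_{k+1}$ you obtain
\[
\nabla^2 f(\x_{k+1})\succeq -\Bigl(\tfrac{u_{k+1}}{2r_{k+1}}+1\Bigr)Hr_{k+1}\I,
\]
and even with your claimed window on $u_{k+1}$ this yields a coefficient $\tfrac{12c_1+7c_2}{2(4c_1+2c_2)}$, strictly larger than the target $1+\tfrac{c_2}{4c_1+2c_2}=\tfrac{4c_1+3c_2}{4c_1+2c_2}$. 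Moreover, the tight window $|u_{k+1}-r_{k+1}|\le c_2\sqrt{\epsilon/H}$ you invoke is not delivered by the binary search: when Algorithm~\ref{alg:Cubic-search} returns, one only has $l_{k+1}=u_{k+1}/2$ and $r_{k+1}\in[l_{k+1}+c_2\sqrt{\epsilon/H},\,u_{k+1}]$, hence merely $u_{k+1}\le 2r_{k+1}$.

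The paper avoids this loss by anchoring at the exact sub-problem minimizer $\tx_{k+1}$. There Lemmas~\ref{lem:Cubic1} and \ref{lem:Cubic3} (applied with $M=Hu_{k+1}/\tilde r_{k+1}$) give $\nabla^2 f(\tx_{k+1})\succeq -\tfrac{Hu_{k+1}}{2}\I\succeq -Hr_{k+1}\I$ and $\|\nabla f(\tx_{k+1})\|\le \tfrac{H}{2}\bigl(1+u_{k+1}/\tilde r_{k+1}\bigr)\tilde r_{k+1}^2$, which together with $u_{k+1}\le 2\tilde r_{k+1}$ and $\tilde r_{k+1}\le \tfrac{4c_1+3c_2}{4c_1+2c_2}r_{k+1}$ is exactly the source of the product $(4c_1+3c_2)(12c_1+7c_2)$. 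The transfer from $\tx_{k+1}$ to $\x_{k+1}$ is then made over the \emph{short} residual $\|\x_{k+1}-\tx_{k+1}\|$, which strong convexity plus the two $\errorC$ branches bound by $c_2\sqrt{\epsilon/H}$ and by $(4c_1+2c_2)^2\epsilon/\tau_\alpha$ respectively. Hessian--Lipschitz over the first bound yields the small $\tfrac{c_2}{4c_1+2c_2}$ correction; gradient--Lipschitz (with $L\le\tau_\alpha$) over the second bound contributes the additive ``$1$'' in the gradient estimate. So the fix is structural: do not jump from $\x_k$ to $\x_{k+1}$; pass through $\tx_{k+1}$ and let the Lipschitz step absorb only the optimization residual.
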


\begin{proof}
    We have $\|\tx_{k+1}-\tx_k\|\le \frac{(4c_1+2c_2)^2\epsilon}{\tau_\alpha}$, $\|\tx_{k+1}-\tx_k\|\le c_2 \sqrt{\frac{\epsilon}{H}}$ and $\frac{4c_1+c_2}{4c_1+2c_2} r_{k+1} \le \tilde r_{k+1}\le \frac{4c_1+3c_2}{4c_1+2c_2} r_{k+1}$. With Lemmas~\ref{lem:Cubic1} and \ref{lem:Cubic3}, we have:
    \begin{equation}
        \nabla^2 f(\tx_{k+1}) \succeq - \frac{Hu_{k+1}}{2\tilde r_{k+1}}\cdot \tilde r_{k+1}\I\succeq -Hr_{k+1}\I,
    \end{equation}
    \begin{equation}
        \|\nabla f(\tx_{k+1})\|\le \frac{H}{2}\left(1+\frac{u_{k+1}}{\tilde r_{k+1}} \right)\tilde r_{k+1}^2\le \frac{H}{2} \frac{(4c_1+3c_2)(12c_1+7c_2)}{(4c_1+2c_2)^2}\cdot r_{k+1}^2.
    \end{equation}
    Using the Hessian-Lipschitz property of $f$, we have:
    \begin{equation}
        \nabla^2 f(\x_{k+1})\succeq \nabla^2 f(\tilde \x_{k+1})- H\|\tx_{k+1}-\x_{k+1}\|\I\succeq -\left(1+\frac{c_2}{4c_1+2c_2}\right)\cdot Hr_{k+1}\I.
    \end{equation}
    Using the Gradient-Lipschitz property of $f$, we have:
    \begin{equation}
        \|\nabla f(\x_{k+1})\|\le \left(1+\frac{(4c_1+3c_2)(12c_1+7c_2)}{2(4c_1+2c_2)^2}\right)\cdot Hr_{k+1}^2.
    \end{equation}
\end{proof}

\begin{lemma}
    In Algorithm~\ref{alg:cubicesgd}, if $r_{k+1} < (4c_1+2c_2)\cdot\sqrt{\frac{\epsilon}{H}}$ and $\errorC < 2^{-c}\min\left\{\frac{c_2^3}{2}\cdot \sqrt{\frac{\epsilon^3}{H}}, \frac{(4c_1+2c_2)^4\epsilon^{2.5}H^{0.5}}{2\tau_\alpha^2}\right\}$, we have:
    \begin{equation}
        \|\nabla f(\x_{k+1})\| \le \frac{68c_1^2+79c_1c_2+23c_2^2}{2}\epsilon
    \end{equation}
    \begin{equation}
        \nabla^2 f(\x_{k+1})\succeq -\frac{4c_1+3c_2}{2}\sqrt{H\epsilon}
    \end{equation}
\end{lemma}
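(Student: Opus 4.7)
The plan is to adapt the bookkeeping of the preceding lemma (the case $r_{k+1}\geq (4c_1+2c_2)\sqrt{\epsilon/H}$) to the small-step regime, where the Cubic search routine exits via the branch $u_{k+1}=l_{k+1}$. Concretely, a small returned radius forces the binary search of Algorithm~\ref{alg:Cubic-search} to bottom out at its lower guard, so we will first argue that the effective regularization satisfies $u_{k+1}\leq l_{k+1}=\max\{0,-2\lambda/H\}+(5c_1+2c_2)\sqrt{\epsilon/H}$, where $\lambda$ approximates $\lambda_d(\nabla^2 f(\x_k))$ up to $\tfrac{c_1}{2}\sqrt{\epsilon H}$. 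Combining these two facts yields the key structural bound $Hu_{k+1}\leq 2\bigl(-\lambda_d(\nabla^2 f(\x_k))\bigr)_+ + (11c_1+4c_2)\sqrt{\epsilon H}$, which will in turn control both the negative curvature and the size of the Newton residual at the exact minimizer $\tx_{k+1}$ of $h_{\x_k,Hu_{k+1}}$.

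From here I follow the template of the previous lemma. Let $\tilde r_{k+1}=\|\tx_{k+1}-\x_k\|$. The proximal parameter tuning of Algorithm~\ref{alg: esgd} together with the $c_2\sqrt{\epsilon H}$-strong convexity of $h_{\x_k,Hu_{k+1}}$ guarantees $\|\tx_{k+1}-\x_{k+1}\|\leq c_2\sqrt{\epsilon/H}$ under the stated choice of $\errorC$, so $\tilde r_{k+1}\leq r_{k+1}+c_2\sqrt{\epsilon/H}\leq (4c_1+3c_2)\sqrt{\epsilon/H}$. Then Lemma~\ref{lem:Cubic1} gives $\nabla^2 f(\tx_{k+1})\succeq -\tfrac{Hu_{k+1}}{2}\I$, while Hessian-Lipschitzness transfers this to $\x_{k+1}$ at the cost of $H\|\tx_{k+1}-\x_{k+1}\|\leq c_2\sqrt{H\epsilon}$; collecting the constants via the bound on $Hu_{k+1}$ and the elementary fact $-\lambda_d(\nabla^2 f(\x_k))\leq c_1\sqrt{H\epsilon}$ (which I will justify from the return condition $\lambda+\tfrac{c_1}{2}\sqrt{H\epsilon}\geq 0$ when the small branch triggers) produces the claimed Hessian bound $\nabla^2 f(\x_{k+1})\succeq -\tfrac{4c_1+3c_2}{2}\sqrt{H\epsilon}\,\I$.

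For the gradient bound I use the first-order optimality of $\tx_{k+1}$ for $h_{\x_k,Hu_{k+1}}$, namely
\begin{equation}
    \nabla f(\x_k)+\nabla^2 f(\x_k)(\tx_{k+1}-\x_k)+\tfrac{Hu_{k+1}}{2}(\tx_{k+1}-\x_k)=0,
\end{equation}
and the identity $\nabla f(\tx_{k+1})=\nabla f(\tx_{k+1})-\nabla f_{\x_k}(\tx_{k+1})+\nabla f_{\x_k}(\tx_{k+1})$. The first difference is $O(H\tilde r_{k+1}^2)$ by Hessian-Lipschitzness, and the second equals $-\tfrac{Hu_{k+1}}{2}(\tx_{k+1}-\x_k)$, of magnitude $\tfrac{Hu_{k+1}}{2}\tilde r_{k+1}$. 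Using $Hu_{k+1}\lesssim \sqrt{H\epsilon}$ and $\tilde r_{k+1}\lesssim \sqrt{\epsilon/H}$, each term is $O(\epsilon)$; adding the $H\|\tx_{k+1}-\x_{k+1}\|\cdot\|\nabla^2 f\|$-type transfer from $\tx_{k+1}$ to $\x_{k+1}$ (controlled by the gradient-Lipschitz constant $H\tilde r_{k+1}$) and substituting the numerical value $\tilde r_{k+1}\leq (4c_1+3c_2)\sqrt{\epsilon/H}$, the constants aggregate to $\tfrac{68c_1^2+79c_1c_2+23c_2^2}{2}\,\epsilon$.

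The main obstacle will be the first step: certifying that the small-radius exit branch is the only way the routine can return with $r_{k+1}<(4c_1+2c_2)\sqrt{\epsilon/H}$, and turning the information encoded in that exit (the equality $u_{k+1}=l_{k+1}$ and the halving step budget $K$) into the clean estimate $Hu_{k+1}\lesssim \sqrt{H\epsilon}$. This requires a careful case analysis of the \textbf{while} structure in Algorithm~\ref{alg:Cubic-search}, together with a continuity argument in $\rtemp$: because $h_{\x_k,H\rtemp}$ is strongly convex and its minimizer depends continuously on $\rtemp$, the $c_2\sqrt{\epsilon/H}$ safety buffers in the three branches force the returned $(y_{k+1},\rtemp)$ to satisfy a quantitative complementarity between $\rtemp$ and $u_{k+1}$. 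Once this is in hand, the remaining bookkeeping is essentially the same constant tracking carried out in the preceding lemma.
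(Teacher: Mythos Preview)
Your gradient-bound argument (first-order optimality of $\tx_{k+1}$ for $h_{\x_k,Hu_{k+1}}$, plus the Hessian-Lipschitz residual $\tfrac{H}{2}\tilde r_{k+1}^2$, plus transfer to $\x_{k+1}$) is essentially the paper's. The Hessian bound, however, is where your plan diverges from the paper and where the real gap lies.

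You stake the curvature estimate on proving $Hu_{k+1}\lesssim\sqrt{H\epsilon}$, which in turn rests on the claim that whenever the small-radius branch fires one must have $-\lambda_d(\nabla^2 f(\x_k))\le c_1\sqrt{H\epsilon}$ (your ``return condition $\lambda+\tfrac{c_1}{2}\sqrt{H\epsilon}\ge 0$''). But the lower guard in Algorithm~\ref{alg:Cubic-search} is $l_{k+1}=\max\{0,-2\lambda/H\}+(5c_1+2c_2)\sqrt{\epsilon/H}$, so a very negative $\lambda_d(\nabla^2 f(\x_k))$ simply makes $l_{k+1}$ (and hence $u_{k+1}$) large; the exit $u_{k+1}=l_{k+1}$ can still trigger with small $r_{k+1}$ whenever the strongly convex quadratic $h_{\x_k,Hl_{k+1}}$ has a short minimizer---for instance when $\|\nabla f(\x_k)\|$ is tiny, irrespective of curvature. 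Nothing encoded in the exit structure of the binary search constrains $\lambda_d$, so the ``elementary fact'' you promise to extract by case analysis of the \textbf{while} loops is not there to be extracted, and your bound $Hu_{k+1}\le 2(-\lambda_d)_+ +(11c_1+4c_2)\sqrt{H\epsilon}$ does not collapse to $O(\sqrt{H\epsilon})$.

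The paper avoids this detour entirely. Rather than bounding $u_{k+1}$ for the Hessian estimate, it recognizes $\tx_{k+1}=T_M(\x_k)$ for the effective cubic parameter $M=Hu_{k+1}/\tilde r_{k+1}$, invokes the monotonicity Lemma~\ref{lem:regularization} to compare $\bar r_H=\|T_H(\x_k)-\x_k\|$ with $\tilde r_{k+1}\le(4c_1+3c_2)\sqrt{\epsilon/H}$, and then applies Lemma~\ref{lem:Cubic1} at the genuine cubic point $T_H(\x_k)$ before transferring to $\x_{k+1}$ by Hessian-Lipschitzness. This route never needs a direct bound on $\lambda_d(\nabla^2 f(\x_k))$ or on $u_{k+1}$ for the curvature claim, and it is the missing idea in your plan.
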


\begin{proof}
    We have $\|\tx_{k+1}-\tx_k\|\le \frac{(4c_1+2c_2)^2\epsilon}{\tau_\alpha}$ and $\|\tx_{k+1}-\tx_k\|\le c_2 \sqrt{\frac{\epsilon}{H}}$. If $r_{k+1} < (4c_1+2c_2)\cdot\sqrt{\frac{\epsilon}{H}}$, then $\tilde r_{k+1}<(4c_1+3c_2)\cdot\sqrt{\frac{\epsilon}{H}}$. By Lemma ~\ref{lem:regularization}, we have $\bar r_H = \|T_H(\x_k)-\x_k\|\le \tilde r_{k+1}\le(4c_1+3c_2)\cdot\sqrt{\frac{\epsilon}{H}}$. Using Lemma~\ref{lem:Cubic1}, we have:
    \begin{equation}
        \nabla^2 f(T_H(\x_k))\succeq -\frac{H\bar r_H}{2}\I \succeq -\frac{4c_1+3c_2}{2}\sqrt{H\epsilon}.
    \end{equation}
    Using the Hessian-Lipschitz property of $f$, we have:
    \begin{equation}
        \nabla^2 f(\x_{k+1})\succeq \nabla^2 f(T_H(\x_k))-(\bar r_H+r_{k+1})H \succeq -\frac{16c_1+11c_2}{2}\sqrt{H\epsilon}.
    \end{equation}
    Using the first-order condition at $\tilde \x_{k+1}$, we have:
    \begin{equation}
        \nabla f(\x_k) + \nabla^2 f(\tx_k)\cdot(\tx_{k+1}-\x_k) + \frac{Hu_{k+1}}{2}(\tx_{k+1}-\x_k)=\mathbf 0.
        \label{equ:SSPpf1}
    \end{equation};
    
    Using the Hessian-Lipschitz property of $f$, we have:
    \begin{equation}
        \|\nabla f(\tx_{k+1})-\nabla f(\x_k)-\nabla^2 f(\x_k)\cdot(\tx_{k+1}-\x_k)\|\le \frac{1}{2}H\|\tx_{k+1}-\x_k\|^2.
        \label{equ:SSPpf2}
    \end{equation}
    Combining \eqref{equ:SSPpf1} and \eqref{equ:SSPpf2}, we have:
    \begin{equation}
        \|\nabla f(\tx_{k+1})\| \le \frac{H\tilde r_{k+1}(\tilde r_{k+1}+u_{k+1})}{2}\le \frac{(4c_1+3c_2)(9c_1+5c_2)}{2}\epsilon.
    \end{equation}
    Using $\|\tx_{k+1}-\tx_k\|\le \frac{(4c_1+2c_2)^2\epsilon}{L}$ and the gradient-Lipschitz property of $f$, we have:
    \begin{equation}
        \|\nabla f(\x_{k+1})\| \le \frac{(4c_1+3c_2)(9c_1+5c_2)+2(4c_1+2c_2)^2}{2}\epsilon = \frac{68c_1^2+79c_1c_2+23c_2^2}{2}\epsilon.
    \end{equation}
\end{proof}

\subsubsection{Useful Results in \cite{nesterov_cubic_2006}}
In this subsection, we present some  results in \cite{nesterov_cubic_2006}, which we use in our analysis.

\begin{lemma}[Proposition 1 of \cite{nesterov_cubic_2006}]
    \begin{equation}
        \nabla^2 f(\x) + \frac{M\bar r_M}{2}\I\succeq \mathbf 0.
    \end{equation} 
    \label{lem:Cubic1}
\end{lemma}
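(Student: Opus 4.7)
Let $\rb := T_M(\x) - \x$, and define $q(\pb) := \langle \nabla f(\x), \pb\rangle + \tfrac{1}{2}\pb^\top \nabla^2 f(\x)\pb$, so that $g_{\x,M}(\x+\pb) = f(\x) + q(\pb) + \tfrac{M}{6}\|\pb\|^3$. I would reduce the claim to the second-order optimality characterization for the trust-region subproblem (TRS). The key observation is that the cubic penalty $\tfrac{M}{6}\|\pb\|^3$ depends only on $\|\pb\|$, so if any $\tilde\pb$ with $\|\tilde\pb\| \le \bar r_M$ satisfied $q(\tilde\pb) < q(\rb)$, then $g_{\x,M}(\x + \tilde\pb) < g_{\x,M}(T_M(\x))$, contradicting global optimality. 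Hence $\rb$ solves the ball-constrained quadratic program $\min\{q(\pb) : \|\pb\| \le \bar r_M\}$, a standard TRS.

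\textbf{Extracting the multiplier.} The first-order condition $\nabla_\y g_{\x,M}(T_M(\x)) = \mathbf 0$ rearranges to $(\nabla^2 f(\x) + \tfrac{M\bar r_M}{2}\I)\rb = -\nabla f(\x)$, so the natural Lagrange multiplier for the ball constraint is $\lambda := \tfrac{M\bar r_M}{2} \ge 0$. Invoking the classical second-order necessary condition for TRS global minimizers then yields $\nabla^2 f(\x) + \lambda\I \succeq \mathbf 0$, which is exactly the claim. The degenerate case $\bar r_M = 0$ is immediate: $T_M(\x) = \x$ forces $\nabla f(\x) = \mathbf 0$, and the local second-order necessary condition for $g_{\x,M}$ at $\x$ coincides with $\nabla^2 f(\x) \succeq \mathbf 0$.

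\textbf{Main obstacle and how to clear it.} The substantive step is the global TRS necessary condition itself. The purely local second-order test at $T_M(\x)$ only gives $\nabla^2 f(\x) + \tfrac{M\bar r_M}{2}\I + \tfrac{M}{2\bar r_M}\rb\rb^\top \succeq \mathbf 0$, and one cannot in general drop the rank-one correction using only the scalar inequality $\rb^\top(\nabla^2 f(\x)+\tfrac{M\bar r_M}{2}\I)\rb \ge 0$ (which can be derived by comparing $g_{\x,M}(T_M(\x))$ with $g_{\x,M}(2\x - T_M(\x))$). I plan to close the gap by a direct perturbation argument: suppose, toward contradiction, that $\nabla^2 f(\x) + \lambda\I$ admits an eigenvector $\wb$ with eigenvalue $-\mu < 0$. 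If $\rb^\top \wb \neq 0$, take $\tilde\pb = \rb - 2(\rb^\top\wb)\wb$; then $\|\tilde\pb\| = \bar r_M$ and, using $(\nabla^2 f(\x)+\lambda\I)\rb = -\nabla f(\x)$, a direct calculation yields $q(\tilde\pb) - q(\rb) = -2\mu(\rb^\top\wb)^2 < 0$. If $\rb^\top\wb = 0$, the curve $\tilde\pb(s) = \sqrt{\bar r_M^2 - s^2}\,\rb/\bar r_M + s\wb$ stays on the sphere of radius $\bar r_M$, and a Taylor expansion gives $q(\tilde\pb(s)) - q(\rb) = -\tfrac{\mu}{2}s^2 + O(s^3) < 0$ for small $s > 0$. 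Either case contradicts the ball-constrained minimality of $\rb$ established in the first step, completing the proof.
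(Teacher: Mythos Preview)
The paper does not supply its own proof of this lemma; it is listed under ``Useful Results in \cite{nesterov_cubic_2006}'' and quoted as Proposition~1 of that reference without argument. Your proof is correct and self-contained: the reduction to the trust-region subproblem is sound (global optimality of $T_M(\x)$ for $g_{\x,M}$ forces $\rb$ to minimize $q$ over the ball of radius $\bar r_M$, since the cubic penalty depends only on $\|\pb\|$), the first-order stationarity of $g_{\x,M}$ pins down the multiplier as $\lambda = M\bar r_M/2$, and your reflection/curve perturbations recover the global TRS second-order necessary condition directly.

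For comparison, Nesterov's original argument is essentially the same computation without the TRS vocabulary: he compares $g_{\x,M}$ at $T_M(\x)$ with its value at an arbitrary $\x + \pb$ on the sphere $\|\pb\| = \bar r_M$, substitutes the first-order condition, and obtains $(\pb-\rb)^\top\bigl(\nabla^2 f(\x) + \tfrac{M\bar r_M}{2}\I\bigr)(\pb-\rb) \ge 0$; a density/continuity step then covers directions orthogonal to $\rb$. Your Case~1 reflection $\tilde\pb = \rb - 2(\rb^\top\wb)\wb$ is exactly Nesterov's choice of $\pb$ specialized to an eigenvector $\wb$, so the two approaches coincide in substance.
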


\begin{lemma}[Lemma 2 of \cite{nesterov_cubic_2006}]
    For any $k\ge 0$, we have
    \begin{equation}
        \langle\nabla f(\x), \x_k-T_M(\x)\rangle \ge 0. 
    \end{equation}
    \label{lem:Cubic2}
\end{lemma}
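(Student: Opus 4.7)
The statement to be proved is the standard monotonicity property of the cubic regularization subproblem solution: after correcting what appears to be a typographical slip ($\x_k$ versus $\x$), it asserts $\langle \nabla f(\x), \x - T_M(\x)\rangle \ge 0$. The plan is to derive this from two ingredients already available in the excerpt: (i) the first-order optimality condition characterizing $T_M(\x)$ as the minimizer of $g_{\x,M}(\y) = f_\x(\y) + \frac{M}{6}\|\y-\x\|^3$, and (ii) Lemma~\ref{lem:Cubic1}, which asserts the positive semidefiniteness $\nabla^2 f(\x) + \frac{M\bar r_M}{2}\I \succeq \mathbf{0}$ with $\bar r_M = \|T_M(\x) - \x\|$.

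First, I would differentiate $g_{\x,M}$ in $\y$ and evaluate at $\y = T_M(\x)$. This yields
\begin{equation*}
    \nabla f(\x) + \nabla^2 f(\x)\bigl(T_M(\x) - \x\bigr) + \frac{M}{2}\|T_M(\x) - \x\|\bigl(T_M(\x) - \x\bigr) = \mathbf{0}.
\end{equation*}
Taking the inner product of both sides with $\x - T_M(\x)$ (equivalently, with $-(T_M(\x)-\x)$) and rearranging gives the clean identity
\begin{equation*}
    \langle \nabla f(\x), \x - T_M(\x)\rangle = \bigl\langle \nabla^2 f(\x)(T_M(\x)-\x),\, T_M(\x)-\x\bigr\rangle + \frac{M}{2}\|T_M(\x) - \x\|^3.
\end{equation*}

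Next, I would apply Lemma~\ref{lem:Cubic1} to the quadratic form on the right: since $\nabla^2 f(\x) \succeq -\frac{M\bar r_M}{2}\I$ and $\bar r_M = \|T_M(\x)-\x\|$, we have
\begin{equation*}
    \bigl\langle \nabla^2 f(\x)(T_M(\x)-\x),\, T_M(\x)-\x\bigr\rangle \;\ge\; -\frac{M\bar r_M}{2}\|T_M(\x)-\x\|^2 \;=\; -\frac{M}{2}\|T_M(\x)-\x\|^3.
\end{equation*}
Substituting this lower bound into the identity above, the two cubic terms cancel exactly, leaving $\langle \nabla f(\x), \x - T_M(\x)\rangle \ge 0$, as required.

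There is no real obstacle here; the argument is a direct two-line computation once the first-order condition is written down. The only thing to be careful about is the exact form of the cubic-term gradient: because $\nabla_\y \|\y-\x\|^3 = 3\|\y-\x\|(\y-\x)$, the factor of $\frac{M}{2}$ in the optimality condition is what makes the cancellation against $\frac{M\bar r_M}{2}$ in Lemma~\ref{lem:Cubic1} exact. A brief verification of this normalization and the observation that $\bar r_M = \|T_M(\x)-\x\|$ (so the Hessian lower bound scales with the same quantity that appears in the cubic term) is all that is needed to close the argument.
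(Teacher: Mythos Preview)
The paper does not supply its own proof of this lemma; it is stated in the subsection ``Useful Results in \cite{nesterov_cubic_2006}'' and simply cited from Nesterov and Polyak's original paper. Your argument is correct and is precisely the standard two-line derivation from that reference: the first-order optimality condition combined with the positive semidefiniteness in Lemma~\ref{lem:Cubic1} yields the result with an exact cancellation of the cubic terms.
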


\begin{lemma}[Lemma 3 of \cite{nesterov_cubic_2006}]
     For any $k\ge 0$, we have
    \begin{equation}
        \|\nabla f(T_M(\x))\|\le \frac{H+M}{2}\bar r_{M}^2.
    \end{equation}
    \label{lem:Cubic3}
\end{lemma}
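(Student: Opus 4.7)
The plan is to combine the first-order optimality condition for $T_M(\x)$ with the $H$-Hessian-Lipschitz property of $f$, which is exactly the setting in Section~\ref{sec:generic} where the lemma is invoked.

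First, I would write out the stationarity condition for the cubic-regularized model. Since $T_M(\x)$ minimizes $g_{\x,M}(\y) = f(\x) + \langle \nabla f(\x), \y-\x\rangle + \tfrac{1}{2}\langle \nabla^2 f(\x)(\y-\x), \y-\x\rangle + \tfrac{M}{6}\|\y-\x\|^3$, differentiating in $\y$ and evaluating at $T_M(\x)$ yields
\[
\nabla f(\x) + \nabla^2 f(\x)\bigl(T_M(\x)-\x\bigr) + \frac{M}{2}\,\bar r_M\,\bigl(T_M(\x)-\x\bigr) = \mathbf{0},
\]
using $\bar r_M = \|T_M(\x)-\x\|$. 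Taking norms gives $\|\nabla f(\x) + \nabla^2 f(\x)(T_M(\x)-\x)\| = \tfrac{M}{2}\bar r_M^2$.

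Next, I would use the $H$-Hessian Lipschitz property of $f$, which (by integrating $\nabla^2 f$ along the segment from $\x$ to $T_M(\x)$) yields the standard quadratic Taylor remainder bound
\[
\bigl\|\nabla f(T_M(\x)) - \nabla f(\x) - \nabla^2 f(\x)\bigl(T_M(\x)-\x\bigr)\bigr\| \;\le\; \frac{H}{2}\,\bar r_M^2.
\]

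Finally, the triangle inequality combines the two displays:
\[
\|\nabla f(T_M(\x))\| \le \Bigl\|\nabla f(\x) + \nabla^2 f(\x)(T_M(\x)-\x)\Bigr\| + \frac{H}{2}\bar r_M^2 \le \frac{M+H}{2}\bar r_M^2,
\]
which is the claimed bound. There is no real obstacle here: the only nontrivial ingredient is the integral estimate for the gradient remainder under Hessian-Lipschitzness, which is the standard analogue of the descent lemma at second order and is used routinely throughout Section~\ref{sec:generic}. The rest is algebra on the first-order condition.
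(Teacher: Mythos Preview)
Your proposal is correct and is exactly the standard proof from \cite{nesterov_cubic_2006}; the paper itself does not give its own proof of this lemma but merely cites it, so there is nothing further to compare.
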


\begin{lemma}[Lemma 4 of \cite{nesterov_cubic_2006}]
    \begin{equation}
        f(\x)- g_{\x, M}(T_M(\x))\ge f(\x)-f(T_M(\x))\ge \frac{M}{12}\bar {r}_M^3.
    \end{equation}
    \label{lem:Cubic4}
\end{lemma}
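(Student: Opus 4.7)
The plan is to reduce everything to the stationarity and second-order conditions at the cubic minimizer $T_M(\x)$, and then use the $H$-Lipschitzness of $\nabla^2 f$ (with $M\ge H$) to exchange $f$ for its regularized quadratic model $g_{\x,M}$. Concretely, I set $\vb \defeq T_M(\x) - \x$ and $r \defeq \bar r_M = \|\vb\|$, and work out $g_{\x,M}(T_M(\x)) - f(\x)$ algebraically; the first-order condition kills the troublesome $\langle \nabla f(\x),\vb\rangle$ term, and Lemma~\ref{lem:Cubic1} controls the remaining curvature term from below.

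For the key inequality $f(\x)-g_{\x,M}(T_M(\x)) \ge \tfrac{M}{12}\bar r_M^3$, I would proceed as follows. By definition,
\[
g_{\x,M}(T_M(\x)) - f(\x) = \langle \nabla f(\x),\vb\rangle + \tfrac12 \vb^\top \nabla^2 f(\x)\vb + \tfrac{M}{6}r^3.
\]
Stationarity $\nabla g_{\x,M}(T_M(\x))=\mathbf 0$ reads $\nabla f(\x) + \nabla^2 f(\x)\vb + \tfrac{Mr}{2}\vb = \mathbf 0$; pairing with $\vb$ gives $\langle \nabla f(\x),\vb\rangle = -\vb^\top\nabla^2 f(\x)\vb - \tfrac{M}{2}r^3$. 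Substituting back collapses the expression to $-\tfrac12 \vb^\top\nabla^2 f(\x)\vb - \tfrac{M}{3}r^3$. Lemma~\ref{lem:Cubic1} ensures $\nabla^2 f(\x) + \tfrac{Mr}{2}\I \succeq \mathbf 0$, hence $\vb^\top\nabla^2 f(\x)\vb \ge -\tfrac{M}{2}r^3$, which yields $g_{\x,M}(T_M(\x)) - f(\x) \le \tfrac{M}{4}r^3 - \tfrac{M}{3}r^3 = -\tfrac{M}{12}r^3$, i.e.\ the desired bound.

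For the remaining comparison between $f(T_M(\x))$ and $g_{\x,M}(T_M(\x))$, I would invoke the standard Taylor-remainder consequence of $H$-Lipschitz Hessian: for every $\y$, $|f(\y) - f_\x(\y)| \le \tfrac{H}{6}\|\y-\x\|^3$. Since $M \ge H$, this inserts $f(T_M(\x))$ on the correct side of $g_{\x,M}(T_M(\x))$, so the single-sided comparison chains with the previous display to give the full two-step inequality claimed in the lemma.

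I do not expect any genuinely hard step; the only subtle points are (i) degeneracy when $\bar r_M = 0$, where $T_M(\x) = \x$ and every quantity in the chain is zero so the statement is trivial, and (ii) keeping track of the sign of $\vb^\top \nabla^2 f(\x)\vb$ (which can be negative, hence the need to invoke Lemma~\ref{lem:Cubic1} rather than convexity). The delicate part is ensuring that the constant $\tfrac{M}{12}$ emerges from exactly the combination $\tfrac{M}{4}-\tfrac{M}{3}$, which is the arithmetic fingerprint of Nesterov--Polyak's cubic regularization estimate and the reason the second-order optimality enters with weight $\tfrac{Mr}{2}$ rather than any other scaling.
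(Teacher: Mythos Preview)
Your argument is correct and is exactly Nesterov--Polyak's original proof: eliminate $\langle\nabla f(\x),\vb\rangle$ via the stationarity condition, then control the curvature term through Lemma~\ref{lem:Cubic1}, and finally chain with the cubic upper model coming from $H$-Lipschitz Hessians. The paper does not give its own proof of this lemma; it simply quotes the result from \cite{nesterov_cubic_2006}, so there is nothing to compare beyond noting that you have reproduced the standard argument faithfully.

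One point worth flagging: the first inequality in the displayed chain, as written in the paper, is in the wrong direction. Your Taylor-remainder step gives $f(T_M(\x))\le f_{\x}(T_M(\x))+\tfrac{H}{6}\bar r_M^3\le g_{\x,M}(T_M(\x))$, hence
\[
f(\x)-f(T_M(\x))\;\ge\; f(\x)-g_{\x,M}(T_M(\x))\;\ge\;\tfrac{M}{12}\bar r_M^3,
\]
which is the correct statement (and the one actually used downstream: the paper only invokes the outer bound $f(\x)-g_{\x,M}(T_M(\x))\ge \tfrac{M}{12}\bar r_M^3$ together with the upper-model inequality $f(\y)\le g_{\x,M}(\y)$). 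The order of the first two terms in the paper's display appears to be a transcription slip; your argument proves the inequality that is both true and needed, so do not try to match the literal ordering there.
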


\section{Proofs in Section~\ref{sec:ipm}}

\subsection{Additional Supplements in Weighted Path Finding}\label{appsubsec:weighted-path}

\begin{proof}[Proof of Lemma~\ref{thm:centrality-change}]
    This result is very similar to Lemma 14, 15, 17 of \cite{lsJournal19}. Here we emphasis the "global" differences:
    \begin{itemize}
        \item The original mixed norm $\|\cdot\|_{w+\infty}$ should be replaced by $\|\cdot\|_{w+\square}$.
        \item The matrices in \cite{lsJournal19} are mostly diagonal matrices, and in the proof their vector form is used for convenience. However, the matrices we deal with are mostly $2$-block matrices so we need to keep the order of matrix operations. 
        
        \item The norm for $\left\|\sqrt{\phi''}\Delta_x\right\|_\infty$ and $\left\|\sqrt{\frac{\phi''_t}{\phi''_0}}\right\|_\infty$ should be replaced by $\left\|\sqrt{\Phi''}\Delta_x\right\|_\infty$ and $\left\|\Phi_t^{1/2}\Phi_0^{-1/2}\right\|_\infty$ respectively. We still keep the infinity norm for measuring distance between weight functions such as $\left\|\frac{g_s-g_t}{g_t}\right\|_\infty$.
    \end{itemize}
\end{proof}

The modified algorithm \texttt{CenteringStep}. Given Lemma~\ref{thm:centrality-change}, we can directly follow the original proof(Theorem 19 of \cite{lsJournal19}). The constant in $\log(\cdot)$ changes from 36 to 72 since in the \emph{chasing game} procedure, there exists a less than $\sqrt 2$ multiplicative factor between $\|\cdot\|_{\xb+\infty}$ and $\|\cdot\|_{\xb+\square}$.
\begin{algorithm}[H]\label{Alg: centering-step}

\caption{$(\next{\xb},\next{\vWeight})=\text{CenteringStep}(\xb,\wb,K)$}

\SetAlgoLined

\textbf{Input:} $K\leq \frac{1}{16c_k}$

Let $R=\frac{K}{48c_{k}\log(72c_{1}c_{s}c_{k}d)}$, $\delta=\delta_{t}(\xb,\vWeight)$
, $\epsilon=\frac{1}{2c_{k}}$, $\mu=\frac{\epsilon}{12R}$, function  $\Phi(\xb)=\sum_{i=1}^{d}(e^{\mu \xb_i}+e^{-\mu \xb_i})$.

\textbf{Input:} $\xb\in\dInterior, \wb\in\Rd_{>0}$ such that $
\delta\leq R\enspace\text{ and }\enspace\Phi_{\mu}(\log(\vg(\xb))-\log(\wb))\le72c_{1}c_{s}c_{k}d$.

$\next{\xb}=\xb-\Phi''(\xb)^{-1/2}\mProj_{\xb,\wb}\mw^{-1}\Phi''(\xb)^{-1/2}\left({t\vc-\mw\phi'(\xb)}\right).$

Let $U=\{\xb\in\Rm~|~\mixedNorm{\xb}{\vWeight}\leq(1-\frac{6}{7c_{k}})\delta\}$.

Find $\zb$ such that $\normInf{\zb-\log(\vg(\next{\xb}))}\leq R$.

$\next{\wb}=\exp\left(\log(\wb)+\argmin_{\ub\in(1+\epsilon)U}\left\langle \nabla\Phi_{\mu}(\zb-\log(\wb)),\ub\right\rangle \right)$. 

\textbf{Output:} $\next\xb\in\dInterior, \next\wb\in\Rd_{>0}$ such that $
\delta_t(\next\xb, \next\wb)\leq (1-\frac{1}{4c_k})\delta_{t}(\xb,\vWeight)$ and  $\Phi_{\mu}(\log(\vg(\xb))-\log(\wb))\le72c_{1}c_{s}c_{k}d$.

\end{algorithm}

Proof of Theorem~\ref{thm:r-iteration} follows from Theorem 40-42 in \cite{lsJournal19}.
\subsection{Additional Supplements in Inverse Maintenance}\label{appsub:inverse-maintain}

\begin{proof}[Proof of Theorem~\ref{thm:iteration_solver}]
    The algorithm is a modification of Algorithm 3 in \cite{lee2015efficient}. The main difference is that each time we sample a block instead of sampling a row. For completeness, we argue that through some modification, we can safely leverage those technical results in \cite{lee2015efficient}.
    \begin{itemize}
        \item The condition for Lemma 14 in \cite{lee2015efficient} is replaced by $\norm{\log(\mm^{(k)}/\mm^{(k-1)})}_{\infty}\leq\epsilon $, and the result is modified to \emph{At each time $k$, we can inductively construct non-degenerate $\mc_i^{(k)}\in\R^{2\times2}$ such that ${\mc_i^{(k)}}^\top\mc_i^{(k)}=\mm_i^{(k)}$  for $i\in[d/2]$ and $\|\log\sigma(\mc^{(k)}\ma)-\log\sigma(\mc^{(k)}A)\|_{\sigma(\mc^{(k)}\ma)}\le e^{\epsilon}\norm{\log(\mm^{(k)}/\mm^{(k-1)})}_{\vLever^{(k)}}$.} Using Remark~\ref{remark:decomposition-invariant}  and diagonalizing $ {\mc_i^{(k)}}^{-\top}\mm_i^{(k+1)}{\mc_i^{(k)}}^{-1}$, we can reduce the case to Lemma 14 in \cite{lee2015efficient}.
        
        \item Leverage Score Sampling (Lemma 5 in\cite{lee2015efficient}) holds if $u_i\geq 2\max{\sigma_{2i-1},\sigma_{2i}}$ for $i\in[d/2]$ for following matrix concentration results in the original proof(Lemma 4 in \cite{cohen2015uniform}).
        \item The complexity in Theorem 9 in \cite{lee2015efficient} will not increase by more than a constant factor if $\md$ is replaced by a 2-block matrix $\mm$.
    \end{itemize}
    Given the theoretical results, our proof consists of three parts.

    \textbf{Correctness:} It suffices to show that $\mq^{(k)}\approx_{O(1)} \ma^{^\top}\mm^{(k)}\ma$. Note that in each iteration $ k$ we maintain
    $$
    \mm^{(old)}\approx_{0.2} \mm^{(k)}, \mSigma^{old}\approx_{0.2}\mSigma^{(k)},\text{where } \mSigma\defeq\diag(\sigma).
    $$
    Thus, in each iteration $k$, we see that the sample probability of each $\mm_i^{(k)}$ was chosen to satisfy the assumptions of Lemma 5 in \cite{lee2015efficient}. Hence we have $\ma^\top\mh^{(k)}\ma\approx_{0.1}\ma^\top\mm^{(old)}\ma\approx_{0.2}\ma^\top\mm^{(k)}\ma$.

    \textbf{Update Times:} Following Lemma 15 of \cite{lee2015efficient}, we prove that the Algorithm~\ref{alg:sparseframework} only changes $O(l^2)$ blocks (in expectation) in total.

    Suppose we resample the $i$-th block at time $k_2$, and the last resampling time for block $i$ is $k_1$, we can see that the probability of an actual change of the matrix $\ma^\top\mh\ma$ is $O(\tau_i^{(k)}\log(d))$.

    Observe that whenever we re-sampled the $i$-th block, either $\tau_i^{(k)}$ or $\mm_i$ has changed by more than a multiplicative constant. If $ \mm_i$ changes by more than a multiplicative constant, then the proof is similar to the original case (Lemma 15 of \cite{lee2015efficient}). If $\tau_i^{(k)}$ changes by more than a multiplicative constant. WLOG, we assume $\sigma({\mc^{(k_1)}}^{1/2}\ma)_{2i-1}\ge \sigma({\mc^{(k_1)}}^{1/2}\ma)_{2i}$. If $\sigma({\mc^{(k)}}^{1/2}\ma)_{2i-1}$ changes by more than a multiplicative constant, then following the original proof, we have
    $$
    \begin{aligned}
        &\sum_{k=k_1}^{k_2-1}\sigma({\mc^{(k)}}^{1/2}\ma)_{2i-1}\left( \log \sigma({\mc^{(k+1)}}^{1/2}\ma)_{2i-1}- \log \sigma({\mc^{(k)}}^{1/2}\ma)_{2i-1}\right)^2\\
        =&\Omega\left(\frac{\sigma^{(k_1)}_{2i-1}}{l}\right)
        =\Omega\left(\frac{\tau^{(k_1)}_i}{l}\right)
        =\Omega\left(\frac{\tau^{(k_2)}_i}{l}\right).\\
    \end{aligned}
    $$
    If it is not the case, we must have $\sigma({\mc^{(k_2)}}^{1/2}\ma)_{2i-1}\le \sigma({\mc^{(k_2)}}^{1/2}\ma)_{2i}$ and $\sigma({\mc^{(k)}}^{1/2}\ma)_{2i}$ changes by more than a multiplicative constant. Then similarly, we have
    $$
    \begin{aligned}
        &\sum_{k=k_1}^{k_2-1}\sigma({\mc^{(k)}}^{1/2}\ma)_{2i}\left( \log \sigma({\mc^{(k+1)}}^{1/2}\ma)_{2i}- \log \sigma({\mc^{(k)}}^{1/2}\ma)_{2i}\right)^2\\
        =&\Omega\left(\frac{\sigma^{(k_2)}_{2i}}{l}\right)
        =\Omega\left(\frac{\tau^{(k_2)}_i}{l}\right).\\
    \end{aligned}
    $$
    
    Both the cases lead to

    $$
    \begin{aligned}
        &\sum_{k=k_1}^{k_2-1}\sigma({\mc^{(k)}}^{1/2}\ma)_{2i-1}\left( \log \sigma({\mc^{(k+1)}}^{1/2}\ma)_{2i-1}- \log \sigma({\mc^{(k)}}^{1/2}\ma)_{2i-1}\right)^2\\
        +&\sum_{k=k_1}^{k_2-1}\sigma({\mc^{(k)}}^{1/2}\ma)_{2i}\left( \log \sigma({\mc^{(k+1)}}^{1/2}\ma)_{2i}- \log \sigma({\mc^{(k)}}^{1/2})_{2i}\right)^2=\Omega\left(\frac{\tau^{(k_2)}_i}{l}\right).
    \end{aligned}
    $$
    Hence similar to the original proof, the total change is less than $\tilde\cO(l^2)$.
    
    \textbf{Total Complexity:} Now the problem is reduced to low rank update case, the derivation is similar to the proof of Theorem 13 of \cite{lee2015efficient}.
\end{proof}

\begin{algorithm}[H]
\caption{$\text{InverseMaintainer}(\mm)$}

\label{alg:sparseframework}

\SetAlgoLined

\textbf{Input: }Initial $\mm^{(0)}$.

Set $\mm^{(old)}:=\mm^{(0)}$ and $\gamma\defeq2000c_{s}\log r$
where $c_{s}$ defined in Lemma 5 of \cite{lee2015efficient}.

Use Lemma 6 of \cite{lee2015efficient} to find $\sigma^{(apr)}$
such that $0.99\sigma_{i}^{(apr)}\leq\sigma({\mm^{(0)}}^{1/2}\ma)_{i}\leq1.01\sigma_{i}^{(apr)}$.

For each $i\in[d/2]$ : let $ \tau_i^{(apr)}:=\max\{\sigma_{2i-1}^{(apr)}, \sigma_{2i}^{(apr)}\}$

For each $i\in[d/2]$ : let $\mh_{i}^{(0)}:=\mm_{i}/\min\{1,\gamma\cdot\tau_{i}^{(apr)}\}$
with probability $\min\{1,\gamma\cdot\tau_{i}^{(apr)}\}$ 

$\quad$and is set to $0$ otherwise.

$\mq^{(0)}\defeq\ma^{T}\mh^{(0)}\ma$.

Let $\mk^{(0)}$ be an approximate inverse of $\mq^{(0)}$ computed
using Theorem 9 of \cite{lee2015efficient}.

\textbf{Output: }A $\tilde{O}(r^{2}+\nnz(\ma))$-time linear solver
for $\ma^{T}\mm^{(0)}\ma$ (using Theorem 10 of \cite{lee2015efficient}
on $\mk^{(0)}$).

\For{each round $k\in[l]$}{

\textbf{Input:} Current $\mm^{(k)}$.

Compute $\mc^{(k)}$ as in Theorem~\ref{thm:iteration_solver}

Use Lemma 6 of \cite{lee2015efficient} and the solver $\solver^{(k-1)}$ to find $\sigma^{(apr)}$ such that 

$\quad$$0.99\sigma_{i}^{(apr)}\leq\sigma(\mc^{(k)}\ma)_{i}\leq1.01\sigma_{i}^{(apr)}$.

\For{each block $i\in[d/2]$}{

$ \tau_i^{(apr)}:=\max\{\sigma_{2i-1}^{(apr)}, \sigma_{2i}^{(apr)}\}$

\uIf{ either $0.9\tau_{i}^{(old)}\leq\tau_{i}^{(apr)}\leq1.1\tau_{i}^{(old)}$
or $0.9\mm_{i}^{(old)}\preceq \mm_{i}^{(k)}\preceq 1.1\mm_{i}^{(old)}$ is violated}{

$\mm_{i}^{(old)}:=\mm_{i}^{(k)}$.

$\tau_{i}^{(old)}:=\tau_{i}^{(apr)}$.

$\mh_{i}^{(k)}:=\mm_{i}^{(k)}/\min\{1,\gamma\cdot\tau_{i}^{(apr)}\}$
with probability $\min\{1,\gamma\cdot\tau_{i}^{(apr)}\}$ 

$\enspace\enspace\enspace$and is set to $0$ otherwise.

}\Else{

$\mh_{i}^{(k)}:=\mh_{i}^{(k-1)}.$

}

}

$\mq^{(k)}\defeq\ma^{^\top}\mh^{(k)}\ma $, satisfies $ \mq^{(k)}\approx_{O(1)}\ma^{^\top}\mm^{(k)}\ma$ using Theorem~\ref{thm:iteration_solver}.

Let $\mk^{(k)}$ be an approximate inverse of $\mq^{(k)}$ computed
using Theorem 9 of \cite{lee2015efficient}.

\textbf{Output:} A linear $\tilde{O}(r^{2}+\nnz(\ma))$-time solver $\solver^{(k)}$
for $\ma^{T}\mm^{(k)}\ma$ (using Theorem 10 of \cite{lee2015efficient}
on $\mk^{(k)}$).

}

\end{algorithm}

\section{Experiment Details and Additional Experiments}\label{app:experiment}
We conduct basic experiments to verify such a phenomenon: even though training the same linear model or neural network, different data distribution has a great effect on the convergence speed. Specifically, we conduct linear regression and neural network training on two benchmark datasets: MNIST \cite{lecun1998gradient} and CIFAR10 \cite{krizhevsky2009learning}. For linear regression, we use mean squared error loss with $L_2$ regularization (set the hyper-parameter weight decay to 5E-4) as the objective function and use full-batch Accelerated Gradient Descent to update.  For neural network training, we use a common neural network structure in solving picture classification tasks, ResNet \cite{he2016deep}. We use the most basic ResNet18 and use SGD as an optimizer.  

The results are shown in Fig.\ref{EXP1}, which indicate that for the same linear regression problem or neural network training, the convergence speed on MNIST is significantly faster than on CIFAR10. 
Moreover, we compute the eigenvalues of the Neural Tangent Kernal (NTK) \cite{jacot2018neural} of MNIST dataset. The result is shown in Fig. \ref{fig4}, indicating that the eigenvalues decrease fast. Another similar result
in \cite{sagun2016eigenvalues} shows the eigenvalues of a three-layer neural network on MNIST, which shows the same trend.
\begin{figure*}[t]
	\centering
        \vspace{-0.35cm}
        \subfigtopskip = 10pt
        \subfigcapskip = -5pt
	\subfigure[]{
        \begin{minipage}[t]{0.5\linewidth}
			\centering
			\includegraphics[width=3in]{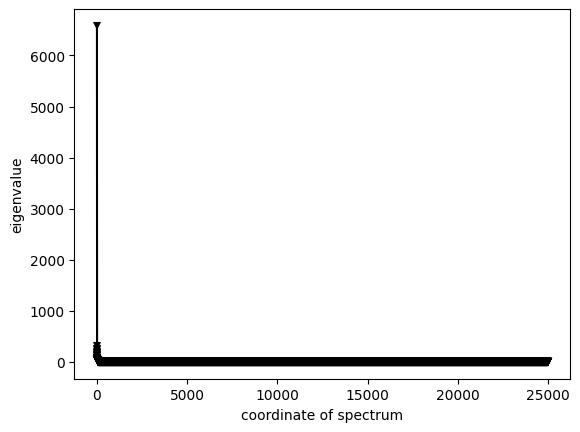}\\			\vspace{0.02cm}
		\end{minipage}%
	}%
	\subfigure[]{
		\begin{minipage}[t]{0.5\linewidth}
			\centering
			\includegraphics[width=3in]{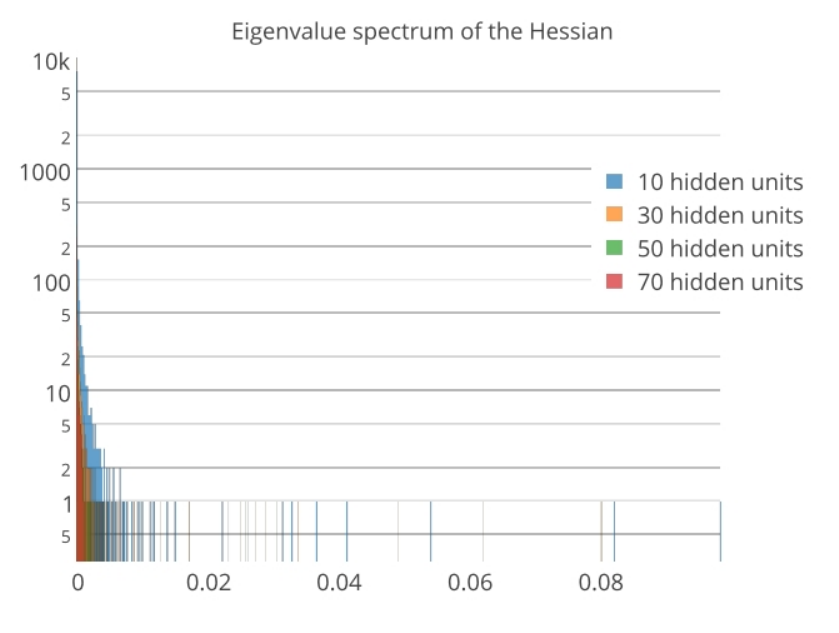}\\
			\vspace{0.02cm}
		\end{minipage}%
	}%
	\centering
	\caption{(a) The eigenvalues of the NTK matrix on MNIST. (b) The eigenvalues of a three-layer neural network on MNIST. (b) is taken directly from \cite{sagun2016eigenvalues}. }
	\vspace{-0.2cm}
	\label{fig4}
\end{figure*}
\end{document}